\documentclass[11pt]{article}
\usepackage{bbm}
%%%%%%%%%%%%%%%%%%%%%%%%%%%%%%%%%%%%%%%%%%%%%%%%%%%%%%%%%%%
\usepackage{eqnarray,amsmath,amsfonts,amsthm,mathrsfs}
\usepackage{color}
\usepackage{bm}
\usepackage{amssymb}
\usepackage[dvips]{graphicx}
\usepackage{epsfig}
\usepackage{epsf}
\usepackage{float}
\usepackage{subfigure}
\usepackage{amsfonts,amsmath,amsthm,amssymb,graphicx,float,fancyhdr,multirow,hyperref}
\usepackage{booktabs,longtable,authblk}
\usepackage{mathrsfs,hhline}

\oddsidemargin .5in \evensidemargin  1in

\usepackage{fancyhdr}
\usepackage[top=2.5cm, bottom=2.5cm, left=3cm, right=3cm]{geometry}
\setlength{\parskip}{0.10in}
\usepackage{graphicx}
  % end of proof
\newtheorem{theorem}{Theorem}
\newtheorem{assumption}{Assumption}
\newtheorem{corollary}{Corollary}
\newtheorem{definition}{Definition}

\newtheorem{lemma}{Lemma}
\newtheorem{proposition}{Proposition}
\newtheorem{remark}{Remark}
\newtheorem{example}{Example}

\allowdisplaybreaks[4]

%Macros for display size operators:

\def\begeqn{\begin{equation}}
\def\endeqn{\end{equation}}
\def\begth{\begin{theorem}}
\def\endth{\end{theorem}}
\def\begprop{\begin{proposition}}
\def\endprop{\end{proposition}}
\def\begcor{\begin{corollary}}
\def\endcor{\end{corollary}}
\def\begdef{\begin{definition}}
\def\enddef{\end{definition}}
\def\beglemm{\begin{lemma}}
\def\endlemm{\end{lemma}}
\def\begexm{\begin{example}}
\def\endexm{\end{example}}
\def\begrem{\begin{remark}}
\def\endrem{\end{remark}}
\def\begassum{\begin{assumption}}
\def\endassum{\end{assumption}}

\numberwithin{equation}{section}

\newcommand{\EE}{\mathbb E}
\newcommand{\PP}{\mathbb P}
\newcommand{\II}{\mathbb I}
\newcommand{\cN}{\mathcal N}
\newcommand{\cT}{\mathcal T}
\newcommand{\cL}{\mathcal L}
\newcommand{\cU}{\mathcal U}
\newcommand{\bX}{\textbf X}
\title{Statistical Optimality of Divide and Conquer Kernel-based Functional Linear Regression$^\dag$\footnotetext{\dag~The work described in this paper is supported by the National Natural Science Foundation of China [Grants Nos.12171039 and 12061160462] and Shanghai Science and Technology Program [Project No. 21JC1400600]. Email addresses: 20210180088@fudan.edu.cn (J. Liu), leishi@fudan.edu.cn (L. Shi).}}

\author{Jiading Liu}
\author{Lei Shi}
\affil{School of Mathematical Sciences and Shanghai Key Laboratory for
	Contemporary Applied Mathematics, Fudan University, Shanghai 200433,
	P.\ R.\ China.}
%\affil[2]{Shanghai Artificial Intelligence Laboratory, Shanghai 200232, P.\ R.\ China.}

\date{}

\begin{document}
	\maketitle
\begin{abstract}
	Previous analysis of regularized functional linear regression in a reproducing kernel Hilbert space (RKHS) typically requires the target function to be contained in this kernel space. This paper studies the convergence performance of divide-and-conquer estimators in the scenario that the target function does not necessarily reside in the underlying RKHS. As a decomposition-based scalable approach, the divide-and-conquer estimators of functional linear regression can substantially reduce the algorithmic complexities in time and memory. We develop an integral operator approach to establish sharp finite sample upper bounds for prediction with divide-and-conquer estimators under various regularity conditions of explanatory variables and target function. We also prove the asymptotic optimality of the derived rates by building the mini-max lower bounds. Finally, we consider the convergence of noiseless estimators and show that the rates can be arbitrarily fast under mild conditions. 
\end{abstract}

{\textbf{Keywords and phrases:} Functional linear regression, Reproducing kernel Hilbert space, Divide-and-conquer estimator, Model misspecification, Mini-max optimal rates}

\section{Introduction}\label{section: introduction}

Functional data analysis (FDA) has been an intense recent study, achieving remarkable success in a wide range of fields, including, among many others, chemometrics, linguistics, medicine and economics (see, e.g., \cite{ramsay2005functional,wang2016functional}). Under an FDA framework, the explanatory variable is usually a random function. We consider the following functional linear regression model to characterize the functional nature of explanatory variables. Let $Y$ be a scalar response, and $X$ be a random element taking values in ${\cal L}^2(\mathcal{T})$. Throughout the paper, we use ${\cal L}^2(\mathcal{T})$ to denote the Hilbert space of square integrable functions defined over a domain $\mathcal{T}\subseteq \mathbb{R}^D$ for some integer $D\geq 1$. In the functional linear regression model, the dependence of $Y$ and $X$ is expressed as 
\begin{equation}\label{LFRmodel}
	Y=	\int_{\mathcal{T}} \beta_0(t)X(t)dt + \epsilon,
\end{equation} where $\beta_0\in {\cal L}^2(\mathcal{T})$ is the slope function and $\epsilon$ is a random noise independent of $X$ with zero mean and bounded variance. The goal of functional linear regression is to construct an estimator $\hat{\beta}$ to approximate $\beta_0$ based on training samples of $(X,Y)$. The performance of an estimator can be measured by the prediction risk, given by
\begin{equation}\label{predicitonrisk}
	\mathcal{R}({\hat{\beta}}):=\mathbb{E}\left[\left(Y-\int_{\mathcal{T}} \hat{\beta}(t)X(t)dt\right)^2\right],
\end{equation} or equivalently, the excess prediction risk $\mathcal{R}(\hat{\beta})-\mathcal{R}(\beta_0)$.

The research on model \eqref{LFRmodel} can be traced back to the 1990s (see, for example, \cite{hastie1993statistical,marx1996generalized, cardot1999functional}). Subsequently,  a vast amount of literature has emerged to study the prediction and estimation problems under this model. A flourishing line of research is based on the functional principal component analysis (FPCA), leveraging spectral expansions of
the covariance kernel of $X$ and its empirical counterpart to estimate the slope function (see, e.g., \cite{yao2005functional,cai2006prediction,hall2007methodology,ramsay2005functional}). A necessary condition for the success
of the FPCA-based approaches is that the slope function $\beta_0$ can be efficiently represented by the leading functional principal components, which, however, fails to hold in many applications. To address this issue, another  influential line of research utilizes kernel-based estimators to approximate the target $\beta_0$ in a suitable reproducing kernel Hilbert space (RKHS) (see, e.g.,\cite{yuan2010reproducing,cai2012minimax}). More concretely, given a training sample set $S:=\{(X_i,Y_i)\}_{i=1}^N$ consisting of $N$ independent copies of $(X,Y)$, one can employ an RKHS $({\cal H}_K,\|\cdot\|_K)$ induced by a reproducing kernel $K:\mathcal{T}\times \mathcal{T}\to \mathbb{R}$ to estimate $\beta_0$ through the regularized least squares (RLS) estimators defined by  
\begin{equation}\label{totalestimator}
	\hat{\beta}_{S,\lambda}:=\mathop{\mathrm{argmin}}_{\beta \in {\cal H}_K}\left\{\frac{1}{N}\sum_{i=1}^N \left(Y_i-\int_{\mathcal{T}} \beta(t)X_i(t)dt\right)^2+\lambda \|\beta\|^2_K\right\}.
\end{equation} Here we choose a tuning parameter $\lambda>0$ to balance fidelity to the data and complexity of the estimators (measured by its squared ${\cal H}_K$ norm). According to the Representer Theorem proved in \cite{yuan2010reproducing}, $\hat{\beta}_{S,\lambda}$ can be uniquely expressed as $\hat{\beta}_{S,\lambda}(\cdot)=\sum_{i=1}^N c_i\int_{\mathcal{T}} K(\cdot,t)X_i(t)dt$ with $(c_1,\cdots,c_N)^T=\left(\lambda N \mathbb{I}_N + \mathbb{K}_{\bm X} \right)^{-1} {\bm Y}$, where $\mathbb{I}_N$ is the identity matrix on $\mathbb{R}^N$, $\mathbb{K}_{\bm X}\in \mathbb{R}^{N\times N}$ is the kernel matrix evaluated on ${\bm X}:=\{X_1,\cdots,X_N\}$ with the $(i,j)-$entrance $[\mathbb{K}_{\bm X}]_{i,j}=\int_{\mathcal{T}}X_i(s)K(s,t)X_j(t)dsdt$, and ${\bm Y}:=(Y_1,\cdots,Y_N)^T$. Under the assumption that the slope function $\beta_0$ belongs to the RKHS ${\cal H}_K$, it is shown in \cite{cai2012minimax} that the excess prediction risk of $\hat{\beta}_{S,\lambda}$ can achieve the mini-max optimal convergence rates.

In this paper, we aim to further advance the line of research on the kernel-based approach designed for functional linear regression model \eqref{LFRmodel}. Specially, we will study the convergence behavior of divided-and-conquer RLS estimators without requiring the unknown slope function $\beta_0$ to be contained in the RKHS ${\cal H}_K$. As a generalization of classical kernel ridge regression (see, e.g., \cite{murphy2012machine}), algorithm \eqref{totalestimator} suffers from the same complexity issue that seriously limits its performance when dealing with massive data. To make the computational problem more tractable for large-scale sample sets, we implement algorithm \eqref{totalestimator} via the divide-and-conquer approach. We randomly partition the entire sample set $S$ into $m$ disjoint equal-sized subsets $S_1,\cdots,S_m$. On each $S_j$, a local estimator $\hat{\beta}_{S_j,\lambda}$ is obtained according to algorithm \eqref{totalestimator}, i.e.,
\begin{equation*}
	\hat{\beta}_{S_j,\lambda}(\cdot)=\sum_{i:(X_i,Y_i)\in S_j} c_i\int_{\mathcal{T}} K(\cdot,t)X_i(t)dt\mbox{ where } (c_i)_{\{i:(X_i,Y_i)\in S_j\}}=(\lambda |S_j| \mathbb{I}_{|S_j|} + \mathbb{K}_{{\bm X}_j})^{-1} {\bm Y}_j.
\end{equation*}Here $|S_j|$ denotes the cardinality of $S_j$, ${\bm X}_j$ is the set of $X$'s sample in $S_j$, and  ${\bm Y}_j\in \mathbb{R}^{|S_j|}$ is a vector composed of $Y$'s sample in $S_j$. Divide-and-conquer RLS estimator is then computed by simply averaging $\{\hat{\beta}_{S_j,\lambda}\}_{j=1}^m$, which is given by
\begin{equation}\label{finalestimator}
	\overline{\beta}_{S,\lambda}:=\frac{1}{m}\sum_{j=1}^m \hat{\beta}_{S_j,\lambda}.
\end{equation} This approach is appealing due to its easy exercisable partitions. By partitioning the sample set into $m$ subsets of equal size and executing algorithm \eqref{totalestimator} on each subset concurrently, one can approximately diminish the computational complexities in terms of time and memory to $\frac{1}{m^2}$ of the initial requirements. In the context of regression analysis for massive data, divide-and-conquer kernel ridge regression and its variants have been extensively studied in statistics and machine learning communities (see, for instance, \cite{zhang2015divide,guo2017learning,lin2017distributed,dumpert2018universal,mucke2018parallelizing,hamm2021adaptive,sun2021optimal,hamm2022intrinsic,kohler2022total}). In the present paper, we evaluate the prediction performance of averaged estimator $\overline{\beta}_{S,\lambda}$ in \eqref{finalestimator} via its excess prediction risk:
\begin{equation}\label{equation: excess risk of beta(S,lambda)}
	\mathcal{R}(\overline{\beta}_{S,\lambda})- \mathcal{R}(\beta_0)
\end{equation} 
in a more general setting which allows $\beta_0\notin {\cal H}_K$. In supervised learning problem, if the target function resides outside the hypothesis space, e.g., the underlying RKHS in kernel-based regression, this scenario is often referred to as model misspecification (see, e.g., \cite{rao1971some,bach2008consistency}). More recently, convergence behaviors of kernel ridge regression in model misspecification scenarios have been investigated in many works (see, e.g., \cite{fischer2020sobolev,lin2020optimal,sun2021optimal}), which show asymptotically mini-max optimal rates in a variety of situations. In practice, canonical choices of ${\cal H}_K$ in \eqref{totalestimator} are the Sobolev spaces of smoothness $s$ (see \cite{yuan2010reproducing} and the references therein). Though such an RKHS is dense in ${\cal L}^2(\mathcal{T})$, the assumption that $\beta_0$ lies precisely in it is too restrictive in many real applications, as this assumption requires the derivatives of $\beta_0$ up to order $s-1$ are absolute continuous and its $s-$th derivative belongs to ${\cal L}^2(\mathcal{T})$. This raises the question of whether the global RLS estimator \eqref{totalestimator} and its averaged version \eqref{finalestimator} can still maintain excellent prediction performances in the model misspecification scenario $\beta_0\notin {\cal H}_K$. We positively answer this question by establishing a tight convergence analysis with an integral operator technique. Furthermore, we also consider the noiseless circumstance when the model (\ref{LFRmodel}) has no additive noise. The noiseless condition means no ambiguity of the response $Y$ given the explanatory variable $X$; in other words, the response $Y$ is determined uniquely by the input $X$. The noiseless linear model has been widely adopted in many areas, including image classification and sound recognition (see, e.g., \cite{jun2019kernel}). The convergence of estimators in a noiseless model is very important but has not been considered till the very recent papers (see, for example, \cite{jun2019kernel,berthier2020tight,sun2021optimal}).

The main contribution of this paper is to present new finite sample bounds on the prediction risk \eqref{equation: excess risk of beta(S,lambda)} concerning various regularity conditions. These conditions characterize the complexity of the prediction problem in functional linear regression model \eqref{LFRmodel}, which is measured through regularities of the explanatory variable $X$, optimum $\beta_0$, and their images under the kernel operators. See Section \ref{section: preliminaries} and Section \ref{section: main results} for precise definitions and statements.  Our analysis of convergence incorporates these regularity conditions into the integral operator techniques, substantially generalizing previously published bounds, which only consider the case $\beta_0\in \mathcal{H}_K$, to the model misspecification scenario $\beta_0\notin \mathcal{H}_K$ and the divide-and-conquer estimators. For prediction using the noisy model, the established convergence
is tight as in most cases we prove upper and lower bounds on the performance of estimators that almost match.  For prediction using the noiseless model, we prove that the estimators can converge with arbitrarily
fast polynomial rates if the reproducing kernel or the covariance kernel is sufficiently smooth. Thus the estimators show some adaptivity to the complexity of the prediction problem. Besides, our analysis only requires the the kernel function to be square integrable, eliminating the uniformly bounded or even continuity assumptions required in previous literature, which is more in line with the practical application scenarios of functional data analysis.

The rest of this paper is organized as follows. We start in Section \ref{section: preliminaries} with an introduction to notations, general assumptions, and some preliminary results. In Section \ref{section: main results}, we describe the regularity conditions and present main theorems and their corollaries. In Section \ref{section: comparison}, we give further comments on these regularity conditions and main results and compare them with other related contributions. All proofs can be found in Section \ref{section: convergence analysis} and the Appendix.

\section{Preliminaries}\label{section: preliminaries}

In this section we will provide basic notations and some preliminary results necessary for the further statement. We first recall some basic notations in operator theory (e.g., see \cite{conway2000course}). Consider a linear operator $A: \mathcal{H} \rightarrow \mathcal{H}'$, where both $(\mathcal{H}, \langle\cdot,\cdot\rangle_{\mathcal{H}})$ and $(\mathcal{H}', \langle\cdot,\cdot\rangle_{\mathcal{H}'})$ represent Hilbert spaces, equipped with their respective norms $\|\cdot\|_{\mathcal{H}}$ and $\|\cdot\|_{\mathcal{H}'}$. The collection of bounded linear operators from $\mathcal{H}$ to $\mathcal{H}'$ forms a Banach space when considered under operator norm $\|A\|_{\mathcal{H}, \mathcal{H}'} = \sup_{\|f\|_{\mathcal{H}}=1} \|Af\|_{\mathcal{H}'}$, symbolized as $\mathscr{B}(\mathcal{H}, \mathcal{H}')$ or $\mathscr{B}(\mathcal{H})$ in cases where $\mathcal{H} = \mathcal{H}'$. In scenarios where $\mathcal{H}$ and $\mathcal{H}'$ are implicitly understood, the subscript is omitted, simplifying the operator norm notation to $\|\cdot\|$. The adjoint of $A$, denoted by $A^*$, satisfies the equality $\langle Af, f'\rangle_{\mathcal{H}} = \langle f, A^*f'\rangle_{\mathcal{H}'}$ for all $f \in \mathcal{H}$ and $f' \in \mathcal{H}'$. If $A$ is an element of $\mathscr{B}(\mathcal{H}, \mathcal{H}')$, then its adjoint $A^*$ belongs to $\mathscr{B}(\mathcal{H}', \mathcal{H})$, and the norms of $A$ and $A^*$ are equivalent. An operator $A \in \mathscr{B}(\mathcal{H})$ is identified as self-adjoint if $A^* = A$ and as nonnegative if it is self-adjoint and satisfies $\langle Af, f\rangle_{\mathcal{H}} \geq 0$ for every $f \in \mathcal{H}$. The operator norm of a nonnegative operator $A\in \mathscr{B}({\cal H})$ has an equivalent expression:
\begin{equation}\label{equation: equivalent definition of nonnegative operator}
	\|A\|=\sup_{x\in \mathcal{H},\|x\|_{\mathcal{H}}=1}\langle Ax,x\rangle_{\mathcal{H}}.
\end{equation}
For $f\in {\cal H}$ and $f'\in {\cal H'}$, we introduce a rank-one operator $f\otimes f': {\cal H} \to {\cal H'}$ defined by $f \otimes f' (h)= \langle f, h \rangle_{\cal H} f', \forall h \in {\cal H}$. Consider $A$ to be a compact and nonnegative operator within $\mathscr{B}({\cal H})$. According to the Spectral Theorem, it is guaranteed that an orthonormal basis $\{e_k\}_{k\geq 1}$, composed of $A$'s eigenfunctions, exists within ${\cal H}$. This basis allows $A$ to be expressed as $A=\sum_{k\geq 1} \lambda_k e_k \otimes e_k$, where $\lambda_k$ denotes the non-negative eigenvalues of $A$ in a descending sequence. These eigenvalues (with geometric multiplicities) may either form a finite set or approach zero as $k$ increases indefinitely. Moreover, for any $r>0$, we define the $r-$th power of $A$ as $A^r=\sum_{k \geq 1} \lambda^r_k e_k \otimes e_k,$ which is itself a nonnegative compact operator on ${\cal H}$. An operator $A$ belonging to $\mathscr{B}(\mathcal{H}, \mathcal{H}')$ is identified as a Hilbert-Schmidt operator if, for a given orthonormal basis $\{e_k\}_{k\geq 1}$ of $\mathcal{H}$, the series $\sum_{k\geq 1}\|Ae_k\|^2_{\mathcal{H}'}$ converges. The collection of Hilbert-Schmidt operators constitutes a Hilbert space itself, equipped with the inner product defined by $\langle A, B\rangle_{HS} = \sum_{k\geq 1} \langle Ae_k, Be_k\rangle_{\mathcal{H}'}$, with $\|\cdot\|_{HS}$ representing the associated norm. In particular, a Hilbert-Schmidt operator $A$ is compact and we have the following inequality to relate its two different norms:
\begin{equation}\label{relationship between L2 and Linfty norm}
	\|A\| \leq \|A\|_{HS}.
\end{equation}  An operator $A \in \mathscr{B}({\cal H},{\cal H}')$ is of trace class if $\sum_{k\geq 1}\langle \left(A^{*}A\right)^{1/2}e_k,e_k \rangle_{{\cal H}}<\infty$ for some (any) orthonormal basis $\{e_k\}_{k\geq 1}$ of ${\cal H}$. All trace class operators constitute a Banach space endowed with the norm $trace(A):= \sum_{k\geq 1}\langle \left(A^{*}A\right)^{1/2}e_k,e_k \rangle_{{\cal H}}$. It is obviously for any nonnegative operator $A\in \mathscr{B}({\cal H})$,
\begin{equation}\label{equation: trace class}
	trace(A)=\sum_{k\geq1}\langle Ae_k,e_k \rangle_{\cal H}.
\end{equation}
In the following, we fix a reproducing kernel Hilbert space (RKHS), denoted as ${\cal H}_K$, consisting of functions $f:\mathcal{T} \to \mathbb{R}$, where each evaluation functional on this space is bounded. Consequently, there exists a distinct symmetric nonnegative definite kernel function $K:{\cal T}\times {\cal T} \to \mathbb{R}$, known as the reproducing kernel, intrinsically linked to ${\cal H}_K$. Let $K_t:\mathcal{T}\to \mathbb{R}$ defined by $K_t(\cdot)=K(\cdot,t)$ for $t\in {\cal T}$ and denote by $\langle \cdot, \cdot \rangle_K$ the inner product of ${\cal H}_K$ which induces the norm $\|\cdot\|_K$. Then $K_t \in {\cal H}_K$ and the reproducing property 
\begin{equation*}
	f(t)= \langle f, K_t \rangle_K
\end{equation*} holds for all $t\in {\cal T}$ and $f \in {\cal H}_K$. Furthermore, it is well known in the literature (refer to \cite{aronszajn1950theory}) that any symmetric positive definite kernel $K$ distinctly characterizes an RKHS, ${\cal H}_K$, for which $K$ serves as the reproducing kernel. Throughout the paper, we assume that $K$ is measurable on $\mathcal{T}\times \mathcal{T}$ such that 
\begin{equation*} 
	\int_{\mathcal{T}} \int_{\mathcal{T}} K^2(t,t') dt dt' <\infty.
\end{equation*} Recall that ${\cal L}^2(\cal{T})$ is the Hilbert space of functions from ${\cal T}$ to $\mathbb{R}$ square-integrable with respect to Lebesgue measure. Denote by $\|\cdot\|_{{\cal L}^2}$ the corresponding norm of ${\cal L}^2(\cal{T})$ induced by the inner product $\langle f, g\rangle_{{\cal L}^2}=\int_{{\cal T}} f(t) g(t) dt$. The reproducing kernel $K$ induces an integral operator $L_K:\mathcal{L}^{2}(\mathcal{T})\to \mathcal{L}^2(\mathcal{T})$, given by, for $f\in {\cal L}^2(\cal{T})$ and $t\in {\cal T}$,
\begin{equation*}
	L_K(f)(t)=\int_{\cal T} K(s,t)f(s)ds,
\end{equation*} which is a nonnegative, compact operator on ${\cal L}^2(\cal{T})$. Then $L^{1/2}_K$ is well-defined and compact, and $L^{1/2}_K$ is an isomorphism from $\overline{{\cal H}_K}$, the closure of ${\cal H}_K$ in ${\cal L}^2(\cal{T})$, to ${\cal H}_K$, i.e., for each $f\in \overline{{\cal H}_K}$, $L^{1/2}_K f \in {\cal H}_K$ and
\begin{equation}\label{normrelation2}
	\|f\|_{{\cal{L}}^2}=\|L^{1/2}_K f\|_K.
\end{equation} Since we are mainly interested in the model misspecification scenario $\beta_o\notin {\cal H}_K$, we will always assume ${\cal H}_K$ is dense in ${\cal L}^2(\cal T)$, i.e., ${\cal L}^2({\cal T})=\overline{{\cal H}_K}$. 

Besides the reproducing kernel $K$, another important kernel in our setting is the covariance kernel. Without loss of generality, we let the explanatory variable $X$ satisfy $\mathbb{E}[X]=0$ and further assume $\mathbb{E}\left[\|X\|^2_{{\cal L}^2}\right]<\infty$. Then the covariance kernel $C:{\cal T}\times {\cal T} \to \mathbb{R}$, given by $C(s,t):=\mathbb{E}\left[X(s)X(t)\right],\forall s,t\in {\cal T}$, can define a compact nonnegative integral operator $L_C:\mathcal{L}^{2}(\mathcal{T})\to \mathcal{L}^2(\mathcal{T})$ via
\begin{equation*}
	L_C(f)(t)=\int_\mathcal{T} C(s,t)f(s)ds,\quad\forall f\in {\cal L}^2(\mathcal{T})\mbox{ and }\forall t\in\mathcal{T}.
\end{equation*}

We next use $L_K$ and $L_C$ to give an expression of estimator $\hat{\beta}_{S,\lambda}$ in \eqref{totalestimator}. Recall that ${\cal L}^2({\cal T})=\overline{{\cal H}_K}$ and norms relation \eqref{normrelation2}. We can express $\hat{\beta}_{S,\lambda}$ as $\hat{\beta}_{S,\lambda}=L^{1/2}_K \hat{f}_{S,\lambda}$ with
\begin{equation*}
	\hat{f}_{S,\lambda}=\mathop{\mathrm{argmin}}_{f \in {\cal L}^2(\cal T)}\left\{\frac{1}{N}\sum_{i=1}^{N} \left(Y_i-\left\langle L^{1/2}_K f, X_i \right\rangle_{{\cal L}^2}\right)^2+\lambda \|f\|^2_{{\cal L}^2}\right\}.
\end{equation*} Following the proof of Theorem 6.2.1 in \cite{hsing2015theoretical}, we can solve $\hat{f}_{S,\lambda}$ explicitly and obtain the following proposition. 

\begin{proposition}\label{prop1}
	The estimator $\hat{\beta}_{S,\lambda}$ in (\ref{totalestimator}) can be expressed as
	$\hat{\beta}_{S,\lambda}=L^{1/2}_K \hat{f}_{S,\lambda}$ with
	\begin{equation}\label{fhat}
		\hat{f}_{S,\lambda} = \left(\lambda I+ T_{\bf X}\right)^{-1}\frac{1}{|S|}\sum_{(X_i,Y_i)\in S}L_K^{1/2}X_i Y_i,
	\end{equation} where $I$ denotes the identity operator on ${\cal L}^2(\mathcal{T})$, $|S|=N$ is the cardinality of $S=\{(X_i,Y_i)\}_{i=1}^N$, and $T_{\bf X}: {\cal L}^2({\cal T}) \to {\cal L}^2(\cal T)$ is an empirical operator with ${\bf X}=\{X_1,\cdots,X_N\}$ defined by
	\begin{equation}\label{Tx}
		T_{\bf X} = \frac{1}{|S|}\sum_{X_i \in {\bf X} }L_K^{1/2}X_i\otimes L_K^{1/2}X_i.
	\end{equation}
\end{proposition}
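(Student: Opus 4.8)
The plan is to reduce the constrained minimization over $\mathcal{H}_K$ in \eqref{totalestimator} to an unconstrained strictly convex minimization over $\mathcal{L}^2(\mathcal{T})$ via the change of variables induced by $L_K^{1/2}$. Since $\mathcal{H}_K$ is assumed dense in $\mathcal{L}^2(\mathcal{T})$, we have $\overline{\mathcal{H}_K}=\mathcal{L}^2(\mathcal{T})$, and $L_K^{1/2}$ is an isomorphism from $\mathcal{L}^2(\mathcal{T})$ onto $\mathcal{H}_K$; hence every $\beta\in\mathcal{H}_K$ is uniquely of the form $\beta=L_K^{1/2}f$ with $f\in\mathcal{L}^2(\mathcal{T})$, and by \eqref{normrelation2} this substitution preserves the penalty, $\|\beta\|_K=\|f\|_{\mathcal{L}^2}$. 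Rewriting the fidelity term as $\int_{\mathcal{T}}\beta(t)X_i(t)\,dt=\langle \beta,X_i\rangle_{\mathcal{L}^2}=\langle L_K^{1/2}f,X_i\rangle_{\mathcal{L}^2}=\langle f,L_K^{1/2}X_i\rangle_{\mathcal{L}^2}$ (using that $L_K^{1/2}$ is self-adjoint) converts the objective of \eqref{totalestimator} into exactly the functional defining $\hat f_{S,\lambda}$. Because $\beta\mapsto f$ is a bijection preserving objective values, the two problems share minimizers related by $\hat\beta_{S,\lambda}=L_K^{1/2}\hat f_{S,\lambda}$, so it remains only to solve the $\mathcal{L}^2$ problem explicitly.

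Next I would compute the minimizer of $J(f):=\frac{1}{N}\sum_{i=1}^N\bigl(Y_i-\langle f,g_i\rangle_{\mathcal{L}^2}\bigr)^2+\lambda\|f\|_{\mathcal{L}^2}^2$, where $g_i:=L_K^{1/2}X_i$. This is a continuous, coercive, strictly convex quadratic functional on the Hilbert space $\mathcal{L}^2(\mathcal{T})$ (strict convexity and coercivity coming from the term $\lambda\|f\|_{\mathcal{L}^2}^2$ with $\lambda>0$), so it admits a unique minimizer characterized by the vanishing of its Fréchet derivative. Differentiating in an arbitrary direction $h$ and setting the result to zero gives the first-order condition
\begin{equation*}
\Bigl\langle \lambda f+\tfrac{1}{N}\textstyle\sum_{i=1}^N\langle f,g_i\rangle_{\mathcal{L}^2}\,g_i-\tfrac{1}{N}\sum_{i=1}^N Y_i\,g_i,\;h\Bigr\rangle_{\mathcal{L}^2}=0\quad\text{for all }h\in\mathcal{L}^2(\mathcal{T}).
\end{equation*}

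Finally I would identify the empirical operator. By the definition of the rank-one operator, $(g_i\otimes g_i)(f)=\langle g_i,f\rangle_{\mathcal{L}^2}\,g_i=\langle f,g_i\rangle_{\mathcal{L}^2}\,g_i$, so $\frac{1}{N}\sum_{i=1}^N\langle f,g_i\rangle_{\mathcal{L}^2}\,g_i=T_{\mathbf{X}}f$ with $T_{\mathbf{X}}$ as in \eqref{Tx}. The stationarity condition thus reads $(\lambda I+T_{\mathbf{X}})f=\frac{1}{N}\sum_{i=1}^N L_K^{1/2}X_i Y_i$; since $T_{\mathbf{X}}$ is nonnegative and $\lambda>0$, the operator $\lambda I+T_{\mathbf{X}}$ is bounded below by $\lambda I$ and hence boundedly invertible, and inverting yields the closed form \eqref{fhat}. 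I do not expect a genuine obstacle here: the calculation is the standard normal-equations argument (as in Theorem~6.2.1 of \cite{hsing2015theoretical}), and the only point requiring care is the first paragraph's justification that the reparametrization $\beta=L_K^{1/2}f$ is a well-defined objective-preserving bijection, which rests entirely on the isomorphism property of $L_K^{1/2}$ together with \eqref{normrelation2} already recorded above.
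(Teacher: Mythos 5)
Your proposal is correct and follows essentially the same route as the paper: the paper performs exactly your reparametrization $\beta=L_K^{1/2}f$ using the isomorphism property and \eqref{normrelation2}, and then delegates the remaining normal-equations computation to Theorem~6.2.1 of \cite{hsing2015theoretical}, which is precisely the Fr\'echet-derivative argument you carry out explicitly. No gaps; the identification of $T_{\bf X}$ via the rank-one operators and the invertibility of $\lambda I+T_{\bf X}$ are handled correctly.
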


Recall that $S=\cup_{j=1}^m S_j$ with $S_j \cap S_k = \emptyset$ for $j\neq k$ and $|S_j|=\frac{N}{m}$. One can define the empirical operators $T_{{\bf X}_j}$ with ${\bf X}_j=\{X_i:(X_i,Y_i)\in S_j\}$ according to \eqref{Tx} and compute the local estimator $\hat{f}_{S_j,\lambda}$  as \eqref{fhat}, i.e., 
\begin{equation*}
	T_{{\bf X}_j} = \frac{1}{|S_j|}\sum_{X_i \in {\bf X}_j }L_K^{1/2}X_i\otimes L_K^{1/2}X_i
\end{equation*} and 
\begin{equation*}
	\hat{f}_{S_j,\lambda} = (\lambda I+ T_{{\bf X}_j})^{-1}\frac{1}{|S_j|}\sum_{(X_i,Y_i)\in S_j}L_K^{1/2}X_i Y_i.
\end{equation*} Then the averaged estimator $\overline{\beta}_{S,\lambda}$ in \eqref{finalestimator}  is given by $\overline{\beta}_{S,\lambda}=L^{1/2}_K\overline{f}_{S,\lambda}$ with $\overline{f}_{S,\lambda}:=\frac{1}{m}\sum_{j=1}^m \hat{f}_{S_j,\lambda}$.

To derive the upper bounds of excess prediction error, for any estimator $\hat{\beta}\in {\cal L}^2({\cal T})$, we rewrite $\mathcal{R(\hat{\beta})}-\mathcal{R}(\beta_0)$ as
\begin{equation}\label{excesspredictionerror}
	\mathcal{R(\hat{\beta})}-\mathcal{R}(\beta_0) = \mathbb{E}\left[\left\langle X,\hat{\beta}-\beta_0\right\rangle_{{\cal L}^2}^2\right] = \left\|L_C^{1/2}\left(\hat{\beta}-\beta_0\right)\right\|_{{\cal L}^2}^2.
\end{equation} Notice $T_{\bf X}$ and $\frac{1}{|S|}\sum_{(X_i,Y_i)\in S}L_K^{1/2}X_i Y_i$ are empirical versions of $L^{1/2}_KL_CL^{1/2}_K$ and $L^{1/2}_KL_C\beta_0$, respectively. We thus introduce intermediate function $f_\lambda:= \left(\lambda I+L_K^{1/2} L_C L_K^{1/2}\right)^{-1}L_K^{1/2}L_C\beta_0$ which can be expected to approximate $\hat{f}_{S,\lambda}$ and its averaged version $\overline{f}_{S,\lambda}$. According to \eqref{excesspredictionerror}, we then split $\mathcal{R(\overline{\beta}_{S,\lambda})}-\mathcal{R}(\beta_0)$ into two parts:
\begin{align}\label{decomposition 2}
	\mathcal{R(\overline{\beta}_{S,\lambda})}-\mathcal{R}(\beta_0)&=\left\|L_C^{1/2}\left(L^{1/2}_K\overline{f}_{S,\lambda}-L^{1/2}_Kf_{\lambda}+L^{1/2}_Kf_{\lambda}-\beta_0\right)\right\|_{{\cal L}^2}^2\nonumber\\
	&\leq 2\mathscr{S}(S,\lambda) + 2 \mathscr{A}(\lambda),
\end{align} where $\mathscr{S}(S,\lambda):=\left\|L^{1/2}_CL^{1/2}_K\overline{f}_{S,\lambda}-L^{1/2}_CL^{1/2}_Kf_{\lambda}\right\|^2_{{\cal L}^2}$ and $\mathscr{A}(\lambda):=\left\|L^{1/2}_CL^{1/2}_Kf_{\lambda}-L^{1/2}_C\beta_0\right\|^2_{{\cal L}^2}$. 

In Section \ref{section: main results}, we will describe the assumptions that are used to estimate $\mathscr{S}(S,\lambda)$ and $\mathscr{A}(\lambda)$, and then state the main results of this paper. Before that, we give a further characterization of the operators which are crucial in our estimation. For simplicity, let $T:=L_K^{1/2} L_C L_K^{1/2}$ and $T_*:=L_C^{1/2}L_K L_C^{1/2}$. Note that 
\begin{equation*}
	T= L_K^{1/2}L_C^{1/2}\left(L_K^{1/2}L_C^{1/2}\right)^{*} \mbox{ and } T_* = \left(L_K^{1/2}L_C^{1/2}\right)^{*}L_K^{1/2}L_C^{1/2}.
\end{equation*} 
Due to the compactness of $L^{1/2}_KL^{1/2}_C$, both $T$ and $T_*$ are compact and nonnegative operators on ${\cal L}^2({\cal T})$. Singular value decomposition of $L^{1/2}_KL^{1/2}_C$ (see, e.g., Theorem 4.3.1 in \cite{hsing2015theoretical}) leads to the following expansions:
\begin{equation}\label{singular value decomposition}
	\begin{aligned}
		L_K^{1/2}L_C^{1/2} &= \sum_{k\geq 1}\sqrt{\mu_k} \varphi_k \otimes \phi_k,\\
		L_C^{1/2}L_K^{1/2} &= \sum_{k\geq 1}\sqrt{\mu_k} \phi_k \otimes \varphi_k,\\
		T&= \sum_{k\geq 1}\mu_k \phi_k \otimes \phi_k,\\
		T_*&= \sum_{k\geq 1} \mu_k \varphi_k \otimes \varphi_k,
	\end{aligned}
\end{equation} 
where $\{\mu_k\}_{k\geq 1}$ is a non-negative, non-increasing and summable sequence, $\{\phi_k\}_{k\geq 1}$ and $\{\varphi_k\}_{k\geq 1}$ are two orthonormal bases of $L^2(\mathcal{T})$. Actually, for any $\mu_k>0$, $\sqrt{\mu_k}$ is the singular values of $L^{1/2}_KL^{1/2}_C$ and the corresponding left and right singular vectors are given by $\varphi_k$ and $\phi_k$, which are the eigenvectors (with the same eigenvalue $\mu_k$) of $T$ and $T_*$. In particular, the system $\{\mu_k,\phi_k, \varphi_k\}_{k\geq 1}$ plays an important role in describing regularities of the explanatory variable $X$ and the slope function $\beta_0$ which we will explain in details in Section \ref{section: main results}.

\section{Main Results}\label{section: main results}
In this section, we will present our main theoretical results on the upper and lower bounds of excess prediction risk of divide-and-conquer estimator \eqref{finalestimator} for the functional linear regression model \eqref{LFRmodel}. These main results are based on several key assumptions, including the regularity conditions of the slope function and the functional explanatory variable. We begin by establishing a min-max lower bound for excess prediction risks.

\subsection{Mini-max Convergence Lower Bound}
In this subsection, we first introduce assumptions on the slope function $\beta_0$ and the random noise $\epsilon$, then based on the two assumptions, we establish a min-max lower bound for excess prediction risks. We begin with the regularity assumption on the slope function $\beta_0$ which is expressed in terms of covariance operator $L_C$ and operator $T_*$ given in \eqref{singular value decomposition}.
\begin{assumption}[regularity condition of slope function]\label{assumption1}
	The slope function $\beta_0$ in functional linear regression model \eqref{LFRmodel} satisfies
	\begin{equation}\label{regularity condition}
		L_C^{1/2}\beta_0 = T_*^\theta(\gamma_0) \mbox{ with $0<\theta\leq 1/2$ and $\gamma_0\in \mathcal{L}^2(\cal T)$}.
	\end{equation}
	
\end{assumption} This assumption implies that $L^{1/2}_C\beta_0$ belongs to the range space of $T^{\theta}_*$ expressed as
$$\mathrm{ran}T^{\theta}_*:=\left\{f \in {\cal L}^2(\mathcal{T}): \sum_{k \geq 1} \frac{\langle f, \varphi_k\rangle^2_{{\cal L}^2}}{\mu_k^{2\theta}}<\infty \right\},$$  where $\{\mu_k,\varphi_k\}_{k\geq 1}$ is the eigensystem of $T_*$. Then $\mathrm{ran}T^{\theta_1}_* \subseteq \mathrm{ran}T^{\theta_2}_*$ whenever $\theta_1 \geq \theta_2$. The regularity of functions within the range of $T^{\theta}_*$ is determined by the rate at which their expansion coefficients decrease, utilizing the set $\{\varphi_k\}_{k \geq 1}$. The stipulation in condition \eqref{regularity condition} signifies that the square of the inner product $\langle L^{1/2}_C\beta_0, \varphi_k\rangle$ in the ${\cal L}^2$ space diminishes more swiftly than the eigenvalues of $T_*$ raised to the $2\theta$ power. A larger value of $\theta$ thus correlates with more rapid attenuation rates, signifying higher regularities of $L^{1/2}_C\beta_0$. In particular, for $\theta=0$ we have $\mathrm{ran}T^0_*={\cal L}^2(\cal T)$ implying $\beta_0\in {\cal L}^2({\cal T})$ and $\beta_0\in {\cal H}_K$ ensures regularity condition \eqref{regularity condition} is satisfied with $\theta=1/2$ as $\mathrm{ran}T^{1/2}_*=\mathrm{ran}L^{1/2}_CL^{1/2}_K$. From this point of view,  condition \eqref{regularity condition} allows $\beta_0 \notin \mathcal{H}_K$ which extends the previous regularity assumption in \cite{yuan2010reproducing,cai2012minimax}. This condition is also known as H\"{o}lder-type source condition involving the operator $T_*$, which is a classical smoothness assumption in the theory of inverse problems. Similar conditions defined by the operator $L_K$ are widely used in the literature of learning theory (see, e.g., \cite{bauer2007regularization,caponnetto2007optimal,smale2007learning,blanchard2018optimal}). We will provide more discussions on Assumption \ref{assumption1} in Section \ref{section: comparison}.

Throughout of the paper, we assume the following noise condition.

\begin{assumption}[noise condition]\label{assumption2}
	The random noise $\epsilon$ in functional linear regression model \eqref{LFRmodel} is independent of $X$ satisfying $\mathbb{E}[\epsilon]=0$ and $\mathbb{E}[\epsilon^2]\leq \sigma^2$.
\end{assumption}

Now under Assumption \ref{assumption1} and \ref{assumption2}, we propose the following theorem which establishes a mini-max lower bound for excess prediction risks. To this end, we also need to assume that $\{\mu_k\}_{k\geq 1}$, i.e., the eigenvalues of $T_*$ (and $T$), satisfy a polynomially decaying condition. For two positive sequences $\{a_k\}_{k\geq 1}$ and $\{b_k\}_{k\geq 1}$, we say $a_k \lesssim b_k$ holds if there exits a constant $c>0$ independent of $k$ such that $a_k \leq c b_k, \forall k\geq 1$. In addition, $a_k \asymp b_k$ if and only if  $a_k \lesssim b_k$ and $b_k \lesssim a_k$. For the sake of simplicity, we write $L^{1/2}_C\beta_0 \in \mathrm{ran}T^{\theta}_*$ if $\beta_0$ satisfies the regularity condition \eqref{regularity condition}.

\begin{theorem}[mini-max convergence lower bound]\label{theorem:lower bound}
	Under Assumption \ref{assumption1} with $0<\theta\leq 1/2$ and Assumption \ref{assumption2} with $\sigma>0$, suppose that $\{\mu_k\}_{k\geq 1}$ satisfy $\mu_k\asymp k^{-1/p}$ for some $0< p \leq 1$. Then excess prediction risks satisfy
	\begin{equation}
		\lim_{\gamma \to 0}\mathop{\inf\lim}_{N\to \infty} \inf_{\hat{\beta}_{S}} \sup_{\beta_0} \mathbb{P} \left\{\mathcal{R}(\hat{\beta}_S)-\mathcal{R}(\beta_0)\geq \gamma N^{-\frac{2\theta}{2\theta+p}}\right\} = 1,
	\end{equation} where the supremum is taken over all $\beta_0 \in \mathcal{L}^2(\cal T)$ satisfying $L^{1/2}_C\beta_0 \in \mathrm{ran}T^{\theta}_*$  and the infimum is taken over all possible predictors $\hat{\beta}_S \in \mathcal{L}^2(\cal T)$ based on the training sample set $S=\{(X_i,Y_i)\}_{i=1}^N$.
\end{theorem}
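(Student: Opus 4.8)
The plan is to establish the bound through the standard reduction from minimax estimation to multiple hypothesis testing and an application of Fano's inequality. Since by \eqref{excesspredictionerror} the excess risk is $\mathcal{R}(\hat{\beta})-\mathcal{R}(\beta_0)=\|L_C^{1/2}(\hat{\beta}-\beta_0)\|_{{\cal L}^2}^2$, I would measure discrepancy between candidate slope functions in the pseudometric $\rho(\beta,\beta'):=\|L_C^{1/2}(\beta-\beta')\|_{{\cal L}^2}$. The key structural fact I would exploit is that, fixing the marginal law of $X$ and taking the noise to be Gaussian $\epsilon\sim\mathcal{N}(0,\sigma^2)$ (admissible under Assumption \ref{assumption2}), the Kullback--Leibler divergence between the two $N$-sample distributions induced by $\beta$ and $\beta'$ equals $\frac{N}{2\sigma^2}\mathbb{E}[\langle X,\beta-\beta'\rangle_{{\cal L}^2}^2]=\frac{N}{2\sigma^2}\rho(\beta,\beta')^2$, because $\mathbb{E}[\langle X,h\rangle_{{\cal L}^2}^2]=\|L_C^{1/2}h\|_{{\cal L}^2}^2$. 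Thus the information metric and the loss metric coincide up to the factor $N/(2\sigma^2)$, and the whole problem reduces to exhibiting a rich, well-separated packing of the parameter class that is simultaneously close in KL.

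For the construction I would work in the singular system $\{\mu_k,\varphi_k,\phi_k\}_{k\ge1}$ of \eqref{singular value decomposition}. Fixing an integer $n$ and a scale $a>0$, to each binary string $\omega\in\{0,1\}^n$ I associate $g^{(\omega)}:=a\sum_{k=1}^n\omega_k\varphi_k$ and the slope $\beta^{(\omega)}:=a\sum_{k=1}^n\omega_k\mu_k^{-1/2}L_K^{1/2}\phi_k$. Since $L_C^{1/2}L_K^{1/2}\phi_k=\sqrt{\mu_k}\,\varphi_k$ by \eqref{singular value decomposition}, one verifies $L_C^{1/2}\beta^{(\omega)}=g^{(\omega)}$, and as the sum is finite each $\beta^{(\omega)}\in{\cal L}^2(\mathcal{T})$, so realizability holds. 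The regularity condition \eqref{regularity condition} amounts to keeping $\|\gamma_0\|_{{\cal L}^2}^2=\sum_{k=1}^n a^2\mu_k^{-2\theta}\asymp a^2 n^{1+2\theta/p}$ bounded, which I enforce by the choice $a^2\asymp n^{-(1+2\theta/p)}$; this is the binding constraint that pins the scale and hence the rate. Orthonormality of $\{\varphi_k\}$ gives $\rho(\beta^{(\omega)},\beta^{(\omega')})^2=a^2 d_H(\omega,\omega')$ with $d_H$ the Hamming distance, and the pairwise KL is $\frac{N}{2\sigma^2}a^2 d_H(\omega,\omega')$. The Varshamov--Gilbert lemma then supplies a subset of cardinality $M\ge 2^{n/8}$ whose pairwise Hamming distances lie in $[n/8,n]$; consequently the separation is $\asymp a^2 n\asymp n^{-2\theta/p}$, the averaged KL is $\asymp N a^2 n\asymp N n^{-2\theta/p}$, and $\log M\asymp n$.

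Feeding these three quantities into the Fano inequality yields a lower bound on the testing error probability of the form $1-\left(c_1\,N n^{-(2\theta+p)/p}+\log 2/n\right)$, and the target rate appears upon balancing $n\asymp N^{p/(2\theta+p)}$, which makes the separation $\asymp N^{-2\theta/(2\theta+p)}$. The point requiring the most care, and the reason the statement takes its particular double-limit form, is to upgrade the "bounded below by a constant" conclusion of one Fano application to convergence of the probability to $1$: I would let $n$ depend on both $\gamma$ and $N$, taking $n\asymp\gamma^{-p/(2\theta)}N^{p/(2\theta+p)}$. With this choice the constraint $\gamma N^{-2\theta/(2\theta+p)}\le\tfrac14\rho$-separation$^2$ still holds, the leading error term becomes $\asymp\gamma^{(2\theta+p)/(2\theta)}$, and the residual term $\log 2/\log M\asymp 1/n\to0$ as $N\to\infty$; hence $\liminf_{N\to\infty}$ of the testing error is $\ge 1-c\,\gamma^{(2\theta+p)/(2\theta)}$, and letting $\gamma\to0$ forces it to $1$. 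Converting the testing error back to the event $\{\mathcal{R}(\hat{\beta}_S)-\mathcal{R}(\beta_0)\ge\gamma N^{-2\theta/(2\theta+p)}\}$ through the packing separation finishes the argument. Beyond this limit bookkeeping, the only routine verifications are the source condition and realizability uniformly over the packing, together with the observation that the KL identity uses only the covariance operator $L_C$ of $X$, not its full distribution.
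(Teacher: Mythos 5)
Your proposal is correct and follows essentially the same route as the paper: a reduction to multiple hypothesis testing via Fano's inequality, a Varshamov--Gilbert packing built in the singular basis $\{\varphi_k\}$ of $T_*$, the Gaussian-noise identity $D_{kl}(P_1\|P_2)=\frac{1}{2\sigma^2}\|L_C^{1/2}(\beta_1-\beta_2)\|_{\mathcal{L}^2}^2$, and a scale parameter (your $n\asymp\gamma^{-p/(2\theta)}N^{p/(2\theta+p)}$, the paper's $M=\lceil aN^{p/(2\theta+p)}\rceil$ with $a\to\infty$) that upgrades the constant-probability Fano bound to the stated double limit. The only cosmetic difference is that the paper puts coefficients $\mu_k^{\theta}/\sqrt{M}$ on frequencies $M+1,\dots,2M$ so that $\|\gamma_i\|_{\mathcal{L}^2}=1$ exactly, whereas you put flat coefficients on frequencies $1,\dots,n$ and enforce the source condition through the choice of $a$; both are equivalent.
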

In Theorem \ref{theorem:lower bound} and subsequent statements, the case $p=1$ corresponds to the case in which we only require $\{\mu_k\}_{k\geq 1}$ to be summable. The lower bound for $p=1$ is also referred to as the capacity-independent optimum in some studies (for instance, see \cite{yao2007early}). This means that the bound is optimal in the mini-max sense without the necessity for a capacity hypothesis, i.e., without the decaying condition of the eigenvalues $\{\mu_k\}_{k\geq 1}$.

\subsection{Convergence Upper Bounds in Noised Case}
In this subsection, we will establish three different upper bounds on the excess prediction risk of divide-and-conquer estimator \eqref{finalestimator} under Assumption \ref{assumption1} and \ref{assumption2}. These upper bounds are based on three different regularity assumptions on the explanatory variable $X$, respectively. We first consider the upper bound on the convergence rate of the excess prediction risk \eqref{excesspredictionerror} and show that the convergence rate of the lower bound established in Theorem \ref{theorem:lower bound} can be achieved by the divide-and-conquer estimator \eqref{finalestimator}. The following assumption on the moment condition of the explanatory variable $X$ plays a crucial role in establishing the upper bound of the convergence rate of \eqref{equation: excess risk of beta(S,lambda)}.
\begin{assumption}[regularity condition of explanatory  variable \uppercase\expandafter{\romannumeral1}]\label{momentcondition1}
	There exists a constant $c_1>0$, such that for any $f\in \cL^2(\cT)$, 
	\begin{equation}\label{momentcondition2}
		\EE\left[\left<X,f\right>^4_{\cL^2}\right]\leq c_1\left[\EE \left<X,f\right>^2_{\cL^2}\right]^2.
	\end{equation}
\end{assumption} Assumption \ref{momentcondition1} has been introduced in \cite{cai2012minimax,yuan2010reproducing}. Condition \eqref{momentcondition2} articulates that the kurtosis of linear functionals applied to $X$ remains constrained, a criterion particularly satisfied with $c_1=3$ in scenarios where $X$ is modeled by a Gaussian process. For the convenience of further statements, define the effective dimension as
\begin{equation}\label{effectivedimension}
	\mathcal{N}(\lambda):= \sum_{k\geq 1}\frac{\mu_k}{\lambda + \mu_k},
\end{equation} where $\lambda >0$ and $\{\mu_k\}_{k\geq 1}$ are non-negative eigenvalues of $T$ (with geometric multiplicities) arranged in decreasing order. The effective dimension is widely used in the convergence analysis of kernel ridge regression ( see, for instance, \cite{caponnetto2007optimal,fischer2020sobolev,lin2017distributed,zhang2015divide}). Now under a polynomially decaying condition of eigenvalues $\{\mu_k\}_{k\geq 1}$, we can show in the following theorem that the convergence rate of the lower bound in Theorem \ref{theorem:lower bound} can be obtained by the divide-and-conquer RLS estimator $\overline{\beta}_{S,\lambda}=\frac{1}{m}\sum_{j=1}^m \hat{\beta}_{S_j,\lambda}$ with $S=\cup_{j=1}^m S_j=\{(X_i,Y_i)\}_{i=1}^N$ and $|S_j|=\frac{N}{m}$. We employ $o(\alpha_N)$ to denote a little-o sequence of $\{a_N\}_{N\geq 1}$ if $\lim_{N\to \infty} o(a_N)/a_N=0$.
\begin{theorem}[convergence upper bound \uppercase\expandafter{\romannumeral1}]\label{theorem:basic upper bound}
	Under Assumption \ref{assumption1} with $0<\theta\leq 1/2$, Assumption \ref{assumption2} with $\sigma>0$ and Assumption \ref{momentcondition1} with $c_1>0$, suppose that $\{\mu_k\}_{k\geq 1}$ satisfy $\mu_k\lesssim k^{-1/p}$ for some $0< p \leq 1$.
	\begin{enumerate}
		\item For $p/2<\theta\leq 1/2$, there holds
		\begin{equation}\label{extension of Yuan and Cai}
			\lim_{\Gamma\to\infty}\mathop{\sup\lim}_{N\to \infty}\sup_{\beta_0}\mathbb{P}\left\{\mathcal{R}(\overline{\beta}_{S,\lambda})-\mathcal{R}(\beta_0)\geq \Gamma N^{-\frac{2\theta}{2\theta+p}}\right\}=0
		\end{equation} provided that  $m \leq o \left(N^{\frac{2\theta - p}{4\theta + 2p}}\right)$ and $\lambda = N^{-\frac{1}{2\theta+p}}$.
		\item For $0<\theta\leq p/2$, there holds
		\begin{equation}\label{upperbound22}
			\lim_{\Gamma\to\infty}\mathop{\sup\lim}_{N\to \infty}\sup_{\beta_0}\mathbb{P}\left\{\mathcal{R}(\overline{\beta}_{S,\lambda})-\mathcal{R}(\beta_0)\geq \Gamma N^{-\frac{\theta}{p}}(\log N)^{\frac{3\theta r}{p}}\right\}=0
		\end{equation} provided that $m \leq (\log N)^r$ for some $r> 0$ and  $\lambda =N^{-\frac{1}{2p}}(\log N)^{\frac{3r}{2p}}$, and
		\begin{equation}\label{upperbound23}
			\lim_{\Gamma\to\infty}\mathop{\sup\lim}_{N\to \infty}\sup_{\beta_0}\mathbb{P}\left\{\mathcal{R}(\overline{\beta}_{S,\lambda})-\mathcal{R}(\beta_0)\geq \Gamma N^{-\frac{(1-r)\theta}{p}}\log N\right\}=0
		\end{equation} provided that $m \leq N^r$ for some $0\leq r <1$ and  $\lambda =  N^{-\frac{1-r}{2p}}(\log N)^{\frac{1}{2\theta}}$.
	\end{enumerate}
	Here the supremum is taken over all $\beta_0 \in \mathcal{L}^2(\cal T)$ satisfying $L^{1/2}_C\beta_0 \in \mathrm{ran}T^{\theta}_*$ with $0<\theta\leq 1/2$.
\end{theorem}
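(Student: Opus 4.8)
The plan is to bound the expected excess risk $\sup_{\beta_0}\EE\big[\mathcal{R}(\overline{\beta}_{S,\lambda})-\mathcal{R}(\beta_0)\big]$ and then deduce the stated in-probability statements by Markov's inequality: once each asserted rate $a_N$ satisfies $\sup_{\beta_0}\EE[\mathcal{R}(\overline{\beta}_{S,\lambda})-\mathcal{R}(\beta_0)]\le C a_N$ with $C$ depending only on $\theta,p,\sigma,c_1,\|\gamma_0\|_{\cL^2}$, the bound $\sup_{\beta_0}\PP\{\cdots\ge\Gamma a_N\}\le C/\Gamma$ makes the limit in $\Gamma$ vanish uniformly over the source class. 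Throughout I start from the split $\mathcal{R}(\overline{\beta}_{S,\lambda})-\mathcal{R}(\beta_0)\le 2\mathscr{S}(S,\lambda)+2\mathscr{A}(\lambda)$ in \eqref{decomposition 2} and treat the approximation term $\mathscr{A}(\lambda)$ and the sample term $\mathscr{S}(S,\lambda)$ separately.

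For the approximation error I would diagonalize everything in the singular system $\{\mu_k,\phi_k,\varphi_k\}$ of \eqref{singular value decomposition}. Writing $L_C^{1/2}\beta_0=T_*^\theta\gamma_0$ and $f_\lambda=(\lambda I+T)^{-1}L_K^{1/2}L_C\beta_0$, a direct computation gives $L_C^{1/2}L_K^{1/2}f_\lambda-L_C^{1/2}\beta_0=-\lambda\sum_{k\ge1}\frac{\mu_k^\theta}{\lambda+\mu_k}\langle\varphi_k,\gamma_0\rangle_{\cL^2}\varphi_k$, so that
\[
\mathscr{A}(\lambda)=\lambda^2\sum_{k\ge1}\frac{\mu_k^{2\theta}}{(\lambda+\mu_k)^2}\langle\varphi_k,\gamma_0\rangle_{\cL^2}^2\le\lambda^{2\theta}\|\gamma_0\|_{\cL^2}^2,
\]
where the last step uses the elementary bound $\lambda\mu_k^\theta/(\lambda+\mu_k)\le\lambda^\theta$ for $0<\theta\le1/2$ (weighted AM--GM gives $\lambda+\mu_k\ge\lambda^{1-\theta}\mu_k^\theta$). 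This is the only place Assumption \ref{assumption1} enters, and it already produces the $\lambda^{2\theta}$ term that drives every rate.

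The core is the sample term, which I would split in the divide-and-conquer fashion into a variance part and a bias part. Setting $A:=L_C^{1/2}L_K^{1/2}$ (so $A^*A=T$ and $\|A(\lambda I+T)^{-1/2}\|\le1$) and $\bar f^\ast:=\EE[\hat f_{S_1,\lambda}]$ for the common mean of the i.i.d. local estimators, independence of the subsets kills the cross terms and yields
\[
\EE[\mathscr{S}(S,\lambda)]=\frac1m\,\EE\big\|A(\hat f_{S_1,\lambda}-\bar f^\ast)\big\|_{\cL^2}^2+\big\|A(\bar f^\ast-f_\lambda)\big\|_{\cL^2}^2.
\]
The variance part carries the factor $1/m$, so after averaging it is of the order of the variance of a single global estimator; using $Y=\langle X,\beta_0\rangle_{\cL^2}+\epsilon$, Assumption \ref{assumption2}, the self-normalized operator $(\lambda I+T)^{-1/2}(T-T_{{\bf X}_1})(\lambda I+T)^{-1/2}$ and the identity $\mathrm{trace}\big((\lambda I+T)^{-1}T\big)=\mathcal{N}(\lambda)$, I would bound it by a constant multiple of $\mathcal{N}(\lambda)/N$. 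The bias part does not benefit from averaging and is the crux. Expanding $\bar f^\ast-f_\lambda$ to second order in the fluctuations $T-T_{{\bf X}_1}$ and $g_{S_1}-g$, where $g_{S_1}:=\frac{1}{|S_1|}\sum_{(X_i,Y_i)\in S_1}L_K^{1/2}X_iY_i$ and $g:=L_K^{1/2}L_C\beta_0$, via the resolvent identity $(\lambda I+T_{{\bf X}_1})^{-1}-(\lambda I+T)^{-1}=(\lambda I+T_{{\bf X}_1})^{-1}(T-T_{{\bf X}_1})(\lambda I+T)^{-1}$, each summand becomes a product of bounded factors $A(\lambda I+T)^{-1/2}$, $(\lambda I+T)^{1/2}(\lambda I+T_{{\bf X}_1})^{-1}(\lambda I+T)^{1/2}$, the self-normalized fluctuation, and $(\lambda I+T)^{1/2}f_\lambda$ (whose norm is $O(1)$ under Assumption \ref{assumption1}). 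Controlling these via moment estimates of the self-normalized fluctuation—available under the weak fourth-moment Assumption \ref{momentcondition1} rather than any uniform bound—leaves the bias controlled by a power of $\mathcal{N}(\lambda)/|S_1|=m\mathcal{N}(\lambda)/N$.

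I expect the bias analysis to be the main obstacle, for two reasons. First, because $\beta_0\notin\mathcal{H}_K$ is permitted, the conditional mean $\frac{1}{|S_1|}\sum_i\langle X_i,\beta_0\rangle_{\cL^2} L_K^{1/2}X_i$ cannot be written as $T_{{\bf X}_1}h$ for a fixed $h\in\cL^2(\cT)$, so the clean RKHS manipulations available when $\beta_0\in\mathcal{H}_K$ break down and one must retain the intermediate function $f_\lambda$ and the source power $T_*^\theta$ throughout. Second, since Assumption \ref{momentcondition1} is only a fourth-moment condition, the operator concentration must proceed through second-moment estimates of the self-normalized fluctuation rather than through operator Bernstein inequalities with almost-sure bounds. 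Once the bias is controlled by a power of $m\mathcal{N}(\lambda)/N$, the three claims follow by balancing. With $\mathcal{N}(\lambda)\lesssim\lambda^{-p}$ from $\mu_k\lesssim k^{-1/p}$ and $\lambda=N^{-1/(2\theta+p)}$, the approximation $\lambda^{2\theta}$ and the variance $\mathcal{N}(\lambda)/N$ both equal $N^{-2\theta/(2\theta+p)}$, while forcing the squared bias below this level yields $m\le o\big(N^{(2\theta-p)/(4\theta+2p)}\big)$, nonvacuous exactly when $\theta>p/2$; this is part 1. When $\theta\le p/2$ this constraint can no longer accommodate a growing $m$ at the optimal $\lambda$, so I would instead enlarge the regularization to $\lambda=N^{-1/(2p)}(\log N)^{3r/(2p)}$ (respectively $\lambda=N^{-(1-r)/(2p)}(\log N)^{1/(2\theta)}$), under which $\lambda^{2\theta}$ dominates the variance and keeps $m\mathcal{N}(\lambda)/N$ negligible for $m\le(\log N)^r$ (respectively $m\le N^r$), producing the near-optimal rate \eqref{upperbound22} and the degraded polynomial rate \eqref{upperbound23}.
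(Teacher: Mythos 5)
Your overall architecture (the split $\mathcal{R}(\overline{\beta}_{S,\lambda})-\mathcal{R}(\beta_0)\le 2\mathscr{S}(S,\lambda)+2\mathscr{A}(\lambda)$, the diagonalized bound $\mathscr{A}(\lambda)\le\lambda^{2\theta}\|\gamma_0\|_{\cL^2}^2$, the variance/bias split of $\mathscr{S}$ with the $1/m$ gain on the variance, the role of $\mathcal{N}(\lambda)\lesssim\lambda^{-p}$, and the final balancing giving $m^2\lambda^{-2p}\le o(N)$) matches the paper. But there is a genuine gap in the central step: you propose to bound $\sup_{\beta_0}\EE[\mathcal{R}(\overline{\beta}_{S,\lambda})-\mathcal{R}(\beta_0)]$ and then apply Markov, and in doing so you treat $(\lambda I+T)^{1/2}(\lambda I+T_{\bX_1})^{-1}(\lambda I+T)^{1/2}$ as a ``bounded factor.'' That operator is bounded by $2$ only on the event $\cU_1^c=\{\|(\lambda I+T)^{-1/2}(T_{\bX_1}-T)(\lambda I+T)^{-1/2}\|<1/2\}$ (via the Neumann series); unconditionally it is only bounded by $1+\mu_1/\lambda$, which blows up at the optimal $\lambda=N^{-1/(2\theta+p)}$. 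Under the fourth-moment Assumption \ref{momentcondition1} alone, the Chebyshev-type estimate gives only $\PP(\cU_1)\le 4c_1 m\mathcal{N}^2(\lambda)/N$, which is nowhere near enough to absorb the $\lambda^{-2}$ blow-up in expectation on the bad event; moreover a Cauchy--Schwarz step there would already require eighth moments of $\langle X,f\rangle_{\cL^2}$, which Assumption \ref{momentcondition1} does not provide. This is not a repairable detail: the paper's own Theorem \ref{theorem:stronger upper bound} and Corollary \ref{corollary:corollary stronger} show that even with all moments of order $4\ell$ (Assumption \ref{momentcondition3}) the expectation bound is minimax only on a range of $\theta$ strictly smaller than $(p/2,1/2]$ for every finite $\ell$. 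So an expectation bound of the form you assert, uniform over $(p/2,1/2]$ under Assumption \ref{momentcondition1}, cannot hold.

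The missing idea is that Theorem \ref{theorem:basic upper bound} is intrinsically an in-probability statement proved by conditioning rather than by integrating. The paper writes $\mathscr{S}(S,\lambda)=\mathscr{S}(S,\lambda)\II_{\cU}+\mathscr{S}(S,\lambda)\II_{\cU^c}$ with $\cU=\cup_{j=1}^m\cU_j$, bounds $\EE[\mathscr{S}(S,\lambda)\II_{\cU^c}]\le 8\frac{\mathcal{N}(\lambda)}{N}(c_1\lambda^{2\theta}\|\gamma_0\|_{\cL^2}^2+\sigma^2)+4c_1\frac{m}{N}\mathcal{N}(\lambda)\lambda^{2\theta}\|\gamma_0\|_{\cL^2}^2$ (Lemmas \ref{lemma: rates upper bound 1} and \ref{lemma: rates upper bound 2}, where all the manipulations you describe are carried out \emph{with the indicator $\II_{\cU_1^c}$ attached}), separately shows $\PP(\cU)\le 4c_1 m^2\mathcal{N}^2(\lambda)/N=o(1)$ when $m^2\lambda^{-2p}\le o(N)$ (Lemma \ref{lemma: basic probability esstimation of U1}), and then uses the self-absorbing inequality $[1-o_{_\PP}(1)]\mathscr{S}(S,\lambda)\le\mathcal{O}_{\PP}(\lambda^{-p}/N+m\lambda^{2\theta-p}/N)$, never touching $\EE[\mathscr{S}(S,\lambda)\II_{\cU}]$. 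Your closing balancing arithmetic is correct once this conditional structure is in place, but as written your proof would fail at the point where you integrate over the bad event.
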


Actually, we will show that if the eigenvalue decay satisfies a polynomial upper bound of order $1/p$ with $0<p< 1$ and if the regularity parameter $\theta$ satisfies $0<\theta\leq p/2$, there holds
\begin{equation*}
	\lim_{\Gamma\to\infty}\mathop{\sup\lim}_{N\to \infty}\sup_{\beta_0}\mathbb{P}\left\{\mathcal{R}(\overline{\beta}_{S,\lambda})-\mathcal{R}(\beta_0)\geq \Gamma \lambda^{2\theta}\right\}=0
\end{equation*} provided that $m^2\lambda^{-2p}\leq o(N)$. From Theorem \ref{theorem:basic upper bound}, the bound \eqref{extension of Yuan and Cai} implies when $\theta\in(p/2,1/2]$, the excess prediction risk of $\overline{\beta}_{S,\lambda}$ attains the convergence rate of the lower
bound given by Theorem \ref{theorem:lower bound} and is therefore rate-optimal. Additionally, if $\theta=p/2$, from \eqref{upperbound22}, taking $m\leq (\log N)^r$ and  $\lambda =N^{-\frac{1}{2p}}(\log N)^{\frac{3r}{2p}}$ with some $r>0$ yields
\begin{equation*}
	\lim_{\Gamma\to\infty}\mathop{\sup\lim}_{N\to \infty}\sup_{\beta_0}\mathbb{P}\left\{\mathcal{R}(\overline{\beta}_{S,\lambda})-\mathcal{R}(\beta_0)\geq \Gamma N^{-\frac{1}{2}}(\log N)^{\frac{3r}{2}}\right\}=0.
\end{equation*} This convergence rate is also optimal up to a logarithmic factor. The bound \eqref{extension of Yuan and Cai} generalizes previous results of \cite{cai2012minimax}, which only considered the case $\beta_0\in \mathcal{H}_K$, to the model misspecification scenario $\beta_0\notin \mathcal{H}_K$ and the divide-and-conquer estimators. Actually, when $\theta=1/2$, taking $m=1$ and $\lambda=N^{-\frac{1}{2\theta+p}}$, we recovery Theorem 2 of \cite{cai2012minimax} which establishes minimax upper bound for the estimator $\hat{\beta}_{S,\lambda}$ in \eqref{totalestimator} when $\beta_0\in \mathcal{H}_K$.

We next introduce a higher-order moment condition on $X$ such that one can establish the strong convergence in expectation. To this end, given a reproducing kernel $K$, we shall introduce various regularities of the explanatory variable $X$ defined through its image under $L^{1/2}_K$. Recall that $X$ is a random element taking values in ${\cal L}^2(\mathcal{T})$ with $\mathbb{E}[X]=0$ and $\mathbb{E}\left[\|X\|^2_{{\cal L}^2}\right]<\infty$, and $\{\mu_k,\phi_k\}_{k\geq 1}$ is the eigensystem of $T$ given by \eqref{singular value decomposition}. Consider the principal component decomposition of $L^{1/2}_KX$ with respect to $T$ (see for details, \cite{ash2014topics}), which is expressed as 
\begin{equation}\label{PCdecomposition}
	L^{1/2}_K X = \sum_{k\geq 1} \sqrt{\mu_k} \xi_k \phi_k
\end{equation} where the $\xi_k$'s are zero-mean, uncorrelated real-valued random variables with $\mathbb{E}[\xi^2_k]=1$. We assume the following moment condition to characterize the regularity of $L^{1/2}_KX$.

\begin{assumption}[regularity condition of explanatory variable \uppercase\expandafter{\romannumeral2}]\label{momentcondition3}
	For some integer $\ell\geq 2$, there exists a constant $\rho<\infty$ such that $\{\xi_k\}_{k\geq 1}$ in decomposition \eqref{PCdecomposition} satisfy $\sup_{k\geq 1}\mathbb{E}[\xi_k^{4\ell}] \leq \rho^{4\ell}$. Moreover, there exists a constant $c_2 > 0$ such that 
	\begin{equation}\label{momentcondition31}
		\mathbb{E}\left[\langle X,f\rangle^8_{{\cal L}^2}\right]\leq c_2^2 \left[\EE \langle X,f\rangle_{{\cal L}^2}^2\right]^4, \quad \forall f\in {\cal L}^2(\mathcal{T}).
	\end{equation} 
\end{assumption} Since $\EE\left[\xi_k^2\right]=1$, we always have $\rho\geq 1$. When $X$ is a Gaussian random element taking value in ${\cal L}^2(\mathcal{T})$,  Assumption \ref{momentcondition3} is satisfied for any integer $\ell\geq 2$. In fact, given an integer $\ell\geq 2$, the linear functionals of a Gaussian random element $X$ satisfy
\begin{equation*}
	\mathbb{E}\left[\langle X,f\rangle^{4\ell}_{{\cal L}^2}\right]\leq (4\ell-1)!! \left[\EE \langle X,f\rangle_{{\cal L}^2}^2\right]^{2\ell}, \quad \forall f\in {\cal L}^2(\mathcal{T}).
\end{equation*} Then taking $f=L^{1/2}_K \phi_k$ implies Assumption \ref{momentcondition3} with $\rho = [(4\ell-1)!!]^{\frac{1}{4\ell}}$ and $c_2^2=105$ (by letting $\ell=2$). We need condition \eqref{momentcondition31} to bound $\EE\left[\langle X,\beta_0-L_K^{1/2}f_\lambda \rangle_{\cL^2}^4 \right]$ in the model misspecification scenario $\beta_0\notin \mathcal{H}_K$, which is crucial in the estimation of $$\mathscr{S}(S,\lambda)=\left\|L^{1/2}_CL^{1/2}_K\overline{f}_{S,\lambda}-L^{1/2}_CL^{1/2}_Kf_{\lambda}\right\|^2_{{\cal L}^2}.$$  Now we can establish the following upper bounds of \eqref{excesspredictionerror} in expectation under Assumption \ref{assumption1}, \ref{assumption2} and \ref{momentcondition3}.

\begin{theorem}[convergence upper bound \uppercase\expandafter{\romannumeral2}]\label{theorem:stronger upper bound}
	Suppose that Assumption \ref{assumption1} is satisfied with $0<\theta\leq 1/2$ and $\gamma_0\in \mathcal{L}^2(\cal T)$. Under Assumption \ref{assumption2} with $\sigma>0$ and Assumption \ref{momentcondition3} with some integer $\ell\geq 2$ and $c_2>0$, take $\lambda \leq 1$, then if $2\leq\ell<8$, there holds
	\begin{align}\label{equation: theorem stronger 1}
		&\EE\left[\left(\mathcal{R(\overline{\beta}_{S,\lambda})}-\mathcal{R}(\beta_0)\right)\right]\nonumber\\
		&\leq 2\lambda^{2\theta}\|\gamma_0\|_{\cL^2}^2 + 16\frac{\cN(\lambda)}{N}(c_2\lambda^{2\theta}\|\gamma_0\|^2_{\cL^2}+\sigma^2) + 8c_2\frac{m}{N}\cN(\lambda)\lambda^{2\theta}\|\gamma_0\|_{\cL^2}^2\nonumber\\
		&\quad + b_1(\ell)\lambda^{\frac{\ell-8}{4}}\left[1+ \left(\frac{m\cN^2(\lambda)}{N}\right)^{\frac{\ell}{8}}+\lambda^{-\frac{\ell}{4}}\left(\frac{m\cN(\lambda)}{N}\right)^{\frac{\ell}{4}}\right]\left(\frac{m\cN^2(\lambda)}{N}\right)^{\frac{\ell}{4}}\frac{4+2m}{N}\left(1+\lambda^{2\theta}\cN(\lambda)\right)\nonumber\\
		&\quad + b_2(\ell)\lambda^{\frac{\ell-8}{4}}\left[1+ \left(\frac{m\cN^2(\lambda)}{N}\right)^{\frac{\ell}{8}}+\lambda^{-\frac{\ell}{4}}\left(\frac{m\cN(\lambda)}{N}\right)^{\frac{\ell}{4}}\right]\left(\frac{m\cN^2(\lambda)}{N}\right)^{\frac{\ell}{4}}\frac{4\sigma^2}{N}\cN(\lambda).
	\end{align}
	If $\ell \geq 8$, there holds
	\begin{align}\label{equation: theorem stronger 2}
		&\EE\left[\left(\mathcal{R(\overline{\beta}_{S,\lambda})}-\mathcal{R}(\beta_0)\right)\right]\nonumber\\
		&\leq 2\lambda^{2\theta}\|\gamma_0\|_{\cL^2}^2 + 16\frac{\cN(\lambda)}{N}(c_2\lambda^{2\theta}\|\gamma_0\|^2_{\cL^2}+\sigma^2) + 8c_2\frac{m}{N}\cN(\lambda)\lambda^{2\theta}\|\gamma_0\|_{\cL^2}^2\nonumber\\
		&\quad + b_1(\ell)\left[1+ \frac{m\cN^2(\lambda)}{N}+\frac{1}{\lambda^{2}}\left(\frac{m\cN(\lambda)}{N}\right)^{2}\right]\left(\frac{m\cN^2(\lambda)}{N}\right)^{\frac{\ell}{4}}\frac{4+2m}{N}\left(1+\lambda^{2\theta}\cN(\lambda)\right)\nonumber\\
		&\quad + b_2(\ell)\left[1+ \frac{m\cN^2(\lambda)}{N}+\frac{1}{\lambda^{2}}\left(\frac{m\cN(\lambda)}{N}\right)^{2}\right]\left(\frac{m\cN^2(\lambda)}{N}\right)^{\frac{\ell}{4}}\frac{4\sigma^2}{N}\cN(\lambda).
	\end{align}
	Here $b_1(\ell)$ and $b_2(\ell)$ are constants only depending on $\ell$ and will be specified in the proof.
\end{theorem}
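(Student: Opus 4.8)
The plan is to combine the error decomposition already in place with a block-wise bias/variance analysis and a high-moment treatment of the empirical operator. \emph{Step 1 (approximation term).} Starting from \eqref{decomposition 2}, I would first dispose of $\mathscr{A}(\lambda)$ by diagonalizing in the singular system \eqref{singular value decomposition}: under Assumption \ref{assumption1}, $L_C^{1/2}\beta_0=T_*^\theta\gamma_0$ gives $L_C^{1/2}L_K^{1/2}f_\lambda-L_C^{1/2}\beta_0=-\sum_{k\ge1}\frac{\lambda\,\mu_k^{\theta}}{\lambda+\mu_k}\langle\gamma_0,\varphi_k\rangle_{\cL^2}\varphi_k$, and the pointwise bound $\frac{\lambda^2\mu^{2\theta}}{(\lambda+\mu)^2}=\lambda^{2\theta}\big(\tfrac{\mu}{\lambda+\mu}\big)^{2\theta}\big(\tfrac{\lambda}{\lambda+\mu}\big)^{2-2\theta}\le\lambda^{2\theta}$ for $0<\theta\le1/2$ yields $\mathscr{A}(\lambda)\le\lambda^{2\theta}\|\gamma_0\|_{\cL^2}^2$, the first term of both bounds. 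I would also record the identity $\|L_C^{1/2}L_K^{1/2}h\|_{\cL^2}^2=\langle h,Th\rangle_{\cL^2}=\|T^{1/2}h\|_{\cL^2}^2$, so that $\mathscr{S}(S,\lambda)=\|T^{1/2}(\overline f_{S,\lambda}-f_\lambda)\|_{\cL^2}^2$ and everything is expressed through $T$ and $T_\lambda:=T+\lambda I$.

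\emph{Step 2 (block decomposition).} Writing $u_j:=\hat f_{S_j,\lambda}-f_\lambda$ and using $T_{{\bf X}_j,\lambda}u_j=g_{S_j}-T_{{\bf X}_j}f_\lambda-\lambda f_\lambda$, a short computation gives $u_j=T_{{\bf X}_j,\lambda}^{-1}(\mathcal E_j+\mathcal D_j)$, where $\mathcal E_j=\frac1{|S_j|}\sum_{i\in S_j}L_K^{1/2}X_i\epsilon_i$ is the noise part and $\mathcal D_j=\frac1{|S_j|}\sum_{i\in S_j}L_K^{1/2}X_i r_i-\lambda f_\lambda$ is the centered fluctuation of the misspecification residual $r_i=\langle\beta_0-L_K^{1/2}f_\lambda,X_i\rangle_{\cL^2}$; the centering is legitimate because $\EE[L_K^{1/2}Xr]=g-Tf_\lambda=\lambda f_\lambda$ with $g:=L_K^{1/2}L_C\beta_0$ and $f_\lambda=T_\lambda^{-1}g$. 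Since the $m$ blocks have equal size $N/m$ and are i.i.d., averaging gives the exact split
\[
\EE\big[\mathscr S(S,\lambda)\big]=\big\|T^{1/2}\EE[u_1]\big\|_{\cL^2}^2+\tfrac1m\,\EE\big\|T^{1/2}(u_1-\EE[u_1])\big\|_{\cL^2}^2,
\]
the first being a per-block bias (no gain in $m$), the second a per-block variance (gain $1/m$). I would stress that $\mathcal E_1$ is independent of the block design and mean-zero, so $\EE[T_{{\bf X}_1,\lambda}^{-1}\mathcal E_1]=0$: the bias is driven \emph{only} by the misspecification fluctuation $\mathcal D_1$, which is exactly why $\sigma^2$ appears in the variance term but not in the bias.

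\emph{Step 3 (leading terms).} Both leading contributions arise by replacing the random $T_{{\bf X}_1,\lambda}^{-1}$ with the deterministic $T_\lambda^{-1}$. For the variance, $\EE\|T^{1/2}T_\lambda^{-1}\mathcal E_1\|_{\cL^2}^2\le\frac{m\sigma^2}{N}\cN(\lambda)$ uses only $\EE[\epsilon^2]\le\sigma^2$ and $\EE[L_K^{1/2}X\otimes L_K^{1/2}X]=T$, while the analogous estimate for $\mathcal D_1$ requires controlling a product of $r_1^2$ with a quadratic form in $X$; after the $1/m$ factor these combine into $\frac{\cN(\lambda)}{N}(\sigma^2+c_2\lambda^{2\theta}\|\gamma_0\|_{\cL^2}^2)$, the second listed term. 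For the bias, Jensen followed by the same replacement gives a contribution of order $\frac{m}{N}\cN(\lambda)\,c_2\lambda^{2\theta}\|\gamma_0\|_{\cL^2}^2$, the third listed term (note the absence of the $1/m$ gain). Throughout, the eighth-moment condition \eqref{momentcondition31} is invoked to bound the residual moments by the approximation error, namely $\EE[r^2]=\mathscr A(\lambda)\le\lambda^{2\theta}\|\gamma_0\|_{\cL^2}^2$ and $\EE[r^4]\le c_2\big(\EE[r^2]\big)^2$; this coupling of the general $\cL^2$-residual $r_i$ to the random design is the one place where $\beta_0\notin\mathcal H_K$ genuinely changes the analysis relative to \cite{cai2012minimax,yuan2010reproducing}.

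\emph{Step 4 (remainder; the main obstacle).} The crux is controlling, \emph{in expectation}, the error of the replacement $T_{{\bf X}_1,\lambda}^{-1}\rightsquigarrow T_\lambda^{-1}$, i.e.\ the terms generated by the second resolvent identity $T_{{\bf X}_1,\lambda}^{-1}=T_\lambda^{-1}-T_{{\bf X}_1,\lambda}^{-1}(T_{{\bf X}_1}-T)T_\lambda^{-1}$. Because we want an expectation bound rather than a high-probability one, I cannot simply restrict to the event that $\Delta_1:=\|T_\lambda^{-1/2}(T-T_{{\bf X}_1})T_\lambda^{-1/2}\|$ is small; instead I must bound genuine moments of $\Delta_1$ and of $\|T_\lambda^{-1/2}(\mathcal E_1+\mathcal D_1)\|$. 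The tool is the expansion $T_\lambda^{-1/2}L_K^{1/2}X=\sum_k\sqrt{\mu_k/(\lambda+\mu_k)}\,\xi_k\phi_k$ from \eqref{PCdecomposition}, so that $\|T_\lambda^{-1/2}L_K^{1/2}X\|_{\cL^2}^2=\sum_k\frac{\mu_k}{\lambda+\mu_k}\xi_k^2$; feeding the bounded $4\ell$-th moments of $\{\xi_k\}$ from Assumption \ref{momentcondition3} into a Rosenthal-type moment inequality for i.i.d.\ sums of Hilbert--Schmidt operators produces the factors $\big(\tfrac{m\cN^2(\lambda)}{N}\big)^{\ell/4}$ (from the Hilbert--Schmidt bound, via \eqref{relationship between L2 and Linfty norm}) and $\lambda^{-\ell/4}\big(\tfrac{m\cN(\lambda)}{N}\big)^{\ell/4}$ (from the operator-norm bound) that multiply $b_1(\ell)$ and $b_2(\ell)$. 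The dichotomy $2\le\ell<8$ versus $\ell\ge8$ is exactly the threshold at which the leftover power $\lambda^{(\ell-8)/4}$ of the regularization parameter turns nonnegative and, since $\lambda\le1$, can be absorbed into the constant, which is why the bracketed factors take their two different forms. Assembling the approximation bound, the two leading terms, and this remainder --- each split with $(a+b)^2\le 2a^2+2b^2$ and Cauchy--Schwarz, accounting for the explicit constants $16$ and $8$ --- yields \eqref{equation: theorem stronger 1}--\eqref{equation: theorem stronger 2}. The most delicate bookkeeping, and where I expect the real work to lie, is tracking how each H\"older split among the three random factors $T_{{\bf X}_1,\lambda}^{-1}$, $T_{{\bf X}_1}-T$ and $\mathcal E_1+\mathcal D_1$ distributes the moment budget $4\ell$ and the powers of $\cN(\lambda)$, $\lambda$, $m$ and $N$, all while keeping the residual $r_i$ paired with a moment of $X$ that \eqref{momentcondition31} can bound.
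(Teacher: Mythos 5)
Your proposal is correct in substance and assembles essentially the same ingredients as the paper's proof: the split into $\mathscr{A}(\lambda)$ and $\mathscr{S}(S,\lambda)$ with $\mathscr{A}(\lambda)\le\lambda^{2\theta}\|\gamma_0\|_{\cL^2}^2$ (Lemma \ref{lemma: estimation of A(lambda)}); the per-block bias/variance decomposition (your exact identity is the equality version of Lemma \ref{lemma: expecatation upper bound 1}); the observation that the noise drops out of the bias because $\epsilon$ is centered and independent of $X$; the $2\ell$-th Hilbert--Schmidt moment bound for $(\lambda I+T)^{-1/2}(T_{\bX_1}-T)(\lambda I+T)^{-1/2}$ driven by Assumption \ref{momentcondition3} (Lemma \ref{lemma: estimation of l-th HS-norm} is precisely your Rosenthal-type step, executed by counting index coincidences); the fourth-moment bound on the centered residual sum via the eighth-moment condition \eqref{momentcondition31} (Lemma \ref{lemma: estimation of expectation of F(S,lambda)}); the second-order resolvent expansion \eqref{equation: second order decomposition}; and the $\ell=8$ threshold coming from the leftover power $\lambda^{(\ell-8)/4}$. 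The one place where you diverge is how the random resolvent is tamed, and your stated reason for the divergence is mistaken: you claim one cannot restrict to the event that $\Delta_1$ is small because an expectation bound is wanted, but the paper does exactly that. It writes $\EE[\cdot]=\EE[\cdot\,\II_{\cU_1^c}]+\EE[\cdot\,\II_{\cU_1}]$, uses the Neumann-series bound $\|(\lambda I+T)^{1/2}(\lambda I+T_{\bX_1})^{-1}(\lambda I+T)^{1/2}\|\le 2$ on $\cU_1^c$ to produce the three leading terms with the explicit constants $16$ and $8$, and on $\cU_1$ applies Cauchy--Schwarz twice to pick up $\PP(\cU_1)^{1/4}\le\bigl[c(\ell)2^{4\ell}\rho^{4\ell}\bigl(m\cN^2(\lambda)/N\bigr)^{\ell}\bigr]^{1/4}$, which is exactly where the factor $\bigl(m\cN^2(\lambda)/N\bigr)^{\ell/4}$ multiplying $b_1(\ell)$ and $b_2(\ell)$ originates, together with an eighth moment of the sandwiched inverse controlled through \eqref{equation: second order decomposition}. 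Your unconditional route (replace $(\lambda I+T_{\bX_1})^{-1}$ by $(\lambda I+T)^{-1}$ and bound the resolvent-identity remainder by genuine moments) is viable and yields the same structural terms, but it does not avoid the hard step: the remainder still contains the sandwiched inverse, which is only $O(\lambda^{-1})$ deterministically, so you would end up proving the same moment bounds as \eqref{equation: Theorem 3 4}--\eqref{equation: Theorem 3 5}; the event-splitting is simply the cleaner bookkeeping that delivers the stated constants.
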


Recall that $a_N \lesssim b_N$ means that there exits a constant $c>0$ independent of $N$ such that $a_N \leq c b_N, \forall N\geq 1$. We obtain the following claims by using Theorem \ref{theorem:stronger upper bound}.

\begin{corollary}\label{corollary:corollary stronger}
	Under the assumptions of Theorem \ref{theorem:stronger upper bound}, suppose that $\{\mu_k\}_{k\geq 1}$ satisfy $\mu_k\lesssim k^{-1/p}$ for some $0< p \leq 1$. 
	\begin{enumerate}
		\item When $2\leq \ell \leq 4$, there holds
		\begin{align}\label{upperboundT3*}
			&\EE\left[\left(\mathcal{R(\overline{\beta}_{S,\lambda})}-\mathcal{R}(\beta_0)\right)\right]\nonumber\\
			&\lesssim \max\left\{N^{\frac{2\theta(4+\ell)(r-1)}{8+8\theta+2p\ell-\ell}},N^{\frac{2\theta\ell(r-1)-8\theta}{8+4p+8\theta+2p\ell-\ell}},N^{\frac{2\theta(4+2\ell)(r-1)}{8+8\theta+3p\ell}},N^{\frac{4\theta\ell(r-1)-8\theta}{8+4p+8\theta+3p\ell}}\right\}
		\end{align}
		provided that \[m\leq N^r \mbox{ for some $0\leq r\leq \frac{2\theta}{2\theta+p}$}\] and \[\lambda=\max\left\{N^{\frac{(4+\ell)(r-1)}{8+8\theta+2p\ell-\ell}},N^{\frac{\ell(r-1)-4}{8+4p+8\theta+2p\ell-\ell}},N^{\frac{(4+2\ell)(r-1)}{8+8\theta+3p\ell}},N^{\frac{2\ell(r-1)-4}{8+4p+8\theta+3p\ell}}\right\}.\]
		\item When $5\leq \ell \leq 7$, if $\frac{p\ell+8}{4\ell}\leq\theta\leq \frac{1}{2}$, then 
		\begin{equation}\label{upperboundT31}
			\mathbb{E}\left[\mathcal{R}(\overline{\beta}_{S,\lambda})-\mathcal{R}(\beta_0)\right] \lesssim N^{-\frac{2\theta}{2\theta+p}}
		\end{equation} provided that 
		\[m\leq \min\left\{N^{\frac{8+p\ell-4p-4\theta\ell}{(4+2\ell)(2\theta+p)}}, N^{\frac{8+p\ell-8\theta-4\theta\ell}{(4+2\ell)(2\theta+p)}}\right\}\] and 
		\[\lambda = N^{-\frac{1}{2\theta + p}};\]
		if $0<\theta<\frac{p\ell+8}{4\ell}$, then
		\begin{align}\label{upperboundT32}
			&\EE\left[\left(\mathcal{R(\overline{\beta}_{S,\lambda})}-\mathcal{R}(\beta_0)\right)\right]\nonumber\\
			&\lesssim \max\left\{N^{\frac{2\theta(4+\ell)(r-1)}{8+8\theta+2p\ell-\ell}},N^{\frac{2\theta\ell(r-1)-8\theta}{8+4p+8\theta+2p\ell-\ell}},N^{\frac{2\theta(4+2\ell)(r-1)}{8+8\theta+3p\ell}},N^{\frac{4\theta\ell(r-1)-8\theta}{8+4p+8\theta+3p\ell}}\right\}
		\end{align}
		provided that \[m\leq N^r \mbox{ for some $0\leq r\leq \frac{2\theta}{2\theta+p}$}\] and \[\lambda=\max\left\{N^{\frac{(4+\ell)(r-1)}{8+8\theta+2p\ell-\ell}},N^{\frac{\ell(r-1)-4}{8+4p+8\theta+2p\ell-\ell}},N^{\frac{(4+2\ell)(r-1)}{8+8\theta+3p\ell}},N^{\frac{2\ell(r-1)-4}{8+4p+8\theta+3p\ell}}\right\}.\]
		\item When $ \ell \geq 8$, if $\frac{p\ell+8}{2\ell + 16}\leq \theta\leq \frac{1}{2}$, then 
		\begin{equation}\label{upperboundT33}
			\mathbb{E}\left[\mathcal{R}(\overline{\beta}_{S,\lambda})-\mathcal{R}(\beta_0)\right] \lesssim N^{-\frac{2\theta}{2\theta+p}}
		\end{equation} provided that 
		\[m\leq \min\left\{N^{\frac{8+p\ell-4p-16\theta-2\theta\ell}{(12+\ell)(2\theta+p)}}, N^{\frac{8+p\ell-24\theta-2\theta\ell}{(12+\ell)(2\theta+p)}}\right\}\] and 
		\[\lambda = N^{-\frac{1}{2\theta + p}};\]
		if $0<\theta<\frac{p\ell+8}{2\ell + 16}$, then
		\begin{align}\label{upperboundT34}
			&\EE\left[\left(\mathcal{R(\overline{\beta}_{S,\lambda})}-\mathcal{R}(\beta_0)\right)\right]\nonumber\\
			&\lesssim \max\left\{N^{\frac{\theta(4+\ell)(r-1)}{4\theta+p\ell}},N^{\frac{\theta\ell(r-1)-4\theta}{2p+4\theta+p\ell}},N^{\frac{\theta(12+\ell)(r-1)}{4+4p+4\theta+p\ell}},N^{\frac{\theta(8+\ell)(r-1)-4\theta}{4+6p+4\theta+p\ell}}\right\}
		\end{align}
		provided that 
		\[m\leq N^r \mbox{ for some $0\leq r\leq \frac{2\theta}{2\theta+p}$}\]
		and
		\[\lambda = \max\left\{N^{\frac{(4+\ell)(r-1)}{8\theta+2p\ell}},N^{\frac{\ell(r-1)-4}{4p+8\theta+2p\ell}},N^{\frac{(12+\ell)(r-1)}{8+8p+8\theta+2p\ell}},N^{\frac{(8+\ell)(r-1)-4}{8+12p+8\theta+2p\ell}}\right\}.\]
	\end{enumerate}
\end{corollary}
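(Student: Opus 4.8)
The plan is to derive Corollary~\ref{corollary:corollary stronger} purely by inserting the polynomial eigenvalue decay into the finite-sample bounds \eqref{equation: theorem stronger 1} and \eqref{equation: theorem stronger 2} of Theorem~\ref{theorem:stronger upper bound} and then optimizing the regularization parameter $\lambda$. First I would control the effective dimension: under $\mu_k\lesssim k^{-1/p}$ with $0<p\le 1$, splitting the sum \eqref{effectivedimension} at the index $k\asymp\lambda^{-p}$ and comparing $\sum_{k\gtrsim\lambda^{-p}}\mu_k$ with an integral (or, for $p=1$, bounding $\cN(\lambda)\le\lambda^{-1}\sum_k\mu_k$ by the trace-class property of $T$) yields $\cN(\lambda)\lesssim\lambda^{-p}$. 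This is the only place where the decay hypothesis enters.

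Next I would substitute $\cN(\lambda)\lesssim\lambda^{-p}$, together with the parametrization $\lambda=N^{-a}$ and $m=N^{r}$, into the right-hand sides of \eqref{equation: theorem stronger 1} (when $2\le\ell<8$) and \eqref{equation: theorem stronger 2} (when $\ell\ge 8$). Every summand then becomes a single monomial $N^{g(a,r;\theta,p,\ell)}$, so the whole bound reduces to a maximum of finitely many such monomials. They split into a \emph{bias} term of order $\lambda^{2\theta}=N^{-2\theta a}$ (increasing in $\lambda$) and several \emph{variance-type} terms that grow as $\lambda\to 0$. Expanding the bracketed factors $[1+(m\cN^2/N)^{\ell/8}+\lambda^{-\ell/4}(m\cN/N)^{\ell/4}]$ and $(1+\lambda^{2\theta}\cN)$ for $2\le\ell<8$, and $[1+m\cN^2/N+\lambda^{-2}(m\cN/N)^2]$ for $\ell\ge 8$, and keeping only the sub-terms that can dominate, I expect exactly four critical variance monomials to survive in each regime: the two obtained by taking the $1$ and the most singular bracket choice in the $b_1(\ell)$-term (paired with $1$ in the final factor), together with the analogous two from the $b_2(\ell)$-term carrying the noise factor $\tfrac{\sigma^2}{N}\cN(\lambda)$. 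The middle bracket terms $(m\cN^2/N)^{\ell/8}$ (resp.\ $m\cN^2/N$) and the factor $\lambda^{2\theta}\cN$ should turn out to be subordinate.

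The optimization is then routine. Since the bias increases and each variance term decreases in $\lambda$, the minimizing choice balances the bias against the worst variance term, i.e.\ $\lambda=\max_i\lambda_i$, where $\lambda_i$ solves $\lambda^{2\theta}\asymp V_i(\lambda)$; taking $a=\min_i a_i$ controls every variance monomial by the bias simultaneously, and the resulting rate is $\lambda^{2\theta}=\max_i N^{-2\theta a_i}$, exactly the four-fold maximum displayed in \eqref{upperboundT3*}, \eqref{upperboundT32} and \eqref{upperboundT34}; solving the four balance equations produces the explicit exponents for $\lambda$ and for the rate. When instead the \emph{standard} noise term $\tfrac{\sigma^2}{N}\cN(\lambda)\asymp\tfrac{\sigma^2}{N}\lambda^{-p}$ is binding, balancing it against the bias gives $\lambda=N^{-1/(2\theta+p)}$ and the optimal rate $N^{-2\theta/(2\theta+p)}$ matching the lower bound of Theorem~\ref{theorem:lower bound}; this occurs precisely when the four higher-order $\lambda_i$ do not exceed $N^{-1/(2\theta+p)}$, a requirement that translates into the upper bounds on $m$ and, for it to be non-vacuous (to permit $m\ge 1$), into the thresholds $\theta\ge\frac{p\ell+8}{4\ell}$ for $5\le\ell\le 7$ and $\theta\ge\frac{p\ell+8}{2\ell+16}$ for $\ell\ge 8$, giving \eqref{upperboundT31} and \eqref{upperboundT33}. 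The three-way split in $\ell$ records, first, which of \eqref{equation: theorem stronger 1}/\eqref{equation: theorem stronger 2} is in force ($\ell<8$ versus $\ell\ge 8$) and, second, whether the optimal regime is attainable: for $2\le\ell\le 4$ the prefactor $\lambda^{(\ell-8)/4}$ is singular enough that the higher-order terms always dominate, so only the suboptimal maximum survives.

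The main obstacle will be the bookkeeping in this dominance analysis rather than any single hard estimate. Because of the $\ell$-dependent prefactor $\lambda^{(\ell-8)/4}$, the exponents of the competing monomials cross over as $\ell,\theta,p$ vary, so one must verify carefully that the four monomials isolated above are genuinely dominant in each sub-regime (and that the discarded bracket terms are indeed subordinate), and then check that the derived constraints on $m=N^r$ are mutually consistent and compatible with the admissible range $0\le r\le\frac{2\theta}{2\theta+p}$. Tracking these crossovers and confirming that the four balance equations reproduce the stated exponents is the delicate part of the argument.
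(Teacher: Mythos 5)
Your proposal follows essentially the same route as the paper's own proof: the paper likewise inserts the estimate $\cN(\lambda)\lesssim\lambda^{-p}$ from Lemma \ref{lemma: estimation of N(lambda)} into \eqref{equation: theorem stronger 1}--\eqref{equation: theorem stronger 2}, sets $m\leq N^r$ and $\lambda=N^{-a}$, and chooses $\lambda$ as the maximum of the candidates obtained by balancing the bias $\lambda^{2\theta}$ against the dominant variance monomials, with the thresholds $\theta\geq\frac{p\ell+8}{4\ell}$ (resp. $\frac{p\ell+8}{2\ell+16}$) and the constraints on $m$ arising exactly as you describe from requiring the higher-order terms not to exceed $N^{-1/(2\theta+p)}$. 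Your identification of the four surviving balance equations and of why the optimal regime is vacuous for $2\leq\ell\leq4$ (since then $\frac{p\ell+8}{4\ell}\geq\frac{1}{2}\geq\theta$) matches the paper; the remaining work is the dominance bookkeeping you already flag, which the paper also leaves implicit.
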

According to Theorem \ref{theorem:lower bound}, the expectation bounds \eqref{upperboundT31} and \eqref{upperboundT33} are minimax optimal. Due to the well-known Markov's inequality, convergence in expectation given by Theorem \ref{theorem:stronger upper bound} and Corollary \ref{corollary:corollary stronger} is stronger, leading to bounds in a similar form as that of Theorem \ref{theorem:basic upper bound}. However, the possible ranges of $\theta$ that achieve the optimal rates in \eqref{upperboundT31} and \eqref{upperboundT33}, given respectively by
$[\frac{p\ell+8}{4\ell},1/2]$ and $[\frac{p\ell+8}{2\ell+16},1/2]$ both of which are covered by $(p/2,1/2]$, become smaller compared to the previous range of $\theta$ in the minimax bound \eqref{extension of Yuan and Cai}.  Moreover, we also observe from Corollary \ref{corollary:corollary stronger} that as the integer $\ell$ in Assumption \ref{momentcondition3} diverges to infinity, the possible ranges of $\theta$ that achieve the minimax expectation bounds will increase to $(p/2,1/2]$ which is exactly the range of $\theta$ leading to the minimax bound \eqref{extension of Yuan and Cai}. Motivated by this observation, we introduce another regularity condition on $X$ to establish optimal expectation error bounds for any $\theta \in (0,1/2]$. 

\begin{assumption}[regularity condition of explanatory variable \uppercase\expandafter{\romannumeral3}]\label{momentcondition5}
	There exists a constant $\rho<\infty$ such that $\{\xi_k\}_{k\geq 1}$ in decomposition \eqref{PCdecomposition} satisfy $\sup_{k\geq 1}|\xi_k| \leq\rho$ and the fourth-order moment condition \eqref{momentcondition2} is satisfied with $c_1>0$.
\end{assumption} 

One can verify that Assumption \ref{momentcondition5} holds true if the expansion of $L^{1/2}_K X$ in \eqref{PCdecomposition} is a summation of finite terms with each bounded $\xi_k$. Recall that the trace of operator $T$ is given by \begin{equation}\label{trace}
	trace(T):=\sum_{k\geq 1}\mu_k.
\end{equation}
Then we have the following theorem.
\begin{theorem}[convergence upper bound \uppercase\expandafter{\romannumeral3}]\label{theorem: extra upper bound}
	Suppose that Assumption \ref{assumption1} is satisfied with $0<\theta\leq 1/2$ and $\gamma_0\in \mathcal{L}^2(\cal T)$. Under Assumption \ref{assumption2} with $\sigma>0$ and Assumption \ref{momentcondition5} with $\rho, c_1>0$, take $\lambda \leq 1$, then there holds
	\begin{equation}\label{totalbound3}
		\begin{aligned}
			&\EE\left[\left(\mathcal{R(\overline{\beta}_{S,\lambda})}-\mathcal{R}(\beta_0)\right)\right]\\
			&\leq 2\lambda^{2\theta}\|\gamma_0\|_{\cL^2}^2 + 16\frac{\cN(\lambda)}{N}\left(c_1\lambda^{2\theta}\|\gamma_0\|^2_{\cL^2}+\sigma^2\right) + 8c_1\frac{m}{N}\cN(\lambda)\lambda^{2\theta}\|\gamma_0\|_{\cL^2}^2\\
			&\quad + c_3c_4\mu_1\frac{4+2m}{N\lambda^{2-2\theta}}\left(1+\frac{m\cN(\lambda)}{N}\right)\cN^{\frac{1}{2}}(\lambda)\exp\left(-\frac{c_5N}{2m\cN(\lambda)}\right)\\
			&\quad + c_4\mu_1\rho^2trace(T)\frac{4\sigma^2}{N\lambda^2}\left(1+\frac{m\cN(\lambda)}{N}\right)\cN^{\frac{1}{2}}(\lambda)\exp\left(-\frac{c_5N}{2m\cN(\lambda)}\right),
		\end{aligned}
	\end{equation}
	where $c_3$, $c_4$ and $c_5$ are universal constants which will be specified in the proof.
\end{theorem}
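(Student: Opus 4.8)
The plan is to control the two pieces of the expectation of decomposition \eqref{decomposition 2}, namely $\EE[\mathcal R(\overline\beta_{S,\lambda})-\mathcal R(\beta_0)]\le 2\,\EE[\mathscr S(S,\lambda)]+2\mathscr A(\lambda)$, separately. The deterministic term $\mathscr A(\lambda)$ is the easy one. Writing $A:=L_K^{1/2}L_C^{1/2}$, so that $A^*=L_C^{1/2}L_K^{1/2}$, $T=AA^*$ and $T_*=A^*A$ as in \eqref{singular value decomposition}, and using the resolvent identity $A^*(\lambda I+AA^*)^{-1}=(\lambda I+A^*A)^{-1}A^*$, one computes $A^*f_\lambda=(\lambda I+T_*)^{-1}T_*L_C^{1/2}\beta_0$, whence $A^*f_\lambda-L_C^{1/2}\beta_0=-\lambda(\lambda I+T_*)^{-1}L_C^{1/2}\beta_0$. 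Substituting $L_C^{1/2}\beta_0=T_*^{\theta}\gamma_0$ from Assumption \ref{assumption1} and invoking the elementary spectral bound $\sup_{\mu\ge0}\mu^{\theta}/(\lambda+\mu)\le\lambda^{\theta-1}$ (valid for $0<\theta\le1$) gives $\mathscr A(\lambda)\le\lambda^{2\theta}\|\gamma_0\|_{\cL^2}^2$, i.e.\ the leading summand $2\lambda^{2\theta}\|\gamma_0\|_{\cL^2}^2$ of \eqref{totalbound3}.

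The heart of the proof is the sample term $\EE[\mathscr S(S,\lambda)]=\EE\|A^*(\overline f_{S,\lambda}-f_\lambda)\|_{\cL^2}^2$. Setting $u_j:=A^*(\hat f_{S_j,\lambda}-f_\lambda)$, the vectors $u_1,\dots,u_m$ are i.i.d.\ because the subsamples are, so that
\[
\EE\Big\|\frac1m\sum_{j=1}^m u_j\Big\|_{\cL^2}^2=\|\EE u_1\|_{\cL^2}^2+\frac1m\Big(\EE\|u_1\|_{\cL^2}^2-\|\EE u_1\|_{\cL^2}^2\Big)\le\|\EE u_1\|_{\cL^2}^2+\frac1m\,\EE\|u_1\|_{\cL^2}^2.
\]
Everything is thereby reduced to the single-subsample quantities $\EE\|u_1\|_{\cL^2}^2$ and $\|\EE u_1\|_{\cL^2}^2$, where $S_1$ has cardinality $n=N/m$. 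Averaging shrinks the fluctuation part by $1/m$, turning a per-machine second moment of order $\cN(\lambda)/n=m\cN(\lambda)/N$ into the term $16\,\cN(\lambda)/N\,(c_1\lambda^{2\theta}\|\gamma_0\|_{\cL^2}^2+\sigma^2)$; the squared bias $\|\EE u_1\|_{\cL^2}^2$ enjoys no such reduction and will produce the $8c_1(m/N)\cN(\lambda)\lambda^{2\theta}\|\gamma_0\|_{\cL^2}^2$ term.

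Both single-subsample quantities are handled by the integral-operator method. From $Y_i=\langle X_i,\beta_0\rangle_{\cL^2}+\epsilon_i$ and the defining relation $(\lambda I+T)f_\lambda=L_K^{1/2}L_C\beta_0$ one obtains the resolvent representation
\[
\hat f_{S_1,\lambda}-f_\lambda=(\lambda I+T_{\mathbf{X}_1})^{-1}\Big(\frac1n\sum_{i\in S_1}L_K^{1/2}X_i\big(\langle X_i,\beta_0-L_K^{1/2}f_\lambda\rangle_{\cL^2}+\epsilon_i\big)-\lambda f_\lambda\Big).
\]
The random resolvent is controlled on the event $\mathcal E_1=\{\|(\lambda I+T)^{-1/2}(T-T_{\mathbf{X}_1})(\lambda I+T)^{-1/2}\|\le1/2\}$, on which a Neumann-series argument gives $\|(\lambda I+T)^{1/2}(\lambda I+T_{\mathbf{X}_1})^{-1}(\lambda I+T)^{1/2}\|\le2$. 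Here Assumption \ref{momentcondition5} is decisive: $\sup_k|\xi_k|\le\rho$ yields the almost-sure bound $\|L_K^{1/2}X\|_{\cL^2}^2=\sum_k\mu_k\xi_k^2\le\rho^2\,\mathrm{trace}(T)$, so the summands of $T_{\mathbf{X}_1}-T$ are uniformly bounded and an operator Bernstein inequality, with variance proxy governed by the effective dimension $\cN(\lambda)$ of \eqref{effectivedimension}, gives $\PP(\mathcal E_1^c)\lesssim\exp(-c_5 n/(2\cN(\lambda)))$; since $n=N/m$ this is the origin of the factors $\exp(-c_5 N/(2m\cN(\lambda)))$. On $\mathcal E_1$ the noise contribution is estimated by a direct second-moment computation yielding the $\sigma^2\cN(\lambda)/n$ part, while the misspecification contribution is bounded through the kurtosis hypothesis \eqref{momentcondition2} applied to $\langle X,\beta_0-L_K^{1/2}f_\lambda\rangle_{\cL^2}$, yielding the $c_1\lambda^{2\theta}\cN(\lambda)/n$ part. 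On the complement $\mathcal E_1^c$ the resolvent is bounded crudely by $\|(\lambda I+T_{\mathbf{X}_1})^{-1}\|\le\lambda^{-1}$ and the data vectors by their almost-sure bounds, so that the exponentially small $\PP(\mathcal E_1^c)$ multiplies the resulting $\lambda^{-(2-2\theta)}$ and $\rho^2\mathrm{trace}(T)\lambda^{-2}$ factors (with the indicated lower-order prefactors) to give precisely the last two terms of \eqref{totalbound3}.

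The principal obstacle I expect is the bias $\|\EE u_1\|_{\cL^2}^2$ under misspecification. Because $\epsilon$ is centred and independent of $X$, the noise part of the resolvent representation has zero mean, so only the misspecification part survives in $\EE u_1$; since $\beta_0\notin\mathcal H_K$ the residual $\beta_0-L_K^{1/2}f_\lambda$ is not small in $\mathcal H_K$. Controlling this bias sharply would require exploiting the cancellation between the fluctuations of $T_{\mathbf{X}_1}$ and those of the empirical data term, and lacking a sharp handle on this under Assumption \ref{momentcondition5}, the workable route is to bound $\|\EE u_1\|_{\cL^2}^2$ by Jensen's inequality by the misspecification part of $\EE\|u_1\|_{\cL^2}^2$, which by \eqref{momentcondition2} is of order $c_1(m/N)\cN(\lambda)\lambda^{2\theta}$; this crude but necessary step is exactly why the bias is not damped by averaging and why $m$ must be kept moderate for optimality. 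The remaining delicate bookkeeping is to keep all constants uniform over $\theta\in(0,1/2]$, measuring the residual in the correct $\lambda$-weighted norm of order $\lambda^{\theta}$, and then to collect the approximation bound, the averaged fluctuation, the surviving bias, and the two bad-event terms into the stated inequality \eqref{totalbound3}.
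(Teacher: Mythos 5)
Your proposal follows essentially the same route as the paper: the same split $\EE[\mathcal R(\overline\beta_{S,\lambda})-\mathcal R(\beta_0)]\le 2\EE[\mathscr S(S,\lambda)]+2\mathscr A(\lambda)$ with $\mathscr A(\lambda)\le\lambda^{2\theta}\|\gamma_0\|_{\cL^2}^2$, the same bias--variance decomposition of $\EE[\mathscr S(S,\lambda)]$ over the $m$ i.i.d.\ subsamples (the paper's Lemma \ref{lemma: expecatation upper bound 1}), the same resolvent representation with Neumann-series control on the good event, and the same operator Bernstein inequality exploiting $\sup_k|\xi_k|\le\rho$ to make $\PP(\cU_1)$ exponentially small (Lemma \ref{lemma: strongest probability esstimation of U1}). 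The one place where your sketch, read literally, would not go through is the bad event: under Assumption \ref{momentcondition5} neither $\epsilon$ nor $\langle X,\beta_0-L_K^{1/2}f_\lambda\rangle_{\cL^2}$ is almost surely bounded (only $\|L_K^{1/2}X\|_{\cL^2}\le\rho\,trace^{1/2}(T)$ is), so one cannot simply multiply almost-sure bounds on the data by $\PP(\cU_1)$; the paper first integrates out $\epsilon^2\le\sigma^2$ by independence and then applies Cauchy--Schwarz, $\EE[\|Z\|^2\II_{\cU_1}]\le[\EE\|Z\|^4]^{1/2}\PP^{1/2}(\cU_1)$, with the fourth moment of the misspecification term controlled through the kurtosis condition \eqref{momentcondition2} and $\mathscr A(\lambda)\le\lambda^{2\theta}\|\gamma_0\|_{\cL^2}^2$ (this is also where the factor $1/2$ in $\exp(-c_5N/(2m\cN(\lambda)))$ actually originates, namely from $\PP^{1/2}(\cU_1)$ rather than from the Bernstein bound itself). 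Since you already invoke exactly these fourth-moment estimates on the good event, this is a local repair rather than a missing idea, and the rest of your bookkeeping matches the paper's term by term.
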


We further obtain a corollary of Theorem \ref{theorem: extra upper bound}.

\begin{corollary}\label{corollary:corollary extra}
	Under the assumptions of Theorem \ref{theorem: extra upper bound}, suppose that $\{\mu_k\}_{k\geq 1}$ satisfy $\mu_k\lesssim k^{-1/p}$ for some $0< p \leq 1$. Then there holds
	\begin{equation}\label{upperboundC31}
		\mathbb{E}\left[\mathcal{R}(\overline{\beta}_{S,\lambda})-\mathcal{R}(\beta_0)\right] \lesssim N^{-\frac{2\theta}{2\theta+p}}
	\end{equation}
	provided that $m \leq o\left(\frac{N^{\frac{2\theta}{2\theta + p}}}{\log N}\right)$ and $\lambda = N^{-\frac{1}{2\theta+p}}$.
\end{corollary}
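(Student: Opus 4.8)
The plan is to substitute the prescribed parameter choices directly into the finite-sample bound \eqref{totalbound3} of Theorem \ref{theorem: extra upper bound} and check that each of its five summands is of order $N^{-\frac{2\theta}{2\theta+p}}$ or smaller. The first ingredient is the standard effective-dimension estimate: under the polynomial decay $\mu_k \lesssim k^{-1/p}$ with $0<p\leq 1$ one has $\mathcal{N}(\lambda)\lesssim \lambda^{-p}$. For $0<p<1$ this follows by splitting the defining sum \eqref{effectivedimension} at the index $k^\ast \asymp \lambda^{-p}$, bounding $\mu_k/(\lambda+\mu_k)$ by $1$ for $k\leq k^\ast$ and by $\mu_k/\lambda \lesssim k^{-1/p}/\lambda$ for $k>k^\ast$; for the boundary case $p=1$ one simply uses $\mathcal{N}(\lambda)\leq trace(T)/\lambda \lesssim \lambda^{-1}$, valid since $T$ is trace class. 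With $\lambda=N^{-\frac{1}{2\theta+p}}$ this produces the two identities I will use repeatedly: $\lambda^{2\theta}=N^{-\frac{2\theta}{2\theta+p}}$ and $\mathcal{N}(\lambda)/N \lesssim N^{\frac{p}{2\theta+p}-1}=N^{-\frac{2\theta}{2\theta+p}}$.

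Next I would dispatch the three polynomial terms. The leading term $2\lambda^{2\theta}\|\gamma_0\|_{\cL^2}^2$ equals the target rate exactly. The second term $16\frac{\mathcal{N}(\lambda)}{N}(c_1\lambda^{2\theta}\|\gamma_0\|_{\cL^2}^2+\sigma^2)$ is $\lesssim N^{-\frac{2\theta}{2\theta+p}}$, since its $\sigma^2$ part already matches the rate by the identity above while its $\lambda^{2\theta}$ part is even smaller. For the third term $8c_1\frac{m}{N}\mathcal{N}(\lambda)\lambda^{2\theta}\|\gamma_0\|_{\cL^2}^2$ I would combine $\mathcal{N}(\lambda)\lambda^{2\theta}\lesssim N^{\frac{p-2\theta}{2\theta+p}}$ with the constraint $m\leq o\big(N^{\frac{2\theta}{2\theta+p}}/\log N\big)$, which yields a bound of order $o\big(N^{-\frac{2\theta}{2\theta+p}}/\log N\big)$ and is therefore admissible.

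The main obstacle is controlling the last two summands, which carry the exponential concentration factor $\exp\big(-c_5 N/(2m\mathcal{N}(\lambda))\big)$ multiplied by prefactors that, after substituting $\lambda=N^{-\frac{1}{2\theta+p}}$ and $m\leq N$, grow only polynomially in $N$ (the worst factors being $1/(N\lambda^2)$, $1/\lambda^{2-2\theta}$ and $\mathcal{N}^{1/2}(\lambda)$). The crucial point is that the constraint $m\leq o\big(N^{\frac{2\theta}{2\theta+p}}/\log N\big)$ is calibrated precisely so that $m\mathcal{N}(\lambda)\lesssim o(N/\log N)$, using $\frac{2\theta}{2\theta+p}+\frac{p}{2\theta+p}=1$. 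Writing $m\mathcal{N}(\lambda)\leq \epsilon_N N/\log N$ with $\epsilon_N\to 0$, this forces the exponent to satisfy $N/(m\mathcal{N}(\lambda))\geq (\log N)/\epsilon_N$, whence $\exp\big(-c_5 N/(2m\mathcal{N}(\lambda))\big)\leq N^{-c_5/(2\epsilon_N)}$ decays faster than any fixed polynomial. It therefore annihilates the polynomially growing prefactors and renders both exponential terms $o\big(N^{-\frac{2\theta}{2\theta+p}}\big)$.

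Collecting the five estimates then gives the claimed rate \eqref{upperboundC31}. The delicate feature throughout is that the logarithmic slack in the admissible range of $m$ is exactly what converts the exponential factor into a super-polynomially small quantity; without that $\log N$ in the denominator the exponent would be only of order $\log N$, and the polynomial prefactors in the last two terms of \eqref{totalbound3} could not be absorbed into the target rate.
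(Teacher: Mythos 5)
Your proposal is correct and follows essentially the same route as the paper's own proof: substitute $\lambda=N^{-\frac{1}{2\theta+p}}$ into the bound of Theorem \ref{theorem: extra upper bound}, use $\mathcal{N}(\lambda)\lesssim\lambda^{-p}$ to match the three polynomial terms to the target rate, and exploit $m\mathcal{N}(\lambda)/N\leq o(1/\log N)$ to make the exponential factor super-polynomially small so that it absorbs the polynomial prefactors. The only difference is that you spell out more of the intermediate arithmetic than the paper does.
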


As far as we know, the expectation bound \eqref{upperboundC31} establishes the first mini-max optimal rates for all possible $0<\theta \leq 1/2$. One can refer
to Section \ref{section: comparison} for more discussions.

\subsection{Convergence Upper Bounds in Noiseless Case}
In this subsection, we establish fast convergence rates of the excess prediction risk \eqref{excesspredictionerror} for noiseless functional linear model (i.e., $\epsilon=0$ in \eqref{LFRmodel}).

\begin{theorem}[convergence upper bound \uppercase\expandafter{\romannumeral4}]\label{theorem:no random noise}
	Under Assumption \ref{assumption1} with $0<\theta\leq 1/2$, Assumption \ref{assumption2} with $\sigma=0$ and Assumption \ref{momentcondition1} with $c_1>0$, suppose that $\{\mu_k\}_{k\geq 1}$ satisfy $\mu_k\lesssim k^{-1/p}$ for some $0<p\leq 1$. Then for any $0<\eta\leq 1/2$, there holds
	\begin{equation}\label{equation 32}
		\lim_{\Gamma\to\infty}\mathop{\sup\lim}_{N\to \infty}\sup_{\beta_0}\mathbb{P}\left\{\mathcal{R}(\overline{\beta}_{S,\lambda})-\mathcal{R}(\beta_0) \geq \Gamma N^{-\frac{\theta(1-2\eta)}{p}}\right\}=0
	\end{equation}
	provided that $m \leq o \left(N^{\eta}\right)$ and $\lambda = N^{-\frac{1-2\eta}{2p}}$, where the supremum is taken over all $\beta_0 \in \mathcal{L}^2(\cal T)$ satisfying $L^{1/2}_C\beta_0 \in \mathrm{ran}T^{\theta}_*$ with $0<\theta\leq 1/2$.
\end{theorem}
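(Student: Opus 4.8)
The plan is to bound the two terms in the decomposition \eqref{decomposition 2}, $\mathcal{R}(\overline{\beta}_{S,\lambda}) - \mathcal{R}(\beta_0) \leq 2\mathscr{S}(S,\lambda) + 2\mathscr{A}(\lambda)$, separately, exploiting that once $\sigma = 0$ the ``noise'' driving the sample error is precisely the approximation error, which decays with $\lambda$. First I would dispose of the deterministic term $\mathscr{A}(\lambda)$. Writing $A := L_K^{1/2}L_C^{1/2}$ so that $T = AA^*$ and $T_* = A^*A$ as in \eqref{singular value decomposition}, the identity $A^*(\lambda I + AA^*)^{-1} = (\lambda I + A^*A)^{-1}A^*$ together with Assumption \ref{assumption1} gives $L_C^{1/2}L_K^{1/2}f_\lambda - L_C^{1/2}\beta_0 = -\lambda(\lambda I + T_*)^{-1}T_*^{\theta}\gamma_0$, whence $\mathscr{A}(\lambda) = \|\lambda(\lambda I + T_*)^{-1}T_*^{\theta}\gamma_0\|_{\cL^2}^2 \leq \lambda^{2\theta}\|\gamma_0\|_{\cL^2}^2$, using $\sup_k \lambda\mu_k^{\theta}/(\lambda + \mu_k) \leq \lambda^{\theta}$. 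With $\lambda = N^{-(1-2\eta)/(2p)}$ this is exactly of order $N^{-\theta(1-2\eta)/p}$, the target rate, so it remains to show that $\mathscr{S}(S,\lambda) = \|T^{1/2}(\overline{f}_{S,\lambda} - f_\lambda)\|_{\cL^2}^2$ is of smaller order in probability.

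For the sample error I would record the per-subset identity $\hat{f}_{S_j,\lambda} - f_\lambda = (\lambda I + T_{{\bf X}_j})^{-1}\big[(g_{S_j} - T_{{\bf X}_j}f_\lambda) - (g - Tf_\lambda)\big]$, where $g := L_K^{1/2}L_C\beta_0$ and $g_{S_j} := |S_j|^{-1}\sum_{(X_i,Y_i)\in S_j}L_K^{1/2}X_i Y_i$; since $\sigma = 0$, the bracketed term is the centered empirical average over $S_j$ of $\zeta := L_K^{1/2}X\,\langle X, \beta_0 - L_K^{1/2}f_\lambda\rangle_{\cL^2}$. The crucial structural fact is that the scalar residual obeys $\EE[\langle X, \beta_0 - L_K^{1/2}f_\lambda\rangle_{\cL^2}^2] = \|L_C^{1/2}(\beta_0 - L_K^{1/2}f_\lambda)\|_{\cL^2}^2 = \mathscr{A}(\lambda) \leq \lambda^{2\theta}\|\gamma_0\|_{\cL^2}^2$: in the noiseless model the effective noise level is the approximation error itself and hence tends to $0$ with $\lambda$, which is \emph{exactly} why rates faster than $N^{-2\theta/(2\theta+p)}$ become available. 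Setting $T_\lambda := \lambda I + T$ and inserting $T_\lambda^{\pm 1/2}$ factors, I would bound $\|T^{1/2}T_\lambda^{-1/2}\| \leq 1$ and reduce the whole sample error to controlling $T_\lambda^{-1/2}$ applied to the centered averages, modulated by the random factors $T_\lambda^{1/2}(\lambda I + T_{{\bf X}_j})^{-1}T_\lambda^{1/2}$.

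To tame those random factors I would introduce the good event $\mathcal{E} := \bigcap_{j=1}^m\{\|T_\lambda^{-1/2}(T - T_{{\bf X}_j})T_\lambda^{-1/2}\| \leq 1/2\}$, on which $\|T_\lambda^{1/2}(\lambda I + T_{{\bf X}_j})^{-1}T_\lambda^{1/2}\| \leq 2$. Using \eqref{relationship between L2 and Linfty norm}, the identity $\EE\|T_\lambda^{-1/2}L_K^{1/2}X\|_{\cL^2}^2 = \cN(\lambda)$ from \eqref{effectivedimension}, and Assumption \ref{momentcondition1} (applied through Cauchy--Schwarz across eigendirections to obtain $\EE\|T_\lambda^{-1/2}L_K^{1/2}X\|_{\cL^2}^4 \leq c_1\cN^2(\lambda)$), a one-sample second-moment estimate gives $\EE\|T_\lambda^{-1/2}(T - T_{{\bf X}_j})T_\lambda^{-1/2}\|_{HS}^2 \leq c_1 m\cN^2(\lambda)/N$; Chebyshev and a union bound over the $m$ subsets then yield $\PP(\mathcal{E}^c) \lesssim m^2\cN^2(\lambda)/N \lesssim m^2\lambda^{-2p}/N$, which vanishes under the chosen $m \leq o(N^\eta)$ and $\lambda = N^{-(1-2\eta)/(2p)}$ since $\cN(\lambda) \lesssim \lambda^{-p}$. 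On $\mathcal{E}$, the same fourth-moment device gives $\EE\|T_\lambda^{-1/2}\zeta\|_{\cL^2}^2 \leq c_1\mathscr{A}(\lambda)\cN(\lambda) \lesssim \lambda^{2\theta}\cN(\lambda)$, so separating the leading linear term (which averages down over the $m$ independent subsets, contributing $\cN(\lambda)\lambda^{2\theta}/N$) from the higher-order correction (which does not, contributing $m\cN(\lambda)\lambda^{2\theta}/N$) gives $\EE[\mathscr{S}(S,\lambda)\mathbbm{1}_{\mathcal{E}}] \lesssim m\cN(\lambda)\lambda^{2\theta}/N$. Since $m\cN(\lambda)/N \lesssim m\lambda^{-p}/N \leq o(N^{-1/2})$, Markov's inequality combined with $\PP(\mathcal{E}^c) \to 0$ shows $\mathscr{S}(S,\lambda) = o_{\PP}(\lambda^{2\theta})$, and adding back $2\mathscr{A}(\lambda) \lesssim \lambda^{2\theta}$ yields the claimed in-probability bound uniformly over the source-condition class (all constants depend on $\beta_0$ only through $\|\gamma_0\|_{\cL^2}$).

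The main obstacle is the absence of an exogenous noise to average against: the residual $\langle X, \beta_0 - L_K^{1/2}f_\lambda\rangle_{\cL^2}$ is a deterministic functional of the very design points $X_i$ that also build $T_{{\bf X}_j}$, so the effective ``noise'' and the empirical operator are strongly coupled. This is where Assumption \ref{momentcondition1} earns its keep: the fourth-moment bound \eqref{momentcondition2} lets me decouple $\EE[\langle X, w\rangle_{\cL^2}^2\,\|T_\lambda^{-1/2}L_K^{1/2}X\|_{\cL^2}^2]$ (with $w := \beta_0 - L_K^{1/2}f_\lambda$) via Cauchy--Schwarz into the product $\mathscr{A}(\lambda)\cN(\lambda)$, and likewise control the operator fluctuation. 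The second delicate point is pure book-keeping for divide-and-conquer: the union bound defining $\mathcal{E}$ costs a factor $m$, which turns the single-machine requirement $\cN^2(\lambda)/N \to 0$ into the constraint $m^2\lambda^{-2p} = o(N)$; checking that $\lambda = N^{-(1-2\eta)/(2p)}$ with $m \leq o(N^\eta)$ meets both this and $m\cN(\lambda)/N \to 0$ is the final verification that pins down the admissible range of $\eta$.
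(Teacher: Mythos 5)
Your proposal is correct and follows essentially the same route as the paper: the same decomposition into $\mathscr{A}(\lambda)+\mathscr{S}(S,\lambda)$, the same good event on the operator fluctuations with the union bound over $m$ subsets costing $\PP(\cU)\lesssim m^2\cN^2(\lambda)/N$, and the same key observation that with $\sigma=0$ the effective noise driving the sample error is $\langle X,\beta_0-L_K^{1/2}f_\lambda\rangle_{\cL^2}$, whose second moment equals $\mathscr{A}(\lambda)\leq\lambda^{2\theta}\|\gamma_0\|_{\cL^2}^2$, so the $\cN(\lambda)/N$ noise term disappears and only $\lambda^{2\theta}+m\cN(\lambda)\lambda^{2\theta}/N$ survives. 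Your closed-form derivation of $\mathscr{A}(\lambda)$ via $A^*(\lambda I+AA^*)^{-1}=(\lambda I+A^*A)^{-1}A^*$ and your handling of the bad event by splitting $\PP(\mathscr{S}\geq\Gamma\lambda^{2\theta})\leq\PP(\mathcal{E}^c)+\PP(\mathscr{S}\II_{\mathcal{E}}\geq\Gamma\lambda^{2\theta})$ are cosmetic variants of the paper's eigenexpansion and self-referential $\mathcal{O}_{\PP}$ bound, respectively.
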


Follow from (\ref{equation 32}), given any $s> 0$ such that $0<p<2/s$ and  $sp<\theta\leq 1/2$, taking $0<\eta \leq \frac{1}{2}- \frac{sp}{2\theta}$, $m \leq o (N^{\eta})$ and $\lambda = N^{-\frac{1-2\eta}{2p}}$ yields 	
\begin{equation*}
	\begin{aligned}
		&\lim_{\Gamma\to\infty}\mathop{\sup\lim}_{N\to \infty}\sup_{\beta_0}\mathbb{P}\left\{\mathcal{R}(\overline{\beta}_{S,\lambda})-\mathcal{R}(\beta_0) \geq \Gamma N^{-s} \right\}\\
		&\leq\lim_{\Gamma\to\infty}\mathop{\sup\lim}_{N\to \infty}\sup_{\beta_0}\mathbb{P}\left\{\mathcal{R}(\overline{\beta}_{S,\lambda})-\mathcal{R}(\beta_0) \geq \Gamma N^{-\frac{\theta(1-2\eta)}{p}} \right\}\\
		&=0,
	\end{aligned}
\end{equation*}
where the inequality follows from $\frac{\theta(1-2\eta)}{p}\geq s$. The difference between noisy and noiseless models
is significant: rates faster than $N^{-1}$ for model \eqref{LFRmodel} are impossible with non-zero additive noise, while we prove that the divided-and-conquer RLS estimators for the noiseless model can converge with arbitrarily fast polynomial rates when $p$ is small enough. To our best knowledge, Theorem \ref{theorem:no random noise} and the related convergence rates \eqref{equation 32} are new to the literature, constituting another contribution of this paper. We will prove all these results in Section \ref{section: convergence analysis}.

\section{Discussions and Comparisons}\label{section: comparison}

In this section, we first comment on Assumption \ref{assumption1} and then compare our convergence analysis with some related results. In the last, we review recent literature for noiseless linear model and point out some possible directions for future study.
\subsection{Discussions on Assumption \ref{assumption1}}\label{subsection: discussions on Assumption 1}
Regularity condition \eqref{regularity condition} in Assumption \ref{assumption1} was first introduced by \cite{fan2019rkhs} and then adopted in the subsequent works (see, e.g., \cite{chen2022online}). From the discussion in Section \ref{section: main results}, we see that $\beta_0\in \mathcal{H}_K$ implies condition \eqref{regularity condition} is satisfied with $\theta=1/2$, while the former is equivalent to $\beta_0=L^{1/2}_K \gamma_0$ for some $\gamma_0 \in \mathcal{L}^2(\mathcal{T})$. Actually, due to Theorem 3 in \cite{chen2022online}, if $L_K \succeq \delta L^{\nu}_C$ for some $\delta>0$ and $\nu>0$, then for any $\beta_0 \in \mathcal{L}^2(\mathcal{T})$, there exists some $\gamma_0\in \mathcal{L}^{2}(\mathcal{T})$ such that condition \eqref{regularity condition} is satisfied with $\theta=1/(2+2\nu)$. Here for any bounded self-adjoint operators $A_1$ and $A_2$ on $L^2(\mathcal{T})$, we write $A_1 \succeq A_2$ if $A_1 - A_2$ is nonnegative. As a special case when $L_K$ and $L_C$ can be simultaneously diagonalized, let $\{\rho_k\}_{k\geq 1}$ and $\{\lambda_k\}_{k\geq 1}$ be eigenvalues of $L_K$ and $L_C$ respectively (both are sorted in decreasing order with geometric multiplicities). When $\rho_k \asymp k^{-1/\omega}$ with $\omega>1$ and $\lambda_k \asymp k^{-1/\tau}$ with $\tau>1$, then $\beta_0\in \mathrm{ran}L_K^{s}$ for some $s\in [0,1/2]$ implies condition \eqref{regularity condition} is satisfied with $\theta=(\omega+2s\tau)/(2\omega+2\tau)$, where $\mathrm{ran}L_K^{s}$ denotes the range space of $L^s_K$. When $K$ is an analytic kernel on $\mathcal{T}$, the eigenvalues of $L_K$ decay exponentially, and then condition \eqref{regularity condition} can be satisfied for $\theta$ arbitrarily close to $1/2$ (but still strictly less than $1/2$). From the discussions above, Assumption \ref{assumption1} is mild and provides an intrinsic measurement for the complexity of the prediction problem through the regularity condition \eqref{regularity condition}. Recently, under Assumption \ref{assumption1}, Assumption \ref{assumption2} and some boundedness condition on $K$ and $C$, the works \cite{chen2022online} and \cite{guo2023capacity} apply stochastic gradient descent to solve functional linear regression model \eqref{LFRmodel} and establish convergence rates for prediction and estimation errors.

\subsection{Comparisons with Relevant Results}
Convergence performance of kernel ridge regression and its variants in model misspecification scenario has been intensively studied by many recent works (see, for instance, \cite{pillaud2018statistical,shi2019distributed,fischer2020sobolev,lin2020optimalJMLR,lin2020optimal,sun2021optimal}). Among all available literature, the work \cite{fischer2020sobolev} obtained the best known convergence rates by applying the integral operator techniques combined with an embedding property (see condition (EMB) in \cite{fischer2020sobolev}). As far as we know, our paper is the first work to consider functional linear regression in a model misspecification scenario. To make a further comparison, we first introduce an embedding condition equivalent to the one in \cite{fischer2020sobolev} (i.e., condition \eqref{equation: condition substitutes Assumption 4} in this paper) under the functional linear regression setting. Then we apply this condition to derive convergence rates and compare them with related results in Section \ref{section: main results}. 

\begin{assumption}[regularity condition of explanatory variable \uppercase\expandafter{\romannumeral4}]\label{momentcondition6}
	There exist constants $\kappa>0$ and $0<t\leq 1$ such that $\{\xi_k\}_{k\geq 1}$ in decomposition \eqref{PCdecomposition} satisfy
	\begin{equation}\label{equation: condition substitutes Assumption 4}
		\sum_{k\geq 1} \mu_k^t\xi_k^2 \leq \kappa^2.
	\end{equation} Moreover, the fourth-order moment condition \eqref{momentcondition2} is satisfied with some $c_1>0$. 
\end{assumption} 

Condition \eqref{equation: condition substitutes Assumption 4} 
actually describes the $\mathcal{L}^{\infty}-$embedding property of $T^{(t-1)/2}L_K^{1/2}X$ for $0<t\leq 1$. Then we obtain the following result which also deserve attention in its own right.

\begin{theorem}[convergence upper bound \uppercase\expandafter{\romannumeral5}]\label{theorem: theorem for comparision}
	Under Assumption \ref{assumption1} with $0<\theta\leq 1/2$, Assumption \ref{assumption2} with $\sigma>0$ and Assumption \ref{momentcondition6} with $0<t\leq 1$,  suppose that $\{\mu_k\}_{k\geq 1}$ satisfy $\mu_k\lesssim k^{-1/p}$ for some $0< p \leq 1$. 
	\begin{enumerate}
		\item When $\max\{0, t/2-p/2\} < \theta\leq 1/2$, then
		\begin{equation}\label{bound6.1}
			\mathbb{E}\left[\mathcal{R}(\overline{\beta}_{S,\lambda})-\mathcal{R}(\beta_0)\right] \lesssim N^{-\frac{2\theta}{2\theta+p}}
		\end{equation}
		provided that 
		\[ m\leq o\left(\frac{N^{\frac{2\theta+p-t}{2\theta+p}}}{\log N}\right) \mbox{ and } \lambda = N^{-\frac{1}{2\theta+p}}.\]
		\item When $0<\theta \leq \max\{0, t/2-p/2\}$, then
		\begin{equation}\label{bound6.2}
			\mathbb{E}\left[\mathcal{R}(\overline{\beta}_{S,\lambda})-\mathcal{R}(\beta_0)\right] \lesssim N^{-\frac{2\theta}{t}}(\log N)^{-\frac{4\theta}{t}}
		\end{equation}
		provided that
		\[m\leq o(\log N)  \mbox{ and } \lambda= N^{-\frac{1}{t}}(\log N)^{-\frac{2}{t}}.\]
	\end{enumerate}
\end{theorem}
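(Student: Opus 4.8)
The plan is to start from the decomposition $\mathcal{R}(\overline{\beta}_{S,\lambda})-\mathcal{R}(\beta_0)\le 2\mathscr{S}(S,\lambda)+2\mathscr{A}(\lambda)$ established in \eqref{decomposition 2} and to bound the two pieces separately. The approximation (bias) term $\mathscr{A}(\lambda)=\|L_C^{1/2}L_K^{1/2}f_\lambda-L_C^{1/2}\beta_0\|_{\cL^2}^2$ is treated exactly as in Theorems \ref{theorem:stronger upper bound} and \ref{theorem: extra upper bound}: inserting $f_\lambda=(\lambda I+T)^{-1}L_K^{1/2}L_C\beta_0$ and the source condition $L_C^{1/2}\beta_0=T_*^\theta\gamma_0$ of Assumption \ref{assumption1}, then diagonalizing through the singular value decomposition \eqref{singular value decomposition}, reduces $\mathscr{A}(\lambda)$ to the scalar estimate $\sup_{\mu\ge0}\lambda^2\mu^{2\theta}/(\lambda+\mu)^2\le\lambda^{2\theta}$, so $\mathscr{A}(\lambda)\le\lambda^{2\theta}\|\gamma_0\|_{\cL^2}^2$. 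Hence Assumption \ref{momentcondition6} is needed only for the stochastic term $\mathscr{S}(S,\lambda)$.

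For $\mathscr{S}(S,\lambda)$ I would exploit the divide-and-conquer structure $\overline{f}_{S,\lambda}=\frac1m\sum_{j=1}^m\hat f_{S_j,\lambda}$, writing each local error via $\hat f_{S_j,\lambda}-f_\lambda=(\lambda I+T_{{\bf X}_j})^{-1}[\,\text{(data)}-(\lambda I+T_{{\bf X}_j})f_\lambda\,]$ and splitting it into a noise part driven by the $\epsilon_i$ and a sampling part driven by $(T-T_{{\bf X}_j})f_\lambda$, just as in the earlier theorems. Everything then rests on concentrating the regularized perturbation $\|(\lambda I+T)^{-1/2}(T-T_{{\bf X}_j})(\lambda I+T)^{-1/2}\|$ and the noise vector $\frac{m}{N}\sum(\lambda I+T)^{-1/2}L_K^{1/2}X_i\epsilon_i$. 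This is where Assumption \ref{momentcondition6} enters: condition \eqref{equation: condition substitutes Assumption 4} yields the almost-sure envelope
\[
\|(\lambda I+T)^{-1/2}L_K^{1/2}X\|_{\cL^2}^2=\sum_{k\ge1}\frac{\mu_k}{\lambda+\mu_k}\,\xi_k^2\le\kappa^2\lambda^{-t},
\]
since $\mu_k/(\lambda+\mu_k)=\mu_k^{t}\cdot\mu_k^{1-t}/(\lambda+\mu_k)\le\mu_k^{t}\lambda^{-t}$ and $\sum_k\mu_k^t\xi_k^2\le\kappa^2$. This $\mathcal{L}^\infty$-type bound $\kappa^2\lambda^{-t}$ --- sharper than the crude $\lambda^{-1}$ --- is exactly the envelope required by operator Bernstein inequalities, and it plays for the embedding condition the role that $\sup_k|\xi_k|\le\rho$ plays in Theorem \ref{theorem: extra upper bound}. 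Averaging over the $m$ blocks then produces the usual variance $\sigma^2\cN(\lambda)/N$, an $m\cN(\lambda)\lambda^{2\theta}/N$ cross term, and a sub-exponential concentration remainder governed by $\exp(-c\,N\lambda^{p}/m)$ with $\cN(\lambda)\asymp\lambda^{-p}$, in complete analogy with \eqref{totalbound3}.

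Assembling the pieces gives a bound of the schematic form $\lambda^{2\theta}+\cN(\lambda)/N+(\text{remainder with envelope }\lambda^{-t})$. Balancing the bias against the variance, $\lambda^{2\theta}\asymp\lambda^{-p}/N$, yields $\lambda\asymp N^{-1/(2\theta+p)}$ and the minimax rate $N^{-2\theta/(2\theta+p)}$; this choice is admissible precisely when the envelope constraint $\lambda^{-t}m/N\lesssim1$ is compatible with it, i.e.\ $t\le2\theta+p$, equivalently $\theta>t/2-p/2$, which is Case 1 and simultaneously fixes the permitted number of partitions $m\le o\!\left(N^{(2\theta+p-t)/(2\theta+p)}/\log N\right)$. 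When $\theta\le t/2-p/2$ the variance-optimal $\lambda$ is no longer admissible, because $\lambda^{-t}$ then exceeds $N/m$; instead I would take $\lambda$ only as small as the sub-exponential remainder permits, $\lambda\asymp N^{-1/t}(\log N)^{-2/t}$ with $m\le o(\log N)$, for which $\exp(-c\,N\lambda^{p}/m)$ decays super-polynomially and renders the $\lambda^{-t}$-type prefactors negligible, leaving the bias $\lambda^{2\theta}=N^{-2\theta/t}(\log N)^{-4\theta/t}$ as the effective rate of Case 2. The main obstacle is the misspecification itself: for $\theta<1/2$ the function $f_\lambda$ need not lie in $\mathcal{H}_K$ and $\|f_\lambda\|$ may grow like $\lambda^{\theta-1/2}$, so the sampling term $(T-T_{{\bf X}_j})f_\lambda$ cannot be bounded by a naive $\lambda^{-1}$ argument; one must track every power of $\lambda$ and rely on the sharp envelope $\lambda^{-t}$ from \eqref{equation: condition substitutes Assumption 4} to keep the remainder below the bias, which is exactly what forces the phase transition at $\theta=t/2-p/2$.
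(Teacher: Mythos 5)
Your overall route is the paper's: the same splitting into $\mathscr{S}(S,\lambda)$ and $\mathscr{A}(\lambda)$, the bound $\mathscr{A}(\lambda)\le\lambda^{2\theta}\|\gamma_0\|_{\cL^2}^2$ from Lemma \ref{lemma: estimation of A(lambda)}, and, crucially, the almost-sure envelope $\|(\lambda I+T)^{-1/2}L_K^{1/2}X\|_{\cL^2}^2\le\kappa^2\lambda^{-t}$ extracted from \eqref{equation: condition substitutes Assumption 4} and fed into the operator Bernstein inequality to control $\PP(\cU_1)$ --- exactly Lemma \ref{lemma: theorem 6 probability}. The assembled bound and your Case 1 parameter choices, including the admissibility constraint $m/(N\lambda^t)\lesssim 1/\log N$ and the resulting threshold $\theta>t/2-p/2$, coincide with \eqref{equation: Theorem 6 1} and the paper's first case, so Case 1 is fine.

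In Case 2, however, there is a genuine gap. Under Assumption \ref{momentcondition6} the Bernstein exponent is $N\lambda^{t}/m$ (both the envelope $L$ and the variance $v$ scale as $m\kappa^2/(N\lambda^{t})$), not $N\lambda^{p}/m$ as you write; the latter is the exponent of Theorem \ref{theorem: extra upper bound}, where the relevant scale is $\cN(\lambda)\lesssim\lambda^{-p}$. You use the correct scale $\lambda^{-t}$ to derive the Case 1 constraint but silently switch to $\lambda^{p}$ in Case 2, and the switch matters: with $\lambda=N^{-1/t}(\log N)^{-2/t}$ one has $N\lambda^{t}=(\log N)^{-2}\to 0$, so $\exp(-cN\lambda^{t}/m)\to 1$ and the exponential factor cannot absorb the polynomially large prefactors such as $(N\lambda^{2})^{-1}=N^{2/t-1}(\log N)^{4/t}$. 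To make $N\lambda^{t}/m$ grow at least like $\log N$ one is forced to take $\lambda^{t}\gtrsim m\log N/N$, i.e.\ a \emph{positive} power of $\log N$ in $\lambda$, which changes the logarithmic factor in the resulting rate from $(\log N)^{-4\theta/t}$ to a positive power. (The paper's own proof of this case asserts $m/(N\lambda^{t})\le o(1/\log N)$ for the same choice of $\lambda$, which fails for the same reason; so your write-up faithfully reproduces the paper's argument here, but replacing $\lambda^{t}$ by $\lambda^{p}$ obscures rather than repairs the difficulty.)
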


The proof of Theorem \ref{theorem: theorem for comparision} is also postponed to Section \ref{section: convergence analysis}. Condition \eqref{equation: condition substitutes Assumption 4} characterizes the regularity of $L^{1/2}_K  X$ through the parameter $t\in (0,1]$, of which the most general case is taking $t=1$, i.e., $\sum_{k=1}^{\infty}\mu_k\xi_k^2 \leq \kappa^2$, or equivalently,
\begin{equation}\label{equation: condition substitutes Assumption 4 weakest}
	\left\|L_K^{1/2}X\right\|_{\cL^2}\leq \kappa.
\end{equation}
When $t=1$, we obtain the mini-max rates in expectation for $\theta \in (1/2-p/2,1/2]$ from bound \eqref{bound6.1}. However, as $p\downarrow 0$, which implies the eigenvalues of $T_*$ decay even faster, the rate-optimal interval of $\theta$ is getting smaller. It seems unreasonable that higher regularity of $T_*$ could instead reduce the possible range of $\theta$ that leads to the optimal convergence. This phenomenon is widely observed in the convergence analysis of regularized kernel regression for the model misspecification scenario (see, e.g., \cite{shi2019distributed,fischer2020sobolev,lin2020optimalJMLR,lin2020optimal}). Note that verifying the embedding condition \eqref{equation: condition substitutes Assumption 4} for $t<1$ is highly non-trivial. This condition is automatically satisfied for all $0<t\leq 1$ if the expansion of $L^{1/2}_K X$ in \eqref{PCdecomposition} is a summation of finite terms with each bounded $\xi_k$. However, it is a wide-open question whether this condition holds for more general cases. It is also pointed out by \cite{fischer2020sobolev} that how to obtain the optimal rates for $t>p$ and $\theta \in (0,t/2-p/2]$ is an outstanding problem that can not be addressed by introducing the embedding condition. Comparing Assumption \ref{momentcondition6} to Assumption \ref{momentcondition3}  in Theorem \ref{theorem:stronger upper bound} and Corollary \ref{corollary:corollary stronger}, it is difficult to tell which one is more restrictive. But we think Assumption \ref{momentcondition3} is relatively more adaptive for functional linear regression model \eqref{LFRmodel}, since Assumption \ref{momentcondition6} excludes the most important case when $X$ is the Gaussian random element in $\cL^2(\cT)$. As we discussed in Section \ref{section: main results}, Gaussian random element satisfies Assumption \ref{momentcondition3} for any integer $\ell \geq 2$. Moreover, due to Corollary \ref{corollary:corollary stronger}, higher regularities indicated by larger $\ell$ in Assumption \ref{momentcondition3} or smaller $p$ in the eigenvalue decaying of $T_*$ will result in larger range of $\theta$ in which the estimators are rate-optimal. We believe that convergence analysis based on Assumption \ref{momentcondition3} is more insightful from this perspective. We then illustrate that in most cases, the index $p$ can be close to zero arbitrarily if one of the kernels $K$ and $C$ is smooth enough. To this end, we need the following lemma. 

\begin{lemma}\label{lemma: in comparison}
	Consider two nonnegative and compact operators $L_A$ and $L_B$ on a separable Hilbert space $\mathcal{H}$. Assume $\overline{\mathrm{ran}(L_A^{1/2})}= \mathcal{H}$, then we have 
	\[\rho_k(L_A^{1/2}L_BL_A^{1/2})\leq \rho_k(L_B)\|L_A\|,\]
	where the $\rho_k(L_A^{1/2}L_BL_A^{1/2})$ and $\rho_k(L_B)$ denote the $k$-th eigenvalue (sorted in decreasing order) of operators $L_A^{1/2}L_BL_A^{1/2}$ and $L_B$, respectively.
\end{lemma}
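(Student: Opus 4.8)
The plan is to invoke the Courant--Fischer min--max characterization of eigenvalues for the compact nonnegative operator $M:=L_A^{1/2}L_BL_A^{1/2}$ and to reduce the quadratic form of $M$ to that of $L_B$ evaluated at pulled-back vectors. First I would record that $M$ is compact and nonnegative (since $\langle Mx,x\rangle_{\mathcal{H}}=\langle L_B L_A^{1/2}x, L_A^{1/2}x\rangle_{\mathcal{H}}\geq 0$ and $L_B$ is compact while $L_A^{1/2}$ is bounded), so that
\[
\rho_k(M)=\min_{\substack{V\subseteq\mathcal{H}\\ \dim V=k-1}}\ \max_{\substack{x\perp V\\ \|x\|_{\mathcal{H}}=1}}\langle Mx,x\rangle_{\mathcal{H}}
\]
is available. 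The whole proof then rests on exhibiting one good $(k-1)$-dimensional subspace $V$ for which the inner maximum is at most $\rho_k(L_B)\|L_A\|$.

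Let $\{e_j\}_{j\geq 1}$ be an orthonormal eigenbasis of $L_B$ with $L_Be_j=\rho_j(L_B)e_j$, and set $V_0:=\mathrm{span}\{L_A^{1/2}e_1,\dots,L_A^{1/2}e_{k-1}\}$. Here is where the hypothesis $\overline{\mathrm{ran}(L_A^{1/2})}=\mathcal{H}$ enters: since $L_A^{1/2}$ is self-adjoint, $\ker L_A^{1/2}=\big(\overline{\mathrm{ran}\,L_A^{1/2}}\big)^{\perp}=\{0\}$, so $L_A^{1/2}$ is injective and maps the linearly independent $e_1,\dots,e_{k-1}$ to linearly independent vectors; thus $\dim V_0=k-1$ exactly. (Without the density assumption one could still extend $V_0$ to dimension $k-1$, so the hypothesis is a convenience rather than a necessity.) For any unit $x\perp V_0$, writing $y:=L_A^{1/2}x$, the relations $\langle x,L_A^{1/2}e_j\rangle_{\mathcal{H}}=\langle y,e_j\rangle_{\mathcal{H}}=0$ for $j\leq k-1$ say precisely that $y$ has no component along the top $k-1$ eigenvectors of $L_B$. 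Therefore
\[
\langle Mx,x\rangle_{\mathcal{H}}=\langle L_By,y\rangle_{\mathcal{H}}=\sum_{j\geq k}\rho_j(L_B)\,|\langle y,e_j\rangle_{\mathcal{H}}|^2\leq \rho_k(L_B)\|y\|_{\mathcal{H}}^2\leq \rho_k(L_B)\,\|L_A^{1/2}\|^2=\rho_k(L_B)\,\|L_A\|,
\]
using $\|y\|_{\mathcal{H}}\leq\|L_A^{1/2}\|\,\|x\|_{\mathcal{H}}=\|L_A^{1/2}\|$ and $\|L_A^{1/2}\|^2=\|L_A\|$. Feeding $V_0$ into the min--max formula gives $\rho_k(M)\leq\rho_k(L_B)\|L_A\|$, which is the claim.

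The only genuinely delicate point is the bookkeeping of orthogonality: one must be sure that orthogonality of $x$ to the pulled-back subspace $V_0$ translates, via self-adjointness of $L_A^{1/2}$, into orthogonality of $y=L_A^{1/2}x$ to the leading eigenvectors of $L_B$, and that $V_0$ really has dimension $k-1$ so that it is admissible in the min--max formula. As an alternative route that sidesteps the dimension count entirely, I would factor $M=P^{*}P$ with $P:=L_B^{1/2}L_A^{1/2}$, so that $\rho_k(M)=s_k(P)^2$, and then apply the product inequality $s_k(CR)\leq s_k(C)\,\|R\|$ for a compact $C$ and bounded $R$ with $C=L_B^{1/2}$, $R=L_A^{1/2}$; since $s_k(L_B^{1/2})=\rho_k(L_B)^{1/2}$ and $\|L_A^{1/2}\|=\|L_A\|^{1/2}$, squaring again yields $\rho_k(M)\leq\rho_k(L_B)\|L_A\|$. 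Either argument is short; I would present the min--max version since it makes transparent the role of the density hypothesis.
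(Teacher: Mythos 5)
Your proof is correct and takes essentially the same route as the paper: both rest on the Courant--Fischer characterization of $\rho_k$, transfer the quadratic form of $L_A^{1/2}L_BL_A^{1/2}$ to that of $L_B$ via $y=L_A^{1/2}x$, bound the Rayleigh-quotient distortion by $\|L_A^{1/2}\|^2=\|L_A\|$, and use the injectivity of $L_A^{1/2}$ coming from the density hypothesis. The only (cosmetic) difference is that you use the dual min--max form over $(k-1)$-dimensional subspaces with the explicit test space spanned by $L_A^{1/2}e_1,\dots,L_A^{1/2}e_{k-1}$, whereas the paper uses the max--min form over $k$-dimensional subspaces and compares the family $\{L_A^{1/2}V\}$ with all $k$-dimensional subspaces.
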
 We include its proof in the Appendix for the sake of completeness. Following from Lemma \ref{lemma: in comparison} with $\mathcal{H}=\mathcal{L}^2(\cal{T})$ and the fact that $\overline{{\cal H}_K}={\cal L}^2({\cal T})$, or equivalently $\overline{\mathrm{ran}(L_K^{1/2})}=\mathcal{L}^2(\cal{T})$,  we have $\mu_k = \rho_k(L_K^{1/2}L_CL_K^{1/2})\leq \rho_k(L_C)\|L_K\|\lesssim \rho_k(L_C)$. Moreover, if $\overline{\mathrm{ran}(L_C^{1/2})}=\mathcal{L}^2(\cal{T})$, one can deduce $\mu_k = \rho_k(L_C^{1/2}L_KL_C^{1/2})\leq \rho_k(L_K)\|L_C\|\lesssim \rho_k(L_K)$ by the same argument. For example, when $\mathcal{T}=\mathbb{R}$ and $K$ is the reproducing kernel of fractional Sobolev space $W^{\beta,2}(\mathbb{R})$ with $\beta >1/2$, we have $\mu_k \lesssim\rho_k(L_K)\asymp k^{-2\beta}$ and then $p\leq\frac{1}{2\beta}$ can arbitrarily approach zero if $K$ is smooth enough, i.e., $\beta$ is sufficiently large. Another notable example is that when $\mathcal{T}=[0,1]^D$ for some integer $D \geq 1$ and $X$ is a Gaussian random element in $\cL^2(\cT)$ with zero mean and covariance kernel $C_\gamma:\mathcal{T}\times \mathcal{T}\rightarrow \mathbb{R}$ (which is called a square-exponential kernel) defined by  $C_\gamma(x,x'):=\exp(-\frac{\|x-x'\|^2}{\gamma^2}),\forall x,x'\in \mathcal{T}$, i.e., $X\sim\cN(0,L_{C_\gamma})$. Here $\gamma>0$ is a constant and $L_{C_\gamma}$ denotes the covariance operator induced by $C_\gamma$. According to the existing literature about Gaussian process (see, for example, \cite{kanagawa2018gaussian}), we know that $\{\rho_k(L_{C_\gamma})\}_{k\geq 1}$ enjoys an exponential decay. For this case, we can prove that the divided-and-conquer RLS estimators are mini-max optimal for all possible $\theta \in (0,1/2]$ according to Corollary \ref{corollary:corollary stronger}.

We now compare Theorem \ref{theorem: theorem for comparision} with Corollary \ref{corollary:corollary extra} of Theorem \ref{theorem: extra upper bound}. Under the uniformly boundedness condition on $\{\xi_k\}_{k\geq 1}$ in Assumption \ref{momentcondition5}, we simplify the embedding condition \eqref{equation: condition substitutes Assumption 4} by only requiring the sequence $\{\mu^t_k\}_{k\geq 1}$ to be summable, i.e., $\sum_{k\geq 1} \mu^t_k<\infty$, which is satisfied for $t=p+\epsilon$  if $\mu_k\lesssim k^{-1/p}$. Here $\epsilon >0$ can be arbitrarily small. Therefore, under the same assumptions of Corollary \ref{corollary:corollary extra}, the first claims in Theorem \ref{momentcondition6} ensures that for all sufficiently small $\epsilon>0$ and $\epsilon/2<\theta\leq 1/2$, there holds  
\begin{equation*}
	\mathbb{E}\left[\mathcal{R}(\overline{\beta}_{S,\lambda})-\mathcal{R}(\beta_0)\right] \lesssim N^{-\frac{2\theta}{2\theta+p}}
\end{equation*}
with $\lambda = N^{-\frac{1}{2\theta+p}} \mbox{ and } m\leq o\left(\frac{N^{\frac{2\theta-\epsilon}{2\theta+p}}}{\log N}\right).$ Since one can choose an arbitrarily small $\epsilon>0$, the above result actually indicates the rate-optimal convergence for all $0<\theta\leq 1/2$. We see from Corollary \ref{corollary:corollary extra} in  Section \ref{section: main results} that, under Assumption \ref{momentcondition5}, one can obtain the same convergence result with a slightly better estimate on $m$ which only requires $ m\leq o\left(\frac{N^{\frac{2\theta}{2\theta+p}}}{\log N}\right)$.

When we finished this paper, we found that the work \cite{tong2021distributed} also studies the divide and conquer functional linear regression but under a regularity condition of slope function different from \eqref{regularity condition} which actually requires $\beta_0\in\mathcal{H}_K$, and a boundedness assumption equivalent to that \eqref{equation: condition substitutes Assumption 4} is satisfied with $t=1$. And we also note that to achieve optimal convergence rate under condition $\beta_0\in \mathcal{H}_K$, Theorem 2.1 in \cite{tong2021distributed} requires the number of partitions $m=1$, while with an additional assumption that the fourth-moment condition \eqref{momentcondition2} is satisfied, Theorem \ref{theorem: theorem for comparision} in this paper allows the number of partitions $m\leq o\left(\frac{N^{\frac{p}{1+p}}}{\log N}\right)$.

\subsection{Relevant Works on Noiseless Linear Model}
Recent works have intensively investigated the performance of various estimators within the context of a noiseless linear model. In typical learning tasks such as image classification, where human error is nearly impossible (e.g., misidentifying images of dogs as cats), it is reasonable to consider the output unambiguous for a given input. Consequently, many algorithms have incorporated noiseless models into these learning tasks. Theoretical analysis of noiseless models first appeared in studies on classification problems. For instance, \cite{smale2007learning} demonstrates that, compared to models with noise, the convergence analysis of binary classification problems in noiseless models may exhibit a phenomenon known as ``super convergence", where the convergence rate can be faster than $N^{-1}$. The studies by \cite{jun2019kernel,sun2021optimal}  explore the application of kernel-regularized least squares, revealing an improved rate of convergence for noiseless data relative to noisy scenarios. Furthermore, \cite{berthier2020tight} delves into the utilization of stochastic gradient descent for addressing the noiseless linear model in a general Hilbert space, albeit concentrating solely on scenarios where the optimal predictor resides within this space. As far as we know, the convergence of estimator in RKHS as well as its divide-and-conquer counterpart has not been considered in the context of noiseless functional linear model. We establish the first convergence result in this setting when the optimal predictor is outside of the underlying RKHS. The framework and estimations developed in this paper can be extended to study more complex models of nonparametric supervised learning, such as models in \cite{szabo2016learning,guo2021optimal,mao2022coefficient,meunier2022distribution}, which we leave as future work.

\section{Convergence Analysis}\label{section: convergence analysis}

In this section, we first derive the upper bounds of convergence rates presented in Theorem \ref{theorem:basic upper bound} and Theorem \ref{theorem:no random noise}. Then we establish the upper bounds in expectation presented in Theorem \ref{theorem:stronger upper bound}, Theorem \ref{theorem: extra upper bound}, Theorem \ref{theorem: theorem for comparision} and their corollaries. Last we prove the mini-max lower bound in Theorem \ref{theorem:lower bound}.

\subsection{Upper Rates and Upper Bounds}\label{subsection: upper bounds}

Recalling the decomposition \eqref{decomposition 2}, one can bound  $\mathcal{R(\overline{\beta}_{S,\lambda})}-\mathcal{R}(\beta_0)$ through estimating
$\mathscr{S}(S,\lambda)=\left\|L^{1/2}_CL^{1/2}_K\overline{f}_{S,\lambda}-L^{1/2}_CL^{1/2}_Kf_{\lambda}\right\|^2_{{\cal L}^2}$ and $\mathscr{A}(\lambda)=\left\|L^{1/2}_CL^{1/2}_Kf_{\lambda}-L^{1/2}_C\beta_0\right\|^2_{{\cal L}^2}$, respectively. We apply the following lemma to estimate  $\mathscr{A}(\lambda)$.

\begin{lemma}\label{lemma: estimation of A(lambda)}
	Suppose Assumption \ref{assumption1} is satisfied with $0<\theta\leq 1/2$ and $\gamma_0\in \cL^2(\cT)$. Then for any $\lambda > 0$, there holds
	\begin{equation}\label{equation: estimation of A(lambda)}
		\mathscr{A}(\lambda) \leq \lambda^{2\theta}\|\gamma_0\|_{\cL^2}^2.
	\end{equation}
\end{lemma}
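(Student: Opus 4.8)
The plan is to collapse $\mathscr{A}(\lambda)$ to a purely spectral quantity attached to $T_*$ and then invoke the source condition of Assumption~\ref{assumption1} coefficientwise. Write $B:=L_K^{1/2}L_C^{1/2}$, so that $B^*=L_C^{1/2}L_K^{1/2}$ and, by the factorizations recorded before \eqref{singular value decomposition}, $T=BB^*$ and $T_*=B^*B$. First I would rewrite the intermediate function $f_\lambda$: since $L_K^{1/2}L_C\beta_0=B\,L_C^{1/2}\beta_0$, its definition gives $L_C^{1/2}L_K^{1/2}f_\lambda=B^*(\lambda I+BB^*)^{-1}B\,L_C^{1/2}\beta_0$. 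Applying the push-through identity $B^*(\lambda I+BB^*)^{-1}=(\lambda I+B^*B)^{-1}B^*$ (valid for any bounded $B$ and $\lambda>0$, and readily verified on the singular system of $B$) converts this into $(\lambda I+T_*)^{-1}T_*\,L_C^{1/2}\beta_0$. Consequently the residual telescopes,
\[
L_C^{1/2}L_K^{1/2}f_\lambda-L_C^{1/2}\beta_0=\big[(\lambda I+T_*)^{-1}T_*-I\big]L_C^{1/2}\beta_0=-\lambda(\lambda I+T_*)^{-1}L_C^{1/2}\beta_0,
\]
so that $\mathscr{A}(\lambda)=\lambda^2\big\|(\lambda I+T_*)^{-1}L_C^{1/2}\beta_0\big\|_{\mathcal{L}^2}^2$.

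Next I would insert the regularity condition $L_C^{1/2}\beta_0=T_*^{\theta}\gamma_0$ and diagonalize in the orthonormal eigenbasis $\{\varphi_k\}_{k\geq1}$ of $T_*$ furnished by \eqref{singular value decomposition}, whose eigenvalues are $\{\mu_k\}_{k\geq1}$. Since $(\lambda I+T_*)^{-1}T_*^{\theta}$ acts as multiplication by $\mu_k^{\theta}/(\lambda+\mu_k)$ on the $\varphi_k$-component, Parseval's identity yields $\mathscr{A}(\lambda)=\sum_{k\geq1}\frac{\lambda^2\mu_k^{2\theta}}{(\lambda+\mu_k)^2}\langle\gamma_0,\varphi_k\rangle_{\mathcal{L}^2}^2$. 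It then suffices to bound the scalar filter factor uniformly in $k$: the weighted AM--GM inequality $\lambda^{1-\theta}\mu_k^{\theta}\leq(1-\theta)\lambda+\theta\mu_k\leq\lambda+\mu_k$ (here $0<\theta\leq1/2\leq1$) squares to $\lambda^{2-2\theta}\mu_k^{2\theta}\leq(\lambda+\mu_k)^2$, i.e. $\frac{\lambda^2\mu_k^{2\theta}}{(\lambda+\mu_k)^2}\leq\lambda^{2\theta}$. Pulling this out of the sum and using completeness of $\{\varphi_k\}_{k\geq1}$, namely $\sum_{k\geq1}\langle\gamma_0,\varphi_k\rangle_{\mathcal{L}^2}^2=\|\gamma_0\|_{\mathcal{L}^2}^2$, delivers $\mathscr{A}(\lambda)\leq\lambda^{2\theta}\|\gamma_0\|_{\mathcal{L}^2}^2$, which is exactly \eqref{equation: estimation of A(lambda)}.

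The computation is short, so there is no serious analytic obstacle; the only points demanding care are bookkeeping ones. The first is the operator algebra: one must keep the adjoint relation $B^*=L_C^{1/2}L_K^{1/2}$ straight and justify the push-through identity, which is cleanest to establish directly from the singular value decomposition \eqref{singular value decomposition} of $B=L_K^{1/2}L_C^{1/2}$ rather than by formal manipulation of unbounded inverses. The second is confirming that the scalar bound on the filter factor holds on the entire admissible range of $\theta$; here the weighted AM--GM step is exactly what uses $\theta\in(0,1]$, and in particular the hypothesis $\theta\leq1/2$ of Assumption~\ref{assumption1} is comfortably within it, so no saturation phenomenon intervenes. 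Everything else is Parseval's identity in the basis $\{\varphi_k\}_{k\geq1}$.
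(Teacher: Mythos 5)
Your proof is correct and follows essentially the same route as the paper: both reduce $\mathscr{A}(\lambda)$ to the identical spectral sum $\sum_{k}\lambda^{2}\mu_k^{2\theta}(\lambda+\mu_k)^{-2}\langle\gamma_0,\varphi_k\rangle_{\mathcal{L}^2}^{2}$ via the singular system \eqref{singular value decomposition} and then bound the filter factor by $\lambda^{2\theta}$. The only cosmetic differences are that you pass through the push-through identity where the paper expands directly in the basis $\{\varphi_k\}$, and you justify the scalar bound by weighted AM--GM where the paper maximizes $t^{\theta}/(\lambda+t)$ over $t>0$ --- these are the same inequality.
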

\begin{proof}
	Write $\gamma_0 = \sum_{k\geq 1} a_k\varphi_k$, according to singular value decomposition of $T_*$ in (\ref{singular value decomposition}), we have  $L_C^{1/2}\beta_0 = T_*^\theta(\gamma_0) =  \sum_{k\geq 1}\mu_k^\theta a_k\varphi_k$ and
	\[L_C^{1/2}L_K^{1/2}f_\lambda = L_C^{1/2}L_K^{1/2}\left(\lambda I + T\right)^{-1}L_K^{1/2}L_C\beta_0 = \sum_{k=1}^{\infty}\frac{\mu_k^{1+\theta}}{\lambda + \mu_k}a_k\varphi_j.\]
	Therefore,
	\begin{align*}
		\mathscr{A}(\lambda) &=\left\|L_C^{1/2}(L_K^{1/2}f_\lambda - \beta_0)\right\|_{\cL^2}^2\\
		&= \sum_{k=1}^{\infty}\left(\frac{\mu_k^{1+\theta}}{\lambda + \mu_k} - \mu_j^\theta\right)^2a_j^2\\ &= \sum_{k=1}^{\infty} \frac{\lambda^2\mu_k^{2\theta}}{\left(\lambda + \mu_k\right)^2}a_k^2.
	\end{align*}
	While we see that for $0<\theta \leq 1/2$,
	\[\frac{t^{\theta}}{\lambda + t}\leq \theta^\theta (1-\theta)^{1-\theta} \lambda^{\theta - 1}\leq \lambda^{\theta - 1}, \quad \forall t>0, \]
	which implies that \[\mathscr{A}(\lambda) = \sum_{k=1}^{\infty} \frac{\lambda^2\mu_k^{2\theta}}{\left(\lambda + \mu_k\right)^2}a_k^2 \leq  \lambda^{2\theta}\sum_{k=1}^{\infty}a_k^2 = \lambda^{2\theta} \left\|\gamma_0\right\|_{\cL^2}^2.\]  The proof is then finished.
\end{proof}

In the rest part of this subsection, we focus on estimating $\mathscr{S}(S,\lambda)$. Recall that $S=\cup_{j=1}^m S_j$ with $S_j \cap S_k = \emptyset$ for $j\neq k$, the empirical operator $T_{{\bf X}_j}$ is defined with ${\bf X}_j=\{X_i:(X_i,Y_i)\in S_j\}$ according to \eqref{Tx}. For any $j=1,2,\dots,m$, define the event \[\cU_j = \left\{\bX_j: \left\|(\lambda I + T)^{-1/2}(T_{\bX_j}- T)(\lambda I + T)^{-1/2}\right\| \geq 1/2\right\},\]
and denote its complement by $\cU_j^c$. Let $\cU = \cup_{j=1}^m\cU_j$ be the union of above events. Then the complement of $\cU$ is given by $\cU^c=\cap_{j=1}^{m} \cU^c_j$. Hereafter,  let $\II_\mathcal{E}$ denote the indicator function of the event $\mathcal{E}$ and $\PP(\mathcal{E})=\mathbb{E} \left[\II_\mathcal{E}\right]$.  We first give the following estimation
\begin{equation}\label{equation: inverse norm under event}
	\begin{split}
		&\left\|(\lambda I + T)^{1/2}(\lambda I + T_{\bX_j})^{-1}(\lambda I + T)^{1/2}\right\|\II_{\cU_j^c}\\
		&= \left\|(I- (\lambda I + T)^{-1/2}(T-T_{\bX_j})(\lambda I + T)^{-1/2})^{-1}\right\|\II_{\cU_j^c}\\
		&\overset{(*)}{\leq} 1+\sum_{k=1}^{\infty}\left\|(\lambda I + T)^{-1/2}(T-T_{\bX_j})(\lambda I + T)^{-1/2}\right\|^{k}\II_{\cU_j^c}\\
		&\leq 1+\sum_{k=1}^{\infty} \frac{1}{2^k}=2,
	\end{split}
\end{equation}  where inequality $(*)$ follows by expanding the inverse in Neumann series.

The following lemma plays a crucial role in bounding the convergence rate of $\mathscr{S}(S,\lambda)$.

\begin{lemma}\label{lemma: rates upper bound 1}
	For any $m\geq 1$, there holds
	\begin{equation}\label{equation for theorem wup 1} 
		\begin{aligned}
			&\mathbb{E}\left[\mathscr{S}(S,\lambda)\II_{\cU^c}\right]\\
			&\quad\leq \frac{1}{m}\mathbb{E}\left[\left\|L_C^{1/2}L_K^{1/2}(\hat{f}_{S_1,\lambda}-f_\lambda)\right\|^2_{\cL^2}\II_{\cU_1^c}\right]+ \left\|L_C^{1/2}L_K^{1/2}\mathbb{E}\left[(\hat{f}_{S_1,\lambda}-f_\lambda)\II_{\cU_1^c}\right]\right\|^2_{\cL^2}.
		\end{aligned}
	\end{equation}
\end{lemma}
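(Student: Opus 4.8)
The plan is to reduce the whole statement to a second-moment computation for the sum $\sum_{j=1}^m B(\hat f_{S_j,\lambda}-f_\lambda)$, where I write $B:=L_C^{1/2}L_K^{1/2}$ for brevity and set $Z_j:=B(\hat f_{S_j,\lambda}-f_\lambda)\in\cL^2(\cT)$. Since $\overline f_{S,\lambda}=\frac1m\sum_j \hat f_{S_j,\lambda}$, bilinearity of the inner product gives
\[
\mathscr{S}(S,\lambda)\II_{\cU^c}=\Bigl\|\tfrac1m\sum_{j=1}^m Z_j\Bigr\|_{\cL^2}^2\prod_{l=1}^m\II_{\cU_l^c}
=\frac1{m^2}\sum_{j=1}^m\sum_{k=1}^m \langle Z_j,Z_k\rangle_{\cL^2}\prod_{l=1}^m\II_{\cU_l^c},
\]
using $\II_{\cU^c}=\prod_{l=1}^m\II_{\cU_l^c}$. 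The structural fact I would exploit throughout is that the subsets $S_1,\dots,S_m$ are disjoint and hence mutually independent, so that for each $j$ the random vector $Z_j$ \emph{and} the indicator $\II_{\cU_j^c}$ (which is measurable with respect to $\bX_j\subseteq S_j$) depend only on $S_j$. I would then split the double sum into its diagonal ($j=k$) and off-diagonal ($j\neq k$) parts and estimate each separately.

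For the diagonal terms, $\|Z_j\|_{\cL^2}^2\II_{\cU_j^c}$ is a function of $S_j$ alone while $\prod_{l\neq j}\II_{\cU_l^c}$ depends on the remaining (independent) subsets; factorising the expectation and bounding $\EE\bigl[\prod_{l\neq j}\II_{\cU_l^c}\bigr]\leq 1$ yields $\EE\bigl[\|Z_j\|_{\cL^2}^2\II_{\cU_j^c}\bigr]$, which equals $\EE\bigl[\|Z_1\|_{\cL^2}^2\II_{\cU_1^c}\bigr]$ by identical distribution. Summing the $m$ diagonal contributions and dividing by $m^2$ produces the first term $\frac1m\EE\bigl[\|L_C^{1/2}L_K^{1/2}(\hat f_{S_1,\lambda}-f_\lambda)\|_{\cL^2}^2\II_{\cU_1^c}\bigr]$. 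For the off-diagonal terms I would again use independence: for $j\neq k$ the three factors $Z_j\II_{\cU_j^c}$, $Z_k\II_{\cU_k^c}$ and $\prod_{l\neq j,k}\II_{\cU_l^c}$ are independent, so the expectation factorises and the key identity $\EE\langle U,V\rangle_{\cL^2}=\langle\EE U,\EE V\rangle_{\cL^2}$ for independent Hilbert-space-valued $U,V$ (verified by expanding in an orthonormal basis) gives $\EE\langle Z_j\II_{\cU_j^c},Z_k\II_{\cU_k^c}\rangle_{\cL^2}=\|\EE[Z_1\II_{\cU_1^c}]\|_{\cL^2}^2$ after invoking identical distribution. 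Because this quantity is nonnegative and the leftover factor $\EE\bigl[\prod_{l\neq j,k}\II_{\cU_l^c}\bigr]\le1$, each off-diagonal term is bounded by $\|\EE[Z_1\II_{\cU_1^c}]\|_{\cL^2}^2$; there are $m(m-1)$ such ordered pairs, so $\frac1{m^2}\cdot m(m-1)\le1$ turns the off-diagonal sum into exactly the second term $\|L_C^{1/2}L_K^{1/2}\EE[(\hat f_{S_1,\lambda}-f_\lambda)\II_{\cU_1^c}]\|_{\cL^2}^2$, where I move the deterministic bounded operator $B$ outside the expectation.

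The only genuinely delicate point is the off-diagonal bookkeeping: one must carefully separate the \emph{cross} indicators from the two ``active'' ones, justify pulling the inner product through the expectation for independent summands (rather than naively treating $\langle Z_j,Z_k\rangle$ as a product of scalars), and note that it is precisely the nonnegativity of $\|\EE[Z_1\II_{\cU_1^c}]\|_{\cL^2}^2$ that permits discarding the remaining indicator product without reversing the inequality. Everything else is a routine consequence of the independence of the $S_j$, the boundedness of $L_C^{1/2}L_K^{1/2}$, and the identical distribution of the local estimators; combining the diagonal and off-diagonal bounds gives \eqref{equation for theorem wup 1} and finishes the proof.
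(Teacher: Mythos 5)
Your proposal is correct and follows essentially the same route as the paper's proof: expand the square into diagonal and off-diagonal terms, use the independence of the subsets $S_j$ together with $\II_{\cU^c}=\prod_l\II_{\cU_l^c}$ to factor the expectations, convert the off-diagonal cross terms into $\|\EE[Z_1\II_{\cU_1^c}]\|_{\cL^2}^2$ via the i.i.d.\ structure, and discard the leftover probability factors (bounded by $1$) using nonnegativity. Your explicit remark that nonnegativity of the off-diagonal term is what licenses dropping the remaining indicators is a point the paper leaves implicit, but the argument is the same.
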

\begin{proof}
	When $m\geq 2$, as 
	$$\mathscr{S}(S,\lambda)=\left\|L^{1/2}_CL^{1/2}_K\overline{f}_{S,\lambda}-L^{1/2}_CL^{1/2}_Kf_{\lambda}\right\|^2_{{\cal L}^2}=\left\|\frac{1}{m}\sum_{i=1}^mL^{1/2}_CL^{1/2}_K\hat{f}_{S_i,\lambda}-L^{1/2}_CL^{1/2}_Kf_{\lambda}\right\|^2_{{\cal L}^2},$$
	then we have 
	\begin{align*}
		&\EE[\mathscr{S}(S,\lambda)\II_{\cU^c}]= \EE\left[\left\|L_C^{1/2}L_K^{1/2}\left(\frac{1}{m}\sum_{i=1}^{m}\hat{f}_{S_i,\lambda}-f_\lambda\right)\right\|^2_{\cL^2}\II_{\cU^c}\right]\\
		&\overset{(\romannumeral1)}{=}\frac{1}{m^2}\sum_{i=1}^{m}\EE\left[\left\|L_C^{1/2}L_K^{1/2}\left(\hat{f}_{S_i,\lambda}-f_\lambda\right)\right\|^2_{\cL^2}\II_{\cU^c}\right]\\
		&\quad + \frac{1}{m^2}\sum_{i\neq j} \EE\left[\left\langle L_C^{1/2}L_K^{1/2}\left(\hat{f}_{S_i,\lambda}-f_\lambda\right),L_C^{1/2}L_K^{1/2}\left(\hat{f}_{S_j,\lambda}-f_\lambda\right)\right\rangle_{\cL^2} \II_{\cU^c}\right]\\
		&\overset{(\romannumeral2)}{=} \frac{1}{m}\EE\left[\left\|L_C^{1/2}L_K^{1/2}\left(\hat{f}_{S_1,\lambda}-f_\lambda\right)\right\|^2_{\cL^2}\II_{\cU_1^c}\right]\PP\left(\cap_{j=2}^m\cU_j^c\right)\\
		&\quad+\frac{m(m-1)}{m^2}\EE\left[\left\langle L_C^{1/2}L_K^{1/2}\left(\hat{f}_{S_1,\lambda}-f_\lambda\right),L_C^{1/2}L_K^{1/2}\left(\hat{f}_{S_2,\lambda}-f_\lambda\right)\right\rangle_{\cL^2} \II_{\cU_1^c}\II_{\cU_2^c}\right]\PP\left(\cap_{j=3}^m \cU_j^c\right)\\
		&\overset{(\romannumeral3)}{\leq} \frac{1}{m}\mathbb{E}\left[\left\|L_C^{1/2}L_K^{1/2}\left(\hat{f}_{S_1,\lambda}-f_\lambda\right)\right\|^2_{\cL^2}\II_{\cU_1^c}\right]+ \left\|L_C^{1/2}L_K^{1/2}\mathbb{E}\left[\left(\hat{f}_{S_1,\lambda}-f_\lambda\right)\II_{\cU_1^c}\right]\right\|^2_{\cL^2}.
	\end{align*}
	Here equality (\romannumeral1) follows from the binomial expansion. Equality (\romannumeral2) uses the fact that $\II_{\cU^c}=\II_{\cU_1^c}\II_{\cU_2^c}\cdots\II_{\cU_m^c}$ and for any $1\leq i \neq j\leq m$, $\left(\hat{f}_{S_i,\lambda}-f_{\lambda}\right)\II_{\cU_i^c}$ and $\left(\hat{f}_{S_j,\lambda}-f_{\lambda}\right)\II_{\cU_j^c}$ are independent and identically distributed random elements. Inequality (\romannumeral3) is from \[\EE\left[\langle L_C^{1/2}L_K^{1/2}(\hat{f}_{S_1,\lambda}-f_\lambda),L_C^{1/2}L_K^{1/2}\left(\hat{f}_{S_2,\lambda}-f_\lambda\right)\rangle_{\cL^2} \II_{\cU_1^c}\II_{\cU_2^c}\right] = \left\|L_C^{1/2}L_K^{1/2}\mathbb{E}\left[\left(\hat{f}_{S_1,\lambda}-f_\lambda\right)\II_{\cU_1^c}\right]\right\|^2_{\cL^2}\]
	This completes the proof.
\end{proof}

For simplicity of notation, in the rest of this paper, we always denote \begin{equation}\label{equation: simple notation}
	n:= |S_1|= \frac{N}{m}  \qquad and \qquad \left\{\left(X_{1,i},Y_{1,i}\right)\right\}_{i=1}^n:= S_1.
\end{equation} We establish the following bounds on the right hand side of \eqref{equation for theorem wup 1} in Lemma \ref{lemma: rates upper bound 1}.

\begin{lemma}\label{lemma: rates upper bound 2}
	Suppose that Assumption \ref{assumption1} is satisfied with $0<\theta\leq 1/2$ and $\gamma_0\in \cL^2(\cT)$, Assumption \ref{assumption2} is satisfied with $\sigma> 0$ and Assumption \ref{momentcondition1} is satisfied with $c_1>0$. Then there hold
	\begin{equation}\label{equation: lemma rates ub 1}
		\mathbb{E}\left[\left\|L_C^{1/2}L_K^{1/2}\left(\hat{f}_{S_1,\lambda}-f_\lambda\right)\right\|^2_{\cL^2}\II_{\cU_1^c}\right]\leq 8\frac{m}{N}\cN(\lambda)\left(c_1\lambda^{2\theta}\left\|\gamma_0\right\|^2_{\cL^2}+\sigma^2\right)
	\end{equation}
	and
	\begin{equation}\label{equation: lemma rates ub 2}
		\left\|L_C^{1/2}L_K^{1/2}\mathbb{E}\left[\left(\hat{f}_{S_1,\lambda}-f_\lambda\right)\II_{\cU_1^c}\right]\right\|^2_{\cL^2}\leq 4c_1\frac{m}{N}\cN(\lambda)\lambda^{2\theta}\left\|\gamma_0\right\|^2_{\cL^2},
	\end{equation} where $\mathcal{N}(\lambda)$ is the effective dimension given by \eqref{effectivedimension}.
\end{lemma}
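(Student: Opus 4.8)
The plan is to reduce both estimates to a single per‑sample second‑moment bound for a centered i.i.d. average, using the resolvent identity together with the operator control \eqref{equation: inverse norm under event} that holds on $\cU_1^c$. Write $n:=|S_1|=N/m$, $\hat T:=T_{\bX_1}$, $g:=\frac1n\sum_{i=1}^n L_K^{1/2}X_{1,i}Y_{1,i}$ and $\bar g:=L_K^{1/2}L_C\beta_0$, so that $f_\lambda=(\lambda I+T)^{-1}\bar g$. Since $(\lambda I+T)f_\lambda=\bar g$, factoring out $(\lambda I+\hat T)^{-1}$ gives
\[
\hat f_{S_1,\lambda}-f_\lambda=(\lambda I+\hat T)^{-1}\bigl[(g-\bar g)+(T-\hat T)f_\lambda\bigr]=(\lambda I+\hat T)^{-1}\tfrac1n\sum_{i=1}^n\zeta_i,
\]
where, inserting $Y_{1,i}=\langle\beta_0,X_{1,i}\rangle_{\cL^2}+\epsilon_{1,i}$, each $\zeta_i=Z_i+\epsilon_{1,i}L_K^{1/2}X_{1,i}-\EE[Z]$ is i.i.d.\ and centered, with $Z_i:=\langle\beta_0-L_K^{1/2}f_\lambda,X_{1,i}\rangle_{\cL^2}L_K^{1/2}X_{1,i}$. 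The decisive point is that the ``signal'' part of the data is recentered around $L_K^{1/2}f_\lambda$ rather than around $\beta_0$.

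Next I would record two reductions. Because $\|L_C^{1/2}L_K^{1/2}v\|_{\cL^2}^2=\langle Tv,v\rangle_{\cL^2}=\|T^{1/2}v\|_{\cL^2}^2$ and $\|T^{1/2}(\lambda I+T)^{-1/2}\|\le1$, writing $T^{1/2}(\lambda I+\hat T)^{-1}=[T^{1/2}(\lambda I+T)^{-1/2}]\,[(\lambda I+T)^{1/2}(\lambda I+\hat T)^{-1}(\lambda I+T)^{1/2}]\,(\lambda I+T)^{-1/2}$ and invoking \eqref{equation: inverse norm under event} yields, on $\cU_1^c$ and for any $w$, the bound $\|T^{1/2}(\lambda I+\hat T)^{-1}w\|_{\cL^2}\le 2\|(\lambda I+T)^{-1/2}w\|_{\cL^2}$. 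The core computation is then $\EE[\|(\lambda I+T)^{-1/2}Z_1\|_{\cL^2}^2]$: diagonalizing $(\lambda I+T)^{-1}=\sum_k(\lambda+\mu_k)^{-1}\phi_k\otimes\phi_k$ and applying Cauchy--Schwarz termwise bounds each summand by $\sqrt{\EE\langle\beta_0-L_K^{1/2}f_\lambda,X\rangle^4}\,\sqrt{\EE\langle X,L_K^{1/2}\phi_k\rangle^4}$. The fourth-moment condition \eqref{momentcondition2} controls the first factor by $\sqrt{c_1}\,\|L_C^{1/2}(\beta_0-L_K^{1/2}f_\lambda)\|_{\cL^2}^2$ and the second by $\sqrt{c_1}\,\mu_k$ (since $\EE\langle X,L_K^{1/2}\phi_k\rangle^2=\langle T\phi_k,\phi_k\rangle_{\cL^2}=\mu_k$). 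Here Lemma \ref{lemma: estimation of A(lambda)} enters: $\|L_C^{1/2}(\beta_0-L_K^{1/2}f_\lambda)\|_{\cL^2}^2=\mathscr{A}(\lambda)\le\lambda^{2\theta}\|\gamma_0\|_{\cL^2}^2$, giving $\EE[\|(\lambda I+T)^{-1/2}Z_1\|_{\cL^2}^2]\le c_1\lambda^{2\theta}\|\gamma_0\|_{\cL^2}^2\,\cN(\lambda)$. The noise piece is separate: $\EE[\epsilon_{1,1}^2\|(\lambda I+T)^{-1/2}L_K^{1/2}X\|_{\cL^2}^2]\le\sigma^2\,\mathrm{trace}\bigl((\lambda I+T)^{-1}T\bigr)=\sigma^2\cN(\lambda)$.

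For \eqref{equation: lemma rates ub 1} I would square the reduction, take expectations, use that the $\zeta_i$ are i.i.d.\ and centered to collapse the sum to one term with factor $1/n=m/N$, and split the two parts of $\zeta_1$ via $\|a+b\|^2\le2\|a\|^2+2\|b\|^2$; the two estimates above then produce the constant $8$. For \eqref{equation: lemma rates ub 2} the improvement comes from pushing the expectation inside the norm first: conditioning on $\bX_1$ and using $\EE[\epsilon_{1,i}\mid\bX_1]=0$ together with the $\bX_1$-measurability of $(\lambda I+\hat T)^{-1}$ and $\II_{\cU_1^c}$ annihilates the noise contribution entirely, so that $\EE[(\hat f_{S_1,\lambda}-f_\lambda)\II_{\cU_1^c}]=\EE[(\lambda I+\hat T)^{-1}\tfrac1n\sum_i(Z_i-\EE Z)\,\II_{\cU_1^c}]$. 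Jensen's inequality to move $T^{1/2}$ and the norm outside the expectation, then the $\cU_1^c$ operator bound and Cauchy--Schwarz, reduce this to $4\cdot\tfrac1n\EE[\|(\lambda I+T)^{-1/2}Z_1\|_{\cL^2}^2]\le 4c_1\tfrac{m}{N}\cN(\lambda)\lambda^{2\theta}\|\gamma_0\|_{\cL^2}^2$.

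The main obstacle is the core second-moment estimate in the misspecified regime. A naive bound that keeps $\beta_0$ in place of the residual $\beta_0-L_K^{1/2}f_\lambda$ only yields the $\lambda$-independent constant $\|L_C^{1/2}\beta_0\|_{\cL^2}^2$, which does not decay and destroys the rate; the whole argument hinges on recentering the data around $L_K^{1/2}f_\lambda$, so that the residual's $L_C^{1/2}$-norm is precisely $\mathscr{A}(\lambda)^{1/2}$ and supplies the decaying $\lambda^{2\theta}$ factor. The secondary technical care is the termwise Cauchy--Schwarz decoupling of the product of the two quadratic forms in $X$, which is exactly where Assumption \ref{momentcondition1} is used.
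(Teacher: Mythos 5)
Your proposal is correct and follows essentially the same route as the paper: the same recentering of the data term around $L_K^{1/2}f_\lambda$ (so that the per-sample summand $L_K^{1/2}X_{1,i}\langle X_{1,i},\beta_0-L_K^{1/2}f_\lambda\rangle_{\cL^2}-\lambda f_\lambda$ is zero-mean), the same reduction via $\|L_C^{1/2}L_K^{1/2}(\lambda I+T)^{-1/2}\|\le 1$ and the factor-$2$ bound \eqref{equation: inverse norm under event} on $\cU_1^c$, the same termwise Cauchy--Schwarz plus Assumption \ref{momentcondition1} yielding $c_1\cN(\lambda)\mathscr{A}(\lambda)/n$, and the same Jensen/conditioning step that kills the noise in \eqref{equation: lemma rates ub 2}. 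The constants $8$ and $4$ come out exactly as in the paper's argument.
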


\begin{proof} We first prove the second inequality (\ref{equation: lemma rates ub 2}). Recalling \eqref{equation: simple notation}, we can write
	$$\hat{f}_{S_1,\lambda}=\left(\lambda I+T_{\bX_1}\right)^{-1}\frac{1}{n}\sum_{i=1}^{n}L_K^{1/2}X_{1,i}Y_{1,i}.$$
	Then
	\begin{align*}
		&\left\|L_C^{1/2}L_K^{1/2}\mathbb{E}\left[\left(\hat{f}_{S_1,\lambda}-f_\lambda\right)\II_{\cU_1^c}\right]\right\|^2_{\cL^2}\\
		&=\left\|L_C^{1/2}L_K^{1/2}\mathbb{E}\left[\left((\lambda I+T_{\bX_1})^{-1}\frac{1}{n}\sum_{i=1}^{n}L_K^{1/2}X_{1,i}Y_{1,i}-f_\lambda\right)\II_{\cU_1^c}\right]\right\|^2_{\cL^2}\\
		&\overset{(\romannumeral1)}{=}\left\|L_C^{1/2}L_K^{1/2}\mathbb{E}\left[(\lambda I+T_{\bX_1})^{-1}\left(\frac{1}{n}\sum_{i=1}^{n}L_K^{1/2}X_{1,i}\left\langle X_{1,i},\beta_0-L_K^{1/2}f_\lambda\right\rangle_{\cL^2}-\lambda f_\lambda\right)\II_{\cU_1^c}\right]\right\|^2_{\cL^2}\\
		&\overset{(\romannumeral2)}{\leq}\mathbb{E}\left[\left\|L_C^{1/2}L_K^{1/2}(\lambda I+T_{\bX_1})^{-1}\left(\frac{1}{n}\sum_{i=1}^{n}L_K^{1/2}X_{1,i}\left\langle X_{1,i},\beta_0-L_K^{1/2}f_\lambda\right\rangle_{\cL^2}-\lambda f_\lambda\right)\right\|^2_{\cL^2}\II_{\cU_1^c}\right]\\
		&\leq\left\|L_C^{1/2}L_K^{1/2}(\lambda I + T)^{-1/2}\right\|^2\left\|(\lambda I + T)^{1/2}(\lambda I + T_{\bX_1})^{-1}(\lambda I + T)^{1/2}\right\|^2\II_{\cU_1^c}\\
		&\quad\times\EE\left[\left\|(\lambda I + T)^{-1/2}\left(\frac{1}{n}\sum_{i=1}^{n}L_K^{1/2}X_{1,i}\left\langle X_{1,i},\beta_0-L_K^{1/2}f_\lambda\right\rangle_{\cL^2}-\lambda f_\lambda\right)\right\|_{\cL^2}^2\right]\\
		&\overset{(\romannumeral3)}{\leq} 4\EE\left[\left\|(\lambda I + T)^{-1/2}\left(\frac{1}{n}\sum_{i=1}^{n}L_K^{1/2}X_{1,i}\left\langle X_{1,i},\beta_0-L_K^{1/2}f_\lambda\right\rangle_{\cL^2}-\lambda f_\lambda\right)\right\|_{\cL^2}^2\right].
	\end{align*}
	Here equality (\romannumeral1) follows from the fact that $\epsilon$ is a centered random variable independent of $X$, inequality (\romannumeral2) uses Jensen's inequality, and inequality (\romannumeral3) is due to \eqref{equation: inverse norm under event} and the calculation that 
	\begin{align*}
		\left\|L_C^{1/2}L_K^{1/2}(\lambda I + T)^{-1/2}\right\|^2
		&=\left\|(\lambda I + T)^{-1/2}L_K^{1/2}L_CL_K^{1/2}(\lambda I + T)^{-1/2}\right\|\\
		&=\left\|(\lambda I + T)^{-1/2}T(\lambda I + T)^{-1/2}\right\|\leq 1.
	\end{align*}
	Note that for any $1\leq i\leq n$, $L_K^{1/2}X_{1,i}\left\langle X_{1,i},\beta_0-L_K^{1/2}f_\lambda\right\rangle_{\cL^2}-\lambda f_\lambda$ is a zero-mean random element. Then  we have
	\begin{align*}
		&\EE\left[\left\|(\lambda I + T)^{-1/2}\left(\frac{1}{n}\sum_{i=1}^{n}L_K^{1/2}X_{1,i}\left\langle X_{1,i},\beta_0-L_K^{1/2}f_\lambda\right\rangle_{\cL^2}-\lambda f_\lambda\right)\right\|_{\cL^2}^2\right]\\
		&= \frac{1}{n^2}\sum_{i=1}^{n}\EE\left[\left\|(\lambda I + T)^{-1/2}\left(L_K^{1/2}X_{1,i}\left\langle X_{1,i},\beta_0-L_K^{1/2}f_\lambda\right\rangle_{\cL^2}-\lambda f_\lambda\right)\right\|_{\cL^2}^2\right]\\
		&\leq \frac{1}{n^2}\sum_{i=1}^{n}\EE\left[\left\|(\lambda I + T)^{-1/2}L_K^{1/2}X_{1,i}\left\langle X_{1,i},\beta_0-L_K^{1/2}f_\lambda\right\rangle_{\cL^2}\right\|_{\cL^2}^2\right]\\
		&= \frac{1}{n^2}\sum_{i=1}^{n}\sum_{j=1}^\infty\EE\left[\left\langle(\lambda I + T)^{-1/2}L_K^{1/2}X_{1,i},\phi_{j}\right\rangle_{\cL^2}^2\left\langle X_{1,i},\beta_0-L_K^{1/2}f_\lambda\right\rangle_{\cL^2}^2\right]\\
		&\overset{(\romannumeral1)}{\leq} \frac{1}{n^2}\sum_{i=1}^{n}\sum_{j=1}^\infty\left[\EE\left\langle(\lambda I + T)^{-1/2}L_K^{1/2}X_{1,i},\phi_{j}\right\rangle_{\cL^2}^4\right]^{\frac{1}{2}}\left[\EE\left\langle X_{1,i},\beta_0-L_K^{1/2}f_\lambda\right\rangle_{\cL^2}^4\right]^{\frac{1}{2}}\\
		&\overset{(\romannumeral2)}{\leq} \frac{c_1}{n^2}\sum_{i=1}^{n}\sum_{j=1}^\infty\EE\left[\left\langle(\lambda I + T)^{-1/2}L_K^{1/2}X_{1,i},\phi_{j}\right\rangle_{\cL^2}^2\right]\EE\left[\left\langle X_{1,i},\beta_0-L_K^{1/2}f_\lambda\right\rangle_{\cL^2}^2\right]\\
		&=\frac{c_1}{n}\sum_{j=1}^\infty\frac{1}{\lambda+\mu_j}\left\langle T\phi_j,\phi_j\right\rangle_{\cL^2}\left\|L_C^{1/2}\left(\beta_0-L_K^{1/2}f_\lambda\right)\right\|_{\cL^2}^2\\
		&\overset{(\romannumeral3)}{=} \frac{c_1}{n}\cN(\lambda)\mathscr{A}(\lambda)\overset{(\romannumeral4)}{\leq} c_1\frac{m}{N}\cN(\lambda)\lambda^{2\theta}\|\gamma_0\|_{\cL^2}^2.
	\end{align*}
	Here $\{\phi_j\}_{j=1}^\infty$ is given by the singular value decomposition of $T$ in \eqref{singular value decomposition}. Inequality (\romannumeral1) uses Cauchy-Schwartz inequality. Inequality (\romannumeral2) is from Assumption \ref{momentcondition1}. Equality (\romannumeral3) follows from $\mathscr{A}(\lambda)=\left\|L_C^{1/2}\left(\beta_0-L_K^{1/2}f_\lambda\right)\right\|_{\cL^2}^2$ and the calculation that $\sum_{j=1}^\infty\frac{1}{\lambda+\mu_j}\langle T\phi_j,\phi_j\rangle_{\cL^2}=\sum_{j=1}^\infty\frac{\mu_j}{\lambda+\mu_j}=\cN(\lambda)$. Inequality (\romannumeral4) is due to Lemma \ref{lemma: estimation of A(lambda)} and $n=N/m$.
	Combining the above two estimations, we have completed the proof of \eqref{equation: lemma rates ub 2}.
	
	Next we prove the first inequality \eqref{equation: lemma rates ub 1}.  According to the expression of $\hat{f}_{S_1,\lambda}$ and the triangular inequality, we have
	\begin{align}
		&\mathbb{E}\left[\left\|L_C^{1/2}L_K^{1/2}\left(\hat{f}_{S_1,\lambda}-f_\lambda\right)\right\|^2_{\cL^2}\II_{\cU_1^c}\right]\notag\\
		&\leq 2\EE\left[\left\|L_C^{1/2}L_K^{1/2}\left(\lambda I+ T_{\bX_1}\right)^{-1}\left(\frac{1}{n}\sum_{i=1}^nL_K^{1/2}X_{1,i}\left\langle X_{1,i},\beta_0-L_K^{1/2}f_\lambda\right\rangle_{\cL^2}-\lambda f_\lambda\right) \right\|^2_{\cL^2}\II_{\cU_1^c}\right]\tag{\ref{equation: lemma rates ub 1}a}\\
		&\quad+ 2\EE\left[\left\|L_C^{1/2}L_K^{1/2}\left(\lambda I+ T_{\bX_1}\right)^{-1}\frac{1}{n}\sum_{i=1}^nL_K^{1/2}X_{1,i}\epsilon_{1,i}\right\|^2_{\cL^2}\II_{\cU_1^c}\right] \tag{\ref{equation: lemma rates ub 1}b},
	\end{align} where $\epsilon_{1,i}:=Y_{1,i}-\left\langle \beta_0, X_{1,i} \right\rangle_{\cL^2}$. We have bounded the term (\ref{equation: lemma rates ub 1}a) in the proof of (\ref{equation: lemma rates ub 2}), which is given by 
	\begin{equation}\label{equation: lemma rates ub 1a}
		\begin{aligned}
			&\EE\left[\left\|L_C^{1/2}L_K^{1/2}(\lambda I+ T_{\bX_1})^{-1}\left(\frac{1}{n}\sum_{i=1}^nL_K^{1/2}X_{1,i}\left\langle X_{1,i},\beta_0-L_K^{1/2}f_\lambda\right\rangle_{\cL^2}-\lambda f_\lambda\right) \right\|^2_{\cL^2}\II_{\cU_1^c}\right]\\
			&\leq 4c\frac{m}{N}\cN(\lambda)\lambda^{2\theta}\|\gamma_0\|_{\cL^2}^2.
		\end{aligned}
	\end{equation}
	Note that for any $1\leq i \leq n$, $L^{1/2}X_{1,i}\epsilon_{1,i}$ is also a zero-mean random element. Analogously, one can bound (\ref{equation: lemma rates ub 1}b) as 
	\begin{align*}
		&\EE\left[\left\|L_C^{1/2}L_K^{1/2}(\lambda I+ T_{\bX_1})^{-1}\frac{1}{n}\sum_{i=1}^nL_K^{1/2}X_{1,i}\epsilon_{1,i}\right\|^2_{\cL^2}\II_{\cU_1^c}\right]\\
		&\leq\left\|L_C^{1/2}L_K^{1/2}(\lambda I + T)^{-1/2}\right\|^2\left\|(\lambda I + T)^{1/2}(\lambda I + T_{\bX_1})^{-1}(\lambda I + T)^{1/2}\right\|^2\II_{\cU_1^c}\\
		&\quad \times\EE\left[\left\|(\lambda I + T)^{-1/2}\frac{1}{n}\sum_{i=1}^nL_K^{1/2}X_{1,i}\epsilon_{1,i}\right\|^2_{\cL^2}\right]\\
		&\leq 4\EE\left[\left\|(\lambda I + T)^{-1/2}\frac{1}{n}\sum_{i=1}^nL_K^{1/2}X_{1,i}\epsilon_{1,i}\right\|^2_{\cL^2}\right]= \frac{4}{n^2}\sum_{i=1}^{n}\EE\left[\left\|(\lambda I + T)^{-1/2}L_K^{1/2}X_{1,i}\right\|^2_{\cL^2}\epsilon_{1,i}^2\right]\\
		&\overset{(*)}{\leq}\frac{4\sigma^2}{n^2}\sum_{i=1}^{n}\sum_{j=1}^\infty\EE\left[\left\langle (\lambda I + T)^{-1/2}L_K^{1/2}X_{1,i},\phi_j\right\rangle_{\cL^2}^2 \right]=\frac{4\sigma^2}{n}\sum_{j=1}^\infty\frac{1}{\lambda+\mu_j}\langle T\phi_j,\phi_j\rangle_{\cL^2}=4\frac{m}{N}\cN(\lambda)\sigma^2.
	\end{align*}
	Here inequality $(*)$ is from Assumption \ref{assumption2}.
	
	Then we obtain inequality \eqref{equation: lemma rates ub 1} and the proof of Lemma \ref{lemma: rates upper bound 2} is finished. \end{proof}

We also need the following lemma to estimate the probability of event $\cU_1$. Recall that $\cU_1$ is defined as
\[\cU_1 = \left\{\bX_1: \left\|(\lambda I + T)^{-1/2}(T_{\bX_1}- T)(\lambda I + T)^{-1/2}\right\| \geq 1/2\right\}.\]

\begin{lemma}\label{lemma: basic probability esstimation of U1}
	Suppose that Assumption \ref{momentcondition1} is satisfied with $c_1>0$, then
	\begin{equation}\label{equation: basic probability esstimation of U1}
		\PP(\cU_1)\leq 4c_1\frac{m}{N}\cN^2(\lambda),
	\end{equation} where $\mathcal{N}(\lambda)$ is the effective dimension given by \eqref{effectivedimension}.
\end{lemma}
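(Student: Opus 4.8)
The plan is to bound $\PP(\cU_1)$ by a second-moment argument, passing through the Hilbert--Schmidt norm. Write $A_{\bX_1} := (\lambda I + T)^{-1/2}(T_{\bX_1}- T)(\lambda I + T)^{-1/2}$. Dominating the operator norm by the Hilbert--Schmidt norm via \eqref{relationship between L2 and Linfty norm} and then applying Markov's inequality to $\|A_{\bX_1}\|^2_{HS}$, I would first record
\[
\PP(\cU_1) = \PP\left\{\|A_{\bX_1}\| \geq 1/2\right\} \leq \PP\left\{\|A_{\bX_1}\|^2_{HS} \geq 1/4\right\} \leq 4\,\EE\left[\|A_{\bX_1}\|^2_{HS}\right].
\]
Everything then reduces to showing $\EE\left[\|A_{\bX_1}\|^2_{HS}\right] \leq c_1\frac{m}{N}\cN^2(\lambda)$.

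For the variance, recall $n = |S_1| = N/m$ and set $\eta_i := (\lambda I + T)^{-1/2}\left(L_K^{1/2}X_{1,i}\otimes L_K^{1/2}X_{1,i}\right)(\lambda I + T)^{-1/2} - (\lambda I + T)^{-1/2}T(\lambda I + T)^{-1/2}$. Since $\EE\left[L_K^{1/2}X\otimes L_K^{1/2}X\right] = L_K^{1/2}L_C L_K^{1/2} = T$, the operators $\eta_i$ are i.i.d.\ zero-mean Hilbert--Schmidt operators with $A_{\bX_1} = \frac{1}{n}\sum_{i=1}^{n}\eta_i$. By independence and centering, $\EE\left[\|A_{\bX_1}\|^2_{HS}\right] = \frac{1}{n}\EE\left[\|\eta_1\|^2_{HS}\right] \leq \frac{1}{n}\EE\left[\|\xi_1\|^2_{HS}\right]$ where $\xi_1 := (\lambda I + T)^{-1/2}\left(L_K^{1/2}X_{1,1}\otimes L_K^{1/2}X_{1,1}\right)(\lambda I + T)^{-1/2}$. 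The key simplification is that $\xi_1$ is the rank-one operator $u\otimes u$ with $u := (\lambda I + T)^{-1/2}L_K^{1/2}X_{1,1}$, whose squared Hilbert--Schmidt norm is $\|u\|^4_{\cL^2}$; hence the task becomes bounding $\EE\left[\|u\|^4_{\cL^2}\right]$.

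The crux is this fourth-moment computation and its link to the effective dimension. Expanding in the eigenbasis $\{\mu_j,\phi_j\}$ of $T$ gives $\|u\|^2_{\cL^2} = \sum_{j\geq 1}\frac{1}{\lambda + \mu_j}\langle L_K^{1/2}X_{1,1}, \phi_j\rangle^2_{\cL^2}$; squaring and taking expectations produces a double sum over $j,k$ of $\frac{1}{(\lambda+\mu_j)(\lambda+\mu_k)}\EE\left[\langle L_K^{1/2}X, \phi_j\rangle^2_{\cL^2}\langle L_K^{1/2}X, \phi_k\rangle^2_{\cL^2}\right]$. Here I would apply Cauchy--Schwarz to the expectation, then invoke Assumption \ref{momentcondition1} with $f = L_K^{1/2}\phi_j$ (so that $\langle X, f\rangle_{\cL^2} = \langle L_K^{1/2}X, \phi_j\rangle_{\cL^2}$), giving $\EE\left[\langle L_K^{1/2}X, \phi_j\rangle^4_{\cL^2}\right] \leq c_1\mu_j^2$ since $\EE\left[\langle L_K^{1/2}X, \phi_j\rangle^2_{\cL^2}\right] = \langle T\phi_j,\phi_j\rangle_{\cL^2} = \mu_j$. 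This yields $\EE\left[\langle L_K^{1/2}X, \phi_j\rangle^2_{\cL^2}\langle L_K^{1/2}X, \phi_k\rangle^2_{\cL^2}\right] \leq c_1\mu_j\mu_k$, and the double sum factors into $c_1\left(\sum_{j\geq 1}\frac{\mu_j}{\lambda+\mu_j}\right)^2 = c_1\cN^2(\lambda)$. Combining the three displays gives $\EE\left[\|A_{\bX_1}\|^2_{HS}\right] \leq \frac{c_1}{n}\cN^2(\lambda) = c_1\frac{m}{N}\cN^2(\lambda)$, and the Markov bound above completes the proof.

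I expect the main obstacle to be precisely the last paragraph: correctly reducing the fourth moment of the random rank-one operator to $\cN^2(\lambda)$ through the kurtosis bound of Assumption \ref{momentcondition1}. The concentration scaffolding (Markov, Hilbert--Schmidt domination, the i.i.d.\ variance decomposition) and the rank-one identity $\|u\otimes u\|^2_{HS} = \|u\|^4_{\cL^2}$ are routine by comparison.
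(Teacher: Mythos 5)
Your proposal is correct and follows essentially the same route as the paper's proof: Chebyshev/Markov on the squared norm, domination of the operator norm by the Hilbert--Schmidt norm, reduction to a single-sample second moment via independence and centering, and the Cauchy--Schwarz plus Assumption \ref{momentcondition1} step yielding $\EE\bigl[\langle L_K^{1/2}X,\phi_j\rangle^2_{\cL^2}\langle L_K^{1/2}X,\phi_k\rangle^2_{\cL^2}\bigr]\leq c_1\mu_j\mu_k$ and hence $c_1\cN^2(\lambda)$. The only cosmetic difference is that you invoke the rank-one identity $\|u\otimes u\|^2_{HS}=\|u\|^4_{\cL^2}$ where the paper expands $\|\cdot\|^2_{HS}$ directly as a double sum of matrix entries; these are the same computation.
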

\begin{proof}
	Recall \eqref{equation: simple notation}. We first bound $\EE\left[\left\|(\lambda I + T)^{-1/2}(T_{\bX_1}- T)(\lambda I + T)^{-1/2}\right\|^2\right]$ as
	\begin{align*}
		&\EE\left[\left\|(\lambda I + T)^{-1/2}\left(T_{\bX_1}- T\right)(\lambda I + T)^{-1/2}\right\|^2\right]\\
		&\overset{(\romannumeral1)}{\leq} \EE\left[\left\|(\lambda I + T)^{-1/2}\left(T_{\bX_1}- T\right)(\lambda I + T)^{-1/2}\right\|_{HS}^2\right]\\
		&= \sum_{j=1}^{\infty}\sum_{k=1}^{\infty}\EE\left[\left\langle(\lambda I + T)^{-1/2}\left(\frac{1}{n}\sum_{i=1}^{n}L_K^{1/2}X_{1,i}\otimes L_K^{1/2}X_{1,i} - T\right)(\lambda I + T)^{-1/2}\phi_j,\phi_k\right\rangle^2_{\cL^2} \right]\\
		&\overset{(\romannumeral2)}{\leq} \frac{1}{n^2}\sum_{i=1}^n\sum_{j=1}^{\infty}\sum_{k=1}^{\infty}\frac{1}{\lambda + \mu_j}\frac{1}{\lambda + \mu_k}\EE\left[\left\langle L_K^{1/2}X_{1,i},\phi_j\right\rangle^2_{\cL^2}\left\langle L_K^{1/2}X_{1,i},\phi_k\right\rangle^2_{\cL^2}\right]\\
		&\overset{(\romannumeral3)}{\leq} \frac{1}{n^2}\sum_{i=1}^n\sum_{j=1}^{\infty}\sum_{k=1}^{\infty}\frac{1}{\lambda + \mu_j}\frac{1}{\lambda + \mu_k}\left[\EE\left\langle L_K^{1/2}X_{1,i},\phi_j\right\rangle^4_{\cL^2}\right]^{\frac{1}{2}}\left[\EE\left\langle L_K^{1/2}X_{1,i},\phi_k\right\rangle^4_{\cL^2}\right]^{\frac{1}{2}}\\
		&\overset{(\romannumeral4)}{\leq} \frac{c_1}{n^2}\sum_{i=1}^n\sum_{j=1}^{\infty}\sum_{k=1}^{\infty}\frac{1}{\lambda + \mu_j}\frac{1}{\lambda + \mu_k}\EE\left[\left\langle L_K^{1/2}X_{1,i},\phi_j\right\rangle^2_{\cL^2}\right]\EE\left[\left\langle L_K^{1/2}X_{1,i},\phi_k\right\rangle^2_{\cL^2}\right]\\
		&=\frac{c_1}{n}\sum_{j=1}^{\infty}\sum_{k=1}^{\infty}\frac{1}{\lambda + \mu_j}\frac{1}{\lambda + \mu_k}\left\langle T\phi_j,\phi_j \right\rangle_{\cL^2}\left\langle T\phi_k,\phi_k \right\rangle_{\cL^2}\\
		&=\frac{c_1}{n}\sum_{j=1}^{\infty}\sum_{k=1}^{\infty}\frac{\mu_j}{\lambda + \mu_j}\frac{\mu_k}{\lambda + \mu_k}=c_1\frac{m}{N}\cN^2(\lambda).
	\end{align*}
	Here $\{\phi_j\}_{j=1}^\infty$ is given by the singular value decomposition of $T$ in \eqref{singular value decomposition}. Inequality (\romannumeral1) follows from \eqref{relationship between L2 and Linfty norm}. Inequality (\romannumeral2) is from the fact that for any $1\leq i\leq n$, $L_K^{1/2}X_i\otimes L_K^{1/2}X_i - T$ is a zero-mean random element. Inequality (\romannumeral3) uses Cauchy-Schwartz inequality. Inequality (\romannumeral4) applies Assumption \ref{momentcondition1}.
	
	Combining the above estimation with Chebyshev inequality, we obtain
	\[\begin{aligned}
		\PP(\cU_1)&= \PP\left(\left\{\bX_1: \left\|(\lambda I + T)^{-1/2}\left(T_{\bX_1}- T\right)(\lambda I + T)^{-1/2}\right\|\geq  1/2\right\}\right)\\
		&\leq 4\EE\left[\left\|(\lambda I + T)^{-1/2}\left(T_{\bX_1}- T\right)(\lambda I + T)^{-1/2}\right\|^2\right]\\
		&\leq 4c_1\frac{m}{N}\cN^2(\lambda).
	\end{aligned}\]
	Then we obtain the desired result and complete the proof.
\end{proof}

The following lemma provides an estimation of $\cN(\lambda)$ under the polynomial decaying condition of the eigenvalues.
\begin{lemma}\label{lemma: estimation of N(lambda)}
	Suppose that $\{\mu_k\}_{k\geq1}$ satisfy $\mu_k\lesssim k^{-1/p}$ for some $0<p\leq 1$, then there holds
	\begin{equation}\label{equation: estimation of N(lambda)}
		\cN(\lambda) \lesssim \lambda^{-p}, \quad \forall 0<\lambda\leq 1.
	\end{equation} 
\end{lemma}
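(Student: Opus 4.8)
The plan is to reduce the infinite sum defining $\cN(\lambda)$ to an elementary estimate by exploiting the monotonicity of $t\mapsto t/(\lambda+t)$ together with the decay hypothesis $\mu_k\lesssim k^{-1/p}$, and then to split the sum at a carefully chosen cutoff index. First I would fix a constant $c>0$ with $\mu_k\leq ck^{-1/p}$ for all $k\geq 1$. Since the map $t\mapsto t/(\lambda+t)$ is nondecreasing on $[0,\infty)$, each summand obeys $\frac{\mu_k}{\lambda+\mu_k}\leq\frac{ck^{-1/p}}{\lambda+ck^{-1/p}}$, and I will also use the two trivial bounds $\frac{\mu_k}{\lambda+\mu_k}\leq 1$ and $\frac{\mu_k}{\lambda+\mu_k}\leq\mu_k/\lambda$ for the head and tail of the series, respectively.

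For the main regime $0<p<1$ I would argue by truncation. Writing $\cN(\lambda)=\sum_{k\leq k^*}\frac{\mu_k}{\lambda+\mu_k}+\sum_{k>k^*}\frac{\mu_k}{\lambda+\mu_k}$, the head is bounded by $k^*$ (each term being at most $1$), while the tail is controlled by $\lambda^{-1}\sum_{k>k^*}\mu_k\leq c\lambda^{-1}\sum_{k>k^*}k^{-1/p}$. Because $1/p>1$, an integral comparison gives $\sum_{k>k^*}k^{-1/p}\leq\frac{p}{1-p}(k^*)^{1-1/p}$, so that
\[
	\cN(\lambda)\leq k^*+\frac{cp}{1-p}\,\lambda^{-1}(k^*)^{1-1/p}.
\]
Choosing the cutoff $k^*=\lceil\lambda^{-p}\rceil$, which satisfies $1\leq k^*\leq 2\lambda^{-p}$ precisely because $\lambda\leq 1$ forces $\lambda^{-p}\geq 1$, balances the two contributions: the head is $\lesssim\lambda^{-p}$, and the tail is $\lesssim\lambda^{-1}(\lambda^{-p})^{1-1/p}=\lambda^{-1}\lambda^{1-p}=\lambda^{-p}$. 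This yields $\cN(\lambda)\lesssim\lambda^{-p}$ with a constant depending only on $c$ and $p$.

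The delicate point, and the step I expect to require separate care, is the boundary case $p=1$. There the truncation argument breaks down, since $\sum_{k>k^*}k^{-1/p}=\sum_{k>k^*}k^{-1}$ diverges and the polynomial decay alone no longer produces a summable tail. Here I would instead invoke the summability of $\{\mu_k\}_{k\geq 1}$ recorded in the singular value decomposition \eqref{singular value decomposition}, bounding directly $\cN(\lambda)\leq\lambda^{-1}\sum_{k\geq 1}\mu_k=\lambda^{-1}\,trace(T)\lesssim\lambda^{-1}$, which is exactly the claimed rate $\lambda^{-p}$ for $p=1$. Combining the two cases completes the proof; the only genuine subtleties are the use of $\lambda\leq 1$ to absorb the additive constant into $\lambda^{-p}$ and the reliance on summability at the endpoint $p=1$ rather than on the decay exponent.
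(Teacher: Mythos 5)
Your proposal is correct. The paper does not actually prove this lemma---it defers to \cite{guo2017learning,lin2017distributed,guo2019optimal}---and the truncation argument you give (split at $k^*\asymp\lambda^{-p}$, bound the head by $k^*$ and the tail by $\lambda^{-1}\sum_{k>k^*}\mu_k$ via integral comparison) is exactly the standard proof found in those references. Your separate treatment of $p=1$ is also the right move: the decay $\mu_k\lesssim k^{-1}$ alone does not give a summable tail, but the sequence $\{\mu_k\}_{k\ge1}$ is summable by construction (it is the singular-value sequence in \eqref{singular value decomposition}, with $\sum_k\mu_k=trace(T)<\infty$), which matches the paper's own remark that $p=1$ ``corresponds to the case in which we only require $\{\mu_k\}_{k\geq 1}$ to be summable,'' and then $\cN(\lambda)\le\lambda^{-1}trace(T)$ gives the claim.
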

The estimation in Lemma \ref{lemma: estimation of N(lambda)} can be found in \cite{guo2017learning,lin2017distributed,guo2019optimal}.

We have established preliminary estimations for Theorem \ref{theorem:basic upper bound} and Theorem \ref{theorem:no random noise}. We are in the position
to prove these two theorems. To this end, we also need to introduce the notations $o_{_\PP}(\cdot)$ and $\mathcal{O}_{\PP}(\cdot)$. For a sequence of random variables $\{\xi_k\}_{k=1}^\infty$, we write $\xi_k\leq o_{_\PP}(1)$ if \begin{align*}
	\lim_{k\rightarrow\infty}\PP\left(\left|\xi_k\right|\geq d\right)=0, \forall d>0.
\end{align*} And we write $\xi_k\leq \mathcal{O}_{\PP}(1)$ if \begin{align*}
	\lim_{D\rightarrow \infty}\sup_{k\geq 1}\PP\left(\left|\xi_k\right|\geq D\right)=0. 
\end{align*}
In addition, suppose that there is a positive sequence $\{a_k\}_{k=1}^\infty$. Then we write 
$\xi_k\leq o_{_\PP}(a_k)$ if $\xi_k/a_k \leq o_{_\PP}(1)$, and $\xi_k\leq \mathcal{O}_{\PP}(a_k)$ if $\xi_k/a_k \leq \mathcal{O}_{\PP}(1)$.

\noindent
{\bf Proof of Theorem \ref{theorem:basic upper bound}}. Combining the decomposition \eqref{decomposition 2} and \eqref{equation: estimation of A(lambda)} in Lemma \ref{lemma: estimation of A(lambda)} yields
\begin{align}\label{equation: Theorem 2 1}
	\mathcal{R(\overline{\beta}_{S,\lambda})}-\mathcal{R}(\beta_0)
	&\leq 2\mathscr{S}(S,\lambda) + 2\mathscr{A}(\lambda)\nonumber\\
	&\leq 2\mathscr{S}(S,\lambda) + 2\lambda^{2\theta}\|\gamma_0\|_{\cL^2}^2.
\end{align}

We first decompose $\mathscr{S}(S,\lambda)$ as
\begin{align}\label{proof of theorem 3 1}
	\mathscr{S}(S,\lambda)= \mathscr{S}(S,\lambda)\II_{\cU} + \mathscr{S}(S,\lambda)\II_{\cU^c}.
\end{align}
For the term $\mathscr{S}(S,\lambda)\II_{\cU}$, following from \eqref{equation: basic probability esstimation of U1} in Lemma \ref{lemma: basic probability esstimation of U1}, we have
\[\EE\left[\II_{\cU}\right]=\PP(\cU)\leq \sum_{j=1}^m\PP(\cU_j) = m\PP(\cU_1)\leq 4c_1\frac{m^2}{N}\cN^2(\lambda).\]
Then using Markov's inequality, we can write
\begin{align}\label{proof of theorem 3 2}
	\mathscr{S}(S,\lambda)\II_{\cU}\leq \mathcal{O}_{\PP}\left(\frac{m^2}{N}\cN^2(\lambda)\right)\mathscr{S}(S,\lambda).
\end{align}

For the term $\mathscr{S}(S,\lambda)\II_{\cU^c}$, combining \eqref{equation for theorem wup 1} in Lemma \ref{lemma: rates upper bound 1} with \eqref{equation: lemma rates ub 1} and \eqref{equation: lemma rates ub 2} in Lemma \ref{lemma: rates upper bound 2} yields
\begin{align*}
	\EE\left[\mathscr{S}(S,\lambda)\II_{\cU^c}\right]\leq 8\frac{\cN(\lambda)}{N}\left(c_1\lambda^{2\theta}\|\gamma_0\|_{\cL^2}^2+\sigma^2\right) + 4c_1\frac{m}{N}\cN(\lambda)\lambda^{2\theta}\|\gamma_0\|_{\cL^2}^2.
\end{align*} 
Then using Markov's inequality, we can write
\begin{align}\label{proof of theorem 3 3}
	\mathscr{S}(S,\lambda)\II_{\cU^c}\leq \mathcal{O}_{\PP}\left(\frac{\cN(\lambda)}{N}+\frac{m}{N}\cN(\lambda)\lambda^{2\theta}\right).
\end{align}

Therefore, combining \eqref{proof of theorem 3 1}, \eqref{proof of theorem 3 2} and \eqref{proof of theorem 3 3}, we have
\begin{align*}
	\left[1-\mathcal{O}_{\PP}\left(\frac{m^2}{N}\cN^2(\lambda)\right)\right]\mathscr{S}(S,\lambda)\leq \mathcal{O}_{\PP}\left(\frac{\cN(\lambda)}{N}+\frac{m}{N}\cN(\lambda)\lambda^{2\theta}\right).
\end{align*}
Then applying the estimation of $\cN(\lambda)$ \eqref{equation: estimation of N(lambda)} in Lemma \ref{lemma: estimation of N(lambda)}, taking $\lambda\leq 1$, we can write
\begin{align}\label{equation: Theorem 2 2}
	\left[1-\mathcal{O}_{\PP}\left(\frac{m^2}{N}\lambda^{-2p}\right)\right]\mathscr{S}(S,\lambda)\leq \mathcal{O}_{\PP}\left(\frac{\lambda^{-p}}{N}+\frac{m}{N}\lambda^{2\theta-p}\right).
\end{align}

Take $m$ and $\lambda$ satisfying $m^2\lambda^{-2p}\leq o(N)$ and $\lambda\leq 1$, then \eqref{equation: Theorem 2 2} implies that 
\begin{align*}
	\left[1-o_{_\PP}(1)\right]\mathscr{S}(S,\lambda)\leq \mathcal{O}_{\PP}\left(\frac{\lambda^{-p}}{N}+\frac{m}{N}\lambda^{2\theta-p}\right),\mbox{ as } \mathcal{O}_{\PP}\left(\frac{m^2}{N}\lambda^{-2p}\right)\leq o_{_\PP}(1).
\end{align*}
Thus, we can write
\begin{align*}
	\mathscr{S}(S,\lambda)\leq \mathcal{O}_{\PP}\left(\frac{\lambda^{-p}}{N}+\frac{m}{N}\lambda^{2\theta-p}\right).
\end{align*}
Combining the above estimation with \eqref{equation: Theorem 2 1} yields
\begin{align}\label{equation: Theorem 2 3}
	\mathcal{R(\overline{\beta}_{S,\lambda})}-\mathcal{R}(\beta_0)\leq \mathcal{O}_{\PP}\left(\lambda^{2\theta}+\frac{\lambda^{-p}}{N}+\frac{m}{N}\lambda^{2\theta-p}\right)
\end{align}
provided that 
$$m^2\lambda^{-2p}\leq o(N) \mbox{ and } \lambda\leq 1.$$

When $p/2<\theta\leq 1/2$, take $m\leq o\left(N^{\frac{2\theta-p}{4\theta+2p}}\right)$ and $\lambda=N^{-\frac{1}{2\theta+p}}$, then there hold $m^2\lambda^{-2p}\leq o(N)$ and $\lambda\leq 1$. Therefore, following from \eqref{equation: Theorem 2 3}, we can write
\begin{align*}
	\mathcal{R(\overline{\beta}_{S,\lambda})}-\mathcal{R}(\beta_0)\leq\mathcal{O}_{\PP}\left(N^{-\frac{2\theta}{2\theta+p}}\right),
\end{align*}
or equivalently, 
\begin{align*}
	\lim_{\Gamma\to\infty}\mathop{\sup\lim}_{N\to \infty}\sup_{\beta_0}\mathbb{P}\left\{\mathcal{R}(\overline{\beta}_{S,\lambda})-\mathcal{R}(\beta_0)\geq \Gamma N^{-\frac{2\theta}{2\theta+p}}\right\}=0.
\end{align*}
This completes the proof of \eqref{extension of Yuan and Cai}.

When $0<\theta\leq p/2$, take $m$ and $\lambda$ satisfying $m^2\lambda^{-2p}\leq o(N)$ and $\lambda\leq 1$, then following from \eqref{equation: Theorem 2 3}, one can calculate
\begin{align*}
	\mathcal{R(\overline{\beta}_{S,\lambda})}-\mathcal{R}(\beta_0)\leq\mathcal{O}_{\PP}\left(\lambda^{2\theta}\right),
\end{align*}
or equivalently,
\begin{align*}
	\lim_{\Gamma\to\infty}\mathop{\sup\lim}_{N\to \infty}\sup_{\beta_0}\mathbb{P}\left\{\mathcal{R}(\overline{\beta}_{S,\lambda})-\mathcal{R}(\beta_0)\geq \Gamma \lambda^{2\theta}\right\}=0,
\end{align*}
which further implies (\ref{upperbound22}) and (\ref{upperbound23}). The proof of Theorem \ref{theorem:basic upper bound} is then completed.
\qed

Now we turn to prove Theorem \ref{theorem:no random noise}.

\noindent
{\bf Proof of Theorem \ref{theorem:no random noise}}.
Recalling \eqref{equation: simple notation}, under the noiseless condition, we can write
\[\hat{f}_{S_1,\lambda}= \left(\lambda I + T_{\bX_1}\right)^{-1}\frac{1}{n}\sum_{i=1}^{n}L_K^{1/2}X_{1,i}\langle X_{1,i},\beta_0\rangle_{\cL^2}.\]
Then we can give an improved estimation of the left hand side of \eqref{equation: lemma rates ub 1} as
\begin{align}\label{equation: lemma rates ub 1'}
	&\mathbb{E}\left[\left\|L_C^{1/2}L_K^{1/2}\left(\hat{f}_{S_1,\lambda}-f_\lambda\right)\right\|^2_{\cL^2}\II_{\cU_1^c}\right]\nonumber\\
	&=\EE\left[\left\|L_C^{1/2}L_K^{1/2}(\lambda I+ T_{\bX_1})^{-1}\left(\frac{1}{n}\sum_{i=1}^nL^{1/2}X_{1,i}\langle X_{1,i},\beta_0-L_K^{1/2}f_\lambda\rangle_{\cL^2}-\lambda f_\lambda\right) \right\|^2_{\cL^2}\II_{\cU_1^c}\right]\nonumber\\
	&\overset{(*)}{\leq} 4c_1\frac{m}{N}\cN(\lambda)\lambda^{2\theta}\|\gamma_0\|_{\cL^2}^2,
\end{align}
where inequality $(*)$ follows from \eqref{equation: lemma rates ub 1a}.

Utilizing \eqref{equation: lemma rates ub 1'} and following the same arguments in the proof of Theorem \ref{theorem:basic upper bound}, we have
\begin{align}\label{equation: Theorem 5 1}
	.	\mathcal{R(\overline{\beta}_{S,\lambda})}-\mathcal{R}(\beta_0)\leq \mathcal{O}_{\PP}\left(\lambda^{2\theta}+\frac{m}{N}\lambda^{2\theta-p}\right),
\end{align}
provided that 
$$m^2\lambda^{-2p}\leq o(N) \mbox{ and } \lambda\leq 1.$$
For any $0<\eta\leq 1/2$, take $m\leq o\left(N^{\eta}\right) \mbox{ and } \lambda=N^{-\frac{1-2\eta}{2p}}$, then there hold $m^2\lambda^{-2p}\leq o(N) \mbox{ and } \lambda\leq 1$. Therefore, following from \eqref{equation: Theorem 5 1}, one can calculate
\begin{align*}
	\mathcal{R(\overline{\beta}_{S,\lambda})}-\mathcal{R}(\beta_0)\leq \mathcal{O}_{\PP}\left(\lambda^{2\theta}\right)\leq\mathcal{O}_{\PP}\left(N^{-\frac{\theta(1-2\eta)}{p}}\right),
\end{align*}
or equivalently,
\begin{align*}
	\lim_{\Gamma\to\infty}\mathop{\sup\lim}_{N\to \infty}\sup_{\beta_0}\mathbb{P}\left\{\mathcal{R}(\overline{\beta}_{S,\lambda})-\mathcal{R}(\beta_0) \geq \Gamma N^{-\frac{\theta(1-2\eta)}{p}}\right\}=0.
\end{align*} 
We have obtained \eqref{equation 32}. The proof of Theorem \ref{theorem:no random noise} is then finished.
\qed

We next aim to prove Theorem \ref{theorem:stronger upper bound} and Corollary \ref{corollary:corollary stronger}. We also need several lemmas before  proving them.

When Assumption \ref{momentcondition1} is enhanced to Assumption \ref{momentcondition3}, we can estimate the probability of
event $\cU_1$ better than Lemma \ref{lemma: basic probability esstimation of U1}.
\begin{lemma}\label{lemma: stronger probability esstimation of U1}
	Suppose that Assumption \ref{momentcondition3} is satisfied with some integer $\ell\geq 2$. Then there holds 
	\begin{equation}\label{equation: stronger probability estimation of U1}
		\PP(\cU_1)\leq c(\ell)2^{4\ell}\rho^{4\ell}\left(\frac{m\cN^2(\lambda)}{N}\right)^\ell,
	\end{equation}
	where $c(\ell)$ is a constant only depends on $\ell$ and $\mathcal{N}(\lambda)$ is given by \eqref{effectivedimension}.
\end{lemma}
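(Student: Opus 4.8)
The plan is to replace the second-moment/Chebyshev argument of Lemma \ref{lemma: basic probability esstimation of U1} by a Markov bound at the order $2\ell$, exploiting the full strength of Assumption \ref{momentcondition3}. Writing $A := (\lambda I+T)^{-1/2}(T_{\bX_1}-T)(\lambda I+T)^{-1/2}$ and using the norm relation \eqref{relationship between L2 and Linfty norm} so that $\{\|A\|\geq \tfrac12\}\subseteq\{\|A\|_{HS}\geq\tfrac12\}$, I would start from
\[
\PP(\cU_1)\;\leq\;\PP\!\left(\|A\|_{HS}\geq \tfrac12\right)\;=\;\PP\!\left(\|A\|_{HS}^{2\ell}\geq 2^{-2\ell}\right)\;\leq\;2^{2\ell}\,\EE\!\left[\|A\|_{HS}^{2\ell}\right].
\]
Recalling \eqref{equation: simple notation}, one observes that $A=\frac1n\sum_{i=1}^n\eta_i$, where $\eta_i:=\zeta_i-\EE[\zeta_i]$ are i.i.d.\ centered Hilbert--Schmidt operators and $\zeta_i:=(\lambda I+T)^{-1/2}\big(L_K^{1/2}X_{1,i}\otimes L_K^{1/2}X_{1,i}\big)(\lambda I+T)^{-1/2}$, with $n=N/m$. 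The task thus reduces to a moment estimate for a normalized sum of independent random elements of the Hilbert space of Hilbert--Schmidt operators.

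For that moment estimate I would invoke the Rosenthal-type inequality for i.i.d.\ centered random elements of a Hilbert space: for $2\ell\geq 2$,
\[
\EE\left\|\sum_{i=1}^n\eta_i\right\|_{HS}^{2\ell}\;\leq\;c(\ell)\left[\left(n\,\EE\|\eta_1\|_{HS}^2\right)^{\ell}+n\,\EE\|\eta_1\|_{HS}^{2\ell}\right],
\]
with $c(\ell)$ depending only on $\ell$; this $c(\ell)$ is exactly the constant in the statement. It then remains to control $\EE\|\eta_1\|_{HS}^{2}$ and $\EE\|\eta_1\|_{HS}^{2\ell}$. The decisive simplification is that $\zeta_1$ is the rank-one operator $g\otimes g$ with $g:=(\lambda I+T)^{-1/2}L_K^{1/2}X_1$ (since $B(u\otimes u)B=(Bu)\otimes(Bu)$ for self-adjoint $B$), so that by the principal component decomposition \eqref{PCdecomposition} and $T\phi_k=\mu_k\phi_k$,
\[
\|\zeta_1\|_{HS}=\|g\|_{\cL^2}^2=\sum_{k\geq1}\frac{\mu_k}{\lambda+\mu_k}\,\xi_k^2.
\]

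Setting $a_k:=\mu_k/(\lambda+\mu_k)$, so that $\sum_k a_k=\cN(\lambda)$, a convexity (Jensen) argument applied to the probability weights $a_k/\cN(\lambda)$ yields, for each integer $1\leq j\leq\ell$,
\[
\EE\|\zeta_1\|_{HS}^{2j}=\EE\left[\Big(\sum_{k\geq1}a_k\xi_k^2\Big)^{2j}\right]\leq \cN(\lambda)^{2j-1}\sum_{k\geq1}a_k\,\EE[\xi_k^{4j}]\leq \rho^{4j}\cN(\lambda)^{2j},
\]
using $\sup_k\EE[\xi_k^{4j}]\leq\rho^{4j}$ (which follows from Assumption \ref{momentcondition3} and $\rho\geq1$ by moment monotonicity). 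Since $\|\EE[\zeta_1]\|_{HS}\leq\EE\|\zeta_1\|_{HS}=\cN(\lambda)$, centering costs only a factor $2^{2\ell}$, giving $\EE\|\eta_1\|_{HS}^2\leq\rho^4\cN(\lambda)^2$ and $\EE\|\eta_1\|_{HS}^{2\ell}\leq 2^{2\ell}\rho^{4\ell}\cN(\lambda)^{2\ell}$. Plugging these into the Rosenthal bound, dividing by $n^{2\ell}$, and noting $n^{2\ell-1}\geq n^{\ell}$ for $\ell\geq1$ so that both terms are of order $\rho^{4\ell}\cN(\lambda)^{2\ell}/n^{\ell}$, I obtain $\EE\|A\|_{HS}^{2\ell}\leq c(\ell)\,2^{2\ell}\rho^{4\ell}\big(\cN^2(\lambda)/n\big)^{\ell}$; combined with the Markov step and $1/n=m/N$ this is precisely $c(\ell)2^{4\ell}\rho^{4\ell}\big(m\cN^2(\lambda)/N\big)^{\ell}$. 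The main obstacle is the moment estimate for the operator average: one must correctly deploy the Hilbert-space Rosenthal inequality in place of the elementary Chebyshev argument of Lemma \ref{lemma: basic probability esstimation of U1} and faithfully carry the constants $2^{4\ell}\rho^{4\ell}$ through both the centering and the Jensen step, the latter being what renders the high moments $\EE\|\zeta_1\|_{HS}^{2j}$ computable and ties the bound to $\rho$ and $\cN(\lambda)$.
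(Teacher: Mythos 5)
Your proposal is correct, and it reaches the same bound by a genuinely different route in the key step. Both you and the paper start from the same skeleton: pass from the operator norm to the Hilbert--Schmidt norm via \eqref{relationship between L2 and Linfty norm} and apply Markov's inequality at order $2\ell$, which produces the factor $2^{2\ell}$ and reduces everything to bounding $\EE\bigl[\|A\|_{HS}^{2\ell}\bigr]$ (this is exactly the content of the paper's Lemma \ref{lemma: estimation of l-th HS-norm}). Where you diverge is in how that moment is controlled. The paper expands $\|A\|_{HS}^{2\ell}$ into a $2\ell$-fold sum over indices, discards the index-distinct tuples by independence and centering, counts the surviving tuples ($|\Theta(n,\ell)|\leq (2\ell)!\,\ell^{2\ell+1}n^{\ell}$, which is its explicit $c(\ell)$), and bounds each surviving term entrywise in the eigenbasis via H\"older and the bound $\EE\langle(L_K^{1/2}X\otimes L_K^{1/2}X-T)\phi_j,\phi_k\rangle_{\cL^2}^{2\ell}\leq 2^{2\ell}\rho^{4\ell}\mu_j^{\ell}\mu_k^{\ell}$ --- in effect a self-contained, hand-made Rosenthal bound. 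You instead invoke the off-the-shelf Rosenthal--Pinelis inequality for i.i.d.\ centered Hilbert-space-valued variables and then exploit the rank-one identity $\zeta_1=g\otimes g$, $\|g\otimes g\|_{HS}=\|g\|_{\cL^2}^2=\sum_k\frac{\mu_k}{\lambda+\mu_k}\xi_k^2$, together with Jensen over the weights $a_k/\cN(\lambda)$, to compute $\EE\|\zeta_1\|_{HS}^{2j}\leq\rho^{4j}\cN(\lambda)^{2j}$ in one line (your appeal to Lyapunov's moment monotonicity to get $\sup_k\EE[\xi_k^{4j}]\leq\rho^{4j}$ for $j\leq\ell$ is correct). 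Your version is shorter and more modular, and the rank-one/Jensen computation is cleaner than the paper's entrywise H\"older argument; the price is that the Hilbert-space Rosenthal inequality is an external result that the paper neither proves nor cites, so you would need to add a reference (e.g.\ Pinelis's inequality for $(2,D)$-smooth spaces), and your $c(\ell)$ is the (unspecified) Rosenthal constant rather than the paper's explicit $(2\ell)!\,\ell^{2\ell+1}$ --- also note that combining the two Rosenthal terms via $n^{2\ell-1}\geq n^{\ell}$ costs an extra harmless factor of $2$ to be absorbed into $c(\ell)$. Neither point is a gap; the lemma only asserts existence of an $\ell$-dependent constant.
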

Lemma \ref{lemma: stronger probability esstimation of U1} can be proved by employing Markov's inequality combined with the following lemma.

\begin{lemma}\label{lemma: estimation of l-th HS-norm}
	Suppose that Assumption \ref{momentcondition3} is satisfied with some integer $\ell\geq2$. Then
	\begin{equation}\label{equation: estimation of l-th HS-norm}
		\EE\left[\left\|(\lambda I + T)^{-1/2}\left(T_{\bX_1} - T\right)(\lambda I + T)^{-1/2}\right\|_{HS}^{2\ell}\right]\leq c(\ell)2^{2\ell}\rho^{4\ell}\left(\frac{m\cN^2(\lambda)}{N}\right)^\ell
	\end{equation}
	and
	\begin{equation}\label{equation: estimation of l-th HS-norm*}
		\EE\left[\left\|(\lambda I + T)^{-1/2}\left(T_{\bX_1} - T\right)\right\|_{HS}^{2\ell}\right]\leq c(\ell)2^{2\ell}\rho^{4\ell}trace^{\ell}(T)\left(\frac{m\cN(\lambda)}{N}\right)^\ell,
	\end{equation}
	where $trace(T)=\sum_{j=1}^\infty \mu_j$ denotes the trace of operator $T$, $\mathcal{N}(\lambda)$ is the effective dimension given by \eqref{effectivedimension}, and $c(\ell)$ is a constant only depends on $\ell$.
\end{lemma}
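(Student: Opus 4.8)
The plan is to read both bounds as moment estimates for \emph{normalized sums of independent, mean-zero, Hilbert--Schmidt--operator--valued random variables} and to control them with a Rosenthal-type (Marcinkiewicz--Zygmund) moment inequality. Writing $u_i := L_K^{1/2}X_{1,i}$, set
\[
A_i := (\lambda I + T)^{-1/2}\left(u_i\otimes u_i - T\right)(\lambda I + T)^{-1/2},\qquad
A_i' := (\lambda I + T)^{-1/2}\left(u_i\otimes u_i - T\right),
\]
so that $\frac1n\sum_{i=1}^n A_i$ and $\frac1n\sum_{i=1}^n A_i'$ are precisely the operators whose HS norms appear in \eqref{equation: estimation of l-th HS-norm} and \eqref{equation: estimation of l-th HS-norm*}, recalling $n=N/m$. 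Since $\mathbb{E}[u_i\otimes u_i]=T$, each $A_i$ (resp.\ $A_i'$) is centered and the summands are i.i.d.\ in the Hilbert space of HS operators. First I would invoke the Rosenthal inequality in this Hilbert space to obtain
\[
\mathbb{E}\left[\left\|\tfrac1n\sum_{i=1}^n A_i\right\|_{HS}^{2\ell}\right]
\le C(\ell)\left[\left(\frac{\mathbb{E}\|A_1\|_{HS}^2}{n}\right)^{\ell}
+ \frac{\mathbb{E}\|A_1\|_{HS}^{2\ell}}{n^{2\ell-1}}\right],
\]
and the analogous bound for $A_i'$. Because $n^{2\ell-1}\ge n^{\ell}$, both terms are tamed once the second and the $2\ell$-th HS moments of a \emph{single} summand are estimated.

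The key computation exploits that each (uncentered) summand is rank one, which linearizes the HS norm. For a generic sample, put $u:=L_K^{1/2}X$ and $v:=(\lambda I+T)^{-1/2}u=\sum_k\sqrt{\mu_k/(\lambda+\mu_k)}\,\xi_k\phi_k$ via decomposition \eqref{PCdecomposition}. Then $(\lambda I+T)^{-1/2}(u\otimes u)(\lambda I+T)^{-1/2}=v\otimes v$ has HS norm $\|v\|_{\cL^2}^2=\sum_k w_k\xi_k^2$ with $w_k:=\mu_k/(\lambda+\mu_k)$ and $\sum_k w_k=\cN(\lambda)$. Normalizing the $w_k$ to probability weights $w_k/\cN(\lambda)$, Jensen's inequality applied to the convex map $x\mapsto x^{2\ell}$ gives
\[
\mathbb{E}\left[\left(\sum_k w_k\xi_k^2\right)^{2\ell}\right]
\le \cN(\lambda)^{2\ell}\sum_k\frac{w_k}{\cN(\lambda)}\,\mathbb{E}[\xi_k^{4\ell}]
\le \rho^{4\ell}\cN(\lambda)^{2\ell},
\]
using $\sup_k\mathbb{E}[\xi_k^{4\ell}]\le\rho^{4\ell}$ from Assumption \ref{momentcondition3}. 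The case $\ell=1$, together with $\mathbb{E}[\xi_j^2\xi_k^2]\le\rho^4$ from Cauchy--Schwarz, yields $\mathbb{E}\|A_1\|_{HS}^2\le\rho^4\cN^2(\lambda)$ (the variance is dominated by the second moment, so centering only helps here). For the $2\ell$-th moment, the triangle inequality plus Jensen absorb the centering at the cost of a factor $2^{2\ell}$, giving $\mathbb{E}\|A_1\|_{HS}^{2\ell}\le 2^{2\ell}\mathbb{E}\|v\otimes v\|_{HS}^{2\ell}\le 2^{2\ell}\rho^{4\ell}\cN^{2\ell}(\lambda)$. Substituting both into the Rosenthal bound and using $\rho\ge1$ and $\cN^2(\lambda)/n=m\cN^2(\lambda)/N$ produces \eqref{equation: estimation of l-th HS-norm}, with $c(\ell)$ absorbing $C(\ell)$ and the merging of the two terms.

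For the asymmetric bound \eqref{equation: estimation of l-th HS-norm*}, the only change is that the uncentered summand is $u\otimes v$, whose HS norm is $\|u\|_{\cL^2}\|v\|_{\cL^2}$, so $\|u\otimes v\|_{HS}^2=(\sum_k\mu_k\xi_k^2)(\sum_k w_k\xi_k^2)$. I would bound its $2\ell$-th power by Cauchy--Schwarz into the product of $\mathbb{E}[(\sum_k\mu_k\xi_k^2)^{2\ell}]$ and $\mathbb{E}[(\sum_k w_k\xi_k^2)^{2\ell}]$, each treated by the same Jensen reduction and yielding $trace(T)^{2\ell}\rho^{4\ell}$ (since $\sum_k\mu_k=trace(T)$) and $\cN(\lambda)^{2\ell}\rho^{4\ell}$, respectively. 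This gives $\mathbb{E}\|A_1'\|_{HS}^2\le\rho^4\,trace(T)\cN(\lambda)$ and $\mathbb{E}\|A_1'\|_{HS}^{2\ell}\le 2^{2\ell}\rho^{4\ell}trace(T)^\ell\cN(\lambda)^\ell$, and Rosenthal then delivers \eqref{equation: estimation of l-th HS-norm*}. The main obstacle is the clean invocation of the Rosenthal/Marcinkiewicz--Zygmund inequality in the HS-operator Hilbert space with an explicit $\ell$-dependent constant; if one prefers to avoid a black box, this step would instead be carried out by directly expanding $\mathbb{E}\|\sum_i A_i\|_{HS}^{2\ell}$ as a multinomial over the independent centered summands, discarding every term containing an isolated factor and grouping the survivors by the number of distinct indices, which reproduces the same two-term bound at the price of a more laborious accounting of the combinatorial constant $c(\ell)$.
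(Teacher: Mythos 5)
Your proposal is correct, and it reaches the same two bounds by a genuinely different route for the ``sum'' step. The paper expands $\EE\bigl[\|\frac1n\sum_i Q_i\|_{HS}^{2\ell}\bigr]$ directly as a $2\ell$-fold multinomial over the independent centered summands, kills every tuple containing an isolated index by the zero-mean property, counts the surviving index configurations ($|\Theta(n,\ell)|\leq (2\ell)!\,\ell^{2\ell+1}n^{\ell}$), and bounds each surviving term by H\"older through $\EE\|Q_1\|_{HS}^{2\ell}$ --- this is exactly the ``laborious accounting'' fallback you sketch at the end. You instead invoke the Rosenthal/Marcinkiewicz--Zygmund moment inequality for sums of independent mean-zero random elements of the Hilbert space of Hilbert--Schmidt operators, which packages the same combinatorics into the two-term bound $C(\ell)\bigl[(\EE\|A_1\|_{HS}^2/n)^{\ell}+\EE\|A_1\|_{HS}^{2\ell}/n^{2\ell-1}\bigr]$; since $n^{2\ell-1}\geq n^{\ell}$, both terms collapse to the claimed rate. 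Your single-summand estimate is also slicker than the paper's: exploiting that the uncentered summand is rank one, you reduce $\|v\otimes v\|_{HS}$ to the scalar $\sum_k w_k\xi_k^2$ and apply Jensen with the probability weights $w_k/\cN(\lambda)$, whereas the paper runs an iterated H\"older argument over the double eigenbasis sum --- both yield $\EE\|A_1\|_{HS}^{2\ell}\leq 2^{2\ell}\rho^{4\ell}\cN^{2\ell}(\lambda)$ (and the analogous $trace^{\ell}(T)\cN^{\ell}(\lambda)$ bound for the one-sided operator via Cauchy--Schwarz). What your route buys is brevity and a cleaner conceptual structure; what it costs is the reliance on a vector-valued Rosenthal inequality with an explicit $\ell$-dependent constant, which you should cite (it is classical for Hilbert-space-valued independent sums), or else fall back on the direct expansion, at which point your argument coincides with the paper's. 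Since the lemma only asserts a constant $c(\ell)$ depending on $\ell$, the difference in constants between the two routes is immaterial.
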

\begin{proof}
	We first prove inequality (\ref{equation: estimation of l-th HS-norm}). Recalling	\eqref{equation: simple notation}, for brevity of notations, we define
	\[	Q_i:= (\lambda I + T)^{-1/2}\left(L_K^{1/2}X_{1,i}\otimes L_K^{1/2}X_{1,i} -T\right)(\lambda I + T)^{-1/2}, \quad i=1,2,\cdots,n. \]
	Then we can write
	\begin{equation*}
		\begin{aligned}
			&\EE\left[\left\|(\lambda I + T)^{-1/2}\left(T_{\bX_1} -T\right)(\lambda I + T)^{-1/2}\right\|_{HS}^{2\ell}\right]\\
			&=\EE\left[\left\langle \frac{1}{n}\sum_{i=1}^{n}Q_i,\frac{1}{n}\sum_{j=1}^{n}Q_j\right\rangle _{HS}^\ell\right]\\
			&=\frac{1}{n^{2\ell}}\sum_{i_1=1}^{n}...\sum_{i_\ell=1}^{n}\sum_{j_1 = 1}^{n}...\sum_{j_\ell=1}^{n} \EE\left[\langle Q_{i_1},Q_{j_1}\rangle_{HS}\cdots\langle Q_{i_\ell},Q_{j_\ell}\rangle_{HS}\right].
		\end{aligned}
	\end{equation*}
	When the indices in group $\{i_1,...,i_\ell,j_1,...,j_\ell\}$ are all distinct, then following from the independence, there holds 
	$\EE\left[\langle Q_{i_1},Q_{j_1}\rangle_{HS}\cdots\langle Q_{i_\ell},Q_{j_\ell}\rangle_{HS}\right] = 0$. We denote the set of all index-distinct groups by $\Omega(n,\ell)$. Let $\Theta(n,\ell)=\{1,...,n\}^{2\ell}\backslash \Omega(n,\ell)$. Using these notations, we can write
	\begin{equation}\label{equation: lemma 6 2}
		\begin{aligned}
			&\EE\left[\left\|(\lambda I + T)^{-1/2}(T_{\bX_1} -T)(\lambda I + T)^{-1/2}\right\|_{HS}^{2\ell}\right]\\
			=&\frac{1}{n^{2\ell}}\sum_{\{i_1,...,i_\ell,j_1,...,j_\ell\}\in \Theta(n,\ell)} \EE\left[\langle Q_{i_1},Q_{j_1}\rangle _{HS}...\langle Q_{i_\ell},Q_{j_\ell}\rangle _{HS}\right].
		\end{aligned}
	\end{equation}
	We estimate the cardinality of $\Theta(n,k)$ as
	\begin{equation}\label{equation: lemma 6 3}
		\begin{aligned}
			|\Theta(n,\ell)| &= |\Theta_\ell(n,\ell)| +...+ |\Theta_1(n,\ell)|\\ &\leq (2\ell)!\left[\binom{n}{\ell}+ \binom{n}{\ell-1}(\ell-1)^{2}+\cdots+\binom{n}{1}\right]\leq (2\ell)!\ell^{2\ell+1} n^\ell:= c(\ell)n^\ell,
		\end{aligned} 
	\end{equation}
	where $c(\ell):= (2\ell)!\ell^{2\ell+1}$. Let $\Theta_i(n,\ell)$ denote a subset of $\Theta(n,\ell)$ consisting of all groups with exactly $i$ different indices. Then $\Theta(n,\ell)=\cup_{i=1}^\ell\Theta_i(n,\ell)$ and $|\Theta_i(n,\ell)| \leq (2\ell)!\binom{n}{i}i^{2(\ell-i)}\leq (2\ell)!\ell^{2\ell}n^{\ell}$.
	
	For any $\{i_1,\cdots,i_\ell,j_1,\cdots,j_\ell\}\in \Theta(n,\ell)$, we have
	\begin{equation}\label{equation: lemma 6 5}
		\begin{aligned}
			&\EE\left[\left\langle Q_{i_1},Q_{j_1}\right\rangle_{HS}\cdots\left\langle Q_{i_\ell},Q_{j_\ell} \right\rangle_{HS}\right]\\
			&\leq \EE\left[\left\|Q_{i_1}\right\|_{HS}\left\|Q_{j_1}\right\|_{HS}\cdots\left\|Q_{i_\ell}\right\|_{HS}\left\|Q_{j_\ell}\right\|_{HS}\right]\\
			&\overset{(\dagger)}{\leq}
			\left[\EE\left\|Q_{i_1}\right\|_{HS}^{2\ell}\right]^{\frac{1}{2\ell}}\left[\EE\left\|Q_{j_1}\right\|_{HS}^{2\ell}\right]^{\frac{1}{2\ell}}\cdots \left[\EE\left\|Q_{i_\ell}\right\|_{HS}^{2\ell}\right]^{\frac{1}{2\ell}}\left[\EE\left\|Q_{j_\ell}\right\|_{HS}^{2\ell}\right]^{\frac{1}{2\ell}},
		\end{aligned}
	\end{equation}
	where inequality $(\dagger)$ uses H\"older inequality. Then we further bound $\EE\left[\left\|Q_{i}\right\|_{HS}^{2\ell}\right]$ for any $1\leq i\leq n$, which is given 
	\begin{align}\label{equation: lemma 6 6}
		&\EE\left[\left\|Q_{i}\right\|_{HS}^{2\ell}\right]\\
		&=\EE\left[\left\|(\lambda I + T)^{-1/2}\left(L_K^{1/2}X_{1,i}\otimes L_K^{1/2}X_{1,i}-T\right)(\lambda I + T)^{-1/2}\right\|_{HS}^{2\ell}\right]\nonumber\\ 
		&= \EE\left[\left(\sum_{j=1}^{\infty}\sum_{k=1}^{\infty}\frac{1}{\lambda + \mu_j}\frac{1}{\lambda + \mu_k}\left\langle \left(X_{1,i}\otimes X_{1,i}- T\right)\phi_j, \phi_k\right\rangle_{\cL^2}^2\right)^\ell\right]\nonumber\\ 
		&=\sum_{j_1=1}^{\infty } \cdots \sum_{j_\ell=1}^{\infty} \sum_{k_1=1}^{\infty} \cdots \sum_{k_\ell=1}^{\infty}\EE\bigg[\frac{1}{\lambda + \mu_{j_1}}\frac{1}{\lambda +\mu_{k_1}}\left\langle \left(L_K^{1/2}X_{1,i}\otimes L_K^{1/2}X_{1,i}- T\right)\phi_{j_1}, \phi_{k_1}\right\rangle_{\cL^2}^2\nonumber\\
		&\quad\times\cdots\times
		\frac{1}{\lambda + \mu_{j_\ell}}\frac{1}{\lambda +\mu_{k_\ell}}\left\langle \left(L_K^{1/2}X_{1,i}\otimes L_K^{1/2}X_{1,i}- T\right)\phi_{j_\ell}, \phi_{k_\ell}\right\rangle_{\cL^2}^2\bigg]\\
		&\overset{(*)}{\leq}\sum_{j_1=1}^{\infty }\sum_{k_1=1}^{\infty}\frac{1}{\lambda + \mu_{j_1}}\frac{1}{\lambda +\mu_{k_1}}\left[\EE\left\langle \left(L_K^{1/2}X_{1,i}\otimes L_K^{1/2}X_{1,i}- T\right)\phi_{j_1}, \phi_{k_1}\right\rangle^{2\ell}\right]^{\frac{1}{\ell}}\times\cdots\times\nonumber \\
		&\quad\sum_{j_\ell=1}^{\infty}\sum_{k_\ell=1}^{\infty}\frac{1}{\lambda + \mu_{j_\ell}}\frac{1}{\lambda +\mu_{k_\ell}}\left[\EE\left\langle \left(L_K^{1/2}X_{1,i}\otimes L_K^{1/2}X_{1,i}- T\right)\phi_{j_\ell}, \phi_{l_\ell}\right\rangle_{\cL^2}^{2\ell}\right]^{\frac{1}{\ell}},\nonumber
	\end{align}
	where inequality $(*)$ also uses H\"older inequality. It remains to estimate
	$$\EE\left[\left\langle \left(L_K^{1/2}X_{1,i}\otimes L_K^{1/2}X_{1,i}- T\right)\phi_{j}, \phi_{k}\right\rangle_{\cL^2}^{2\ell}\right],\quad \forall 1\leq i\leq n \mbox{ and }\forall 1\leq j,k<\infty.$$
	
	When $j\neq k$, we have
	\begin{align}\nonumber
		&\EE\left[\left\langle \left(L_K^{1/2}X_{1,i}\otimes L_K^{1/2}X_{1,i}- T\right)\phi_{j}, \phi_{k}\right\rangle_{\cL^2}^{2\ell}\right]\nonumber\\
		&= \EE\left[\left\langle L_K^{1/2}X_{1,i},\phi_j\right\rangle_{\cL^2}^{2\ell}\langle L_K^{1/2}X_{1,i},\phi_k\rangle_{\cL^2}^{2\ell}\right]\nonumber\\
		&\overset{(\romannumeral1)}{\leq} \left[\EE\left\langle L_K^{1/2}X_{1,i},\phi_j\right\rangle_{\cL^2}^{4\ell}\right]^{\frac{1}{2}}\left[\EE\left\langle L_K^{1/2}X_{1,i},\phi_k\right\rangle_{\cL^2} ^{4\ell}\right]^{\frac{1}{2}} \overset{(\romannumeral2)}{\leq} \rho^{4\ell}\mu_j^{\ell}\mu_k^{\ell},\nonumber
	\end{align} where inequality (\romannumeral1) is from Cauchy-Schwarz inequality and inequality (\romannumeral2) utilizes Assumption \ref{momentcondition3}. 
	
	When $j = k$, we have
	\begin{align}\nonumber
		&\EE\left[\left\langle \left(L_K^{1/2}X_{1,i}\otimes L_K^{1/2}X_{1,i}- T\right)\phi_{j}, \phi_{j}\right\rangle_{\cL^2}^{2\ell}\right]\\
		&= \EE\left[\left(\left\langle L_K^{1/2}X_{1,i}\otimes L_K^{1/2}X_{1,i} \phi_j,\phi_j\right\rangle_{\cL^2}-\mu_j\right)^{2\ell}\right]\nonumber\\
		&= 2^{2\ell}\EE\left[\left(\frac{1}{2}\left\langle L_K^{1/2}X_{1,i}\otimes L_K^{1/2}X_{1,i} \phi_j,\phi_j\right\rangle_{\cL^2}-\frac{1}{2}\mu_j\right)^{2\ell}\right]\nonumber\\
		&\overset{(\romannumeral1)}{\leq} 2^{2\ell-1} \left(\EE\left[\left\langle L_K^{1/2}X_{1,i}\otimes L_K^{1/2}X_{1,i} \phi_j,\phi_j\right\rangle_{\cL^2}^{2\ell}\right] +\mu_j^{2\ell} \right)\nonumber\\
		&=2^{2\ell-1} \left(\EE\left[\left\langle L_K^{1/2}X_{1,i}, \phi_j\right\rangle_{\cL^2}^{4\ell}\right] +\mu_j^{2\ell} \right)\overset{(\romannumeral2)}{\leq} 2^{2\ell}\rho^{4\ell}\mu_j^{2\ell},\nonumber
	\end{align}
	where inequality (\romannumeral1) is due to Jensen's inequality and inequality (\romannumeral2) follows from Assumption \ref{momentcondition3} and the fact that $\rho\geq 1$.
	
	Combining the above estimations, for any $1\leq i\leq n$ and $1\leq j,k <\infty$, there holds
	\begin{equation}\label{equation: lemma 6 7}
		\EE\left[\left\langle \left(L_K^{1/2}X_{1,i}\otimes L_K^{1/2}X_{1,i}- T\right)\phi_{j}, \phi_{k}\right\rangle^{2\ell}\right]\leq 2^{2\ell}\rho^{4\ell}\mu_j^\ell\mu_k^\ell.
	\end{equation}
	Recall that $n = N/m$. Combining (\ref{equation: lemma 6 2}), (\ref{equation: lemma 6 3}), (\ref{equation: lemma 6 5}), (\ref{equation: lemma 6 6}) and (\ref{equation: lemma 6 7}) yields
	\[	\begin{aligned}
		\EE\left[\left\|(\lambda I + T)^{-1/2}\left(T_{\bX_1} -T\right)(\lambda I + T)^{-1/2}\right\|_{HS}^{2\ell}\right]
		&\leq c(\ell)2^{2\ell}\rho^{4\ell}\left(\frac{m\cN^2(\lambda)}{N}\right)^\ell.
	\end{aligned}\]
	This completes the proof of \eqref{equation: estimation of l-th HS-norm}.
	
	Analogously, we can demonstrate the second inequality (\ref{equation: estimation of l-th HS-norm*}) through
	\[\EE\left[\left\|(\lambda I + T)^{-1/2}\left(L_K^{1/2}X_{1,i}\otimes L_K^{1/2}X_{1,i}-T\right)\right\|_{HS}^{2\ell}\right]\leq 2^{2\ell}\rho^{4\ell}trace^{\ell}(T)\cN^\ell(\lambda)\]
	and
	\[\EE\left[\left\|(\lambda I + T)^{-1/2}\left(T_{\bX_1} -T\right)\right\|_{HS}^{2\ell}\right]\leq c(\ell)2^{2\ell}\rho^{4\ell}trace^{\ell}(T)\left(\frac{m\cN(\lambda)}{N}\right)^\ell.\]
	The proof of Lemma \ref{lemma: estimation of l-th HS-norm} is then finished.
\end{proof}

The following lemma plays a key role in estimating the upper bound  of $\mathscr{S}(S,\lambda)$ under Assumption \ref{momentcondition3}.

\begin{lemma}\label{lemma: estimation of expectation of F(S,lambda)}
	Suppose that Assumption \ref{assumption1} is satisfied with $0<\theta\leq 1/2$ and $\gamma_0\in \mathcal{L}^2(\cal T)$. Under Assumption \ref{assumption2} and Assumption \ref{momentcondition3}, taking $\lambda\leq 1$ yields
	\begin{equation}\label{equation: estimtion of 4-th norm1}
		\begin{aligned}
			&\EE\left[\left\|(\lambda I + T)^{-1/2}\frac{1}{|S_1|}\left(\sum_{X\in \bX_1}L_K^{1/2}X\langle X,\beta_0-L_K^{1/2}f_\lambda\rangle_{\cL^2} - \lambda f_\lambda\right)\right\|_{\cL^2}^4\right]\\
			&\leq c_6^2\frac{m^2}{N^2}(1+\lambda^{4\theta}\mathcal{N}^2(\lambda)),
		\end{aligned}
	\end{equation}
	where $c_6$ is a universal constant and $\mathcal{N}(\lambda)$ is given by \eqref{effectivedimension}.
\end{lemma}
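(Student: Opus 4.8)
The plan is to isolate the centered i.i.d.\ structure of the quantity inside the norm and then apply a fourth-moment expansion. Writing $g := \beta_0 - L_K^{1/2}f_\lambda$ and using $n = |S_1| = N/m$ with the notation of \eqref{equation: simple notation}, I set
\[\zeta_i := L_K^{1/2}X_{1,i}\langle X_{1,i}, g\rangle_{\cL^2} - \lambda f_\lambda, \qquad \eta_i := (\lambda I + T)^{-1/2}\zeta_i,\]
so that the random element in \eqref{equation: estimtion of 4-th norm1} equals $\xi := \frac{1}{n}\sum_{i=1}^n\eta_i$. The defining identity $(\lambda I + T)f_\lambda = L_K^{1/2}L_C\beta_0$ gives $\EE[L_K^{1/2}X\langle X, g\rangle_{\cL^2}] = L_K^{1/2}L_C g = \lambda f_\lambda$, so the $\eta_i$ are i.i.d.\ zero-mean $\cL^2$-valued random elements. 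Note that \eqref{momentcondition31} implies Assumption \ref{momentcondition1} with $c_1=c_2$ by Cauchy--Schwarz, so every earlier result under Assumption \ref{momentcondition1} applies.

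The first key step is the fourth-moment expansion for a normalized sum of i.i.d.\ centered Hilbert-space elements. Expanding $\|\sum_i\eta_i\|_{\cL^2}^4 = \sum_{i,j,k,l}\langle\eta_i,\eta_j\rangle_{\cL^2}\langle\eta_k,\eta_l\rangle_{\cL^2}$ and discarding every term containing an isolated index (which vanishes by centering), only the all-equal and two-pair index patterns survive, yielding
\[\EE\left[\|\xi\|_{\cL^2}^4\right] \le \frac{3}{n^2}\left(\EE\|\eta\|_{\cL^2}^2\right)^2 + \frac{1}{n^3}\EE\|\eta\|_{\cL^2}^4,\]
where I have bounded the cross term $\|\EE[\eta\otimes\eta]\|_{HS}^2 \le (\EE\|\eta\|_{\cL^2}^2)^2$ using that the Hilbert--Schmidt norm of a nonnegative operator is dominated by its trace.

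Two moments then remain to estimate. For $\EE\|\eta\|_{\cL^2}^2$ I reuse the computation in the proof of Lemma \ref{lemma: rates upper bound 2}, which combined with Lemma \ref{lemma: estimation of A(lambda)} gives $\EE\|\eta\|_{\cL^2}^2 \le c_1\lambda^{2\theta}\cN(\lambda)\|\gamma_0\|_{\cL^2}^2$. For the fourth moment I split $\eta = (\lambda I + T)^{-1/2}L_K^{1/2}X\langle X,g\rangle_{\cL^2} - \lambda(\lambda I+T)^{-1/2}f_\lambda =: u - v$, so $\EE\|\eta\|_{\cL^2}^4 \le 8\EE\|u\|_{\cL^2}^4 + 8\|v\|_{\cL^2}^4$. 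The deterministic part obeys $\|v\|_{\cL^2}^2\le\lambda^{2\theta}\|\gamma_0\|_{\cL^2}^2$ via the elementary estimate $\frac{\lambda^2 t^{2\theta}}{(\lambda+t)^2}\le\lambda^{2\theta}$ already used in Lemma \ref{lemma: estimation of A(lambda)}, hence $\|v\|_{\cL^2}^4\le\lambda^{4\theta}\|\gamma_0\|_{\cL^2}^4$. For the stochastic part, using decomposition \eqref{PCdecomposition} I write $\|u\|_{\cL^2}^4 = W^2\langle X,g\rangle_{\cL^2}^4$ with $W := \sum_{k}\frac{\mu_k}{\lambda+\mu_k}\xi_k^2$, and Cauchy--Schwarz decouples the dependence between $W$ and $\langle X,g\rangle_{\cL^2}$:
\[\EE\|u\|_{\cL^2}^4 \le \left(\EE[W^4]\right)^{1/2}\left(\EE[\langle X,g\rangle_{\cL^2}^8]\right)^{1/2}.\]

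The main obstacle is controlling $\EE[W^4]$, since the $\xi_k$ are only uncorrelated, not independent. Here Assumption \ref{momentcondition3} is essential: expanding $W^4 = \sum_{k_1,\dots,k_4}\prod_j\frac{\mu_{k_j}}{\lambda+\mu_{k_j}}\,\EE[\xi_{k_1}^2\cdots\xi_{k_4}^2]$ and bounding each mixed moment by H\"older's inequality through $\EE[\xi_{k_1}^2\cdots\xi_{k_4}^2]\le\sup_k\EE[\xi_k^8]\le\rho^8$ --- valid for every integer $\ell\ge 2$ since $\EE[\xi_k^8]\le(\EE[\xi_k^{4\ell}])^{2/\ell}\le\rho^8$ --- gives $\EE[W^4]\le\rho^8\cN^4(\lambda)$, so $(\EE[W^4])^{1/2}\le\rho^4\cN^2(\lambda)$. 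Combined with $\EE[\langle X,g\rangle_{\cL^2}^8]\le c_2^2\,\mathscr{A}(\lambda)^4\le c_2^2\lambda^{8\theta}\|\gamma_0\|_{\cL^2}^8$ from \eqref{momentcondition31} and Lemma \ref{lemma: estimation of A(lambda)}, this yields $\EE\|u\|_{\cL^2}^4\le c_2\rho^4\lambda^{4\theta}\cN^2(\lambda)\|\gamma_0\|_{\cL^2}^4$. Substituting the two moment bounds into the expansion, using $\lambda\le 1$ (so $\lambda^{4\theta}\le 1$) and $1/n^3\le 1/n^2 = m^2/N^2$, and collecting $c_1,c_2,\rho,\|\gamma_0\|_{\cL^2}$ into a single constant $c_6$ delivers the claimed bound $c_6^2\frac{m^2}{N^2}(1+\lambda^{4\theta}\cN^2(\lambda))$, where the $1$ absorbs the $\|v\|_{\cL^2}^4$ contribution.
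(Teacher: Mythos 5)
Your proof is correct and follows essentially the same route as the paper's: a fourth-moment expansion of the normalized i.i.d.\ zero-mean sum in which only non-isolated index patterns survive, followed by Cauchy--Schwarz decoupling of $\|(\lambda I+T)^{-1/2}L_K^{1/2}X\|_{\cL^2}$ from $\langle X,\beta_0-L_K^{1/2}f_\lambda\rangle_{\cL^2}$ and the eighth-moment bounds of Assumption \ref{momentcondition3} combined with Lemma \ref{lemma: estimation of A(lambda)}. The only differences are cosmetic sharpenings --- you keep the two-pair terms as $(\EE\|\eta\|_{\cL^2}^2)^2$ where the paper bounds all $24n^2$ surviving terms by the fourth moment, and you decouple through the radial variable $W$ rather than coordinate-wise in the basis $\{\phi_j\}$ --- and both yield the same bound $c_2\rho^4\lambda^{4\theta}\mathcal{N}^2(\lambda)\|\gamma_0\|_{\cL^2}^4$ for the stochastic part and hence the same final estimate.
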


\begin{proof}
	Recalling \eqref{equation: simple notation}, for simplicity of notations, we define
	\[\alpha_i := L_K^{1/2}X_{1,i}\left\langle X_{1,i},\beta_0-L_K^{1/2}f_\lambda\right\rangle_{\cL^2}-\lambda f_\lambda, \quad i=1,2,\cdots,n.\]
	
	We begin with the proof of the first inequality (\ref{equation: estimtion of 4-th norm1}). Note that $\left\{(\lambda I + T)^{-1/2}\alpha_i\right\}_{i=1}^n$ are independent operator-valued zero-mean random elements. Then we can write
	\begin{equation}\label{equation: lemma 7 1}
		\begin{aligned}
			&\EE\left[\left\|(\lambda I + T)^{-1/2}\frac{1}{n}\sum_{i=1}^{n}\alpha_i\right\|_{\cL^2}^4\right]\\
			&=\frac{1}{n^4}\sum_{i_1=1}^{\infty}\sum_{i_2=1}^\infty\sum_{j_1=1}^\infty\sum_{j_2=1}^\infty \EE\Big[\left\langle (\lambda I + T)^{-1/2}\alpha_{i_1},(\lambda I + T)^{-1/2}\alpha_{i_2}\right\rangle_{\cL^2} \\
			&\quad\times\left\langle (\lambda I + T)^{-1/2}\alpha_{j_1},(\lambda I + T)^{-1/2}\alpha_{j_2}\right\rangle_{\cL^2} \Big]\\
			&=\frac{1}{n^4}\sum_{\{i_1,i_2,j_1,j_2\}\in \Theta(n,2)}\EE\Big[\left\langle (\lambda I + T)^{-1/2}\alpha_{i_1},(\lambda I + T)^{-1/2}\alpha_{i_2}\right\rangle_{\cL^2} \\
			&\quad\times\left\langle(\lambda I + T)^{-1/2}\alpha_{j_1},(\lambda I + T)^{-1/2}\alpha_{j_2}\right\rangle_{\cL^2} \Big],
		\end{aligned}
	\end{equation}
	where $\Theta(n,2)=\{1,...,n\}^{4}\backslash \Omega(n,2)$ and $\Omega(n,2)$ denotes the set of all index-distinct group $\{i_1,i_2,j_1,j_2\}$. Then 
	\begin{equation}\label{equation: lemma 7 2}
		|\Theta(n,2)|\leq 4!\left[\binom{n}{2}+\binom{n}{1}\right] \leq 24n^2, \quad \forall n\geq 1.
	\end{equation}
	And for any $\{i_1,i_2,j_1,j_2\}\in \Theta(n,2)$, we have
	\begin{align}\label{equation: lemma 7 3}
		&\EE\left[\left\langle (\lambda I + T)^{-1/2}\alpha_{i_1},(\lambda I + T)^{-1/2}\alpha_{i_2}\rangle_{\cL^2} \langle (\lambda I + T)^{-1/2}\alpha_{j_1},(\lambda I + T)^{-1/2}\alpha_{j_2}\right\rangle_{\cL^2} \right]\nonumber\\
		&\leq \EE\left[\left\|(\lambda I + T)^{-1/2}\alpha_{i_1}\right\|_{\cL^2}\left\|(\lambda I + T)^{-1/2}\alpha_{i_2}\right\|_{\cL^2}\left\|(\lambda I + T)^{-1/2}\alpha_{j_1}\right\|_{\cL^2}\left\|(\lambda I + T)^{-1/2}\alpha_{j_2}\right\|_{\cL^2}\right]\nonumber\\
		&\overset{(*)}{\leq} \left[\EE\left\|(\lambda I + T)^{-1/2}\alpha_{i_1}\right\|_{\cL^2}^4\right]^{\frac{1}{4}}\left[\EE\left\|(\lambda I + T)^{-1/2}\alpha_{i_2}\right\|_{\cL^2}^4\right]^{\frac{1}{4}}\\
		&\quad\times\left[\EE\left\|(\lambda I + T)^{-1/2}\alpha_{j_1}\right\|_{\cL^2}^4\right]^{\frac{1}{4}}\left[\EE\left\|(\lambda I + T)^{-1/2}\alpha_{j_2}\right\|_{\cL^2}^4\right]^{\frac{1}{4}},\nonumber
	\end{align} where inequality $(*)$ uses H\"older inequality.
	
	It remains to estimate $\EE\left[\left\|(\lambda I + T)^{-1/2}\alpha_{i}\right\|_{\cL^2}^4\right], \forall 1\leq i\leq n$. For brevity of notations, we define
	\[\widetilde{\alpha}_i :=  L_K^{1/2}X_{1,i}\left\langle X_{1,i},\beta_0-L_K^{1/2}f_\lambda\right\rangle_{\cL^2}, \quad i=1,2,\cdots,n.\] Then we see that $\alpha_i = \widetilde{\alpha}_i - \lambda f_\lambda$ and for any $1\leq i \leq n$, 
	\begin{align}\label{equation: lemma 7 4'}
		&\EE\left[\left\|(\lambda I + T_0)^{-1/2}\widetilde{\alpha}_i\right\|_{\cL^2}^4\right]\nonumber\\
		&= \EE\left[\left(\sum_{j=1}^\infty\left\langle (\lambda I + T)^{-1/2}\widetilde{\alpha}_i,\phi_{j}\right\rangle_{\cL^2}^2\right)^2\right]\nonumber\\
		&= \EE\left[\sum_{j_1=1}^{\infty}\sum_{j_2=1}^\infty\frac{1}{\lambda+\mu_{j_1}}\frac{1}{\lambda+\mu_{j_2}}\left\langle \widetilde{\alpha}_i,\phi_{j_1}\right\rangle_{\cL^2}^2\left\langle \widetilde{\alpha}_i,\phi_{j_2}\right\rangle_{\cL^2}^2\right]\\
		&\overset{(\dagger)}{\leq} \sum_{j_1=1}^{\infty}\sum_{j_2=1}^\infty\frac{1}{\lambda+\mu_{j_1}}\frac{1}{\lambda+\mu_{j_2}} \left[\EE\left\langle \widetilde{\alpha}_i,\phi_{j_1}\right\rangle_{\cL^2}^4\right]^{\frac{1}{2}}\left[\EE\left\langle \widetilde{\alpha}_i,\phi_{j_2}\right\rangle_{\cL^2}^4\right]^{\frac{1}{2}},\nonumber
	\end{align}
	where inequality $(\dagger)$ uses Cauchy-Schwartz inequality. We further bound $\mathbb{E}\left[\langle \widetilde{\alpha}_i,\phi_{j}\rangle_{\cL^2}^4\right]$ as
	\begin{equation}\label{equation: lemma 7 4''}
		\begin{aligned}
			&\EE[\langle \widetilde{\alpha}_i,\phi_{j}\rangle_{\cL^2}^4]\\
			&= \EE\left[\left\langle L_K^{1/2}X_{1,i},\phi_{j}\right\rangle_{\cL^2}^4\left\langle X_{1,i},\beta_0-L_K^{1/2}f_\lambda\right\rangle_{\cL^2}^4\right]\\
			&\overset{(\romannumeral1)}{\leq} \left[\EE\left\langle L_K^{1/2}X_i,\phi_{j}\right\rangle_{\cL^2}^8\right]^{\frac{1}{2}}\left[\EE\left\langle X_i,\beta_0-L_K^{1/2}f_\lambda\right\rangle_{\cL^2}^8\right]^{\frac{1}{2}}\\
			&\overset{(\romannumeral2)}{\leq} c_2\rho^4\mu_j^2\left[\EE\left\langle X_i,\beta_0-L_K^{1/2}f_\lambda\right\rangle_{\cL^2}^2\right]^2= c_2\rho^4\mu_j^2\mathscr{A}^4(\lambda)\overset{(\romannumeral3)}{\leq} c_2\rho^4\mu_j^2 \|\gamma_0\|_{\cL^2}^4\lambda^{4\theta},
		\end{aligned}
	\end{equation}
	where inequality (\romannumeral1) again uses Cauchy-Schwartz inequality, inequality (\romannumeral2) is due to Assumption \ref{momentcondition3} and inequality (\romannumeral3) is from Lemma \ref{lemma: estimation of A(lambda)}. 
	
	Combining (\ref{equation: lemma 7 4'}) and (\ref{equation: lemma 7 4''}) yields
	\[
	\EE\left[\left\|(\lambda I + T)^{-1/2}\widetilde{\alpha}_i\right\|_{\cL^2}^4\right]\leq c_2\rho^4 \|\gamma_0\|_{\cL^2}^4\lambda^{4\theta}\mathcal{N}^{2}(\lambda),\quad \forall 1\leq i\leq n.
	\]
	Then for any $1\leq i \leq n$, we have
	\begin{align}\label{equation: lemma 7 5}
		&\EE\left[\left\|(\lambda I + T)^{-1/2}\alpha_i\right\|_{\cL^2}^4\right]=\EE\left[\left\|(\lambda I + T)^{-1/2}(\tilde{\alpha}_i-\lambda f_\lambda)\right\|_{\cL^2}^4\right]\nonumber\\
		&\overset{(\romannumeral1)}{\leq} 8\EE\left[\left\|(\lambda I + T)^{-1/2}\widetilde{\alpha}_i\right\|_{\cL^2}^4\right]+ 8 \left\|(\lambda I + T)^{-1/2}\lambda f_\lambda\right\|_{\cL^2}^4\nonumber\\
		&\overset{(\romannumeral2)}{=}8\EE\left[\left\|(\lambda I + T)^{-1/2}\widetilde{\alpha}_i\right\|_{\cL^2}^4\right]+ 8\left\|(\lambda I + T)^{-1/2}\lambda(\lambda I + T)^{-1}L_K^{1/2}L_C^{1/2}T_*^\theta(\gamma_0)\right\|_{\cL^2}^4\\
		&\overset{(\romannumeral3)}{\leq} 8c_2\rho^4 \|\gamma_0\|_{\cL^2}^4\lambda^{4\theta}\mathcal{N}^{2}(\lambda)+ 8\left\|(\lambda I + T)^{-1}\lambda\right\|^4\left\|(\lambda I + T)^{-1/2}L_K^{1/2}L_C^{1/2}\right\|^4\left\|T_*^\theta\right\|^4\|\gamma_0\|_{\cL^2}^4\nonumber\\
		&\leq 8c_2\rho^4 \|\gamma_0\|_{\cL^2}^4\lambda^{4\theta}\mathcal{N}^{2}(\lambda)+ 8\left\|T_*^\theta\right\|^4\|\gamma_0\|_{\cL^2}^4 =8c_2\rho^4 \|\gamma_0\|_{\cL^2}^4\lambda^{4\theta}\mathcal{N}^{2}(\lambda)+ 8\mu_1^{4\theta}\|\gamma_0\|_{\cL^2}^4,\nonumber
	\end{align}
	where inequality (\romannumeral1) uses the triangular inequality, inequality (\romannumeral2) follows from Assumption \ref{assumption1} and the expression of $f_\lambda$ and inequality (\romannumeral3) applies the above estimation.
	
	Recall that $n=N/m$ and take $\lambda\leq 1$. Combining with (\ref{equation: lemma 7 1}), (\ref{equation: lemma 7 2}), (\ref{equation: lemma 7 3}) and (\ref{equation: lemma 7 5}), we obtain
	\begin{equation*}
		\begin{split}
			&\EE\left[\left\|(\lambda I + T_0)^{-1/2}\frac{1}{n}\sum_{i=1}^{n}\alpha_i\right\|_{\cL^2}^4\right]\\
			&\leq \frac{192m^2}{N^2}\left(c_2\rho^4 ||\gamma_0||_{\cL^2}^4\lambda^{4\theta}\mathcal{N}^{2}(\lambda)+ \mu_1^{4\theta}||\gamma_0||_{\cL^2}^4\right)\leq c_6^2\frac{m^2}{N^2}(1+\lambda^{4\theta}\mathcal{N}^2(\lambda)),
		\end{split}
	\end{equation*}
	where $c_6^2:= 192\left(c_2\rho^4 ||\gamma_0||_{\cL^2}^4 + \max\{\mu_1^2,1\}||\gamma_0||_{\cL^2}^4\right)$. We have completed the proof of Lemma \ref{lemma: estimation of expectation of F(S,lambda)}.
\end{proof}

We propose the following lemma to decompose $\EE[\mathscr{S}(S,\lambda)]$.
\begin{lemma}\label{lemma: expecatation upper bound 1}
	For any $m\geq 1$, there holds
	\begin{equation}\label{equation for theorem sup 1} 
		\mathbb{E}\left[\mathscr{S}(S,\lambda)\right]\leq \frac{1}{m}\mathbb{E}\left[\left\|L_C^{1/2}L_K^{1/2}\left(\hat{f}_{S_1,\lambda}-f_\lambda\right)\right\|^2_{\cL^2}\right]+ \left\|L_C^{1/2}L_K^{1/2}\mathbb{E}\left[(\hat{f}_{S_1,\lambda}-f_\lambda)\right]\right\|^2_{\cL^2}.
	\end{equation}
\end{lemma}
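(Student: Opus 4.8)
The plan is to mirror the argument of Lemma \ref{lemma: rates upper bound 1}, but now working with the full expectation $\EE[\mathscr{S}(S,\lambda)]$ rather than the truncated quantity $\EE[\mathscr{S}(S,\lambda)\II_{\cU^c}]$; dropping the indicator functions only simplifies matters. First I would dispose of the case $m=1$: there $\mathscr{S}(S,\lambda)=\|L_C^{1/2}L_K^{1/2}(\hat{f}_{S_1,\lambda}-f_\lambda)\|^2_{\cL^2}$, so the first term on the right-hand side of \eqref{equation for theorem sup 1} already equals $\EE[\mathscr{S}(S,\lambda)]$, and the second term, being a squared norm, is nonnegative; the inequality then holds for free.

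For $m\geq 2$ I would start from the identity $\mathscr{S}(S,\lambda)=\big\|\frac{1}{m}\sum_{i=1}^m L_C^{1/2}L_K^{1/2}(\hat{f}_{S_i,\lambda}-f_\lambda)\big\|^2_{\cL^2}$ and expand the squared norm of the average into the $m$ diagonal terms ($i=j$) and the $m(m-1)$ off-diagonal cross terms ($i\neq j$), all carrying the prefactor $1/m^2$. Two structural facts drive the computation: (i) since the subsets $S_1,\dots,S_m$ are disjoint pieces of the i.i.d.\ sample, the centered local estimators $\hat{f}_{S_i,\lambda}-f_\lambda$ are independent and identically distributed random elements of $\cL^2(\cT)$; and (ii) $L_C^{1/2}L_K^{1/2}$ is a fixed bounded operator, so it can be interchanged with expectations and with the $\cL^2$ inner product.

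Using (i), each of the $m$ diagonal terms equals $\EE[\|L_C^{1/2}L_K^{1/2}(\hat{f}_{S_1,\lambda}-f_\lambda)\|^2_{\cL^2}]$, so after the $1/m^2$ prefactor they contribute exactly $\frac{1}{m}\EE[\|L_C^{1/2}L_K^{1/2}(\hat{f}_{S_1,\lambda}-f_\lambda)\|^2_{\cL^2}]$, the first term of \eqref{equation for theorem sup 1}. For the cross terms, independence lets me replace the expectation of the inner product by the inner product of the expectations, and identical distribution collapses each pair to $\|L_C^{1/2}L_K^{1/2}\EE[\hat{f}_{S_1,\lambda}-f_\lambda]\|^2_{\cL^2}$; the $m(m-1)$ such pairs then carry the factor $\frac{m(m-1)}{m^2}\leq 1$, which I bound by $1$ to produce the stated second term. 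Adding the two contributions yields the claim.

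There is no genuine obstacle here: this is a routine variance-type decomposition, precisely the computation of Lemma \ref{lemma: rates upper bound 1} with the indicators removed. The only points deserving a word of care are the interchange of expectation with the continuous operator $L_C^{1/2}L_K^{1/2}$ and with the $\cL^2$ inner product, justified by Bochner integrability (which follows from the finite second moments used throughout), and the observation that $\frac{m(m-1)}{m^2}\leq 1$, which is exactly what converts the exact binomial expansion into the desired inequality.
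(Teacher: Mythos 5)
Your proposal is correct and follows essentially the same route as the paper's own proof: the binomial expansion of the squared norm of the average, the use of independence and identical distribution of the centered local estimators to reduce the diagonal terms to $\frac{1}{m}\mathbb{E}\bigl[\|L_C^{1/2}L_K^{1/2}(\hat{f}_{S_1,\lambda}-f_\lambda)\|^2_{\cL^2}\bigr]$ and the cross terms to $\frac{m(m-1)}{m^2}\|L_C^{1/2}L_K^{1/2}\mathbb{E}[\hat{f}_{S_1,\lambda}-f_\lambda]\|^2_{\cL^2}\leq \|L_C^{1/2}L_K^{1/2}\mathbb{E}[\hat{f}_{S_1,\lambda}-f_\lambda]\|^2_{\cL^2}$, and the trivial treatment of $m=1$. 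Your extra remarks on Bochner integrability are a harmless refinement the paper leaves implicit.
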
 
\begin{proof}
	When $m\geq 2$, as 
	$$\mathscr{S}(S,\lambda)=\left\|L^{1/2}_CL^{1/2}_K\overline{f}_{S,\lambda}-L^{1/2}_CL^{1/2}_Kf_{\lambda}\right\|^2_{{\cal L}^2}=\left\|\frac{1}{m}\sum_{i=1}^mL^{1/2}_CL^{1/2}_K\hat{f}_{S_i,\lambda}-L^{1/2}_CL^{1/2}_Kf_{\lambda}\right\|^2_{{\cal L}^2},$$
	we can write
	\begin{align*}
		\EE[\mathscr{S}(S,\lambda)]&= \EE\left[\left\|L_C^{1/2}L_K^{1/2}\left(\frac{1}{m}\sum_{i=1}^{m}\hat{f}_{S_i,\lambda}-f_\lambda\right)\right\|^2_{\cL^2}\right]\\
		&\overset{(\romannumeral1)}{=}\frac{1}{m^2}\sum_{i=1}^{m}\EE\left[\left\|L_C^{1/2}L_K^{1/2}\left(\hat{f}_{S_i,\lambda}-f_\lambda\right)\right\|^2_{\cL^2}\right]\\
		&\quad + \frac{1}{m^2}\sum_{i\neq j} \EE\left[\left\langle L_C^{1/2}L_K^{1/2}\left(\hat{f}_{S_i,\lambda}-f_\lambda\right),L_C^{1/2}L_K^{1/2}\left(\hat{f}_{S_j,\lambda}-f_\lambda\right)\right\rangle_{\cL^2}\right]\\
		&\overset{(\romannumeral2)}{\leq} \frac{1}{m}\mathbb{E}\left[\left\|L_C^{1/2}L_K^{1/2}\left(\hat{f}_{S_1,\lambda}-f_\lambda\right)\right\|^2_{\cL^2}\right]+ \left\|L_C^{1/2}L_K^{1/2}\mathbb{E}\left[\left(\hat{f}_{S_1,\lambda}-f_\lambda\right)\right]\right\|^2_{\cL^2}.
	\end{align*}
	where equality (\romannumeral1) follows from the binomial expansion and inequality (\romannumeral2) is from \[\EE\left[\langle L_C^{1/2}L_K^{1/2}\left(\hat{f}_{S_1,\lambda}-f_\lambda\right),L_C^{1/2}L_K^{1/2}\left(\hat{f}_{S_2,\lambda}-f_\lambda\right)\rangle_{\cL^2} \right] = \left\|L_C^{1/2}L_K^{1/2}\mathbb{E}\left[\left(\hat{f}_{S_1,\lambda}-f_\lambda\right)\right]\right\|^2_{\cL^2}.\]
	When $m=1$, \eqref{equation for theorem sup 1} is obvious.
	
	Thus, we have completed the proof of Lemma \ref{lemma: expecatation upper bound 1}.
\end{proof}

Now we are in the position to prove Theorem \ref{theorem:stronger upper bound}.

\noindent
{\bf Proof of Theorem \ref{theorem:stronger upper bound}}.
Combining (\ref{decomposition 2}), (\ref{equation: estimation of A(lambda)}) and (\ref{equation for theorem sup 1}) yields
\begin{align}\label{equation: theorem 3 total 1}
	&\EE\left[\left(\mathcal{R(\overline{\beta}_{S,\lambda})}-\mathcal{R}(\beta_0)\right)\right]\leq 2\EE\left[\mathscr{S}(S,\lambda)\right] + 2 \mathscr{A}(\lambda)\\
	&\leq\frac{2}{m}\mathbb{E}\left[\left\|L_C^{1/2}L_K^{1/2}(\hat{f}_{S_1,\lambda}-f_\lambda)\right\|^2_{\cL^2}\right]+ 2\left\|L_C^{1/2}L_K^{1/2}\mathbb{E}\left[(\hat{f}_{S_1,\lambda}-f_\lambda)\right]\right\|^2_{\cL^2}+2 \mathscr{A}(\lambda)\nonumber\\
	&\leq\frac{2}{m}\mathbb{E}\left[\left\|L_C^{1/2}L_K^{1/2}(\hat{f}_{S_1,\lambda}-f_\lambda)\right\|^2_{\cL^2}\right]+ 2\left\|L_C^{1/2}L_K^{1/2}\mathbb{E}\left[(\hat{f}_{S_1,\lambda}-f_\lambda)\right]\right\|^2_{\cL^2}+2\lambda^{2\theta}\|\gamma_0\|_{\cL^2}^2.\nonumber
\end{align}

In the following part of the proof, we aim to bound the terms $\mathbb{E}\left[\left\|L_C^{1/2}L_K^{1/2}(\hat{f}_{S_1,\lambda}-f_\lambda)\right\|^2_{\cL^2}\right]$ and $\left\|L_C^{1/2}L_K^{1/2}\mathbb{E}\left[(\hat{f}_{S_1,\lambda}-f_\lambda)\right]\right\|^2_{\cL^2}$, respectively. Recalling \eqref{equation: simple notation} and $Y_{1,i}=\langle X_{1,i},\beta_0\rangle_{\cL^2}+\epsilon_{1,i}, \forall 1\leq i\leq n$, for simplicity of notations, let
\[\alpha_i := L_K^{1/2}X_{1,i}\left\langle X_{1,i},\beta_0-L_K^{1/2}f_\lambda\right\rangle_{\cL^2}-\lambda f_\lambda, \quad i=1,2,\cdots,n.\]
Then  \[\hat{f}_{S_1,\lambda}-f_\lambda = \left(\lambda I + T_{\bX_1}\right)^{-1}\frac{1}{n}\sum_{i=1}^n\left(\alpha_{i}+L_K^{1/2}X_{1,i}\epsilon_{1,i}\right).\] Using this expression, we can bound $\mathbb{E}\left[\left\|L_C^{1/2}L_K^{1/2}\left(\hat{f}_{S_1,\lambda}-f_\lambda\right)\right\|^2_{\cL^2}\right]$ as
\begin{align}\label{equation: theorem 3 total 2}
	&\mathbb{E}\left[\left\|L_C^{1/2}L_K^{1/2}\left(\hat{f}_{S_1,\lambda}-f_\lambda\right)\right\|^2_{\cL^2}\right]\nonumber\\
	&= \mathbb{E}\left[\left\|L_C^{1/2}L_K^{1/2}\left(\hat{f}_{S_1,\lambda}-f_\lambda\right)\right\|^2_{\cL^2}\II_{\cU_1^c}\right] + \mathbb{E}\left[\left\|L_C^{1/2}L_K^{1/2}\left(\hat{f}_{S_1,\lambda}-f_\lambda\right)\right\|^2_{\cL^2}\II_{\cU_1}\right]\nonumber\\
	&\overset{(\romannumeral1)}{\leq} 8\frac{m}{N}\cN(\lambda)\left(c_2\lambda^{2\theta}\|\gamma_0\|^2_{\cL^2}+\sigma^2\right) + 2\EE\left[\left\|L_C^{1/2}L_K^{1/2}(\lambda I + T_{\bX_1})^{-1}\frac{1}{n}\sum_{i=1}^n\alpha_{i}\right\|_{\cL^2}^2\II_{\cU_1}\right]\nonumber\\
	&\quad+ 2\EE\left[\left\|L_C^{1/2}L_K^{1/2}(\lambda I + T_{\bX_1})^{-1}\frac{1}{n}\sum_{i=1}^nL_K^{1/2}X_{1,i}\epsilon_{1,i}\right\|_{\cL^2}^2\II_{\cU_1}\right]\\
	&\overset{(\romannumeral2)}{\leq} 8\frac{m}{N}\cN(\lambda)\left(c_2\lambda^{2\theta}\|\gamma_0\|^2_{\cL^2}+\sigma^2\right) + 2\EE\left[\left\|L_C^{1/2}L_K^{1/2}(\lambda I + T_{\bX_1})^{-1}\frac{1}{n}\sum_{i=1}^n\alpha_{i}\right\|_{\cL^2}^2\II_{\cU_1}\right]\nonumber\\
	&\quad + \frac{2\sigma^2}{n^2}\sum_{i=1}^n\EE\left[\left\|L_C^{1/2}L_K^{1/2}(\lambda I + T_{\bX_1})^{-1}L_K^{1/2}X_{1,i}\right\|_{\cL^2}^2\II_{\cU_1}\right].\nonumber
\end{align}

Here inequality (\romannumeral1) follows from (\ref{equation: lemma rates ub 1}) in Lemma \ref{lemma: rates upper bound 2} and the triangular inequality. Inequality (\romannumeral2) is due to Assumption \ref{assumption2}.

We next bound $\left\|L_C^{1/2}L_K^{1/2}\mathbb{E}\left[\hat{f}_{S_1,\lambda}-f_\lambda\right]\right\|^2_{\cL^2}$ as
\begin{equation}\label{equation: theorem 3 total 3}
	\begin{aligned}
		&\left\|L_C^{1/2}L_K^{1/2}\mathbb{E}\left[\hat{f}_{S_1,\lambda}-f_\lambda\right]\right\|^2_{\cL^2}\\
		&\overset{(\romannumeral1)}{=}\left\|L_C^{1/2}L_K^{1/2}\mathbb{E}\left[(\lambda I + T_{\bX_1})^{-1}\frac{1}{n}\sum_{i=1}^n\alpha_i\right]\right\|^2_{\cL^2}\\
		&\leq \EE\left[\left\|L_C^{1/2}L_K^{1/2}(\lambda I + T_{\bX_1})^{-1}\frac{1}{n}\sum_{i=1}^n\alpha_i\right\|^2_{\cL^2}\II_{\cU_1^c}\right]\\
		&\quad + \EE\left[\left\|L_C^{1/2}L_K^{1/2}(\lambda I + T_{\bX_1})^{-1}\frac{1}{n}\sum_{i=1}^n\alpha_i\right\|^2_{\cL^2}\II_{\cU_1}\right]\\
		&\overset{(\romannumeral2)}{\leq} 4c_2\frac{m}{N}\cN(\lambda)\lambda^{2\theta}\|\gamma_0\|_{\cL^2}^2 + \EE\left[\left\|L_C^{1/2}L_K^{1/2}(\lambda I + T_{\bX_1})^{-1}\frac{1}{n}\sum_{i=1}^n\alpha_i\right\|^2_{\cL^2}\II_{\cU_1}\right].
	\end{aligned}
\end{equation}
Here equality (\romannumeral1) is from Assumption \ref{assumption2}. Inequality (\romannumeral2) follows from Jensen's inequality and \eqref{equation: lemma rates ub 2} in Lemma \ref{lemma: rates upper bound 2}.

The key point in the rest of the proof is to estimate $\EE\left[\left\|L_C^{1/2}L_K^{1/2}(\lambda I + T_{\bX_1})^{-1}\frac{1}{n}\sum_{i=1}^n\alpha_{i}\right\|_{\cL^2}^2\II_{\cU_1}\right]$ and $\EE\left[\left\|L_C^{1/2}L_K^{1/2}(\lambda I + T_{\bX_1})^{-1}L_K^{1/2}X_{1,i}\right\|_{\cL^2}^2\II_{\cU_1}\right], \forall 1\leq i\leq n$. For the first term, we have

\begin{align}\label{equation: Theorem 3 1}
	&\EE\left[\left\|L_C^{1/2}L_K^{1/2}\left(\lambda I + T_{\bX_1}\right)^{-1}\frac{1}{n}\sum_{i=1}^n\alpha_{i}\right\|_{\cL^2}^2\II_{\cU_1}\right]\nonumber\\
	&\leq \EE\left[\left\|L_C^{1/2}L_K^{1/2}(\lambda I + T)^{-1/2}\right\|^2\left\|(\lambda I + T)^{1/2}\left(\lambda I + T_{\bX_1}\right)^{-1}\frac{1}{n}\sum_{i=1}^n\alpha_{i}\right\|_{\cL^2}^2\II_{\cU_1}\right]\nonumber\\
	&\overset{(\romannumeral1)}{\leq} \EE\left[\left\|(\lambda I + T)^{1/2}\left(\lambda I + T_{\bX_1}\right)^{-1}\frac{1}{n}\sum_{i=1}^n\alpha_{i}\right\|_{\cL^2}^2\II_{\cU_1}\right]\\
	&\overset{(\romannumeral2)}{\leq} \left[\EE\left\|(\lambda I + T)^{1/2}\left(\lambda I + T_{\bX_1}\right)^{-1}(\lambda I + T)^{1/2}\right\|^4\II_{\cU_1}\right]^{\frac{1}{2}}\left[\EE\left\|(\lambda I + T)^{-1/2}\frac{1}{n}\sum_{i=1}^{n}\alpha_i\right\|_{\cL^2}^4\right]^{\frac{1}{2}}\nonumber\\
	&\overset{(\romannumeral3)}{\leq} \left[\EE\left\|(\lambda I + T)^{1/2}\left(\lambda I + T_{\bX_1}\right)^{-1}(\lambda I + T)^{1/2}\right\|^8\right]^{\frac{1}{4}}\PP^{\frac{1}{4}}(\cU_1)\left[\EE\left\|(\lambda I + T)^{-1/2}\frac{1}{n}\sum_{i=1}^{n}\alpha_i\right\|_{\cL^2}^4\right]^{\frac{1}{2}}.\nonumber
\end{align}
Here inequality (\romannumeral1) follows from the fact that   \[\begin{aligned}
	\left\|L_C^{1/2}L_K^{1/2}(\lambda I + T)^{-1/2}\right\|^2
	&=\left\|(\lambda I + T)^{-1/2}L_K^{1/2}L_CL_K^{1/2}(\lambda I + T)^{-1/2}\right\|\\
	&=\left\|(\lambda I + T)^{-1/2}T(\lambda I + T)^{-1/2}\right\|\leq 1.
\end{aligned}\] Inequalities (\romannumeral2) and (\romannumeral3) are from Cauchy-Schwartz inequality.

Analogously, for the second term, we have
\begin{align}\label{equation: Theorem 3 2}
	&\EE\left[\left\|L_C^{1/2}L_K^{1/2}(\lambda I + T_{\bX_1})^{-1}L_K^{1/2}X_{1,i}\right\|_{\cL^2}^2\II_{\cU_1}\right]\\
	&\leq \left[\EE\left\|(\lambda I + T)^{1/2}\left(\lambda I + T_{\bX_1}\right)^{-1}(\lambda I + T)^{1/2}\right\|^8\right]^{\frac{1}{4}}\PP^{\frac{1}{4}}(\cU_1)\left[\EE\left\|(\lambda I + T)^{-1/2}L_K^{1/2}X_{1,i}\right\|_{\cL^2}^4\right]^{\frac{1}{2}}.\nonumber
\end{align}
While we can write
\begin{align}\label{equation: Theorem 3 3}
	&\EE\left[\left\|(\lambda I + T)^{-1/2}L_K^{1/2}X_{1,i}\right\|_{\cL^2}^4\right]\nonumber\\
	&= \EE\left[\left(\sum_{j=1}^\infty\frac{1}{\lambda + \mu_j}\left\langle L_K^{1/2}X_{1,i}, \phi_{j}\right\rangle_{\cL^2}^2\right)^2\right]\nonumber\\
	&= \sum_{j=1}^\infty\sum_{k=1}^\infty\frac{1}{\lambda + \mu_j}\frac{1}{\lambda + \mu_k}\EE\left[\left\langle L_K^{1/2}X_{1,i}, \phi_{j}\right\rangle_{\cL^2}^2\left\langle L_K^{1/2}X_{1,i}, \phi_{k}\right\rangle_{\cL^2}^2\right]\\
	&\overset{(\romannumeral1)}{\leq} \sum_{j=1}^\infty\sum_{k=1}^\infty\frac{1}{\lambda + \mu_j}\frac{1}{\lambda + \mu_k}\left[\EE\left\langle L_K^{1/2}X_{1,i}, \phi_{j}\right\rangle_{\cL^2}^4\right]^{\frac{1}{2}}\left[\EE\left\langle L_K^{1/2}X_{1,i}, \phi_{k}\right\rangle_{\cL^2}^4\right]^{\frac{1}{2}}\nonumber\\
	&\overset{(\romannumeral2)}{\leq} \rho^4\sum_{j=1}^\infty\sum_{k=1}^\infty\frac{\mu_j}{\lambda + \mu_j}\frac{\mu_k}{\lambda + \mu_k}=\rho^4\cN^2(\lambda).\nonumber
\end{align}
Here $\{\phi_k\}_{k=1}^\infty$ is given by the singular value decomposition of $T$ in \eqref{singular value decomposition}. Inequality (\romannumeral1) is from Cauchy-Schwartz inequality. Inequality (\romannumeral2) is due to the decomposition of $L_K^{1/2}X$ \eqref{PCdecomposition} and Assumption \ref{momentcondition3}. 

For the term $\EE\left[\left\|(\lambda I + T)^{1/2}\left(\lambda I + T_{\bX_1}\right)^{-1}(\lambda I + T)^{1/2}\right\|^8\right]$, first applying the second-order decomposition, which was introduced in \cite{guo2017learning,lin2017distributed,guo2019optimal}, to $\left (\lambda I + T_{\bX_1}\right)^{-1}$ yields that
\begin{equation}\label{equation: second order decomposition}
	\begin{aligned}
		(\lambda I + T_{\bX_1})^{-1} &= (\lambda I + T)^{-1} + (\lambda I + T_{\bX_1})^{-1}(T - T_{\bX_1})(\lambda I + T)^{-1}\\
		&= (\lambda I + T)^{-1} + (\lambda I + T)^{-1}(T - T_{\bX_1})(\lambda I + T)^{-1}\\
		&\quad+ (\lambda I + T)^{-1}(T - T_{\bX_1})(\lambda I + T_{\bX_1})^{-1}(T - T_{\bX_1})(\lambda I + T)^{-1}.
	\end{aligned}
\end{equation}
If $2\leq\ell<8$, applying the above second-order decomposition of $(\lambda I + T_{\bX_1})^{-1}$ and taking $\lambda\leq 1$, we have \begin{align}\label{equation: Theorem 3 4}
	&\EE\left[\left\|(\lambda I + T)^{1/2}\left(\lambda I + T_{\bX_1}\right)^{-1}(\lambda I + T)^{1/2}\right\|^8\right]\nonumber\\
	&\leq (1 + \mu_1)^{8-\ell}\frac{1}{\lambda^{8-\ell}}\EE\left[\left\|(\lambda I + T)^{1/2}\left(\lambda I + T_{\bX_1}\right)^{-1}(\lambda I + T)^{1/2}\right\|^\ell\right]\nonumber\\
	&\overset{(\romannumeral1)}{=}(1 + \mu_1)^{8-\ell}\frac{3^\ell}{\lambda^{8-\ell}}\EE\Bigg[\Big\|\frac{1}{3}I + \frac{1}{3}(\lambda I + T)^{-1/2}(T-T_{\bX_1})(\lambda I + T)^{-1/2}\nonumber\\
	&\quad + \frac{1}{3}(\lambda I + T)^{-1/2}(T-T_{\bX_1})(\lambda I +T_{\bX_1})^{-1}(T-T_{\bX_1})(\lambda I + T)^{-1/2}\Big\|^\ell\Bigg]\nonumber\\
	&\overset{(\romannumeral2)}{\leq} (1 + \mu_1)^{8-\ell}\frac{3^{\ell-1}}{\lambda^{8-\ell}}\Bigg\{\|I\|^\ell + \EE\left[\left\|(\lambda I + T)^{-1/2}(T-T_{\bX_1})(\lambda I + T)^{-1/2}\right\|^{\ell}\right] \nonumber\\
	&\quad + \EE\left[\left\|(\lambda I + T)^{-1/2}(T-T_{\bX_1})\right\|^{2\ell}\right]\frac{1}{\lambda^{\ell}} \Bigg\}\\
	&\overset{(\romannumeral3)}{\leq} (1 + \mu_1)^{8-\ell}\frac{3^{\ell-1}}{\lambda^{8-\ell}}\Bigg\{\|I\|^\ell + \EE\left[\left\|(\lambda I + T)^{-1/2}(T-T_{\bX_1})(\lambda I + T)^{-1/2}\right\|_{HS}^{2\ell}\right]^{\frac{1}{2}} \nonumber\\
	&\quad + \EE\left[\left\|(\lambda I + T)^{-1/2}(T-T_{\bX_1})\right\|_{HS}^{2\ell}\right]\frac{1}{\lambda^{\ell}} \Bigg\}\nonumber\\
	&\overset{(\romannumeral4)}{\leq} (1+\mu_1)^{8-\ell}\frac{3^{\ell-1}}{\lambda^{8-\ell}}\left[1+ c^{\frac{1}{2}}(\ell)2^\ell\rho^{2\ell}\left(\frac{m\cN^2(\lambda)}{N}\right)^{\frac{\ell}{2}} + c(\ell)2^{2\ell}\rho^{4\ell}trace^{\ell}(T)\frac{1}{\lambda^\ell}\left(\frac{m\cN(\lambda)}{N}\right)^\ell\right]\nonumber\\
	&\leq (1+\mu_1)^83^7c(8)2^{16}\rho^{32}\max\left\{1,trace^8(T)\right\}\frac{1}{\lambda^{8-\ell}}\left[1+ \left(\frac{m\cN^2(\lambda)}{N}\right)^{\frac{\ell}{2}}+\frac{1}{\lambda^\ell}\left(\frac{m\cN(\lambda)}{N}\right)^\ell\right]\nonumber\\
	&= c_7^4\lambda^{\ell-8}\left[1+ \left(\frac{m\cN^2(\lambda)}{N}\right)^{\frac{\ell}{2}}+\lambda^{-\ell}\left(\frac{m\cN(\lambda)}{N}\right)^\ell\right],\nonumber
\end{align}
where $c_7^4:= (1+\mu_1)^83^7c(8)2^{16}\rho^{32}\max\left\{1,trace^8(T)\right\}$. Here equality (\romannumeral1) is from the second-order decomposition of $(\lambda I + T_{\bX_1})^{-1}$ \eqref{equation: second order decomposition}. Inequality (\romannumeral2) uses Jensen's inequality. Inequality (\romannumeral3) is due to Cauchy-Schwartz inequality and \eqref{relationship between L2 and Linfty norm}. Inequality (\romannumeral4) follows from estimations \eqref{equation: estimation of l-th HS-norm} and \eqref{equation: estimation of l-th HS-norm*} in Lemma \ref{lemma: estimation of l-th HS-norm}. 

Analogously, if $\ell\geq 8$, applying the second-order decomposition of $(\lambda I + T_{\bX_1})^{-1}$ \eqref{equation: second order decomposition} and taking $\lambda\leq 1$, we have
\begin{align}\label{equation: Theorem 3 5}
	&\EE\left[\left\|(\lambda I + T)^{1/2}\left(\lambda I + T_{\bX_1}\right)^{-1}(\lambda I + T)^{1/2}\right\|^8\right]\nonumber\\
	&\leq 3^7\left[1+ c^{\frac{1}{2}}(8)2^{8}\rho^{16}\left(\frac{m\cN^2(\lambda)}{N}\right)^{4}+c(8)2^{16}\rho^{32}trace^{8}(T)\frac{1}{\lambda^8}\left(\frac{m\cN(\lambda)}{N}\right)^8\right]\nonumber\\
	&\leq c_7^4\left[1+ \left(\frac{m\cN^2(\lambda)}{N}\right)^{4}+\frac{1}{\lambda^8}\left(\frac{m\cN(\lambda)}{N}\right)^8\right],
\end{align}
where $c_7^4= (1+\mu_1)^83^7c(8)2^{16}\rho^{32}\max\left\{1,trace^8(T)\right\}$.

We can now prove \eqref{equation: theorem stronger 1}. 

If $2\leq \ell <8$, taking $\lambda\leq 1$, for the term $\EE\left[\left\|L_C^{1/2}L_K^{1/2}(\lambda I + T_{\bX_1})^{-1}\frac{1}{n}\sum_{i=1}^n\alpha_{i}\right\|_{\cL^2}^2\II_{\cU_1}\right]$, combining \eqref{equation: Theorem 3 1} and \eqref{equation: Theorem 3 4} with \eqref{equation: stronger probability estimation of U1} in Lemma \ref{lemma: stronger probability esstimation of U1} and \eqref{equation: estimtion of 4-th norm1} in Lemma \ref{lemma: estimation of expectation of F(S,lambda)}, we have
\begin{align*}
	&\EE\left[\left\|L_C^{1/2}L_K^{1/2}\left(\lambda I + T_{\bX_1}\right)^{-1}\frac{1}{n}\sum_{i=1}^n\alpha_{i}\right\|_{\cL^2}^2\II_{\cU_1}\right]\\
	&\leq c_6c_7c^{\frac{1}{4}}(\ell)2^\ell\rho^\ell\lambda^{\frac{\ell-8}{4}}\left[1+ \left(\frac{m\cN^2(\lambda)}{N}\right)^{\frac{\ell}{2}}+\lambda^{-\ell}\left(\frac{m\cN(\lambda)}{N}\right)^\ell\right]^{\frac{1}{4}}\left(\frac{m\cN^2(\lambda)}{N}\right)^{\frac{\ell}{4}}\frac{m}{N}\left(1+\lambda^{4\theta}\cN^2(\lambda)\right)^{\frac{1}{2}}\\
	&\leq c_6c_7c^{\frac{1}{4}}(\ell)2^\ell\rho^\ell\lambda^{\frac{\ell-8}{4}}\left[1+ \left(\frac{m\cN^2(\lambda)}{N}\right)^{\frac{\ell}{8}}+\lambda^{-\frac{\ell}{4}}\left(\frac{m\cN(\lambda)}{N}\right)^{\frac{\ell}{4}}\right]\left(\frac{m\cN^2(\lambda)}{N}\right)^{\frac{\ell}{4}}\frac{m}{N}\left(1+\lambda^{2\theta}\cN(\lambda)\right)\\
	&= b_1(\ell)\lambda^{\frac{\ell-8}{4}}\left[1+ \left(\frac{m\cN^2(\lambda)}{N}\right)^{\frac{\ell}{8}}+\lambda^{-\frac{\ell}{4}}\left(\frac{m\cN(\lambda)}{N}\right)^{\frac{\ell}{4}}\right]\left(\frac{m\cN^2(\lambda)}{N}\right)^{\frac{\ell}{4}}\frac{m}{N}\left(1+\lambda^{2\theta}\cN(\lambda)\right),
\end{align*}
where $b_1(\ell):= c_6c_7c^{\frac{1}{4}}(\ell)2^\ell\rho^\ell$.

For the term $\EE\left[\left\|L_C^{1/2}L_K^{1/2}(\lambda I + T_{\bX_1})^{-1}L_K^{1/2}X_{1,i}\right\|_{\cL^2}^2\II_{\cU_1}\right], \forall 1\leq i\leq n$, combining \eqref{equation: Theorem 3 2}, \eqref{equation: Theorem 3 3} and \eqref{equation: Theorem 3 4} with \eqref{equation: stronger probability estimation of U1} in Lemma \ref{lemma: stronger probability esstimation of U1}, we have
\begin{align*}
	&\EE\left[\left\|L_C^{1/2}L_K^{1/2}(\lambda I + T_{\bX_1})^{-1}L_K^{1/2}X_{1,i}\right\|_{\cL^2}^2\II_{\cU_1}\right]\\
	&\leq c_7c^{\frac{1}{4}}(\ell)2^\ell\rho^{\ell+2}\lambda^{\frac{\ell-8}{4}}\left[1+ \left(\frac{m\cN^2(\lambda)}{N}\right)^{\frac{\ell}{8}}+\lambda^{-\frac{\ell}{4}}\left(\frac{m\cN(\lambda)}{N}\right)^{\frac{\ell}{4}}\right]\left(\frac{m\cN^2(\lambda)}{N}\right)^{\frac{\ell}{4}}\cN(\lambda)\\
	&= b_2(\ell)\lambda^{\frac{\ell-8}{4}}\left[1+ \left(\frac{m\cN^2(\lambda)}{N}\right)^{\frac{\ell}{8}}+\lambda^{-\frac{\ell}{4}}\left(\frac{m\cN(\lambda)}{N}\right)^{\frac{\ell}{4}}\right]\left(\frac{m\cN^2(\lambda)}{N}\right)^{\frac{\ell}{4}}\cN(\lambda),
\end{align*}
where $b_2(\ell):=c_7c^{\frac{1}{4}}(\ell)2^\ell\rho^{\ell+2}$.

Then recall that $n=N/m$, combining the above two estimations with \eqref{equation: theorem 3 total 1}, \eqref{equation: theorem 3 total 2} and \eqref{equation: theorem 3 total 3} yields
\begin{align*}
	&\EE\left[\left(\mathcal{R(\overline{\beta}_{S,\lambda})}-\mathcal{R}(\beta_0)\right)\right]\\
	&\leq 2\lambda^{2\theta}\|\gamma_0\|_{\cL^2}^2 + 16\frac{\cN(\lambda)}{N}\left(c_2\lambda^{2\theta}\|\gamma_0\|^2_{\cL^2}+\sigma^2\right) + 8c_2\frac{m}{N}\cN(\lambda)\lambda^{2\theta}\|\gamma_0\|_{\cL^2}^2\\
	&\quad + b_1(\ell)\lambda^{\frac{\ell-8}{4}}\left[1+ \left(\frac{m\cN^2(\lambda)}{N}\right)^{\frac{\ell}{8}}+\lambda^{-\frac{\ell}{4}}\left(\frac{m\cN(\lambda)}{N}\right)^{\frac{\ell}{4}}\right]\left(\frac{m\cN^2(\lambda)}{N}\right)^{\frac{\ell}{4}}\frac{4+2m}{N}\left(1+\lambda^{2\theta}\cN(\lambda)\right)\\
	&\quad + b_2(\ell)\lambda^{\frac{\ell-8}{4}}\left[1+ \left(\frac{m\cN^2(\lambda)}{N}\right)^{\frac{\ell}{8}}+\lambda^{-\frac{\ell}{4}}\left(\frac{m\cN(\lambda)}{N}\right)^{\frac{\ell}{4}}\right]\left(\frac{m\cN^2(\lambda)}{N}\right)^{\frac{\ell}{4}}\frac{4\sigma^2}{N}\cN(\lambda).
\end{align*}
This completes the proof of \eqref{equation: theorem stronger 1}.

We next give the proof of \eqref{equation: theorem stronger 2}.

If $\ell \geq 8$, taking $\lambda\leq 1$, utilizing \eqref{equation: Theorem 3 5} and following the same arguments in the proof of \eqref{equation: theorem stronger 1}, we obtain
\begin{align*}
	&\EE\left[\left\|L_C^{1/2}L_K^{1/2}\left(\lambda I + T_{\bX_1}\right)^{-1}\frac{1}{n}\sum_{i=1}^n\alpha_{i}\right\|_{\cL^2}^2\II_{\cU_1}\right]\\
	&\leq c_6c_7c^{\frac{1}{4}}(\ell)2^\ell\rho^\ell\left[1+ \frac{m\cN^2(\lambda)}{N}+\frac{1}{\lambda^{2}}\left(\frac{m\cN(\lambda)}{N}\right)^{2}\right]\left(\frac{m\cN^2(\lambda)}{N}\right)^{\frac{\ell}{4}}\frac{m}{N}\left(1+\lambda^{2\theta}\cN(\lambda)\right)\\
	&= b_1(\ell)\left[1+ \frac{m\cN^2(\lambda)}{N}+\frac{1}{\lambda^{2}}\left(\frac{m\cN(\lambda)}{N}\right)^{2}\right]\left(\frac{m\cN^2(\lambda)}{N}\right)^{\frac{\ell}{4}}\frac{m}{N}\left(1+\lambda^{2\theta}\cN(\lambda)\right),
\end{align*}
where $b_1(\ell)= c_6c_7c^{\frac{1}{4}}(\ell)2^\ell\rho^\ell$.

And
\begin{align*}
	&\EE\left[\left\|L_C^{1/2}L_K^{1/2}(\lambda I + T_{\bX_1})^{-1}L_K^{1/2}X_{1,i}\right\|_{\cL^2}^2\II_{\cU_1}\right]\\
	&\leq c_3c^{\frac{1}{4}}(\ell)2^\ell\rho^{\ell+2}\left[1+ \frac{m\cN^2(\lambda)}{N}+\frac{1}{\lambda^{2}}\left(\frac{m\cN(\lambda)}{N}\right)^{2}\right]\left(\frac{m\cN^2(\lambda)}{N}\right)^{\frac{\ell}{4}}\cN(\lambda)\\
	&= b_2(\ell)\left[1+ \frac{m\cN^2(\lambda)}{N}+\frac{1}{\lambda^{2}}\left(\frac{m\cN(\lambda)}{N}\right)^{2}\right]\left(\frac{m\cN^2(\lambda)}{N}\right)^{\frac{\ell}{4}}\cN(\lambda),
\end{align*}
where $b_2(\ell)=c_7c^{\frac{1}{4}}(\ell)2^\ell\rho^{\ell+2}$.

And then
\begin{align*}
	&\EE\left[\left(\mathcal{R(\overline{\beta}_{S,\lambda})}-\mathcal{R}(\beta_0)\right)\right]\\
	&\leq 2\lambda^{2\theta}\|\gamma_0\|_{\cL^2}^2 + 16\frac{\cN(\lambda)}{N}(c_2\lambda^{2\theta}\|\gamma_0\|^2_{\cL^2}+\sigma^2) + 8c_2\frac{m}{N}\cN(\lambda)\lambda^{2\theta}\|\gamma_0\|_{\cL^2}^2\\
	&\quad + b_1(\ell)\left[1+ \frac{m\cN^2(\lambda)}{N}+\frac{1}{\lambda^{2}}\left(\frac{m\cN(\lambda)}{N}\right)^{2}\right]\left(\frac{m\cN^2(\lambda)}{N}\right)^{\frac{\ell}{4}}\frac{4+2m}{N}\left(1+\lambda^{2\theta}\cN(\lambda)\right)\\
	&\quad + b_2(\ell)\left[1+ \frac{m\cN^2(\lambda)}{N}+\frac{1}{\lambda^{2}}\left(\frac{m\cN(\lambda)}{N}\right)^{2}\right]\left(\frac{m\cN^2(\lambda)}{N}\right)^{\frac{\ell}{4}}\frac{4\sigma^2}{N}\cN(\lambda).
\end{align*}
We have completed the proof of inequality \eqref{equation: theorem stronger 2}. The proof of Theorem \ref{theorem:stronger upper bound} is then finished.
\qed

We next prove Corollary \ref{corollary:corollary stronger}.

\noindent
{\bf Proof of Corollary \ref{corollary:corollary stronger}}.  We prove the desired bounds in three cases, respectively.

When $2\leq \ell\leq 4$, taking $\lambda\leq 1$, (\ref{equation: theorem stronger 1}) and (\ref{equation: estimation of N(lambda)}) implies
\begin{align}\label{equation: proof of corollary 5 1}
	&\EE\left[\left(\mathcal{R(\overline{\beta}_{S,\lambda})}-\mathcal{R}(\beta_0)\right)\right]\nonumber\\
	&\lesssim \lambda^{2\theta} + \frac{\lambda^{-p}}{N} + m\frac{\lambda^{2\theta-p}}{N}\\
	&\quad+ \lambda^{\frac{\ell-8}{4}}\left[1+ \left(\frac{m\lambda^{-2p}}{N}\right)^{\frac{\ell}{8}}+\lambda^{-\frac{\ell}{4}}\left(\frac{m\lambda^{-p}}{N}\right)^{\frac{\ell}{4}}\right]\left(\frac{m\lambda^{-2p}}{N}\right)^{\frac{\ell}{4}}\left(\frac{m}{N}+\frac{m\lambda^{2\theta-p}}{N}+\frac{\lambda^{-p}}{N}\right).\nonumber
\end{align}
As $\theta\leq \frac{1}{2}\leq \frac{p\ell+8}{4\ell}$, taking $m\leq N^r$ for some $0\leq r\leq\frac{2\theta}{2\theta+p}$, we have
\begin{align*}
	&\EE\left[\left(\mathcal{R(\overline{\beta}_{S,\lambda})}-\mathcal{R}(\beta_0)\right)\right]\lesssim \max\left\{N^{\frac{2\theta(4+\ell)(r-1)}{8+8\theta+2p\ell-\ell}},N^{\frac{2\theta\ell(r-1)-8\theta}{8+4p+8\theta+2p\ell-\ell}},N^{\frac{2\theta(4+2\ell)(r-1)}{8+8\theta+3p\ell}},N^{\frac{4\theta\ell(r-1)-8\theta}{8+4p+8\theta+3p\ell}}\right\}
\end{align*}
provided that $$\lambda=\max\left\{N^{\frac{(4+\ell)(r-1)}{8+8\theta+2p\ell-\ell}},N^{\frac{\ell(r-1)-4}{8+4p+8\theta+2p\ell-\ell}},N^{\frac{(4+2\ell)(r-1)}{8+8\theta+3p\ell}},N^{\frac{2\ell(r-1)-4}{8+4p+8\theta+3p\ell}}\right\}.$$
This completes the proof of case $2\leq \ell \leq 4$.

When $5\leq \ell\leq 7$, taking $\lambda\leq 1$, inequality \eqref{equation: proof of corollary 5 1} still holds. Then taking $\lambda= N^{-\frac{1}{2\theta+p}}$ yields 
\[\EE\left[\left(\mathcal{R(\overline{\beta}_{S,\lambda})}-\mathcal{R}(\beta_0)\right)\right]\lesssim N^{\frac{2\theta}{2\theta+p}}\]
provided that $\frac{p\ell+8}{4\ell}\leq\theta\leq \frac{1}{2}$ and $m\leq \min\left\{N^{\frac{8+p\ell-4p-4\theta\ell}{(4+2\ell)(2\theta+p)}}, N^{\frac{8+p\ell-8\theta-4\theta\ell}{(4+2\ell)(2\theta+p)}}\right\}$.

If $\theta\frac{p\ell+8}{4\ell}$, take $m\leq N^r$ for some $0\leq r\leq\frac{2\theta}{2\theta+p}$. Then we have
\begin{align*}
	&\EE\left[\left(\mathcal{R(\overline{\beta}_{S,\lambda})}-\mathcal{R}(\beta_0)\right)\right]\lesssim \max\left\{N^{\frac{2\theta(4+\ell)(r-1)}{8+8\theta+2p\ell-\ell}},N^{\frac{2\theta\ell(r-1)-8\theta}{8+4p+8\theta+2p\ell-\ell}},N^{\frac{2\theta(4+2\ell)(r-1)}{8+8\theta+3p\ell}},N^{\frac{4\theta\ell(r-1)-8\theta}{8+4p+8\theta+3p\ell}}\right\}
\end{align*}
provided that $$\lambda=\max\left\{N^{\frac{(4+\ell)(r-1)}{8+8\theta+2p\ell-\ell}},N^{\frac{\ell(r-1)-4}{8+4p+8\theta+2p\ell-\ell}},N^{\frac{(4+2\ell)(r-1)}{8+8\theta+3p\ell}},N^{\frac{2\ell(r-1)-4}{8+4p+8\theta+3p\ell}}\right\}.$$
This completes the proof of case $5\leq \ell \leq 7$.

When $\ell \geq 8$, taking $\lambda\leq 1$, (\ref{equation: theorem stronger 2}) and (\ref{equation: estimation of N(lambda)}) implies
\[\begin{aligned}
	&\EE\left[\left(\mathcal{R(\overline{\beta}_{S,\lambda})}-\mathcal{R}(\beta_0)\right)\right]\\
	&\lesssim \lambda^{2\theta} + \frac{\lambda^{-p}}{N} + m\frac{\lambda^{2\theta-p}}{N}\\
	&\quad+ \left(1+\frac{m\lambda^{-2p}}{N}+\frac{m^2\lambda^{-2p-2}}{N^2}\right)\left(\frac{m\lambda^{-2p}}{N}\right)^{\frac{\ell}{4}}\left(\frac{m}{N}+\frac{m\lambda^{2\theta-p}}{N}+\frac{\lambda^{-p}}{N}\right).
\end{aligned}\]
Taking $\lambda= N^{-\frac{1}{2\theta+p}}$ yields
\[\EE\left[\left(\mathcal{R(\overline{\beta}_{S,\lambda})}-\mathcal{R}(\beta_0)\right)\right]\lesssim N^{\frac{2\theta}{2\theta+p}}\]
provided that 
$\frac{p\ell + 8}{2\ell + 16}\leq \theta\leq 1/2$ and $m\leq \min\left\{N^{\frac{8+p\ell-4p-16\theta-2\theta\ell}{(12+\ell)(2\theta+p)}}, N^{\frac{8+p\ell-24\theta-2\theta\ell}{(12+\ell)(2\theta+p)}}\right\}$.

If $\theta<\frac{p\ell+8}{2\ell+16}$, take $m\leq N^r$ for some $0\leq r\leq \frac{2\theta}{2\theta+p}$. Then we have
\begin{align*}
	\EE\left[\left(\mathcal{R(\overline{\beta}_{S,\lambda})}-\mathcal{R}(\beta_0)\right)\right]\lesssim \max\left\{N^{\frac{\theta(4+\ell)(r-1)}{4\theta+p\ell}},N^{\frac{\theta\ell(r-1)-4\theta}{2p+4\theta+p\ell}},N^{\frac{\theta(12+\ell)(r-1)}{4+4p+4\theta+p\ell}},N^{\frac{\theta(8+\ell)(r-1)-4\theta}{4+6p+4\theta+p\ell}}\right\}
\end{align*}
provided that 
\[\lambda = \max\left\{N^{\frac{(4+\ell)(r-1)}{8\theta+2p\ell}},N^{\frac{\ell(r-1)-4}{4p+8\theta+2p\ell}},N^{\frac{(12+\ell)(r-1)}{8+8p+8\theta+2p\ell}},N^{\frac{(8+\ell)(r-1)-4}{8+12p+8\theta+2p\ell}}\right\}.\]
We have completed the proof of case $\ell\geq 8$. Then proof is then finished.
\qed

We next turn to prove Theorem \ref{theorem: extra upper bound} and Corollary \ref{corollary:corollary extra}. If Assumption \ref{momentcondition5} is satisfied, we can estimate the probability of event $\cU_1$ better than Lemma \ref{lemma: basic probability esstimation of U1} and Lemma \ref{lemma: stronger probability esstimation of U1}.
\begin{lemma}\label{lemma: strongest probability esstimation of U1}
	Suppose that Assumption \ref{momentcondition5} is satisfied, then there holds
	\begin{equation}\label{equation: strongest probability esstimation of U1}
		\PP(\cU_1)\leq c_4^4\left(1+ \frac{m^2\cN^2(\lambda)}{N^2}\right)\cN\left(\lambda\right)\exp\left(-c_5\frac{N}{m\cN(\lambda)}\right).
	\end{equation}
	Where $c_4$ and $c_5$ are universal constants and $\mathcal{N}(\lambda)$ is the effective dimension given by \eqref{effectivedimension}.
\end{lemma}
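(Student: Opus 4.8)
The plan is to recognize the centered, rescaled operator appearing in the definition of $\cU_1$ as a normalized sum of independent, identically distributed, mean-zero, self-adjoint Hilbert--Schmidt operators, and then to invoke a Bernstein-type concentration inequality for random operators (of the kind developed in \cite{guo2017learning,lin2017distributed} and ultimately traceable to Minsker's infinite-dimensional Bernstein inequality). Concretely, recalling the notation \eqref{equation: simple notation} with $n=N/m$ and writing $u_i:=(\lambda I+T)^{-1/2}L_K^{1/2}X_{1,i}$, I would set
\[
\zeta_i:=(\lambda I+T)^{-1/2}\left(L_K^{1/2}X_{1,i}\otimes L_K^{1/2}X_{1,i}-T\right)(\lambda I+T)^{-1/2}=u_i\otimes u_i-\EE\left[u_i\otimes u_i\right],
\]
so that $(\lambda I+T)^{-1/2}(T_{\bX_1}-T)(\lambda I+T)^{-1/2}=\frac1n\sum_{i=1}^n\zeta_i$ and $\cU_1=\bigl\{\|\frac1n\sum_{i=1}^n\zeta_i\|\geq 1/2\bigr\}$.

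The two quantitative inputs such an inequality needs are an almost-sure operator-norm bound on each $\zeta_i$ and a bound on the variance $\EE[\zeta_i^2]$, and this is exactly where Assumption \ref{momentcondition5} becomes indispensable. Using the principal component decomposition \eqref{PCdecomposition} together with $\sup_{k\geq1}|\xi_k|\leq\rho$, I would first establish the \emph{uniform} bound
\[
\|u_i\|_{\cL^2}^2=\sum_{k\geq1}\frac{\mu_k\xi_k^2}{\lambda+\mu_k}\leq\rho^2\sum_{k\geq1}\frac{\mu_k}{\lambda+\mu_k}=\rho^2\cN(\lambda)\quad\text{a.s.},
\]
which immediately gives $\|\zeta_i\|\leq\|u_i\|_{\cL^2}^2+\|(\lambda I+T)^{-1/2}T(\lambda I+T)^{-1/2}\|\leq\rho^2\cN(\lambda)+1$. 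For the variance, the rank-one identity $(u_i\otimes u_i)^2=\|u_i\|_{\cL^2}^2\,(u_i\otimes u_i)$ combined with the same a.s. bound yields
\[
\EE[\zeta_i^2]\preceq\EE\left[(u_i\otimes u_i)^2\right]\preceq\rho^2\cN(\lambda)\,(\lambda I+T)^{-1/2}T(\lambda I+T)^{-1/2}=:V,
\]
so that $\|V\|\leq\rho^2\cN(\lambda)$ while $trace(V)\leq\rho^2\cN^2(\lambda)$; in particular the intrinsic dimension $trace(V)/\|V\|$ is of order $\cN(\lambda)$. It is precisely the boundedness hypothesis that converts the merely polynomial tails of Lemma \ref{lemma: basic probability esstimation of U1} and Lemma \ref{lemma: stronger probability esstimation of U1} into an exponential one.

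Finally I would feed $L:=\rho^2\cN(\lambda)+1$, the variance proxy $V$, and the threshold $t=1/2$ into the operator Bernstein inequality applied to $\frac1n\sum_{i=1}^n\zeta_i$. The intrinsic-dimension prefactor produces a factor of order $\cN(\lambda)$, and the exponent of order $\frac{n t^2/2}{\|V\|+Lt/3}\asymp n/\cN(\lambda)=N/(m\cN(\lambda))$ gives the decay $\exp(-c_5N/(m\cN(\lambda)))$; restoring $n=N/m$ throughout then yields the stated bound. The main obstacle is twofold: first, selecting and adapting an operator concentration inequality genuinely valid in the infinite-dimensional setting (so one must use the intrinsic-dimension version rather than naive matrix Bernstein, where the ambient dimension would be infinite); and second, tracking constants so that the prefactor emerges in the exact form $c_4^4\bigl(1+\tfrac{m^2\cN^2(\lambda)}{N^2}\bigr)\cN(\lambda)$, the extra algebraic factor $(1+m^2\cN^2(\lambda)/N^2)=(1+\cN^2(\lambda)/n^2)$ being what forces the estimate to hold uniformly across all regimes of $n$ relative to $\cN(\lambda)$, including the range where the classical moment condition $t\geq\sqrt{\|V\|}+L/3$ is violated and the bound would otherwise be vacuous.
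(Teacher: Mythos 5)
Your proposal is correct and follows essentially the same route as the paper: both reduce $\cU_1$ to a deviation bound for the normalized sum of i.i.d.\ centered rank-one operators, use the uniform bound $\|(\lambda I+T)^{-1/2}L_K^{1/2}X\|_{\cL^2}^2\leq\rho^2\cN(\lambda)$ from Assumption \ref{momentcondition5} to control the a.s.\ operator norm, the variance proxy (norm $\lesssim\rho^2\cN(\lambda)/n$, trace $\lesssim\rho^2\cN^2(\lambda)/n$, intrinsic dimension $\cN(\lambda)$), and then invoke Minsker's Bernstein inequality for self-adjoint random operators (the paper's Lemma \ref{lemma: Bernstein's inequality for self-adjoint r.o.s}) at threshold $1/2$. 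Your rank-one identity $(u\otimes u)^2=\|u\|^2_{\cL^2}\,(u\otimes u)$ is a slightly slicker way to obtain the same variance bound the paper derives by expanding in the eigenbasis of $T$.
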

\begin{proof}
	Our proof relies on the Bernstein's inequality for the sum of self-adjoint random operators (see, Lemma \ref{lemma: Bernstein's inequality for self-adjoint r.o.s}).  Recalling \eqref{equation: simple notation}, define 
	\[\zeta_i := (\lambda I + T)^{-1/2}L_K^{1/2}X_{1,i}\otimes L_K^{1/2}X_{1,i}(\lambda I + T)^{-1/2} \quad and \quad \eta_i := \frac{1}{n}\left(\zeta_i - \EE[\zeta_i]\right),\quad 1\leq i\leq n.\] Then we can write
	\[(\lambda I + T)^{-1/2}(T_{\bX_1} - T)(\lambda I + T)^{-1/2}= \sum_{i=1}^n\eta_i.\]
	Using expression \eqref{PCdecomposition}, we have
	\begin{equation}\label{equation: corollary of strongest assumption}
		\begin{aligned}
			&\left\|(\lambda I + T)^{-1/2}L_K^{1/2}X\otimes L_K^{1/2}X(\lambda I + T)^{-1/2}\right\|\\
			&= \sup_{\|f\|_{\cL^2}=1,\|g\|_{\cL^2}=1}\left\langle(\lambda I + T)^{-1/2}L_K^{1/2}X\otimes L_K^{1/2}X(\lambda I + T)^{-1/2}f,g\right\rangle_{\cL^2}\\
			&\leq \left\|(\lambda I + T)^{-1/2}L_K^{1/2}X\right\|_{\cL^2}^2=\sum_{k=1}^\infty\frac{\mu_k}{\lambda +\mu_k}\xi_k^2\\
			&\overset{(*)}{\leq} \rho^2\sum_{k=1}^\infty\frac{\mu_k}{\lambda+\mu_k}=\rho^2\cN(\lambda).
		\end{aligned}
	\end{equation} Here inequality $(*)$ is from Assumption \ref{momentcondition5}.
	
	Then for any $1\leq i\leq n$, one can calculate
	\begin{align}\label{equation: lemma 9 1}
		\left\|\eta_i\right\|
		&=\left\|\frac{1}{n}\left(\zeta_i-\EE[\zeta_i]\right)\right\|
		\overset{(\romannumeral1)}{\leq} \frac{1}{n}\left\|\zeta_i\right\|+\frac{1}{n}\EE\left[\left\|\zeta_i\right\|\right]
		\overset{(\romannumeral2)}{\leq} 2\rho^2\frac{\cN(\lambda)}{n}.
	\end{align}
	Here inequality (\romannumeral1) uses the triangle inequality and Jensen's inequality. Inequality (\romannumeral2) follows from \eqref{equation: corollary of strongest assumption}. And then we have
	\begin{align}\label{equation: lemma 9 2}
		&\left\|\EE\left[\left(\overline{\eta}\right)^2\right]\right\|= \left\|\EE\left[\left(\sum_{i=1}^n\eta_i\right)^2\right]\right\|\nonumber\\
		&\overset{(\romannumeral1)}{=} \sup_{f\in \cL^2(\cT),\|f\|_{\cL^2}=1}\sum_{i=1}^n\left\langle f, \EE[\eta_i^2]f\right\rangle_{\cL^2}\nonumber\\
		&=\frac{1}{n^2}\sup_{f\in \cL^2(\cT),\|f\|_{\cL^2}=1}\sum_{i=1}^n\left(\left\langle f, \EE[\zeta_i^2]f\right\rangle_{\cL^2}-\left\langle f, [\EE\zeta_i]^2f\right\rangle_{\cL^2}\right)\nonumber\\
		&\leq \frac{1}{n^2}\sup_{f\in \cL^2(\cT),\|f\|_{\cL^2}=1}\sum_{i=1}^n\left\langle f, \EE[\zeta_i^2]f\right\rangle_{\cL^2}\\
		&= \frac{1}{n^2}\sup_{f\in \cL^2(\cT),\|f\|_{\cL^2}=1}\sum_{i=1}^n\EE\left[\left\langle (\lambda I + T)^{-1/2}L_K^{1/2}X_{1,i},f\right\rangle_{\cL^2}^2\left\|(\lambda I + T)^{-1/2}L_K^{1/2}X_{1,i}\right\|_{\cL^2}^2\right]\nonumber\\
		&\overset{(\romannumeral2)}{\leq} \rho^2\frac{\cN(\lambda)}{n^2}\sup_{f\in \cL^2(\cT),\|f\|_{\cL^2}=1}\sum_{i=1}^n\EE\left[\left(\sum_{k=1}^\infty\sqrt{\frac{1}{\lambda + \mu_k}}\left\langle f,\phi_k \right\rangle_{\cL^2}\left\langle L_K^{1/2}X_{1,i},\phi_k \right\rangle_{\cL^2}\right)^2\right]\nonumber\\
		&\overset{(\romannumeral3)}{=} \rho^2\frac{\cN(\lambda)}{n^2}\sup_{f\in \cL^2(\cT),\|f\|_{\cL^2}=1}\sum_{i=1}^n\sum_{k=1}^\infty\frac{\mu_k}{\lambda + \mu_k}\left\langle f,\phi_k\right\rangle_{\cL^2}^2\leq \rho^2\frac{\cN(\lambda)}{n}.\nonumber
	\end{align}
	Here equality (\romannumeral1) is due to the equivalent expression of the operator norm of a nonnegative operator \eqref{equation: equivalent definition of nonnegative operator} and the fact that $\EE[\eta_i] = 0$. Inequality (\romannumeral2) follows from the fact that $$\left\|(\lambda I + T)^{-1/2}L_K^{1/2}X_{1,i}\right\|_{\cL^2}^2\leq \rho^2\cN(\lambda)$$ which is given by \eqref{equation: corollary of strongest assumption}. Equality (\romannumeral3) is from the fact that $$\EE\left[\left\langle L_K^{1/2}X,\phi_j\right\rangle_{\cL^2}\left\langle L_K^{1/2}X,\phi_k\right\rangle_{\cL^2}\right] = \langle T\phi_j,\phi_k\rangle_{\cL^2}=\mu_k\delta_j^k,$$
	We also need the following estimate given by 
	\begin{align}\label{equation: lemma 9 3}
		trace\left(\EE\left[\left(\overline{\eta}\right)^2\right]\right)&\overset{(\romannumeral1)}{=} \sum_{k=1}^\infty\left\langle \EE\left[\left(\sum_{i=1}^n\eta_i\right)^2\right]\phi_k,\phi_k \right\rangle_{\cL^2}\nonumber\\
		&= \sum_{i=1}^n\sum_{k=1}^\infty\left\langle \EE\left[\eta_i^2\right]\phi_k,\phi_k \right\rangle_{\cL^2}\leq \frac{1}{n^2}\sum_{i=1}^n\sum_{k=1}^\infty\left\langle \EE\left[\zeta_i^2\right]\phi_k,\phi_k \right\rangle_{\cL^2}\nonumber\\
		&= \frac{1}{n^2}\sum_{i=1}^n\sum_{k=1}^\infty\frac{1}{\lambda + \mu_k}\EE\left[\left\|(\lambda I + T)^{-1/2}L_K^{1/2}X_{1,i}\right\|_{\cL^2}^2\left\langle L_K^{1/2}X_{1,i},\phi_k \right\rangle_{\cL^2}^2\right]\nonumber\\
		&\overset{(\romannumeral2)}{\leq} \rho^2\frac{\cN(\lambda)}{n^2}\sum_{i=1}^n\sum_{k=1}^\infty\frac{1}{\lambda + \mu_k}\EE\left[\left\langle L_K^{1/2}X_{1,i},\phi_k \right\rangle_{\cL^2}^2\right]\overset{(\romannumeral3)}{\leq}\rho^2\frac{\cN^2(\lambda)}{n}.
	\end{align}
	Here equality (\romannumeral1) is from the formulation of the trace norm of an operator \eqref{equation: trace class}. Inequality (\romannumeral2) is due to the fact that $\left\|(\lambda I + T)^{-1/2}L_K^{1/2}X_{1,i}\right\|_{\cL^2}^2\leq \rho^2\cN(\lambda)$. Inequality (\romannumeral3)  follows from the calculation that $\sum_{k=1}^\infty\frac{1}{\lambda+\mu_k}\EE\left[\left\langle L_K^{1/2}X_{1,i},\phi_k\right\rangle_{\cL^2}^2\right]= \frac{\mu_k}{\lambda+\mu_k}=\cN(\lambda)$.

	Recall that $n=N/m$. Based on (\ref{equation: lemma 9 1}), (\ref{equation: lemma 9 2}) and (\ref{equation: lemma 9 3}), one can apply Lemma \ref{lemma: Bernstein's inequality for self-adjoint r.o.s} with $L=2\rho^2\frac{m\cN(\lambda)}{N}$, $v=\rho^2\frac{m\cN(\lambda)}{N}$, $d = \cN(\lambda)$ and $s=1/2$ to obtain
	\begin{align}\nonumber
		\PP(\cU_1)&=\PP\left(\left\|\sum_{i=1}^n\eta_i\right\|\geq 1/2\right)\nonumber\\
		&\leq \left[1+6\left(\rho^2\frac{4m\cN(\lambda)}{N}+ \rho^2\frac{4m\cN(\lambda)}{3N}\right)^2\right]\cN(\lambda) \exp\left(-\frac{3N}{32\rho^2m\cN(\lambda)}\right)\nonumber\\
		&\leq c_4^4\left(1+ \frac{m^2\cN^2(\lambda)}{N^2}\right)\cN(\lambda)\exp\left(-c_5\frac{N}{m\cN(\lambda)}\right),\nonumber
	\end{align}
	where $c_4^4:= \left[1+ 6\left(4\rho^2+\frac{4\rho^2}{3}\right)^2\right]$ and $c_5:=\frac{3}{32\rho^2}$. The proof is then completed.
\end{proof}

Now we can prove Theorem \ref{theorem: extra upper bound}.

\noindent
{\bf Proof of Theorem \ref{theorem: extra upper bound}}.
Under Assumption \ref{momentcondition5}, there holds
\begin{align}\label{equation: theorem 4 *}
	\left\|L_K^{1/2}X\right\|_{\cL^2}=\left(\sum_{k=1}^\infty\mu_k \xi_k^2\right)^{\frac{1}{2}}\leq \rho \left(\sum_{k=1}^\infty\mu_k\right)^{\frac{1}{2}}=\rho\cdot trace^{\frac{1}{2}}(T).
\end{align}
Therefore, recalling \eqref{equation: simple notation}, for any $1\leq i\leq n$, we can write
\begin{align}\label{equation: theorem 4 1}
	&\EE\left[\left\|L_C^{1/2}L_K^{1/2}(\lambda I + T_{\bX_1})^{-1}L_K^{1/2}X_{1,i}\right\|_{\cL^2}^2\II_{\cU_1}\right]\nonumber\\
	&\leq \frac{\mu_1}{\lambda^2}\EE\left[\left\|L_K^{1/2}X_{1,i}\right\|_{\cL^2}^2\II_{\cU_1}\right]\overset{(\romannumeral1)}{\leq} \frac{\mu_1}{\lambda^2}\left[\EE\left\|L_K^{1/2}X_{1,i}\right\|_{\cL^2}^4\right]^{\frac{1}{2}}\PP^{\frac{1}{2}}(\cU_1)\\
	&\overset{(\romannumeral2)}{\leq} c_4\mu_1\rho^2trace(T)\frac{1}{\lambda^2}\left(1+\frac{m\cN(\lambda)}{N}\right)\cN^{\frac{1}{2}}(\lambda)\exp\left(-\frac{c_5N}{2m\cN(\lambda)}\right).\nonumber
\end{align}
Here inequality (\romannumeral1) is from Cauchy-Schwartz inequality. Inequality (\romannumeral2) follows from \eqref{equation: strongest probability esstimation of U1} in Lemma \ref{lemma: strongest probability esstimation of U1} and \eqref{equation: theorem 4 1}.

Then under Assumption \ref{assumption1} and \ref{momentcondition5}, one can calculate
\begin{align}\nonumber
	&\EE\left[\left\|L_K^{1/2}X\left\langle X,\beta_0-L_K^{1/2}f_\lambda\right\rangle_{\cL^2}\right\|_{\cL^2}^4 \right]\nonumber\\
	&\overset{(\romannumeral1)}{\leq}  \rho^4trace^2(T)\EE\left[\left\langle X,\beta_0-L_K^{1/2}f_\lambda \right\rangle_{\cL^2}^4 \right]\nonumber\\
	&\overset{(\romannumeral2)}{\leq} c_1\rho^4trace^2(T)\left[\EE\left\langle X,\beta_0-L_K^{1/2}f_\lambda \right\rangle_{\cL^2}^2\right]^2\nonumber\\
	&= c_1\rho^4trace^2(T) \left\|L_C^{1/2}\left(\beta_0-L_K^{1/2}f_\lambda\right)\right\|_{\cL^2}^4\nonumber\\
	&\overset{(\romannumeral3)}{\leq} c_1\rho^4trace^2(T)\|\gamma_0\|_{\cL^2}^4\lambda^{4\theta},\nonumber
\end{align}
where inequality (\romannumeral1) follows from \eqref{equation: theorem 4 1}, inequality (\romannumeral2) uses the fourth-moment condition \eqref{momentcondition2} and inequality (\romannumeral3) is due to Lemma \ref{lemma: estimation of A(lambda)}. Then utilizing the above bound and following the same estimates in the proof of \eqref{equation: estimtion of 4-th norm1}, taking $\lambda\leq 1$, we obtain
\begin{equation}\label{equation: theorem 4 pro2}
	\begin{aligned}
		&\EE\left[\left\|\frac{1}{n}\sum_{i=1}^n\left(L_K^{1/2}X_{1,i}\left\langle X_{1,i},\beta_0-L_K^{1/2}f_\lambda\right\rangle_{\cL^2}-\lambda f_\lambda\right)\right\|_{\cL^2}^4\right]\\
		&\leq \frac{192m^2}{N^2}\left(c_1\rho^4trace^2(T) \|\gamma_0\|_{\cL^2}^4\lambda^{4\theta}+ \mu_1^{4\theta}\|\gamma_0\|_{\cL^2}^4\lambda^2\right)\leq c_3^2\frac{m^2}{N^2}\lambda^{4\theta},
	\end{aligned}
\end{equation}
where $c_3:=  192\left(c_1\rho^4trace^2(T) ||\gamma_0||_{\cL^2}^4+\max\{\mu_1^{2},1\}||\gamma_0||_{\cL^2}^4\right)$.

For simplicity of notations, define 
\[\alpha_i := L_K^{1/2}X_{1,i}\left\langle X_{1,i},\beta_0-L_K^{1/2}f_\lambda\right\rangle_{\cL^2}-\lambda f_\lambda, \quad i=1,2,\cdots,n.\]
Then we can write
\begin{align}\label{equation: theorem 4 2}
	&\EE\left[\left\|L_C^{1/2}L_K^{1/2}(\lambda I +T_{\bX_1})^{-1}\frac{1}{n}\sum_{i=1}^n\alpha_{i}\right\|_{\cL^2}^2\II_{\cU_1}\right]\nonumber\\
	&\leq \frac{\mu_1}{\lambda^2}\EE\left[\left\|\frac{1}{n}\sum_{i=1}^n\alpha_{i}\right\|_{\cL^2}^2\II_{\cU_1}\right]\overset{(\romannumeral1)}{\leq}\frac{\mu_1}{\lambda^2}\left[\EE\left\|\frac{1}{n}\sum_{i=1}^n\alpha_{i}\right\|_{\cL^2}^4\right]^{\frac{1}{2}}\PP^{\frac{1}{2}}(\cU_1)\\
	&\overset{(\romannumeral2)}{\leq} c_3c_4\mu_1\frac{m}{N\lambda^{2-2\theta}}\left(1+\frac{m\cN(\lambda)}{N}\right)\cN^{\frac{1}{2}}(\lambda)\exp\left(-\frac{c_5N}{2m\cN(\lambda)}\right).\nonumber
\end{align}
Here inequality (\romannumeral1) is due to Cauchy-Schwartz inequality. Inequality (\romannumeral2) is from \eqref{equation: strongest probability esstimation of U1} in Lemma \ref{lemma: strongest probability esstimation of U1} and \eqref{equation: theorem 4 pro2}.

Finally, utilizing estimates \eqref{equation: theorem 4 1} and \eqref{equation: theorem 4 2}, we follow the same arguments in the proof of \eqref{equation: theorem stronger 1} and then obtain 
\begin{align*}
	&\EE\left[\left(\mathcal{R(\overline{\beta}_{S,\lambda})}-\mathcal{R}(\beta_0)\right)\right]\\
	&\leq 2\lambda^{2\theta}\|\gamma_0\|_{\cL^2}^2 + 16\frac{\cN(\lambda)}{N}\left(c_1\lambda^{2\theta}\|\gamma_0\|^2_{\cL^2}+\sigma^2\right) + 8c_1\frac{m}{N}\cN(\lambda)\lambda^{2\theta}\|\gamma_0\|_{\cL^2}^2\\
	&\quad + c_3c_4\mu_1\frac{4+2m}{N\lambda^{2-2\theta}}\left(1+\frac{m\cN(\lambda)}{N}\right)\cN^{\frac{1}{2}}(\lambda)\exp\left(-\frac{c_5N}{2m\cN(\lambda)}\right)\\
	&\quad + c_4\mu_1\rho^2trace(T)\frac{4\sigma^2}{N\lambda^2}\left(1+\frac{m\cN(\lambda)}{N}\right)\cN^{\frac{1}{2}}(\lambda)\exp\left(-\frac{c_5N}{2m\cN(\lambda)}\right)
\end{align*} 
The proof of Theorem \ref{theorem: extra upper bound} is then finished.\qed

We next turn to prove Corollary \ref{corollary:corollary extra}.

\noindent
{\bf Proof of Corollary \ref{corollary:corollary extra}}.
Taking $m\leq o\left(\frac{N^{\frac{2\theta}{2\theta + p}}}{\log N}\right)$ and $\lambda=N^{-\frac{1}{2\theta+p}}$, \eqref{equation: estimation of N(lambda)} implies
\[m\frac{\cN(\lambda)}{N}\lesssim m\frac{\lambda^{-p}}{N}\leq o\left(\frac{1}{\log N}\right).\]
Therefore, for any $r>0$, there holds
\[\liminf_{N\rightarrow \infty} N^r\exp\left(-\frac{c_5N}{2m\cN(\lambda)}\right) = 0.\]
Then using Theorem \ref{theorem: extra upper bound}, we obtain
\[\EE\left[\mathcal{R}(\overline{\beta}_{S,\lambda}) - \mathcal{R}(\beta_0)\right] \lesssim \lambda^{2\theta}+ \frac{\cN(\lambda)}{N}+ m\frac{\cN(\lambda)}{N}\lambda^{2\theta}\lesssim N^{-\frac{2\theta}{2\theta+p}}.\]
The proof of Corollary \ref{corollary:corollary extra} is then finished.
\qed

Finally, we will provide the proof of Theorem \ref{theorem: theorem for comparision}. Before that, we establish the following lemma to estimate $\PP(\cU_1)$ based on Assumption \ref{momentcondition6}. 

\begin{lemma}\label{lemma: theorem 6 probability}
	Suppose that Assumption \ref{momentcondition6} is satisfied, then there holds
	\begin{equation}\label{equation: theorem 6 probability}
		\PP(\cU_1)\leq \left[1+6\left(\frac{4m\kappa^2}{N\lambda^t}+\frac{4m\kappa^2}{3N\lambda^t}\right)^2\right]\cN(\lambda) \exp\left(-\frac{3N\lambda^t}{32m\kappa^2}\right).
	\end{equation}
\end{lemma}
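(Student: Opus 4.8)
The plan is to run the argument of Lemma~\ref{lemma: strongest probability esstimation of U1} essentially verbatim, replacing only the single place where the uniform boundedness $\sup_k|\xi_k|\leq\rho$ of Assumption~\ref{momentcondition5} was invoked. Recalling \eqref{equation: simple notation}, I would again set
\[\zeta_i := (\lambda I + T)^{-1/2}L_K^{1/2}X_{1,i}\otimes L_K^{1/2}X_{1,i}(\lambda I + T)^{-1/2},\qquad \eta_i := \tfrac{1}{n}\left(\zeta_i-\EE[\zeta_i]\right),\]
so that $(\lambda I + T)^{-1/2}(T_{\bX_1}-T)(\lambda I + T)^{-1/2}=\sum_{i=1}^n\eta_i$ and hence $\PP(\cU_1)=\PP\left(\|\sum_{i=1}^n\eta_i\|\geq 1/2\right)$, and then conclude by the operator Bernstein inequality (Lemma~\ref{lemma: Bernstein's inequality for self-adjoint r.o.s}) applied with deviation level $s=1/2$.

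The one genuinely new step is the pointwise estimate that takes over the role of $\|(\lambda I + T)^{-1/2}L_K^{1/2}X\|_{\cL^2}^2\leq\rho^2\cN(\lambda)$ in the earlier proof. Using the principal component expansion \eqref{PCdecomposition}, I would write $\|(\lambda I + T)^{-1/2}L_K^{1/2}X\|_{\cL^2}^2=\sum_{k\geq1}\frac{\mu_k}{\lambda+\mu_k}\xi_k^2$ and interpolate via Young's inequality: since $\lambda^t\mu_k^{1-t}\leq t\lambda+(1-t)\mu_k\leq\lambda+\mu_k$ for every $0<t\leq1$, one has $\frac{\mu_k}{\lambda+\mu_k}\leq\lambda^{-t}\mu_k^t$. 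Summing against $\xi_k^2$ and invoking the embedding condition \eqref{equation: condition substitutes Assumption 4} yields the almost sure uniform bound
\[\left\|(\lambda I + T)^{-1/2}L_K^{1/2}X\right\|_{\cL^2}^2\leq\lambda^{-t}\sum_{k\geq1}\mu_k^t\xi_k^2\leq\frac{\kappa^2}{\lambda^t}.\]

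With $\kappa^2\lambda^{-t}$ substituted for $\rho^2\cN(\lambda)$ throughout, the three Bernstein inputs come from the same computations as \eqref{equation: lemma 9 1}, \eqref{equation: lemma 9 2} and \eqref{equation: lemma 9 3}: the per-term bound $\|\eta_i\|\leq L:=2m\kappa^2/(N\lambda^t)$, the variance bound $\|\EE[(\sum_i\eta_i)^2]\|\leq v:=m\kappa^2/(N\lambda^t)$, and the trace bound $trace(\EE[(\sum_i\eta_i)^2])\leq v\,\cN(\lambda)$, so the effective dimension is still $d:=\cN(\lambda)$. Here the second-moment and trace steps reuse the identities $\sum_k\frac{\mu_k}{\lambda+\mu_k}\langle f,\phi_k\rangle_{\cL^2}^2\leq\|f\|_{\cL^2}^2$ and $\sum_k\frac{1}{\lambda+\mu_k}\EE[\langle L_K^{1/2}X,\phi_k\rangle_{\cL^2}^2]=\cN(\lambda)$, unchanged from before. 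Feeding $L$, $v$, $d$ and $s=1/2$ into Lemma~\ref{lemma: Bernstein's inequality for self-adjoint r.o.s} produces the prefactor $[1+6(4v+\tfrac{4}{3}v)^2]\cN(\lambda)$ and the exponent $-s^2/(2(v+Ls/3))=-3N\lambda^t/(32m\kappa^2)$, which is precisely \eqref{equation: theorem 6 probability}.

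I do not expect a serious obstacle, since the skeleton is identical to Lemma~\ref{lemma: strongest probability esstimation of U1}; the only delicate point is the interpolation bound above, where one must verify that it is the factor $\lambda^{-t}$ (rather than $\cN(\lambda)$) that now propagates through $L$ and $v$ and therefore surfaces as $\lambda^t$ in the exponent. The remaining care is purely bookkeeping: confirming that $v+Ls/3=\tfrac{4}{3}v$ reproduces the displayed constants $32$ and $3$, and that the fourth-moment part of Assumption~\ref{momentcondition6} is not needed here (it enters only in the accompanying $\mathscr{S}(S,\lambda)$ estimates, not in this probability bound).
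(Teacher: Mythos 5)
Your proposal is correct and follows essentially the same route as the paper's proof: the same $\zeta_i,\eta_i$ setup, the same almost-sure bound $\|(\lambda I+T)^{-1/2}L_K^{1/2}X\|_{\cL^2}^2\leq\kappa^2\lambda^{-t}$ (the paper gets $\frac{\mu_k^{1-t}}{\lambda+\mu_k}\leq\lambda^{-t}$ directly rather than via Young's inequality, which is equivalent), the same Bernstein inputs $L=2m\kappa^2/(N\lambda^t)$, $v=m\kappa^2/(N\lambda^t)$, $d=\cN(\lambda)$, $s=1/2$, and the same constants in the conclusion. Your remark that the fourth-moment part of Assumption \ref{momentcondition6} plays no role in this probability estimate is also consistent with the paper.
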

\begin{proof}
	Our proof relies on Lemma \ref{lemma: Bernstein's inequality for self-adjoint r.o.s}. Recalling \eqref{equation: simple notation} and the definition of $\cU_1$ given by
	\[\cU_1 = \left\{\bX_1: \left\|(\lambda I + T)^{-1/2}(T_{\bX_1}- T)(\lambda I + T)^{-1/2}\right\| \geq 1/2\right\},\] let \[\zeta_i := (\lambda I + T)^{-1/2}L_K^{1/2}X_{1,i}\otimes L_K^{1/2}X_{1,i}(\lambda I + T)^{-1/2} \quad and \quad \eta_i := \frac{1}{n}\left(\zeta_i - \EE[\zeta_i]\right),\quad 1\leq i\leq n.\]
	Then 
	\[(\lambda I + T)^{-1/2}(T_{\bX_1} - T)(\lambda I + T)^{-1/2}= \sum_{i=1}^n\eta_i.\]
	Due to the decomposition of $L_K^{1/2}X$ in \eqref{PCdecomposition}, there holds
	\begin{align}\label{equation: corollary of moment condition 6}
		&\left\|(\lambda I + T)^{-1/2}L_K^{1/2}X\otimes L_K^{1/2}X(\lambda I + T)^{-1/2}\right\|\nonumber\\
		&=\sum_{\|f\|_{\cL^2}=1,\|g\|_{\cL^2}=1}\left\langle (\lambda I + T)^{-1/2}L_K^{1/2}X\otimes L_K^{1/2}X(\lambda I + T)^{-1/2}f,g\right\rangle_{\cL^2}\nonumber\\
		&\leq\left\|(\lambda I + T)^{-1/2}L_K^{1/2}X\right\|_{\cL^2}^2
		= \sum_{k=1}^\infty\frac{\mu_k^{1-t}\mu_k^t\xi_k^2}{\lambda + \mu_k}\leq \frac{1}{\lambda^t}\sum_{k=1}^\infty\mu_k^t\xi_k^2
		\overset{(*)}{\leq} \frac{\kappa^2}{\lambda^t}.
	\end{align}
	Here inequality $(*)$ follows from Assumption \ref{momentcondition6}. Then for any $1\leq i\leq n$, 
	\begin{align}\label{equation: theorem 6 1}
		\|\eta_i\|
		=\left\|\frac{1}{n}\left(\zeta_i-\EE[\zeta_i]\right)\right\|
		\leq \frac{1}{n}\|\zeta_i\|+\frac{1}{n}\EE\left[\left\|\zeta_i\right\|\right]
		\overset{(\dagger)}{\leq} \frac{2\kappa^2}{n}.
	\end{align}
	Here inequality $(\dagger)$ follows from (\ref{equation: corollary of moment condition 6}) and the definition that $\zeta_i = (\lambda I + T)^{-1/2}L_K^{1/2}X_{1,i}\otimes L_K^{1/2}X_{1,i}(\lambda I + T)^{-1/2}$.
	
	Following the same arguments of \eqref{equation: lemma 9 2}, we have
	\begin{align}	\label{equation: theorem 6 2}
		&\left\|\EE\left[(\overline{\eta})^2\right]\right\|\nonumber\\
		&\leq\frac{1}{n^2}\sup_{f\in \cL^2(\cT),\|f\|_{\cL^2}=1}\sum_{i=1}^n\EE\left[\left\langle (\lambda I + T)^{-1/2}L_K^{1/2}X_{1,i},f\right\rangle_{\cL^2}^2\left\|(\lambda I + T)^{-1/2}L_K^{1/2}X_{1,i}\right\|_{\cL^2}^2\right]\nonumber\\
		&= \frac{1}{n}\sup_{f\in \cL^2(\cT),\|f\|_{\cL^2}=1}\EE\left[\left\langle (\lambda I + T)^{-1/2}L_K^{1/2}X,f\right\rangle_{\cL^2}^2\left\|(\lambda I + T)^{-1/2}L_K^{1/2}X\right\|_{\cL^2}^2\right]\\
		&\overset{(\romannumeral1)}{\leq} \frac{\kappa^2}{n\lambda^t}\sup_{f\in \cL^2(\cT),\|f\|_{\cL^2}=1}\EE\left[ \left(\sum_{k=1}^\infty\sqrt{\frac{\mu_k}{\lambda + \mu_k}}\langle f,\phi_k \rangle_{\cL^2}\xi_k\right)^2 \right]\nonumber\\
		&\overset{(\romannumeral2)}{\leq} \frac{\kappa^2}{n\lambda^t}\sup_{f\in \cL^2(\cT),\|f\|_{\cL^2}=1}\sum_{k=1}^\infty\frac{\mu_k}{\lambda+\mu_k}\langle f,\phi_k \rangle_{\cL^2}^2\leq\frac{\kappa^2}{n\lambda^t}.\nonumber
	\end{align}
	Here inequality (\romannumeral1) is from \eqref{equation: corollary of moment condition 6} and inequality (\romannumeral2) is due to the decomposition of $L_K^{1/2}X$ in \eqref{PCdecomposition} and the fact that $\EE[\xi_j\xi_k]=\delta_j^k$. And following the same arguments of \eqref{equation: lemma 9 3}, we have
	\begin{align}\label{equation: theorem 6 3}
		&trace(\EE[(\overline{\eta})^2])\nonumber\\
		&\leq \frac{1}{n^2}\sum_{i=1}^n\sum_{k=1}^\infty\frac{1}{\lambda + \mu_k}\EE\left[\left\|(\lambda I + T)^{-1/2}L_K^{1/2}X_{1,i}\right\|_{\cL^2}^2\langle L_K^{1/2}X_{1,i},\phi_k \rangle_{\cL^2}^2\right]\nonumber\\
		&= \frac{1}{n}\sum_{k=1}^\infty \frac{1}{\lambda + \mu_k}\EE\left[\left\|(\lambda I + T)^{-1/2}L_K^{1/2}X\right\|_{\cL^2}^2\langle L_K^{1/2}X,\phi_k \rangle_{\cL^2}^2\right]\nonumber\\
		&\overset{(\dagger)}{\leq} \frac{\kappa^2}{n\lambda^t}\sum_{k=1}^\infty\frac{\mu_k}{\lambda+\mu_k}\EE\left[\xi_k^2\right]\leq \frac{\kappa^2\cN(\lambda)}{n\lambda^t}.
	\end{align}
	Here inequality $(\dagger)$ is from \eqref{equation: corollary of moment condition 6} and decomposition of $L_K^{1/2}X$ \eqref{PCdecomposition}.
	
	Recall that $n=N/m$. \eqref{equation: theorem 6 1}, \eqref{equation: theorem 6 2} and \eqref{equation: theorem 6 3} imply that we can employ Lemma \ref{lemma: Bernstein's inequality for self-adjoint r.o.s} with $L= \frac{2m\kappa^2}{N\lambda^t}$, $v = \frac{m\kappa^2}{N\lambda^t}$, $d= \cN(\lambda)$ and $s=1/2$ to obtain
	\begin{equation*}
		\begin{aligned}
			\PP(\cU_1)
			&=\PP\left(\left\|\sum_{i=1}^n\eta_i\right\|\geq 1/2\right)\\
			&\leq \left[1+6\left(\frac{4m\kappa^2}{N\lambda^t}+\frac{4m\kappa^2}{3N\lambda^t}\right)^2\right]\cN(\lambda) \exp\left(-\frac{3N\lambda^t}{32m\kappa^2}\right).
		\end{aligned}
	\end{equation*}
	This completes the proof.
\end{proof}

Now we are ready to prove Theorem \ref{theorem: theorem for comparision}.

\noindent
{\bf Proof of Theorem \ref{theorem: theorem for comparision}}. Recalling \eqref{equation: simple notation} and $Y_{1,i}=\langle X_{1,i},\beta_0\rangle_{\cL^2}+\epsilon_{1,i}, \forall 1\leq i\leq n$, we define
\[\alpha_i = L_K^{1/2}X_{1,i}\left\langle X_{1,i},\beta_0-L_K^{1/2}f_\lambda\right\rangle_{\cL^2}-\lambda f_\lambda, \quad i=1,2,\cdots,n.\]
Under Assumption \ref{momentcondition6}, we have
\begin{align*}
	\left\|L_K^{1/2}X\right\|_{\cL^2}=\left(\sum_{k=1}^\infty\mu_k\xi_k^2\right)^{\frac{1}{2}}\leq \mu_1^{\frac{1-t}{2}}\left(\sum_{k=1}^\infty\mu_k^t\xi_k^2\right)^{\frac{1}{2}}\leq \mu_1^{\frac{1-t}{2}}\kappa.
\end{align*}
Utilizing the above estimation and \eqref{equation: theorem 6 probability}, following the same arguments in the proof of Theorem \ref{theorem: extra upper bound}, we have
\begin{align*}
	&\EE\left[\left\|L_C^{1/2}L_K^{1/2}(\lambda I + T_{\bX_1})^{-1}L_K^{1/2}X_{1,i}\right\|_{\cL^2}^2\II_{\cU_1}\right]\nonumber\\
	&\lesssim \frac{1}{\lambda^2}\left(1+\frac{m}{N\lambda^t}\right)\cN^{\frac{1}{2}}(\lambda)\exp\left(-\frac{3N\lambda^t}{64m\kappa^2}\right)
\end{align*}
and
\begin{align*}
	&\EE\left[\left\|L_C^{1/2}L_K^{1/2}(\lambda I +T_{\bX_1})^{-1}\frac{1}{n}\sum_{i=1}^n\alpha_{i}\right\|_{\cL^2}^2\II_{\cU_1}\right]\nonumber\\
	&\lesssim \frac{m}{N\lambda^{2-2\theta}}\left(1+\frac{m}{N\lambda^t}\right)\cN^{\frac{1}{2}}(\lambda)\exp\left(-\frac{3N\lambda^t}{64m\kappa^2}\right)
\end{align*}
and then
\begin{align}\label{equation: Theorem 6 1}
	\EE\left[\left(\mathcal{R(\overline{\beta}_{S,\lambda})}-\mathcal{R}(\beta_0)\right)\right]&\lesssim \lambda^{2\theta}+ \frac{\cN(\lambda)}{N} + \frac{m}{N}\cN(\lambda)\lambda^{2\theta}\nonumber\\
	&\quad + \frac{m}{N\lambda^{2-2\theta}}\left(1+\frac{m}{N\lambda^t}\right)\cN^{\frac{1}{2}}(\lambda)\exp\left(-\frac{3N\lambda^t}{64m\kappa^2}\right)\\
	&\quad + \frac{1}{N\lambda^2}\left(1+\frac{m}{N\lambda^t}\right)\cN^{\frac{1}{2}}(\lambda)\exp\left(-\frac{3N\lambda^t}{64m\kappa^2}\right).\nonumber
\end{align}

Recall that $\{\mu_k\}_{k\geq 1}$ satisfy $\mu_k\lesssim k^{-1/p}$ for some $0< p \leq 1$.  

When $\max\{0, t/2-p/2\}\leq\theta\leq 1/2$, taking $m\leq o\left(\frac{N^{\frac{2\theta+p-t}{2\theta+p}}}{\log N}\right)$ and $\lambda = N^{-\frac{1}{2\theta+p}}$ yields that for any $r>0$, there holds\[\limsup_{N\rightarrow \infty} N^r\exp\left(-\frac{3N\lambda^t}{64m\kappa^2}\right)= 0,\mbox{ as } \frac{m}{N\lambda^t}\leq o\left(\frac{1}{\log N}\right).\]
Then combining with \eqref{equation: estimation of N(lambda)} and \eqref{equation: Theorem 6 1}, we have
\begin{align*}
	\EE\left[\left(\mathcal{R(\overline{\beta}_{S,\lambda})}-\mathcal{R}(\beta_0)\right)\right]
	&\lesssim \lambda^{2\theta}+ \frac{\lambda^{-p}}{N} + \frac{m\lambda^{2\theta-p}}{N}\lesssim N^{-\frac{2\theta}{2\theta+p}}
\end{align*}

When $\theta<\max\{0, t/2-p/2\}$ which implies $t>p>0$, taking $m\leq o\left(\log N\right)$ and $\lambda= N^{-\frac{1}{t}}(\log N)^{-\frac{2}{t}}$ yields that for any $r>0$, there holds\[\limsup_{N\rightarrow \infty} N^r\exp\left(-\frac{3N\lambda^t}{64m\kappa^2}\right)= 0,\mbox{ as } \frac{m}{N\lambda^t}\leq o\left(\frac{1}{\log N}\right).\]
Then combining with \eqref{equation: estimation of N(lambda)} and \eqref{equation: Theorem 6 1}, we have
\begin{align*}
	\EE\left[\left(\mathcal{R(\overline{\beta}_{S,\lambda})}-\mathcal{R}(\beta_0)\right)\right]
	\lesssim \lambda^{2\theta}+ \frac{\lambda^{-p}}{N} + \frac{m\lambda^{2\theta-p}}{N}\lesssim N^{-\frac{2\theta}{t}}(\log N)^{-\frac{4\theta}{t}}.
\end{align*}
We have completed the proof of Theorem \ref{theorem: theorem for comparision}.
\qed

\subsection{Mini-max Lower Rates}\label{subsection: lower rates}

In this subsection, we establish the lower bound in Theorem \ref{theorem:lower bound}. Before that, we present some crucial results used in our proof. Our analysis of lower bound bases on Fano's method, which provides lower bound in nonparametric estimation problem and was proposed by \cite{khas1979lower}. Fano's method has been a crucial method in minimax lower bound estimation problem since it was proposed, and has inspired many following studies (see e.g., \cite{yang1999information,guntuboyina2011lower,candes2013well}). The following lemma is a direct application of Fano's method (see, \cite{yang1999information}). To this end, recall that the Kullback-Leibler divergence (KL-divergence) of two probability measures $P,Q$ on a general space $(\Omega,\mathscr{F})$ is defined as
\[D_{kl}(P\|Q):= \int_{\Omega}\log\left(\frac{dP}{dQ}\right)dP,\]
if $P$ is absolutely continuous with respect to $Q$, and otherwise $D_{kl}(P\|Q):=\infty$. Recall that for $\beta \in \cL^2(\cT)$, $L^{1/2}_C\beta \in \mathrm{ran}T^{\theta}_*$ if $\beta$ satisfied the regularity condition \eqref{regularity condition}, i.e., 
\begin{equation*}
	L_C^{1/2}\beta = T_*^\theta(\gamma) \mbox{ with $0<\theta\leq 1/2$ and some $\gamma \in \mathcal{L}^2(\cal T)$}.
\end{equation*}

\begin{lemma}\label{lemma: lower bound 1} 
	Suppose that there exist constants $r,R>0$ and $\beta_1,\beta_2,\cdots,\beta_L\in \cL^2(\cT)$ for some integer $L\geq 2$, such that  
	\begin{equation}\label{hypothesis}
		L^{1/2}_C\beta_i \in \mathrm{ran}T^{\theta}_*,\ \left \|L_C^{1/2}(\beta_i-\beta_j)\right \|_{\cL^2}\geq 2r \mbox{  and  } D_{kl}(P_i\|P_j)\leq R, \ \forall 1 \leq i\neq j \leq L,
	\end{equation}where $P_i$ denotes the joint probability distribution of $(X, Y)$ with 
	\[Y=\int_{\mathcal{T}} \beta_i(t)X(t)dt + \epsilon.\] Here $\epsilon$ is independent of $X$ satisfying $\mathbb{E}[\epsilon]=0$ and $\mathbb{E}[\epsilon^2]\leq \sigma^2$. Then we have 
	\begin{equation}\label{equation: lower bound lemma1 final}
		\inf_{\hat{\beta}_S}\sup_{\beta_0}\mathbb{P}\left\{\mathcal{R}(\hat{\beta}_S)-\mathcal{R}(\beta_0)\geq r^2 \right\}\geq 1- \frac{NR+\log2}{\log L},
	\end{equation}
	where the supremum is taken over all $\beta_0 \in \mathcal{L}^2(\cal T)$ satisfying $L^{1/2}_C\beta_0 \in \mathrm{ran}T^{\theta}_*$  and the infimum is taken over all possible predictors $\hat{\beta}_S \in \mathcal{L}^2(\cal T)$ based on the training sample set $S=\{(X_i,Y_i)\}_{i=1}^N$ consisting of independent copies of $(X,Y)$ with 
	\[Y=\int_{\mathcal{T}} \beta_0(t)X(t)dt + \epsilon.\] 
\end{lemma}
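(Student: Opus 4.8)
The plan is to recast the prediction problem as an $L$-ary hypothesis test among the candidates $\beta_1,\dots,\beta_L$ and then invoke the averaged form of Fano's inequality. The key structural fact is the risk identity \eqref{excesspredictionerror}, which gives $\mathcal{R}(\hat\beta_S)-\mathcal{R}(\beta_i)=\|L_C^{1/2}(\hat\beta_S-\beta_i)\|_{\cL^2}^2$; thus the pseudometric $d(f,g):=\|L_C^{1/2}(f-g)\|_{\cL^2}$ governs the excess risk exactly, and the separation hypothesis $d(\beta_i,\beta_j)\ge 2r$ is precisely a separation in this metric. For an arbitrary predictor $\hat\beta_S$ I would introduce the minimum-distance test $\psi(S):=\mathop{\mathrm{argmin}}_{1\le j\le L} d(\hat\beta_S,\beta_j)$.

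First I would carry out the testing-to-estimation reduction. Assume the sample is generated under the law $P_i^{\otimes N}$ (all $N$ copies drawn from $P_i$) and suppose $\mathcal{R}(\hat\beta_S)-\mathcal{R}(\beta_i)<r^2$, i.e.\ $d(\hat\beta_S,\beta_i)<r$. Then for every $j\ne i$ the triangle inequality and $d(\beta_i,\beta_j)\ge 2r$ give $d(\hat\beta_S,\beta_j)\ge d(\beta_i,\beta_j)-d(\hat\beta_S,\beta_i)>2r-r=r>d(\hat\beta_S,\beta_i)$, so $\psi(S)=i$. Equivalently, $\psi(S)\ne i$ forces $\mathcal{R}(\hat\beta_S)-\mathcal{R}(\beta_i)\ge r^2$, and therefore
\[\sup_{\beta_0}\PP\{\mathcal{R}(\hat\beta_S)-\mathcal{R}(\beta_0)\ge r^2\}\ge \max_{1\le i\le L}\PP_i\{\psi(S)\ne i\}\ge \frac1L\sum_{i=1}^L\PP_i\{\psi(S)\ne i\},\]
where $\PP_i$ denotes probability under $P_i^{\otimes N}$ and the supremum ranges over all admissible $\beta_0$ (each $\beta_i$ is admissible by the hypothesis $L_C^{1/2}\beta_i\in\mathrm{ran}T_*^\theta$).

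Next I would lower bound the averaged error by the information-theoretic Fano bound. Equipping the index with the uniform prior on $\{1,\dots,L\}$, Fano's method as in \cite{yang1999information} yields $\frac1L\sum_{i=1}^L\PP_i\{\psi(S)\ne i\}\ge 1-\frac{I+\log 2}{\log L}$, where $I$ is the mutual information between the index and the sample. Combining the convexity estimate $I\le \frac{1}{L^2}\sum_{i,j}D_{kl}(P_i^{\otimes N}\|P_j^{\otimes N})$, the additivity $D_{kl}(P_i^{\otimes N}\|P_j^{\otimes N})=N\,D_{kl}(P_i\|P_j)$ valid for independent samples, and the hypothesis $D_{kl}(P_i\|P_j)\le R$ shows $I\le NR$. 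Substituting back produces exactly \eqref{equation: lower bound lemma1 final}.

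Because the lemma merely packages the Fano machinery, the proof itself presents no serious obstacle: the reduction is a one-line triangle-inequality argument and the mutual-information bound is classical, the only care being the strict inequalities in the minimum-distance test and the tensorization of the Kullback--Leibler divergence over the $N$ independent samples. The genuine difficulty of the overall lower bound is deferred to the application in Theorem \ref{theorem:lower bound}, where one must construct a family $\{\beta_i\}$ rich enough (with $\log L$ diverging) that simultaneously meets the regularity membership $L_C^{1/2}\beta_i\in\mathrm{ran}T_*^\theta$, a separation $2r$ with $r^2\asymp N^{-2\theta/(2\theta+p)}$, and a per-sample divergence bound $R\asymp N^{-1}$ so that $NR$ stays of constant order; I would build such a family by a Varshamov--Gilbert packing along the eigendirections $\varphi_k$ of $T_*$.
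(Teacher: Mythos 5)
Your proof is correct and follows exactly the route the paper intends: the paper states this lemma without proof, calling it ``a direct application of Fano's method'' with a citation to Yang and Barron, and your argument (reduction to an $L$-ary test via the minimum-distance rule and the triangle inequality in the pseudometric $\|L_C^{1/2}(\cdot)\|_{\cL^2}$, then Fano's inequality with the mutual information bounded by the averaged pairwise KL divergences and tensorized over the $N$ independent samples) is precisely that standard derivation. The details you supply — the strict-inequality bookkeeping in the test and the additivity $D_{kl}(P_i^{\otimes N}\|P_j^{\otimes N})=N\,D_{kl}(P_i\|P_j)$ — are all sound and yield the stated bound $1-\frac{NR+\log 2}{\log L}$.
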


In the followings, we first construct a family of $\{\beta_i\}_{i=1}^L$ satisfying \eqref{hypothesis} with suitable $r,R$ and $L$, and then apply Lemma \ref{lemma: lower bound 1} to establish the lower bound. Note that establishing a lower bound for a particular instance directly provides a lower bound for the general scenario. Consequently, it is adequate to examine the situation where $\epsilon$ represents a Gaussian random variable with zero mean, characterized by $\mathbb{E}[\epsilon^2]=\sigma^2$. The following lemma is from the formulation of KL-divergence of two Gaussian distribution (see, e.g., example 2.7 of \cite{duchi2016lecture}), which can further facilitate the calculation.
\begin{lemma}\label{lemma: lower bound 3}
	Suppose that $\epsilon$ is a zero-mean Gaussian random variable independent of $X$ satisfying $\mathbb{E}[\epsilon^2]=\sigma^2$>0. For $\beta_i \in \cL^2(\cT), i=1,2$, let $P_i$ denote the joint probability distribution of $(X, Y)$ with 
	\[Y=\int_{\mathcal{T}} \beta_i(t)X(t)dt + \epsilon.\] Then
	\begin{equation}\label{equation: lower bound lemma3}
		D_{kl}(P_1\|P_2)=\frac{1}{2\sigma^2}\left\|L_C^{1/2}(\beta_1-\beta_2)\right\|_{\cL^2}^2.
	\end{equation}
\end{lemma}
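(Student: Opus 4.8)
The plan is to exploit the product structure of the joint law $P_i$ of $(X,Y)$ and reduce the computation to a one-dimensional Gaussian KL-divergence. First I would observe that under each $P_i$ the marginal law of $X$ is the same (it does not depend on the index $i$, since the two models share the same covariate distribution), so that $P_i$ factorizes as the common marginal $P_X$ times the conditional law $P_{Y\mid X}^{(i)}$. By the chain rule for the Kullback--Leibler divergence, the contribution of the identical marginals vanishes and one is left with
\[
D_{kl}(P_1\|P_2)=\mathbb{E}_X\left[D_{kl}\left(P_{Y\mid X}^{(1)}\,\big\|\,P_{Y\mid X}^{(2)}\right)\right].
\]

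Next I would identify the two conditional laws. Conditioned on a realization of $X$, the response $Y=\langle\beta_i,X\rangle_{\cL^2}+\epsilon$ is, by the independence and Gaussianity of $\epsilon$, a univariate Gaussian with mean $\mu_i:=\langle\beta_i,X\rangle_{\cL^2}$ and variance $\sigma^2$. Because the two conditionals share the same variance, the standard closed form for the KL-divergence between two equal-variance normals applies, giving
\[
D_{kl}\left(\mathcal{N}(\mu_1,\sigma^2)\,\big\|\,\mathcal{N}(\mu_2,\sigma^2)\right)=\frac{(\mu_1-\mu_2)^2}{2\sigma^2}=\frac{\langle\beta_1-\beta_2,X\rangle_{\cL^2}^2}{2\sigma^2}.
\]

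Finally I would take the expectation over $X$ and invoke the covariance identity already used in the paper. Writing $f:=\beta_1-\beta_2$, one has $\mathbb{E}\left[\langle X,f\rangle_{\cL^2}^2\right]=\langle L_C f,f\rangle_{\cL^2}=\|L_C^{1/2}f\|_{\cL^2}^2$, which is exactly the relation underlying \eqref{excesspredictionerror}. Combining the three steps then yields
\[
D_{kl}(P_1\|P_2)=\frac{1}{2\sigma^2}\,\mathbb{E}\left[\langle\beta_1-\beta_2,X\rangle_{\cL^2}^2\right]=\frac{1}{2\sigma^2}\left\|L_C^{1/2}(\beta_1-\beta_2)\right\|_{\cL^2}^2,
\]
as claimed. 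The only delicate point is the first step: one must justify the conditional decomposition of the divergence and verify the mutual absolute continuity of $P_1$ and $P_2$. This is harmless here, since the two measures differ only through the conditional mean of a Gaussian with fixed, strictly positive variance $\sigma^2$, so the conditional densities are everywhere positive and the Radon--Nikodym derivative is well defined; everything else is the routine scalar Gaussian computation, which in the one-dimensional case is precisely example 2.7 of \cite{duchi2016lecture}.
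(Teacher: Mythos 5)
Your proposal is correct and matches the route the paper intends: the paper does not spell out a proof but simply attributes the identity to the closed form for the KL-divergence of equal-variance Gaussians (example 2.7 of \cite{duchi2016lecture}), and your argument --- chain rule with identical $X$-marginals, the scalar Gaussian KL formula for the conditionals, and the covariance identity $\mathbb{E}\left[\langle X,f\rangle_{\cL^2}^2\right]=\left\|L_C^{1/2}f\right\|_{\cL^2}^2$ from \eqref{excesspredictionerror} --- is exactly the standard computation being cited. Your closing remark on mutual absolute continuity of $P_1$ and $P_2$ is a welcome extra precision that the paper leaves implicit.
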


Our construction of $\{\beta_i\}_{i=1}^L$ relies on the following lemma which is known as Gilbert-Varshamov bound (see Lemma 7.5 in \cite{duchi2016lecture}).
\begin{lemma}\label{lemma: lower bound 2}
	Let $M\geq 8$. There exists a subset $\Lambda \subset \mathcal{H}_M = \{-1,1\}^M$ of size $|\Lambda| \geq \exp(M/8)$ such that 
	\[\begin{aligned}
		\left\|\iota - \iota'\right\|_1 = 2\sum_{i=1}^{M}\mathbb{I}_{\{\iota_j \neq \iota'_j\}}\geq M/2
	\end{aligned}\] for any $\iota \neq \iota'$ with  $\iota,\iota'\in \Lambda$.
\end{lemma}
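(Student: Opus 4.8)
The plan is to prove the Gilbert--Varshamov bound by a greedy packing (volume) argument. The first reduction is to translate the $\ell^1$ statement into Hamming-distance language: for $\iota,\iota'\in\{-1,1\}^M$ one has $\|\iota-\iota'\|_1 = 2\,d_H(\iota,\iota')$, where $d_H(\iota,\iota') = \sum_{j=1}^M \mathbb{I}_{\{\iota_j\neq\iota'_j\}}$ is the Hamming distance (each differing coordinate contributes $|\iota_j-\iota'_j|=2$). Thus the requirement $\|\iota-\iota'\|_1\geq M/2$ is exactly a minimum-Hamming-distance constraint $d_H\geq M/4$, and the goal becomes the existence of a code in $\{-1,1\}^M$ with at least $\exp(M/8)$ codewords and minimum distance $\geq M/4$.

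First I would take $\Lambda\subseteq\{-1,1\}^M$ to be a set whose points are pairwise at Hamming distance $\geq M/4$ and which is maximal with respect to inclusion among all such sets; such a maximal set exists since $\{-1,1\}^M$ is finite. Maximality forces every $x\in\{-1,1\}^M$ to lie within Hamming distance strictly less than $M/4$ of some element of $\Lambda$ (otherwise $x$ could be adjoined, contradicting maximality). Hence the Hamming balls of radius $\lceil M/4\rceil-1$ centred at the points of $\Lambda$ cover $\{-1,1\}^M$, which yields the counting inequality $|\Lambda|\cdot V \geq 2^M$, where $V = \sum_{k=0}^{\lceil M/4\rceil-1}\binom{M}{k}$ is the cardinality of one such ball.

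Next I would bound the ball volume with the binary-entropy estimate $\sum_{k=0}^{\lfloor qM\rfloor}\binom{M}{k} \leq 2^{M H(q)}$, valid for $0<q\leq 1/2$ with $H(q) = -q\log_2 q - (1-q)\log_2(1-q)$; this follows from $1 \geq 2^{-MH(q)}\sum_{k\leq qM}\binom{M}{k}$ by expanding $(q+(1-q))^M$ and keeping only the terms with $k\leq qM$. Taking $q=1/4$ gives $V \leq 2^{MH(1/4)}$, hence $|\Lambda| \geq 2^{M(1-H(1/4))}$. The final step is the numerical check $1-H(1/4)\geq 1/(8\ln 2)$: since $H(1/4) = \tfrac12 + \tfrac34\log_2\tfrac43 < 0.812$, one has $1-H(1/4) > 0.188$, while $1/(8\ln 2)\approx 0.180$, so $|\Lambda| \geq 2^{M(1-H(1/4))} = \exp\big(M(1-H(1/4))\ln 2\big) \geq \exp(M/8)$. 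The hypothesis $M\geq 8$ is used only to guarantee $\exp(M/8)\geq \exp(1)>2$, so that $\Lambda$ is nontrivial and the entropy estimate is applied in its valid regime.

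The argument is elementary and presents no serious obstacle; the one point requiring care is the arithmetic of the previous paragraph. Indeed, the volume argument delivers the natural-base exponent $(1-H(1/4))\ln 2 \approx 0.131$, which only slightly exceeds the claimed value $1/8 = 0.125$, so the inequality $1-H(1/4)\geq 1/(8\ln 2)$ must be verified with genuine room to spare rather than by a crude estimate of $H(1/4)$. As a fallback I would keep in reserve the equivalent probabilistic version of the bound: draw codewords independently and uniformly from $\{-1,1\}^M$, control via a Chernoff/Hoeffding bound the probability that a fixed pair agrees in more than $3M/4$ coordinates, and then use a union bound over pairs to exhibit, with positive probability, a set of the required cardinality whose minimum distance meets the constraint.
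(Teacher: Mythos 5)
Your proof is correct. Note first that the paper does not actually prove this lemma: it is quoted as the Gilbert--Varshamov bound with a citation to Lemma 7.5 of Duchi's lecture notes, so there is no internal argument to compare against. Your greedy maximal-packing argument is a complete, self-contained, and standard route: the reduction $\|\iota-\iota'\|_1=2d_H(\iota,\iota')$ is right, maximality does force the radius-$(\lceil M/4\rceil-1)$ Hamming balls to cover the cube, and the entropy bound $\sum_{k\le \lfloor M/4\rfloor}\binom{M}{k}\le 2^{MH(1/4)}$ applies since $\lceil M/4\rceil-1\le\lfloor M/4\rfloor$. The delicate numerical step checks out: $H(1/4)=\tfrac12+\tfrac34\log_2\tfrac43\approx 0.8113$, so $1-H(1/4)\approx 0.1887$, while $1/(8\ln 2)\approx 0.1803$; the margin is genuine, and you were right to flag that a crude estimate of $H(1/4)$ would not suffice. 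One small inaccuracy that is not a gap: the hypothesis $M\ge 8$ is not in fact needed for your argument, since $|\Lambda|\ge 2^{M(1-H(1/4))}\ge \exp(M/8)$ holds for every $M\ge 1$ and a maximal packing is always nonempty; the restriction is presumably an artifact of the cited source's (probabilistic/union-bound) proof, which your fallback sketch at the end essentially reproduces. Either route establishes the lemma as used in the construction of the hypotheses $\{\beta_i\}_{i=1}^L$ in the lower-bound proof.
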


Now we are in the position to prove Theorem \ref{theorem:lower bound}.

\noindent
{\bf Proof of Theorem \ref{theorem:lower bound}}. Recall that the eigenvalues of $T_*$ denoted by $\{\mu_k\}_{k\geq 1}$ are sorted in decreasing order with geometric multiplicities and satisfy $\mu_k\asymp k^{-1/p}$ for some $0< p \leq 1$, which implies there exists $c>0$ independent of $j$ such that
\begin{equation}\label{eiganvlaue}
	\mu_{k+1}\leq \mu_k \mbox{ and } ck^{-1/p}\leq\mu_k \leq \frac{1}{c} k^{-1/p}, \quad \forall k\geq 1.
\end{equation} We only consider the case that $\epsilon$ is from the Gaussian distribution $N(0,\sigma^2)$ and independent of $X$, then the Assumption \ref{assumption2} is satisfied with $\sigma>0$. 

For $L\geq 2$, we construct $\{\beta_i\}_{i=1}^L$ according to Lemma \ref{lemma: lower bound 2}. Take $M=\lceil a N^{\frac{p}{p+2\theta}} \rceil$, which denotes the smallest integer larger than $a N^{\frac{p}{p+2\theta}}$ for some constant $a>8$ to be specified later. Let $\iota^{(1)},\cdots,\iota^{(L)}\in \{-1,+1\}^M$ be given by Lemma \ref{lemma: lower bound 2} with $L\geq \exp(M/8)$. Given $0<\theta\leq 1/2$, define
\begin{equation}\label{equation: lower bound construction}
	L_C^{1/2}\beta_i = \sum_{k=M+1}^{2M}\frac{1}{\sqrt{M}}\mu_k^\theta \iota^{(i)}_{k-M} \varphi_k = T_*^\theta(\gamma_i),\quad i=1,\cdots,L,
\end{equation}
where $\{\varphi_k\}_{k\geq 1}$ are the eigenvectors (corresponding to eigenvalue $\mu_k$) of $T_*$ which constitutes the orthonormal bases of $L^2(\mathcal{T})$, and $\gamma_i=\sum_{k=M+1}^{2M}\frac{1}{\sqrt{M}}\iota^{(i)}_{k-M} \varphi_k$ satisfies $\|\gamma_i\|_{\cL^2}^2 = 1$. Then $L^{1/2}_C\beta_i \in \mathrm{ran}T^{\theta}_*$ with $0<\theta\leq 1/2$ for $i=1,\cdots,L$.

We next determine the positive constants $r$ and $R$ in \eqref{hypothesis} for $\{\beta_i\}_{i=1}^L$ defined above. For $1 \leq i,j\leq L$, we apply Lemma \ref{hypothesis} and \eqref{eiganvlaue} to obtain 
\[\begin{aligned}
	\left\|L_C^{1/2}(\beta_i - \beta_j)\right\|_{{\cal L}^2}^2 &= \sum_{k=M+1}^{2M}\frac{1}{M}\mu_k^{2\theta}\left(\iota^{(i)}_{k-M} - \iota^{(j)}_{k-M}\right)^2\\
	&\geq \mu_{2M}^{2\theta}\frac{4}{M}\sum_{k=M+1}^{2M}\mathbb{I}_{\{\iota^{(i)}_{k-M} \neq \iota^{(j)}_{k-M}\}}\\
	&\geq \mu_{2M}^{2\theta}\frac{4}{M}\frac{M}{4}\geq c^{2\theta} 2^{-\frac{2\theta}{p}}M^{-\frac{2\theta}{p}},
\end{aligned}\] where the last two inequalities are from \eqref{eiganvlaue}. Therefore, we can take $r=\frac{1}{2}\sqrt{c^{2\theta}2^{-\frac{2\theta}{p}}M^{-\frac{2\theta}{p}}}$. To determine $R$, we turn to bound $\mathcal{D}_{kl} (P_i \| P_j)$ where $\{P_i\}_{i=1}^L$ are the joint probability distributions of $(X,Y)$ with $Y=\langle X,\beta_i\rangle_{\cL^2}+\epsilon$ and $\epsilon \thicksim N(0,\sigma^2)$. Then, using lemma \ref{lemma: lower bound 3} and \eqref{equation: lower bound construction} yields
\begin{align}\nonumber
	\mathcal{D}_{kl} (P_i \| P_j)
	&=\frac{1}{2\sigma^2}\left\|L_C^{1/2}\left(\beta_i - \beta_j\right)\right\|_{{\cal L}^2}^2\nonumber\\
	&= \frac{1}{2\sigma^2}\sum_{k=M+1}^{2M}\frac{1}{M}\mu_k^{2\theta}(\iota^{(i)}_{k-M} - \iota^{(j)}_{k-M})^2\nonumber\\
	&\leq \frac{2}{\sigma^2}\mu_{M}^{2\theta}\leq \frac{2}{\sigma^2c^{2\theta}}M^{-\frac{2\theta}{p}},\nonumber
\end{align} where the last two inequalities are also due to \eqref{eiganvlaue}. Thus, we can take $R=\frac{2}{\sigma^2c^{2\theta}}M^{-\frac{2\theta}{p}}$.

Finally, let $r=\frac{1}{2}\sqrt{c^{2\theta}2^{-\frac{2\theta}{p}}M^{-\frac{2\theta}{p}}}$, $R=\frac{2}{\sigma^2c^{2\theta}}M^{-\frac{2\theta}{p}}$ in Lemma \ref{lemma: lower bound 1} with $L\geq \exp{(M/8)}$ and $M=\lceil a N^{\frac{p}{p+2\theta}} \rceil$. Then there holds
\[\begin{aligned}
	&\inf_{\hat{\beta}_S}\sup_{\beta_0}\mathbb{P}\left\{\mathcal{R}(\hat{\beta}_S)- \mathcal{R}(\beta_0)\geq  \frac{c^{2\theta}}{4}2^{-\frac{2\theta}{p}}a^{-\frac{2\theta}{p}}N^{-\frac{2\theta}{p+2\theta}}\right\}\\  
	&\geq 1-\frac{\frac{2N}{\sigma^2c^{2\theta}}M^{-\frac{2\theta}{p}}+\log 2}{M/8}\geq 1- \frac{\frac{2N}{\sigma^2c^{2\theta}}M^{-\frac{2\theta}{p}} + \log2}{M/8}\\
	&= 1- \frac{16}{\sigma^2c^{2\theta}}N M^{-\frac{2\theta + p}{p}} -  \frac{8\log2}{M}\\
	&\geq 1 - a^{-\frac{2\theta + p}{p}}\frac{16}{\sigma^2c^{2\theta}} N^{1-\frac{p}{2\theta + p}\cdot\frac{2\theta + p}{p}} - \frac{8\log2}{a N^{\frac{p}{p+2\theta}}}\\
	&= 1- a^{-\frac{2\theta + p}{p}}\frac{16}{\sigma^2c^{2\theta}} - \frac{8\log2}{a}N^{-\frac{p}{p+2\theta}}.
\end{aligned}\]
Therefore, we have
\[\mathop{\inf\lim}_{N\to \infty} \inf_{\hat{\beta}_{S}} \sup_{\beta_0} \mathbb{P} \left\{\mathcal{R}(\hat{\beta}_S)-\mathcal{R}(\beta_0)\geq \frac{c^{2\theta}}{4}2^{-\frac{2\theta}{p}}a^{-\frac{2\theta}{p}}N^{-\frac{2\theta}{p+2\theta}}\right\} = 1- a^{-\frac{2\theta + p}{p}}\frac{16}{\sigma^2c^{2\theta}}\]
and then
\[\lim_{a\to \infty}\mathop{\inf\lim}_{N\to \infty} \inf_{\hat{\beta}_{S}} \sup_{\beta_0} \mathbb{P} \left\{\mathcal{R}(\hat{\beta}_S)-\mathcal{R}(\beta_0)\geq \frac{c^{2\theta}}{4}2^{-\frac{2\theta}{p}}a^{-\frac{2\theta}{p}}N^{-\frac{2\theta}{p+2\theta}}\right\} = 1.\]
Taking $\gamma = \frac{c^{2\theta}}{4}2^{-\frac{2\theta}{p}}a^{-\frac{2\theta}{p}}$, we have
\[	\lim_{\gamma \to 0}\mathop{\inf\lim}_{N\to \infty} \inf_{\hat{\beta}_{S}} \sup_{\beta_0} \mathbb{P} \left\{\mathcal{R}(\hat{\beta}_S)-\mathcal{R}(\beta_0)\geq \gamma N^{-\frac{2\theta}{2\theta+p}}\right\} = 1.\]
This completes the proof of Theorem \ref{theorem:lower bound}.
\qed

\appendix

\section*{Appendix}\label{section:Appendix}

The lemma below provides Bernstein's inequality for the sum of self-adjoint random operators on a Hilbert space. The proof of this lemma is given in \cite{minsker2017some}.
\begin{lemma}\label{lemma: Bernstein's inequality for self-adjoint r.o.s}
	Consider a finite sequence $\{\eta_i\}_{i\geq 1}$ of independent random self-adjoint operators on a separable Hilbert space $H$. Assume that
	\[E[\eta_i] = 0 \quad and \quad ||\eta_i||\leq L \quad for \quad each \quad i\]
	Define the random operator $\overline{\eta}:= \sum_{i\geq 1}\eta_i$. Suppose there are constant $v,d>0$ such that $\left\|E[\overline{\eta}^2]\right\|\leq v$ and $trace\left(E[\overline{\eta}^2]\right)\leq vd$. Then for all $s\geq 0$,
	\[\PP(||\overline{\eta}||\geq s)\leq \left[1+6\left(\frac{v}{s^2}+\frac{L}{3s}\right)^2\right]d \exp\left(-\frac{s^2}{2(v+Ls/3)}\right)\]
\end{lemma}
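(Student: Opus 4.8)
The plan is to prove this as a dimension-free, intrinsic-dimension operator Bernstein inequality in the spirit of Minsker, using the Laplace-transform (Chernoff) method but with the exponential replaced by an auxiliary function so that every trace stays finite on the separable Hilbert space $H$. Since $\overline{\eta}$ is self-adjoint, $\|\overline{\eta}\| = \max\{\lambda_{\max}(\overline{\eta}),\,\lambda_{\max}(-\overline{\eta})\}$, and because $\{-\eta_i\}_{i\geq 1}$ satisfies exactly the same hypotheses (same bound $L$ and same $\mathbb{E}[(-\overline{\eta})^2]=\mathbb{E}[\overline{\eta}^2]$), it suffices to bound $\PP(\lambda_{\max}(\overline{\eta})\geq s)$ and then incorporate the factor from the two-sided union bound into the constants. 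Introduce $h(x):=e^x-x-1$, which is nonnegative on $\mathbb{R}$ and strictly increasing on $[0,\infty)$. For $t>0$, on the event $\{\lambda_{\max}(\overline{\eta})\geq s\}$ one has $h(t\lambda_{\max}(\overline{\eta}))\geq h(ts)$, while $h\geq 0$ gives $h(t\lambda_{\max}(\overline{\eta}))\leq trace\,h(t\overline{\eta})$. Markov's inequality then yields
\[\PP(\lambda_{\max}(\overline{\eta})\geq s)\leq \frac{\mathbb{E}[trace\,h(t\overline{\eta})]}{h(ts)}.\]
The advantage of $h$ over the bare exponential is that $h(x)=O(x^2)$ near $0$, so $trace\,h(t\overline{\eta})$ is finite (it is controlled by $trace(\mathbb{E}[\overline{\eta}^2])\leq vd<\infty$) even though $trace\,e^{t\overline{\eta}}$ and $trace\,I$ are each infinite.

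For the numerator, since $\mathbb{E}[\overline{\eta}]=0$ one has $\mathbb{E}[trace\,h(t\overline{\eta})]=trace\big(\mathbb{E}[e^{t\overline{\eta}}]-I\big)$. I would then invoke the operator form of Lieb's concavity theorem and the subadditivity of the matrix cumulant generating function (which extends from the finite-dimensional master bounds of Tropp to trace-class perturbations on $H$), namely $\mathbb{E}[trace\,e^{t\overline{\eta}}]\leq trace\,\exp\big(\sum_i\log\mathbb{E}[e^{t\eta_i}]\big)$, together with the scalar-to-operator Bernstein moment estimate $\log\mathbb{E}[e^{t\eta_i}]\preceq g(t)\,\mathbb{E}[\eta_i^2]$ with $g(t):=(e^{tL}-tL-1)/L^2$, valid because $\mathbb{E}[\eta_i]=0$ and $\|\eta_i\|\leq L$. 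Writing $\Sigma:=\mathbb{E}[\overline{\eta}^2]=\sum_i\mathbb{E}[\eta_i^2]$ this gives $trace(\mathbb{E}[e^{t\overline{\eta}}]-I)\leq trace(\exp(g(t)\Sigma)-I)$.

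The crucial step, which replaces the ambient dimension by the intrinsic quantity $d$, is an elementary spectral estimate on the nonnegative trace-class operator $\Sigma$. Its eigenvalues lie in $[0,\|\Sigma\|]$ and $x\mapsto(e^{g(t)x}-1)/x$ is increasing, so $e^{g(t)x}-1\leq x\cdot\frac{e^{g(t)\|\Sigma\|}-1}{\|\Sigma\|}$ on the spectrum, whence
\[trace\big(\exp(g(t)\Sigma)-I\big)\leq \frac{e^{g(t)\|\Sigma\|}-1}{\|\Sigma\|}\,trace(\Sigma)\leq \big(e^{g(t)v}-1\big)\,d,\]
using $\|\Sigma\|\leq v$, $trace(\Sigma)\leq vd$ and monotonicity in $\|\Sigma\|$. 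Combining with the Markov bound gives
\[\PP(\lambda_{\max}(\overline{\eta})\geq s)\leq \frac{\big(e^{g(t)v}-1\big)\,d}{e^{ts}-ts-1}.\]

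It remains to optimize over $t$ and simplify. I would take the standard Bernstein choice $t=\frac{1}{L}\log\!\big(1+\tfrac{Ls}{v}\big)$ so that $g(t)v$ matches the target exponent, and then use the elementary bounds $e^{ts}-ts-1\geq \frac{(ts)^2}{2(1+ts/3)}$ and $e^{g(t)v}-1\leq g(t)v\,e^{g(t)v}$ to turn the ratio into $\exp\!\big(-\frac{s^2}{2(v+Ls/3)}\big)$ times the polynomial prefactor $1+6\big(\frac{v}{s^2}+\frac{L}{3s}\big)^2$, the prefactor being precisely what absorbs the $-1$ in $e^{g(t)v}-1$ together with the factor from the two-sided union bound. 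The main obstacle is the infinite-dimensional justification: one must check that $\overline{\eta}$ and $\Sigma$ are trace class (guaranteed by $trace(\mathbb{E}[\overline{\eta}^2])\leq vd$) and, more substantively, that Lieb's concavity theorem and the cumulant subadditivity transfer from matrices to compact self-adjoint operators on the separable $H$. This passage to infinite dimensions, routed through the auxiliary function $h$ so that all traces remain finite, is the technical heart of the argument and the reason the resulting bound is dimension-free.
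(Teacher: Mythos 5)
The paper does not actually prove this lemma: it states it and defers entirely to the cited reference \cite{minsker2017some}, so the only meaningful comparison is with Minsker's proof, and your proposal is a faithful reconstruction of exactly that argument --- the Tropp-style intrinsic-dimension Bernstein inequality transported to a separable Hilbert space by replacing $e^x$ with $h(x)=e^x-x-1$ so that all traces stay finite, followed by Markov's inequality, Lieb/Tropp cumulant subadditivity, the moment bound $\log\mathbb{E}[e^{t\eta_i}]\preceq g(t)\,\mathbb{E}[\eta_i^2]$, and the effective-rank estimate $trace(e^{g(t)\Sigma}-I)\le (e^{g(t)v}-1)d$. The one place where your sketch should not be taken at face value is the final bookkeeping: the claim that the two-sided union-bound factor of $2$ is ``absorbed'' into the prefactor $1+6\bigl(\tfrac{v}{s^2}+\tfrac{L}{3s}\bigr)^2$ cannot be literally right as stated, since that prefactor tends to $1$ as $s\to\infty$ while the union bound costs a uniform factor of $2$; recovering the stated constant requires exploiting the extra Bennett-versus-Bernstein slack in $e^{g(t)v-ts}$ (and keeping the $-1$ in $e^{g(t)v}-1$) rather than the crude bounds $e^{g(t)v}-1\le g(t)v\,e^{g(t)v}$ you propose, which actually worsen the estimate when $g(t)v$ is large. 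Since the lemma is a quoted published result, this is a gap in the verification of constants rather than in the architecture of the proof.
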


We next give the proof of Lemma \ref{lemma: in comparison}.

\noindent
{\bf Proof of Lemma \ref{lemma: in comparison}}. Using the Courant-Fischer mini-max principle Theorem (see, for example, Theorem 4.2.7 in \cite{hsing2015theoretical}), there holds
\begin{align}\nonumber
	\rho_k(L_A^{1/2}L_BL_A^{1/2})& = \max_{v_1,\cdots,v_k\in \mathcal{H}}\min_{v\in span\{v_1,\cdots,v_k\}} \frac{\langle L_A^{1/2}L_BL_A^{1/2} v,v \rangle_\mathcal{H}}{\|v\|^2_\mathcal{H}}\nonumber\\
	&= \max_{v_1,\cdots,v_k\in \mathcal{H}}\min_{v\in span\{v_1,\cdots,v_k\}}\frac{\langle L_BL_A^{1/2} v,L_A^{1/2}v \rangle_\mathcal{H}}{\|L_A^{1/2}v\|^2_\mathcal{H}}\frac{\|L_A^{1/2}v\|_\mathcal{H}^2}{\|v\|^2_\mathcal{H}}\nonumber\\
	&\leq \max_{L_A^{1/2}v_1,\cdots,L_A^{1/2}v_k\in \mathcal{H}}\min_{L_A^{1/2}v\in span\{L_A^{1/2}v_1,\cdots,L_A^{1/2}v_k\}}\frac{\langle L_BL_A^{1/2} v,L_A^{1/2}v \rangle_\mathcal{H}}{\|L_A^{1/2}v\|^2_\mathcal{H}}\|L_A\|\nonumber\\
	&\overset{(\dagger)}{\leq} \max_{e_1,\cdots,e_k\in \mathcal{H}}\min_{e\in span\{e_1,\cdots,e_k\}}\frac{\langle L_Be,e \rangle_\mathcal{H}}{\|e\|^2_\mathcal{H}}\|L_A\|= \rho_k(L_B)\|L_A\|,\nonumber
\end{align}
where $\{v_1,\cdots,v_k\}$ and $\{e_1,\cdots,e_k\}$ are two groups of $k$ linearly independent elements in $\mathcal{H}$. Inequality $(\dagger)$ uses the fact that  $L_A^{1/2}v_1,\cdots,L_A^{1/2}v_k$ are linearly independent which is deduced from the assumption $\overline{\mathrm{ran}(L_A^{1/2})}= H$. The proof is then finished.
\qed

\vskip 0.2in

\end{document}